\pdfoutput=1

\documentclass{article}

\usepackage[preprint]{neurips_2026}

\usepackage[utf8]{inputenc} 
\usepackage[T1]{fontenc}    

\usepackage{amsthm}
\usepackage{amsmath}
\usepackage{amsfonts}       
\usepackage{amssymb}
\usepackage{booktabs}       
\usepackage{enumitem}
\usepackage{graphicx}
\usepackage{hyperref}       
\hypersetup{hidelinks}
\usepackage{microtype}      
\usepackage{multirow}
\usepackage{natbib}
\usepackage{nicefrac}       
\usepackage{subcaption}
\usepackage{url}            
\usepackage[table]{xcolor}
\usepackage{makecell}

\usepackage{thmtools}
\usepackage{thm-restate}

\newtheorem{definition}{Definition}

\newtheorem{corollary}{Corollary}
\newtheorem{proposition}{Proposition}
\newtheorem{remark}{Remark}

\DeclareMathOperator*{\argmax}{arg\,max}

\title{Jacobian-Guided Anisotropic Noise Reshaping for Enhancing Representation Utility under \\Local Differential Privacy}

\author{
  Youngmok Ha$^{1,2}$\thanks{Corresponding author.}\hspace{0.4em}\thanks{This work was conducted at Imperial College London during the author's academic leave from ETRI for doctoral studies.}
  ~~~Viktor Schlegel$^{1,3,4}$~ Yidan Sun$^{3}$~ Anil Anthony Bharath$^{1,3}$ \\
  $^1$Department of Bioengineering, Imperial College London, United Kingdom \\
  $^2$Artificial Intelligence Computing Research Laboratory, \\ Electronics and Telecommunications Research Institute (ETRI), Daejeon, Republic of Korea\\
  $^3$Imperial Global Singapore, Imperial College London, Singapore\\
  $^4$Department of Computer Science, University of Manchester, United Kingdom\\
  \texttt{\{y.ha25, v.schlegel, y.sun1, a.bharath\}@imperial.ac.uk}
}

\begin{document}

\maketitle

\begin{abstract}
While Local Differential Privacy (LDP) serves as a foundational primitive for distributed data collection, its stringent randomization requirements often lead to severe degradation in data representation utility.
This degradation stems from the task-agnostic nature of conventional LDP mechanisms, which perturb all dimensions without accounting for their relative importance to the downstream objective.
To address this issue, we propose a novel approach that mitigates noise in task-relevant subspaces of the data representation.
Our method identifies task-critical subspaces via the Jacobian of a public downstream model, selectively attenuates noise along these directions, and reshapes the isotropic noise of standard LDP mechanisms into an anisotropic distribution.
The resulting mechanism preserves the privacy guarantee of the underlying LDP randomizer while heterogeneously modulating the impact of noise across task directions, thereby substantially enhancing data utility.
The approach is applicable to both linear and nonlinear models and can be seamlessly integrated with existing LDP mechanisms.
Extensive experiments on CIFAR-10-C under brightness corruption at the highest severity level demonstrate that integrating our approach improves classification accuracy by approximately 8 percentage points for Laplace and 20 percentage points for PrivUnit variants at $\epsilon=7.5$.
The source code is available at \url{https://github.com/ymha/jacobian-anr-ldp}.
\end{abstract}

\section{Introduction}

Local Differential Privacy (LDP)~\citep{kasiviswanathan2011what} has become a foundational primitive for privacy-preserving distributed data collection. 
Unlike centralized approaches~\citep{dwork2006calibrating}, LDP eliminates the need for a trusted data curator by enabling each data owner to locally randomize their data prior to sharing. 
Owing to this decentralized trust model, it has seen widespread adoption in practice. 
Prominent examples include Google's RAPPOR for collecting statistics from Chrome clients~\citep{erlingsson2014rappor}, 
Apple's deployment in iOS for collecting new word suggestions, emoji usage frequencies, and health data statistics~\citep{apple2017learning}, 
and Microsoft's integration into Windows telemetry for application usage tracking~\citep{ding2017collecting}.

A well-known drawback of LDP is the severe degradation of data represenation utility due to its heavy randomization~\citep{duchi2013local,kairouz2016discrete}.
To provide rigorous privacy guarantees, standard LDP mechanisms enforce a pessimistic randomization process.
Specifically, since the privacy guarantee must hold for any arbitrary points over the entire private data space, the magnitude of randomness is governed by the worst-case sensitivity. 
Although this prevents inferring any individual's true input, it significantly lowers the signal-to-noise ratio and distorts the original characteristics of the data. 
As a result, downstream tasks that require fine-grained feature preservation remain challenging under the standard LDP paradigm~\citep{duchi2018minimax,duan2022utility}.

To mitigate this privacy--utility trade-off, a substantial body of literature has explored various directions. For example, prior work has focused on optimizing privacy mechanisms~\citep{duchi2013local,duchi2018minimax,wang2019multidimensional,bhowmick2018privunit,asi2022privunitg}. 
To allocate privacy budgets and noise more effectively, researchers have also proposed task-aware~\citep{cheng2022task}, relevance-aware~\citep{phan2017adaptive}, coordinate-wise~\citep{alaggan2016heterogeneous,muthukrishnan2025DP}, and correlation-aware approaches~\citep{kifer2014pufferfish,aumuller2024plan,dagan2024dimensionfree}. 
Finally, another line of work has investigated post-processing methods~\citep{erlingsson2014rappor,apple2017learning,ding2017collecting,hay2010boosting,wang2017locally,jia2019calibrate,cormode2019answering,wang2020locally,sajadmanesh2021locally,fang2023convolution}, as well as the incorporation of public data, models, and prior knowledge~\citep{kurakin2022toward,nasr2023effectively,hou2024pretext}, to improve utility while preserving privacy.

However, enhancing utility through the analysis of downstream models remains underexplored. 
Although Cheng et al.~\citep{cheng2022task} investigated this direction, their methodology is predominantly centered around linear models and requires the direct use of private data. 
Moreover, it does not fully utilize geometric principles. 
For example, the data representation space can be partitioned into a task-relevant row space and a task-irrelevant null space. 
These subspaces, which are distinct from the subspace reflecting the primary data correlations~\citep{aumuller2024plan}, dictate the directions in which the model is sensitive or robust to perturbations. 
While identifying and exploiting these subspaces can enhance utility, the existing LDP literature has largely overlooked these structural properties.

In this paper, we propose a novel approach that enhances data representation utility under LDP by reducing the amount of noise injected into task-critical subspaces defined by public downstream models. 
Our approach is motivated by the geometric insight that Jacobian-based analysis can identify task-sensitive and task-insensitive subspaces, thereby enabling fine-grained control over the noise injected into each subspace. 
We further exploit the observation that, under an isotropically calibrated base randomizer, sensitivity modulation can effectively reduce the noise scale applied to selected subspaces. 
Building on these principles, our approach transforms the standard isotropic noise introduced by the base randomizer into an anisotropic noise distribution.

To the best of our knowledge, this work is the first to investigate improving representation utility under LDP through Jacobian-based identification of task-relevant subspaces and anisotropic noise reshaping. 
The main advantages of our approach are as follows: 
(i) it reduces the noise injected into task-critical subspaces, thereby substantially improving representation utility; 
(ii) it enables mechanisms satisfying pure LDP to outperform approximate mechanisms in terms of utility; 
(iii) it is applicable to both linear and nonlinear downstream models; 
(iv) it can be seamlessly integrated with advanced privacy mechanisms by operating as a pre- and post-processing wrapper around a base mechanism; 
and (v) it preserves the privacy guarantee of the base mechanism without incurring any additional privacy cost, as its transformations are determined solely by public information.

\section{Related Work}
\textbf{Task-Aware LDP.}
The work by Cheng et al.~\citep{cheng2022task} is most closely related to ours in that it utilizes task model information and serves as a wrapper compatible with other mechanisms: they propose an encoding-decoding framework that exploits downstream model information to apply LDP to high-dimensional data, injecting noise between the encoding and decoding steps.
The most significant difference emerges when dealing with nonlinear downstream models. 
For these models, they propose training the encoder and decoder using unprotected data. 
The encoder and decoder are trained by minimizing the discrepancy between the downstream model's outputs when fed with protected versus unprotected private data. 
In contrast, we utilize a publicly available Jacobian matrix to identify the local row and null spaces. 
By relying on this public information, we avoid any risk of privacy leakage.

\textbf{Coordinate-Wise DP.}
Our work is also aligned with \citep{muthukrishnan2025DP}, which investigates the determination of non-identical noise scales for independent coordinates with varying sensitivities under DP. 
Their primary objective is to minimize the MSE between the original and privatized data. 
In contrast, our method prioritizes downstream utility over data MSE (DMSE) minimization. Specifically, our objective is to minimize a first-order surrogate for the risk caused by noise injection. 
These two objectives are often misaligned; for instance, injecting substantial noise into the null space may significantly degrade the DMSE while leaving downstream performance unaffected. 
Another key difference is that their method does not account for the task geometry. 
Finally, while they directly inject anisotropic noise into the private data, we inject isotropic noise, which is then distributed anisotropically through reshaping.

\textbf{Variance-Aware DP.} 
In terms of utilizing prior information, PLAN \citep{aumuller2024plan} shares similarities with our work. 
Assuming a known covariance and data sampled from distributions with bounded coordinate-wise standard deviations, PLAN calibrates noise by disproportionately allocating the privacy budget to coordinates exhibiting higher variance, guided by the data geometry inherent in the covariance matrix. 
In contrast, our approach determines noise scales based on the task-specific importance of subspaces utilized by the downstream model, rather than relying on data correlations.

Table~\ref{tab:comparison} summarizes these comparisons.
We provide discussions on relevance-aware and feature-wise approaches, alongside post-processing, and DP assisted by public data and models in Appendix~\ref{appendix:1}.

\begin{table}[t]
  \caption{Comparison between our approach and prior methods for nonlinear models.}
  \label{tab:comparison}
  \centering
  \resizebox{\textwidth}{!}{
  \begin{tabular}{@{}lllccc@{}}
    \toprule
    \textbf{Method} & \textbf{Focus}    & \textbf{Objective}             & \textbf{Use Private Data} & \textbf{Use Jacobian Guidance} & \textbf{Mechanism-Agnostic} \\
    \midrule
    Task-Aware~\citep{cheng2022task} & Task model        & Prediction matching            & Yes                       & No  & Yes \\
    CW~\cite{muthukrishnan2025DP}         & Query             & Data-space MSE                 & No                        & No  & No  \\
    PLAN~\citep{aumuller2024plan}       & Data distribution & Mean-estimation MSE            & No                        & No  & No  \\
    \textbf{Our Approach}   & Task model        & First-order noise risk         & No                        & Yes & Yes \\
    \bottomrule
  \end{tabular}
  }
\end{table}

\section{Preliminaries} \label{sec:pre}
\subsection{Jacobian-Guided Subspace Identification}\label{sec:row_null}
Suppose that we have an $m$-dimensional data representation $\boldsymbol{z} \in \mathbb{R}^m$ and a linear transformation matrix $\boldsymbol{W} \in \mathbb{R}^{k \times m}$, where $m, k \in \mathbb{Z}^+$.
The row space and null space of $\boldsymbol{W}$ are two subspaces that form an orthogonal decomposition of $\mathbb{R}^m$. 
The row space, $\text{Row}(\boldsymbol{W})$, is spanned by the row vectors of $\boldsymbol{W}$ and captures the directions along which the transformation has a non-trivial effect. 
An orthonormal basis for $\text{Row}(\boldsymbol{W})$, $\boldsymbol{u}_r$, is derived from the Jacobian, $\boldsymbol{J}_{\mathcal{T}}(\boldsymbol{z}) = \partial \mathcal{T}(\boldsymbol{z})/\partial \boldsymbol{z} = \boldsymbol{W}$, which is constant and equals the weight matrix. 
In contrast, the null space, $\text{Null}(\boldsymbol{W}) = \{\boldsymbol{z} \in \mathbb{R}^m \mid \boldsymbol{W}\boldsymbol{z} = \mathbf{0}\}$, consists of all vectors annihilated by the transformation. 
A basis for $\text{Null}(\boldsymbol{W})$, $\boldsymbol{u}_n$, can be found by computing an orthonormal set of vectors that span the orthogonal complement of the row space.
These subspaces characterize which components of a representation are task-relevant  and task-irrelevant, respectively.

For a nonlinear task function, unlike the linear case, a single global weight matrix that fully characterizes the transformation does not exist. 
Instead, we can analyze the local behavior of $\mathcal{T}$ around a specific representation $\boldsymbol{z}_0$ using a first-order Taylor expansion: $\mathcal{T}(\boldsymbol{z}_0 + \Delta \boldsymbol{z}) \approx \mathcal{T}(\boldsymbol{z}_0) + \boldsymbol{J}_{\mathcal{T}}(\boldsymbol{z}_0)\Delta \boldsymbol{z}$, where $\Delta \boldsymbol{z}$ is a sufficiently small perturbation.\footnote{This local linear approximation is well-suited for modern deep learning architectures, e.g. neural networks composed of parallel ReLU-equipped units that function as piecewise linear approximators.}
Locally, the Jacobian $\boldsymbol{J}_{\mathcal{T}}(\boldsymbol{z}_0)$ acts analogously to the linear weight matrix $\boldsymbol{W}$, allowing us to define a local row space and a local null space specific to the point $\boldsymbol{z}_0$.
If the perturbation $\Delta \boldsymbol{z}$ lies within the local null space of $\boldsymbol{J}_{\mathcal{T}}(\boldsymbol{z}_0)$, then $\boldsymbol{J}_{\mathcal{T}}(\boldsymbol{z}_0)\Delta \boldsymbol{z} = \mathbf{0}$, and consequently $\mathcal{T}(\boldsymbol{z}_0 + \Delta \boldsymbol{z}) \approx \mathcal{T}(\boldsymbol{z}_0)$, meaning that the nonlinear transformation remains locally invariant to perturbations along the null space directions.

\subsection{Local Differential Privacy}\label{sec:LDP}
Local Differential Privacy (LDP)~\citep{kasiviswanathan2011what} applies a randomization mechanism to each data representation $\boldsymbol{z} \in \mathcal{Z}$, within the representation space $\mathcal{Z} \subset \mathbb{R}^{m}$.
Let $\epsilon > 0$ denote the privacy budget and $\delta \geq 0$ the failure probability, where $\delta=0$ corresponds to pure $\epsilon$-LDP and $\delta>0$ to $(\epsilon, \delta)$-LDP.
A mechanism $\mathcal{M}:\mathbb{R}^{m} \rightarrow \mathbb{R}^{m}$ satisfies $(\epsilon, \delta)$-LDP if, for any pair of representations $\boldsymbol{z}, \boldsymbol{z}' \in \mathcal{Z}$ and any measurable subset $\mathcal{S} \subseteq \text{Range}(\mathcal{M})$, it holds that $\Pr[\mathcal{M}(\boldsymbol{z}) \in \mathcal{S}] \leq e^{\epsilon} \Pr[\mathcal{M}(\boldsymbol{z}') \in \mathcal{S}] + \delta$.
A canonical class of mechanisms satisfying these guarantees is that of additive noise mechanisms~\citep{dwork2014algorithmic} of the form
\begin{equation} \label{eq:1}
\tilde{\boldsymbol{z}} = \mathcal{M}(\boldsymbol{z}) = \boldsymbol{z} + \boldsymbol{\xi}.
\end{equation}
For $(\epsilon,0)$-LDP (i.e., pure $\epsilon$-LDP), the Laplace mechanism $\mathcal{M}_L$ is a natural choice, where the noise scale $b$ is directly proportional to the $\ell_1$-sensitivity, defined as $\Delta_1 = \max_{\boldsymbol{z}, \boldsymbol{z}' \in \mathcal{Z}} \|\boldsymbol{z} - \boldsymbol{z}'\|_1$, and each element of $\boldsymbol{\xi}$ is sampled i.i.d.\ from $\mathrm{Lap}(0, \Delta_1/\epsilon)$~\citep{dwork2014algorithmic}.
For $(\epsilon, \delta)$-LDP with $\delta>0$, the Gaussian mechanism $\mathcal{M}_G$ is a natural choice, where $\boldsymbol{\xi} \sim \mathcal{N}(\boldsymbol{0}, \sigma^2 \mathbf{I})$ and the noise scale $\sigma$ is proportional to the $\ell_2$-sensitivity, $\Delta_2 = \max_{\boldsymbol{z}, \boldsymbol{z}' \in \mathcal{Z}} \|\boldsymbol{z} - \boldsymbol{z}'\|_2$, calibrated via the Analytic Gaussian Mechanism~\citep{balle2018improving}.

As a local instantiation of DP, LDP inherits its fundamental strengths, including worst-case bounds on privacy leakage and immunity to post-processing~\citep{dwork2014algorithmic}.
These theoretical bounds hold even against an adversary with full knowledge of the randomization mechanism $\mathcal{M}$ and its parameters, including the privacy budget, sensitivity, and noise distribution settings.
Furthermore, the privacy guarantees of LDP are preserved under arbitrary post-processing independent of (private) data representation.
Once $\tilde{\boldsymbol{z}}$ is produced by a mechanism satisfying $(\epsilon, \delta)$-LDP, any measurable downstream computation or transformation $g(\tilde{\boldsymbol{z}})$ independent of the private data maintains the same privacy guarantee without consuming additional privacy budget.

\section{Motivation}

Our primary objective is to enhance the utility of randomized representations generated by an LDP mechanism $\mathcal{M}$. 
We focus on pure $\epsilon$-LDP ($\delta=0$), while our approach also extends to $(\epsilon,\delta)$-LDP with a non-zero probability of failure (i.e., $\delta > 0$).
Nonetheless, we explore a mechanism-agnostic approach that can be extended to $(\epsilon, \delta)$-LDP without loss of generality.

A fundamental obstacle to this goal is the severe utility degradation caused by the heavy randomization. 
Consider a two-dimensional representation space ($m = 2$) where each coordinate is bounded within $[B_L, B_U]$ ($B_L, B_U \in \mathbb{R}, B_L < B_U$), yielding an $\ell_1$-sensitivity of $\Delta_1 = 2(B_U-B_L)$.
Under a relatively loose practical privacy regime with $\epsilon = 10$, the Laplace mechanism $\mathcal{M}_L$ requires a noise scale of $b = (B_U-B_L)/5$.
Since the maximum span of each dimension is $B_U-B_L$, the injected noise often perturbs representations by a magnitude comparable to, or exceeding, the scale of the representation space itself. 
This leads to severe semantic distortion of the randomized representation $\tilde{\boldsymbol{z}}$.
In addition, this degradation is exacerbated in high-dimensional spaces: as the sensitivities scale with the dimensionality (i.e., $\Delta_1 = \mathcal{O}(m)$ and $\Delta_2 = \mathcal{O}(\sqrt{m})$).

To mitigate this limitation, we explore a novel approach that mitigates the noise injected into the task-relevant subspaces.
Our approach is motivated by the observation that a classifier exhibits anisotropic sensitivity to perturbations in the representation space.
Consider a linear binary classifier, $\mathcal{T}(\boldsymbol{z})=\boldsymbol{w}^\top\boldsymbol{z}+b$, with a single decision boundary in $\mathbb{R}^2$.
The decision boundary is defined by a normal vector $\boldsymbol{w} \in \mathbb{R}^2$, and the space can be spanned by two basis vectors $\boldsymbol{u}_{r}$ and $\boldsymbol{u}_{n}$, aligned parallel and perpendicular to $\boldsymbol{w}$, respectively.
$\boldsymbol{u}_{r}$ and $\boldsymbol{u}_{n}$ span the row space and null space of the classifier, respectively.
Crucially, perturbing a representation $\boldsymbol{z}$ along $\boldsymbol{u}_{r}$ can alter the classification outcome, whereas perturbations of any arbitrary magnitude along $\boldsymbol{u}_{n}$ leave the classification result unchanged.
We provide an illustration of this motivating example in Appendix~\ref{appdx:motivating_example}.
Based on this observation, our approach anisotropically reshapes the isotropic noise $\boldsymbol{\xi}$ to allocate less noise along the task-sensitive direction $\boldsymbol{u}_{r}$ and more along the task-insensitive direction $\boldsymbol{u}_{n}$. 
\section{Proposed Approach}
We propose to reduce the noise injected into the task-critical subspaces defined by the public downstream model. 
Our approach comprises a pre-processing function $f:\mathcal{Z} \rightarrow \bar{\mathcal{Z}}$, where $\bar{\mathcal{Z}}$ is an intermediate bounded space in $\mathbb{R}^m$, and a post-processing function $g: \mathbb{R}^{m} \rightarrow \mathbb{R}^{m}$.
A randomization mechanism $\mathcal{M}$ is applied to the intermediate representation $\bar{\boldsymbol{z}} = f(\boldsymbol{z})$, followed by the application of the post-processing function $g$.
As a result, this process yields a randomized version of $\boldsymbol{z}$ injected with anisotropic noise $\boldsymbol{\xi}_a \in \mathbb{R}^m$:
\begin{equation}\label{eq:2}
\hat{\boldsymbol{z}}= g(\mathcal{M}(f(\boldsymbol{z}))) = g(\mathcal{M}(\bar{\boldsymbol{z}})) = g(\bar{\boldsymbol{z}} + \boldsymbol{\xi}) \approx \boldsymbol{z} + \boldsymbol{\xi}_a,
\end{equation}
where the covariance matrix $\boldsymbol{\Sigma} \in \mathbb{R}^{m \times m}$ of $\boldsymbol{\xi}_a$ is publicly known and positive-definite.
This matrix is designed to rotate and scale the noise, attenuating it along the row space of the downstream model, while amplifying it along the null space.
A more detailed description is provided in Appendix~\ref{appdx:pa_description}.

\subsection{Pre-processing and Post-processing} \label{sec:pre_post}
Suppose that the eigendecomposition of $\boldsymbol{\Sigma}$ is given by $\boldsymbol{\Sigma} = \boldsymbol{U}\boldsymbol{\Lambda}\boldsymbol{U}^\top$ where $\boldsymbol{U} \in \mathbb{R}^{m \times m}$ is an orthogonal matrix representing rotation, and $\boldsymbol{\Lambda} \in \mathbb{R}^{m \times m}$ is a diagonal matrix that scales each coordinate in $\mathbb{R}^{m}$.
We derive our pre-processing and post-processing functions by modifying \eqref{eq:3}
\begin{equation}\label{eq:3}
\boldsymbol{z} + \boldsymbol{\xi}_a = \boldsymbol{L}(\boldsymbol{L}^{-1}(\boldsymbol{z} - \boldsymbol{\mu})+\boldsymbol{\xi}) + \boldsymbol{\mu}
\end{equation}
where $\boldsymbol{\mu} \in \mathbb{R}^{m}$ represents an offset and $\boldsymbol{L}=\boldsymbol{U}\boldsymbol{\Lambda}^{1/2} \in \mathbb{R}^{m \times m}$ denotes a reshaping factor.

\textbf{Pre-processing.}
Define the configuration tuple as $\boldsymbol{\phi} = (\boldsymbol{U}, \boldsymbol{\Lambda}, \boldsymbol{\mu})$, and let $\rho > 0$ and $p \in [1,\infty]$.
Incorporating clipping, we define our pre-processing function $f_{\boldsymbol{\phi}}:\mathcal{Z} \rightarrow \bar{\mathcal{Z}}$ as \eqref{eq:4}:
\begin{equation}\label{eq:4}
f_{\boldsymbol{\phi}}(\boldsymbol{z};p,\rho) = \theta_p\left(\boldsymbol{L}^{-1}(\boldsymbol{z}-\boldsymbol{\mu});\rho\right)\boldsymbol{L}^{-1}(\boldsymbol{z}-\boldsymbol{\mu})
\end{equation}
where $\theta_p(\cdot;\rho)$ denotes $\ell_p$-norm-based radial clipping with a threshold $\rho$.
The detailed construction of $\boldsymbol{L} = \boldsymbol{U}\boldsymbol{\Lambda}^{1/2}$ via Jacobian computation is presented in Section~\ref{sec:cov}.
Using $\boldsymbol{L}^{-1}$, this pre-processing maps the representation $\boldsymbol{z}$ from the space $\mathcal{Z} \subset \mathbb{R}^m$ to another bounded space $\bar{\mathcal{Z}} \subset \mathbb{R}^m$. 
Specifically, it first offsets $\boldsymbol{z}$ by $\boldsymbol{\mu}$, applies an inverse rotation and rescaling via $\boldsymbol{L}^{-1}$, and subsequently bounds the result to ensure that $\bar{\boldsymbol{z}} \in \bar{\mathcal{Z}}$ strictly resides within a bounded space.

\textbf{Post-processing.}
Furthermore, we define the post-processing function $g_{\boldsymbol{\phi}}: \mathbb{R}^{m} \rightarrow \mathbb{R}^{m}$ as \eqref{eq:5}:
\begin{equation}\label{eq:5}
g_{\boldsymbol{\phi}}(\mathcal{M}(f_{\boldsymbol{\phi}}(\boldsymbol{z};p,\rho))) = \boldsymbol{L} (f_{\boldsymbol{\phi}}(\boldsymbol{z};p,\rho)+\boldsymbol{\xi}) + \boldsymbol{\mu} \approx \boldsymbol{z} + \boldsymbol{\xi}_a.
\end{equation}
Our post-processing function $g$ reverses the transformation $\boldsymbol{L}^{-1}$ and the offset $\boldsymbol{\mu}$ applied in $f$, while simultaneously rotating and rescaling the isotropic noise $\boldsymbol{\xi}$ injected into $\bar{\mathcal{Z}}$. 
This yields a new anisotropic noise vector $\boldsymbol{\xi}_a$ designed to enhance downstream utility. 
Notably, by virtue of post-processing immunity, this approach preserves the privacy guarantees because $g$ is applied to the output of the $\epsilon$-LDP mechanism $\mathcal{M}$.

\subsection{Covariance Matrix for Rotation and Rescaling}\label{sec:cov}
To exploit post-processing immunity, our approach relies on a covariance matrix, $\boldsymbol{\Sigma}$, constructed from public knowledge. 
In typical LDP settings, although downstream models cannot access the (private) representation $\boldsymbol{z}$, they are generally expected to perform interpolation or be trained on data following a similar distribution. 
We therefore assume access to known downstream models or those pre-trained on public datasets, and estimate $\boldsymbol{\Sigma}$ by analyzing the Jacobian of these public models.

The construction of $\boldsymbol{\Sigma}$ consists of two components: the \emph{rotation} and the \emph{scaling}, as mentioned in Section~\ref{sec:pre_post}. 
We first determine the rotation matrix $\boldsymbol{U}$, which aligns the axes of $\boldsymbol{\xi}_a$ with the basis vectors of the task-sensitive and task-insensitive subspaces. 
Then, we determine the scaling matrix $\boldsymbol{\Lambda}=\operatorname{diag}(\lambda_1, \cdots, \lambda_m) \succ 0$, which assigns noise scales to each coordinate to enhance data utility.

\textbf{Rotation.} To identify the relevant basis vectors for rotation, we characterize the global geometric behavior of the downstream models. 
First, we evaluate the local Jacobian matrices $\boldsymbol{J}_{i} \in \mathbb{R}^{k \times m}$ for $i \in \{1, \dots, N\}$ at $N$ samples from public datasets. 
Next, we construct an aggregated Jacobian matrix $\boldsymbol{J} = \frac{1}{\sqrt{N}} \begin{bmatrix} \boldsymbol{J}_1^\top & \cdots & \boldsymbol{J}_N^\top \end{bmatrix}^\top \in \mathbb{R}^{Nk \times m}$ by vertically stacking $\boldsymbol{J}_{i}$. 
Subsequently, we obtain the Gram matrix of this empirically aggregated matrix, $\widehat{\boldsymbol{\Omega}}=\boldsymbol{J}^\top \boldsymbol{J}$, and perform eigendecomposition on $\widehat{\boldsymbol{\Omega}}$ such that $\widehat{\boldsymbol{\Omega}} = \boldsymbol{V}\operatorname{diag} (\gamma_1,\dots,\gamma_m)\boldsymbol{V}^\top$, where $\operatorname{diag}(\gamma_1, \dots, \gamma_m) \in \mathbb{R}^{m \times m}$ is a positive semi-definite diagonal matrix containing the eigenvalues, and $\boldsymbol{V} \in \mathbb{R}^{m \times m}$ represents the eigenvectors of $\widehat{\boldsymbol{\Omega}}$ used for our rotation ($\boldsymbol{U}=\boldsymbol{V}$). 
The eigenvectors in $\boldsymbol{U}$ associated with significant eigenvalues span the global active subspace \citep{constantine2014active}, representing the task-sensitive directions. 
Conversely, the vectors corresponding to zero eigenvalues span the inactive subspace, representing the task-insensitive directions.

\textbf{Scaling.} 
The objective of scaling is to assign a tailored noise scale factor $\sigma_i$ to each coordinate $i \in \{1, 2, \dots, m\}$ of the noise $\boldsymbol{\xi}_a = \boldsymbol{L}\boldsymbol{\xi}$, where these coordinates are aligned with the basis vectors of the task-sensitive and task-insensitive subspaces. 
To determine these individual scale factors, we use the eigenvalues $\gamma_i$ of the Jacobian Gram matrix, as they represent the relative importance of the $i$-th coordinate. 
Additionally, we impose $p=1$ and a gauge constraint such that the sum of the inverse noise scales remains constant before and after rescaling; that is, $m = \sum_{i=1}^m (\lambda_i)^{-1/2}$, where $\lambda_i = \left(\frac{\sigma_i}{\sigma} \right)^2$ and $0 <\sigma_i < \infty$. 
The isotropic noise scale $\sigma$ is determined once $\rho$ is set. 
Furthermore, to prevent the risk from diverging due to the empirical Jacobian, the distribution discrepancy between public and private data, and the nonlinear remainder, we assign a finite scale cap $1 < \lambda_{\max} < \infty$ to the task-insensitive coordinates associated with zero or very small eigenvalues. 
Let $C$ and $F$ denote the sets of capped and free coordinates, respectively. 
The method of Lagrange multipliers yields the scale factors as
\begin{equation}
    \lambda_i^\star =
    \begin{cases}
        \lambda_{\max}, & i\in C, \\[10pt]
        \displaystyle \left( \frac{ m-|C|\lambda_{\max}^{-1/2} }{ \sum_{j\in F}\gamma_j^{1/3} } \gamma_i^{1/3} \right)^{-2}, & i\in F. 
    \end{cases}
    \label{eq:scale}
\end{equation}
Eq.~\eqref{eq:scale} is obtained by jointly optimizing the Jacobian basis and the scale allocation, using the noise component of the first-order risk surrogate as the objective function. 
This choice of objective is inspired by our observation that the noise scale, rather than the distortion induced by clipping, is the primary bottleneck under LDP constraints (see Section~\ref{sec:ablation} for details). 
A detailed analysis of the derivation and performance bounds is provided in Appendix~\ref{append:analysis}.

\subsection{Sensitivity Bounding}
We employ a clipping operation to control sensitivity and reduce the noise injected into task-sensitive subspaces. Applying $\boldsymbol{L}^{-1}$ during the pre-processing step stretches the task-sensitive subspaces and shrinks the task-insensitive ones. 
This stretching amplifies the overall sensitivity in $\mathbb{R}^{m}$ (see Appendix~\ref{appendix:3}), necessitating the injection of a large amount of noise to guarantee $\epsilon$-LDP. 
If $\boldsymbol{L}$ is applied during post-processing without clipping, the task-sensitive subspaces receive the same amount of noise as they would if isotropic noise were directly injected into the original space $\mathcal{Z}$, while the task-insensitive subspaces suffer from excessive noise amplification. 
However, by regulating the sensitivity through clipping in the transformed space $\boldsymbol{L}^{-1}\mathcal{Z}$, we can effectively reduce the noise injected into the task-sensitive subspaces while satisfying $\epsilon$-LDP. 
The task-insensitive subspaces also remain protected by $\epsilon$-LDP-compliant noise, although this noise is often excessive.

\subsection{Privacy Guarantee}
Our configuration parameters $\boldsymbol U$, $\boldsymbol\Lambda$, $\boldsymbol\mu$, $p$, and $\rho$ are selected using public information and fixed before privatization.
Privacy then follows from the bounded-domain property of $f_{\boldsymbol\phi}$, the privacy guarantee of the base randomizer $\mathcal{M}$ on that domain, and post-processing by $g_{\boldsymbol\phi}$.

\begin{restatable}[Privacy Guarantee of Jacobian-Guided Anisotropic Noise Reshaping]
    {theorem}{privacytheorem}
\label{thm:privacy}

Let $\mathcal{Z} \subseteq \mathbb{R}^m$ be the domain of a single private representation, which may possibly be unbounded. Assume: 
\begin{itemize}[leftmargin=*]
    \item \textbf{(A1) Public configuration.} Conditional on fixed public side information, let $\boldsymbol{U}$ be orthogonal and $\boldsymbol{\Lambda} = \operatorname{diag}(\lambda_1,\ldots,\lambda_m)$ satisfy $0 < \lambda_{\min} \le \lambda_i \le \lambda_{\max} < \infty$.
    Define the configuration tuple $\boldsymbol{\phi} = (\boldsymbol{U}, \boldsymbol{\Lambda}, \boldsymbol{\mu})$, and let $\boldsymbol{L} = \boldsymbol{U}\boldsymbol{\Lambda}^{1/2}$ (which is invertible and has a finite condition number). 
    Let $\rho > 0$ and $p \in [1,\infty]$. $\boldsymbol{\phi}$, $p$, and $\rho$ may be chosen analytically or empirically using only the public task model, public data, and public validation results. Once selected, they are fixed before any input is privatized, and are neither computed from nor adapted to the protected representation. 
    \item \textbf{(A2) Bounded intermediate domain.} With $\theta_p(\boldsymbol{\bar{z}};\rho) = \min(1, \rho/\|\boldsymbol{\bar{z}}\|_p)$ for $\boldsymbol{\bar{z}} \neq \boldsymbol{0}$ and $1$ otherwise, define $f_\phi(\boldsymbol z) = \theta_p\!\left( \boldsymbol L^{-1}(\boldsymbol z-\boldsymbol\mu);\rho \right) \boldsymbol L^{-1}(\boldsymbol z-\boldsymbol\mu)$ and the bounded domain $\bar{\mathcal{Z}}_{p,\rho} = \{\boldsymbol{\bar{z}} \in \mathbb{R}^m : \|\boldsymbol{\bar{z}}\|_p \le \rho\}$. Radial clipping ensures $f_{\boldsymbol{\phi}}(\mathcal{Z}) \subseteq \bar{\mathcal{Z}}_{p,\rho}$ for every $\mathcal{Z}$, regardless of whether it is bounded or not. 
    \item \textbf{(A3) Underlying randomizer.} The base randomizer $\mathcal{M}$ is instantiated with the domain and calibration required by its original privacy theorem. Under the corresponding conditions, $\mathcal{M}$ satisfies $(\epsilon,\delta)$-LDP uniformly over $\bar{\mathcal{Z}}_{p,\rho} = \{ \boldsymbol{\bar{z}}\in\mathbb{R}^m: \|\boldsymbol{\bar{z}}\|_p\le \rho \}$. 
    \item \textbf{(A4) Post-processing.} $g_{\boldsymbol{\phi}}(\boldsymbol{v}) = \boldsymbol{L}\boldsymbol{v} + \boldsymbol{\mu}$ is deterministic, measurable, and fixed solely from public information. 
\end{itemize}

Then for every admissible configuration $\boldsymbol{\phi}$, the complete mechanism $\Phi_{\boldsymbol{\phi}} = g_{\boldsymbol{\phi}} \circ \mathcal{M} \circ f_{\boldsymbol{\phi}}$ satisfies $(\epsilon,\delta)$-LDP on $\mathcal{Z}$.
\end{restatable}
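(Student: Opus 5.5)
The proof will be a direct composition argument in three steps: pre-processing as a fixed deterministic map into the bounded domain, the base randomizer's guarantee on that domain, and post-processing immunity. The first step handles the deterministic map $f_{\boldsymbol\phi}$. By (A1), $\boldsymbol{L}=\boldsymbol{U}\boldsymbol{\Lambda}^{1/2}$ is invertible, since $\boldsymbol U$ is orthogonal and every $\lambda_i\ge\lambda_{\min}>0$. Hence $\boldsymbol z\mapsto \boldsymbol L^{-1}(\boldsymbol z-\boldsymbol\mu)$ is a well-defined affine map, and it is fixed before seeing any input. Radial clipping then gives $\|f_{\boldsymbol\phi}(\boldsymbol z)\|_p\le\rho$ for every $\boldsymbol z\in\mathbb{R}^m$:
\begin{itemize}[leftmargin=*]
    \item if $\boldsymbol{\bar z}\neq\boldsymbol 0$, then $\|\theta_p(\boldsymbol{\bar z};\rho)\boldsymbol{\bar z}\|_p=\min(\|\boldsymbol{\bar z}\|_p,\rho)$;
    \item if $\boldsymbol{\bar z}=\boldsymbol 0$, the output is $\boldsymbol 0$.
\end{itemize}
This holds for every $p\in[1,\infty]$, including $p=\infty$. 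So $f_{\boldsymbol\phi}(\mathcal Z)\subseteq\bar{\mathcal Z}_{p,\rho}$ no matter whether $\mathcal Z$ is bounded, which is (A2).

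Next I will show that $\mathcal M\circ f_{\boldsymbol\phi}$ is $(\epsilon,\delta)$-LDP on $\mathcal Z$. Fix any $\boldsymbol z,\boldsymbol z'\in\mathcal Z$ and any measurable $\mathcal S$. Set $\bar{\boldsymbol z}=f_{\boldsymbol\phi}(\boldsymbol z)$ and $\bar{\boldsymbol z}'=f_{\boldsymbol\phi}(\boldsymbol z')$; both lie in $\bar{\mathcal Z}_{p,\rho}$. By (A3), $\Pr[\mathcal M(\bar{\boldsymbol z})\in\mathcal S]\le e^\epsilon\Pr[\mathcal M(\bar{\boldsymbol z}')\in\mathcal S]+\delta$, which is the required inequality for inputs $\boldsymbol z,\boldsymbol z'$. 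The key point is that the calibration of $\mathcal M$ (its sensitivity, e.g.\ $\Delta_1\le 2\rho$ for Laplace with $p=1$) is determined by $\bar{\mathcal Z}_{p,\rho}$ and public parameters only, not by $\mathcal Z$. Because $\boldsymbol\phi,p,\rho$ are fixed from public information before privatization (A1), the map $f_{\boldsymbol\phi}$ carries no data dependence beyond its argument. I will condition on the public side information throughout, so that $\boldsymbol\phi$ is a constant.

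Finally, I will apply post-processing immunity (Section~\ref{sec:LDP}). By (A4), $g_{\boldsymbol\phi}$ is deterministic, measurable, and independent of the private data. For measurable $\mathcal S$, the preimage $g_{\boldsymbol\phi}^{-1}(\mathcal S)$ is measurable, so
\begin{equation*}
\Pr[\Phi_{\boldsymbol\phi}(\boldsymbol z)\in\mathcal S]=\Pr[\mathcal M(f_{\boldsymbol\phi}(\boldsymbol z))\in g_{\boldsymbol\phi}^{-1}(\mathcal S)],
\end{equation*}
and the previous step gives the bound directly. In fact $g_{\boldsymbol\phi}$ is an invertible affine map, so it is even a bijection.

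The step needing the most care is the second one. It must be made explicit that the privacy theorem of $\mathcal M$ is invoked on the domain $\bar{\mathcal Z}_{p,\rho}$, and that the dependence of the configuration on public data does not break the "for all inputs" quantifier. The conditioning on fixed public side information handles the latter. Measurability of $f_{\boldsymbol\phi}$ also deserves a brief justification: it is a composition of an affine map with a continuous radial clipping, so it is Borel measurable.
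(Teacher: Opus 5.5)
Your proposal is correct and follows the paper's proof essentially step for step. Both fix $\boldsymbol z,\boldsymbol z'$, use (A2) to place $f_{\boldsymbol\phi}(\boldsymbol z)$ and $f_{\boldsymbol\phi}(\boldsymbol z')$ in $\bar{\mathcal Z}_{p,\rho}$, apply the uniform guarantee (A3), and pull the measurable set back through the deterministic measurable map $g_{\boldsymbol\phi}$. Your explicit check that radial clipping lands in the $\ell_p$ ball is harmless extra work, since the paper takes it as part of (A2). Your remark on the Borel measurability of $f_{\boldsymbol\phi}$ is not actually needed, because the LDP inequality is a statement about each fixed pair of inputs.
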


We provide the proof and detailed instantiations of several randomizers, including Laplace~\citep{dwork2014algorithmic}, AGM~\citep{balle2018improving}, and PrivUnit2~\citep{bhowmick2018privunit,asi2022privunitg} in Appendix~\ref{appendix:privacy_guarantee}.

Additional properties of our approach are detailed in Appendices~\ref{sec:apdx_comp}--\ref{sec:apdx_fl}.
\section{Evaluation}\label{sec:eval}
We evaluate the utility of our method by measuring predictive performance across downstream inference tasks (one regression and two image classification tasks) under varying representation dimensions ($m \in \{16, 32, 64\}$) and privacy budgets ($\epsilon \in [0.5, 10]$).

\textbf{Datasets.} We apply user-level LDP to time-series regression using London household smart meter (LHSM) data~\citep{ukpowernetworks_lcl,jeanmidev2019smartmeters}, which records the half-hourly electricity consumption (kWh/half-hour) of 5,547 London households over approximately 27 months. 
Additionally, we apply item-level LDP (see Corollary~\ref{corollary:item-level}) to image classification on the CIFAR-10-C, CIFAR-10 datasets~\cite{krizhevsky2009cifar}, and MNIST~\citep{lecun1998mnist}.

We split the LHSM dataset into public and private sets based on both household and chronological criteria. 
The public dataset comprises the initial 14 months of data from 3,000 randomly sampled households, used for training the LR model. 
The private dataset consists of the remaining 13 months from the other households, 
where it is independent and out-of-sample for the pre-trained model.

The training sets from CIFAR-10 and MNIST are treated as public. 
These are partitioned into training and validation sets with an 80:20 split to pre-train the public feature extractors (VAE~\cite{kingma2013auto} and ResNet-20~\cite{he2016deep}) and classifiers. 
To assess the utility of the mechanisms, the test sets of CIFAR-10-C~\cite{hendrycks2019benchmarking} (e.g., brightness, fog, defocus blur, and Gaussian noise), CIFAR-10, and MNIST are treated as private. 
Note that the CIFAR-10-C test set exhibits a distribution shift compared to the CIFAR-10 training set.

\textbf{Downstream Tasks.}
Both the regression and classification models are trained on the public dataset and subsequently evaluated by feeding them the randomized private dataset to measure accuracy and agreement. 
Specifically, the regression task involves predicting the next day's \emph{energy\_mean} using the previous 16 days ($m=16$) of \emph{energy\_mean} values as input. 
For the classification task, an $m$-dimensional feature representation ($m \in \{32, 64\}$) is first extracted using a public feature extractor and then fed into the classifier.

\textbf{Downstream Models.} We employ linear regression (LR), linear classifiers (LC), and multi-layer perceptron (MLP) classifiers with ReLU, GELU, and Tanh activations. 

\textbf{Baselines.} 
Our evaluation includes foundational mechanisms, such as the standard Laplace~\citep{dwork2014algorithmic}, AGM~\citep{balle2018improving}, and PrivUnit2~\citep{bhowmick2018privunit}, and state-of-the-art approaches, including Instance Optimal (IO)~\citep{huang2021instance}, PrivUnitG~\citep{asi2022privunitg}, Task-Aware~\citep{cheng2022task}, PLAN~\citep{aumuller2024plan}, and Coordinate-wise (CW)~\citep{muthukrishnan2025DP}.

\subsection{Results: Privacy-Utility Trade-offs}
We evaluate utility using Root Mean Squared Error (RMSE) for regression and average accuracy for classification, where lower RMSE and higher accuracy indicate better performance, respectively.
Tables~\ref{tb:lr} and \ref{tb:nonlinear} report the conservative results of the proposed approach (PA) over 20 different seeds: RMSE across 2,447 households for regression, and average accuracy over 10,000 samples for nonlinear classification using CIFAR-10-C (brightness). 
Further details regarding our experimental setup, evaluation protocol and dataset descriptions are provided in Appendix~\ref{appendix:exp}. 
Due to the page limit, we here present the primary results for downstream models, specifically LR and an MLP classifier with ReLU activations.

\textbf{Linear Regression.}
Mechanisms without PA exhibited high RMSE values under low privacy budgets. 
At $\epsilon=0.5$, Laplace recorded 17.63 kWh/half-hour, while PrivUnit2 and PrivUnitG reached 3.97 and 3.96 kWh/half-hour, respectively. 
Although RMSE decreased gradually as $\epsilon$ increased, even at $\epsilon=10.0$, Laplace remained at 0.89 kWh/half-hour and PrivUnit2 at 0.32 kWh/half-hour, indicating limited utility without PA.
Task-Aware approach~\citep{cheng2022task} maintained its performance largely regardless of the budget; yet, in tasks that deviate from their MSE optimization objective, such as classification, we observed collapse, with performance falling below that of Laplace (Table 11 in Appendix~\ref{apdx_activation}).

Mechanisms augmented with PA achieved lower RMSE across all $\epsilon$ values. 
Laplace+PA recorded 1.05 kWh/half-hour at $\epsilon=0.5$ and 0.54 kWh/half-hour at $\epsilon=1.0$, reducing RMSE by factors of approximately 16.8 and 16.4, respectively, compared with the base Laplace mechanism.
PrivUnit2+PA and PrivUnitG+PA demonstrated comparable improvements, and they outperformed Laplace+PA.
CW+PA w/o bounding also exhibited competitive performance. 
This may show that CW’s DMSE optimization translates to linear problems computed in the form of $\boldsymbol{Wz}$.
It recorded 0.14 kWh/half-hour at $\epsilon=5.0$ and 0.10 kWh/half-hour at $\epsilon=7.5$, outperforming all other mechanisms and approaching the no-randomization baseline of 0.07 kWh/half-hour most closely. 
While approximate mechanisms, AGM, IO, and PLAN, outperform the standard Laplace mechanism, Laplace+PA surpasses them all. 
Furthermore, these approximate mechanisms also fall short of CW+PA w/o bounding. 
Considering that approximate mechanisms are specifically designed to improve utility by tolerating a privacy failure probability, it is significant that PA-integrated approaches substantially outperform them.

\begin{table}[t]
\caption{Main results for \textbf{Linear Regression} ($m=16$) on the \textbf{LHSM}. The Root Mean Square Error (RMSE) is used. PA denotes the proposed approach.}
\centering
\resizebox{\textwidth}{!}{
\begin{tabular}{l cccccccccccc}
\toprule
& \multicolumn{12}{c}{Privacy Budget ($\epsilon$)} \\
 \cmidrule(l){2-13}
 \textbf{Mechanism} & 0.5 & 1.0 & 1.5 & 2.0 & 2.5 & 3.0 & 3.5 & 4.0 & 4.5 & 5.0 & 7.5 & 10.0 \\
 \midrule
 No randomization & 0.0691 & 0.0691 & 0.0691 & 0.0691 & 0.0691 & 0.0691 & 0.0691 & 0.0691 & 0.0691 & 0.0691 & 0.0691 & 0.0691 \\
 \midrule
 Laplace ($\ell_1$) &
17.6293 & 8.8154 & 5.8778 & 4.4093 & 3.5284 & 2.9413 & 2.5221 & 2.2078 & 1.9635 & 1.7682 & 1.1832 & 0.8921 \\
 \midrule
 Laplace+PA &
\textbf{1.0520} & \textbf{0.5387} & \textbf{0.3729} & \textbf{0.2935} & \textbf{0.2482} & \textbf{0.2197} & \textbf{0.2006} & \textbf{0.1872} & \textbf{0.1774} & \textbf{0.1700} & \textbf{0.1511} & \textbf{0.1439} \\
 \midrule
 PrivUnit2 &
3.9709 & 2.0035 & 1.3492 & 1.0252 & 0.8348 & 0.7097 & 0.6222 & 0.5577 & 0.5083 & 0.4710 & 0.3619 & 0.3155 \\
 \midrule
 PrivUnit2+PA &
\textbf{0.9048} & \textbf{0.4776} & \textbf{0.3437} & \textbf{0.2822} & \textbf{0.2490} & \textbf{0.2282} & \textbf{0.2150} & \textbf{0.2058} & \textbf{0.1991} & \textbf{0.1945} & \textbf{0.1826} & \textbf{0.1780} \\
 \midrule
 PrivUnitG &
3.9645 & 1.9924 & 1.3471 & 1.0290 & 0.8425 & 0.7192 & 0.6333 & 0.5707 & 0.5235 & 0.4865 & 0.3831 & 0.3365 \\
 \midrule
 PrivUnitG+PA & 
\textbf{0.8996} & \textbf{0.4736} & \textbf{0.3406} & \textbf{0.2801} & \textbf{0.2471} & \textbf{0.2269} & \textbf{0.2136} & \textbf{0.2044} & \textbf{0.1981} & \textbf{0.1936} & \textbf{0.1822} & \textbf{0.1765} \\
 \midrule
 CW+PA w/o bounding  &
\textbf{1.1897} & \textbf{0.5976} & \textbf{0.4016} & \textbf{0.3045} & \textbf{0.2470} & \textbf{0.2093} & \textbf{0.1828} & \textbf{0.1634} & \textbf{0.1486} & \textbf{0.1371} & \textbf{0.1048} & \textbf{0.0908} \\
 \midrule
 Task-Aware & 
 0.1920 & 0.1920 & 0.1918 & 0.1916 & 0.1913 & 0.1910 & 0.1906 & 0.1901 & 0.1896 & 0.1890 & 0.1852 & 0.1802 \\
  \midrule
  \midrule
 AGM ($\ell_2$, $\delta=10^{-5}$) &
 11.6732 & 6.1941 & 4.2890 & 3.3123 & 2.7156 & 2.3121 & 2.0205 & 1.7995 & 1.6261 & 1.4863 & 1.0593 & 0.8400 \\
  \midrule
 IO ($\delta=10^{-5}$) &
 8.2244 & 5.9637 & 4.8461 & 4.1808 & 3.7330 & 3.3761 & 3.0977 & 2.8843 & 2.7001 & 2.5527 & 2.0428 & 1.7520 \\
  \midrule
 PLAN ($\delta=10^{-5}$) &
 8.4207 & 4.2118 & 2.8095 & 2.1088 & 1.6888 & 1.4091 & 1.2095 & 1.0601 & 0.9441 & 0.8514 & 0.5753 & 0.4395 \\
 \bottomrule
\end{tabular}
}
\vspace{-15pt}
\label{tb:lr}
\end{table}

\begin{table}[t]
\caption{Main results for the \textbf{Nonlinear MLP Classifier with ReLU} ($m=64$) on the \textbf{CIFAR-10-C (Brightness, Severity 5)}. Average accuracy is used. PA denotes the proposed approach.}
\centering
\resizebox{\textwidth}{!}{
\begin{tabular}{l cccccccccccc}
\toprule
& \multicolumn{12}{c}{Privacy Budget ($\epsilon$)} \\
 \cmidrule(l){2-13}
 \textbf{Mechanism} & 0.5 & 1.0 & 1.5 & 2.0 & 2.5 & 3.0 & 3.5 & 4.0 & 4.5 & 5.0 & 7.5 & 10.0 \\
 \midrule
 No randomization & 0.8365 & 0.8365 & 0.8365 & 0.8365 & 0.8365 & 0.8365 & 0.8365 & 0.8365 & 0.8365 & 0.8365 & 0.8365 & 0.8365 \\
 \midrule
 Laplace ($\ell_1$) &
0.1001 & 0.1013 & 0.1025 & 0.1038 & 0.1050 & 0.1063 & 0.1074 & 0.1087 & 0.1099 & 0.1111 & 0.1177 & 0.1251 \\
 \midrule
 Laplace+PA &
\textbf{0.1048} & \textbf{0.1103} & \textbf{0.1161} & \textbf{0.1219} & \textbf{0.1280} & \textbf{0.1342} & \textbf{0.1405} & \textbf{0.1467} & \textbf{0.1538} & \textbf{0.1609} & \textbf{0.1989} & \textbf{0.2416} \\
 \midrule
 PrivUnit2 &
0.1099 & 0.1203 & 0.1316 & 0.1434 & 0.1560 & 0.1686 & 0.1820 & 0.1950 & 0.2092 & 0.2227 & 0.2933 & 0.3615 \\
 \midrule
 PrivUnit2+PA &
\textbf{0.1194} & \textbf{0.1410} & \textbf{0.1660} & \textbf{0.1936} & \textbf{0.2223} & \textbf{0.2536} & \textbf{0.2840} & \textbf{0.3152} & \textbf{0.3453} & \textbf{0.3754} & \textbf{0.4978} & \textbf{0.5840} \\
 \midrule
 PrivUnitG &
0.1083 & 0.1190 & 0.1299 & 0.1415 & 0.1541 & 0.1665 & 0.1798 & 0.1931 & 0.2061 & 0.2197 & 0.2858 & 0.3375 \\
 \midrule
 PrivUnitG+PA & 
\textbf{0.1201} & \textbf{0.1425} & \textbf{0.1672} & \textbf{0.1936} & \textbf{0.2227} & \textbf{0.2524} & \textbf{0.2822} & \textbf{0.3121} & \textbf{0.3414} & \textbf{0.3688} & \textbf{0.4860} & \textbf{0.5648} \\
 \midrule
 CW+PA w/o bounding &
\textbf{0.1025} & \textbf{0.1058} & \textbf{0.1092} & \textbf{0.1128} & \textbf{0.1164} & \textbf{0.1201} & \textbf{0.1241} & \textbf{0.1282} & \textbf{0.1323} & \textbf{0.1366} & \textbf{0.1581} & \textbf{0.1828} \\
 \midrule
 \midrule
AGM ($\ell_2$, $\delta=10^{-5}$) &
0.1023 & 0.1046 & 0.1068  & 0.1090  & 0.1116 & 0.1139 & 0.1157 & 0.1181 & 0.1202 & 0.1225 & 0.1321 & 0.1423 \\
 \midrule
 IO ($\delta=10^{-5}$)&
0.1045 & 0.1080 & 0.1112 & 0.1142 & 0.1171 & 0.1198 & 0.1226 & 0.1253 & 0.1280 & 0.1307 & 0.1433 & 0.1561 \\
 \midrule
 PLAN ($\delta=10^{-5}$)&
0.1022 & 0.1049 & 0.1080 & 0.1107 & 0.1134 & 0.1163 & 0.1194 & 0.1224 & 0.1260 & 0.1290 & 0.1464 & 0.1654 \\
 \bottomrule
\end{tabular}
}
\vspace{-15pt}
\label{tb:nonlinear}
\end{table}

\textbf{Nonlinear Classification.}
The nonlinear classification experiments were conducted on the CIFAR-10-C (Brightness corruption at the highest severity level 5) test dataset using a ResNet-20-based nonlinear MLP classifier with ReLU activations, where 64-dimensional feature vectors serve as input.
Utility is evaluated by classification accuracy, with the no-randomization baseline achieving 0.8365. 
Baseline mechanisms without PA exhibited severely degraded performance across all privacy budgets.
Laplace $(\ell_1)$ yielded accuracies near 0.1, equivalent to random guessing on a 10-class task, while PrivUnit2 and PrivUnitG achieved results of 0.2933 and 0.2858 at $\epsilon=7.5$, respectively.

The task-aware approach by \citep{cheng2022task} is focused on MNIST and does not offer guidance for more complex datasets such as CIFAR-10(-C). Crucially, their method relies on direct access to private data during the training phase. Despite avoiding such direct access to private data, our approach demonstrates superior results. As detailed in Appendix~\ref{apdx:task-aware}, integrating PA with PrivUnit2 and PrivUnitG significantly outperforms their method under its optimal configuration, while the Laplace+PA mechanism yields comparable or superior performance.

The integration of PA consistently and significantly improved classification accuracy across all mechanisms and $\epsilon$ values. 
Laplace+PA outperformed its base counterpart at every operating point, achieving an accuracy of 0.1989 at $\epsilon=7.5$, compared to 0.1177 without PA. 
Furthermore, Laplace+PA surpassed all evaluated approximate mechanisms.
More notably, PrivUnit2+PA and PrivUnitG+PA delivered substantially stronger performance, achieving accuracies of 0.4978 and 0.4860 at $\epsilon=7.5$ and 0.3754 and 0.3688 at $\epsilon=5.0$, respectively. 
This represents an improvement of approximately 20 and 15 percentage points for each respective budget compared to their non-PA counterparts, underscoring the effectiveness of PA in recovering a significant portion of the utility lost to privacy noise.
CW+PA w/o bounding, however, underperformed relative to other PA-augmented mechanisms in this setting, recording only 0.1581 at $\epsilon=7.5$ and 0.1366 at $\epsilon=5.0$. 
This performance was even lower than that of Laplace+PA. 
The distinction between these two approaches lies in the use of sensitivity bounding versus a DMSE-optimized noise scale. 
These results suggest that our clipping-based approach is more effective under conditions where DMSE optimization is suboptimal.

\subsection{Ablation Study}
\label{sec:ablation}

\begin{table}[t]
\caption{Ablation study on \textbf{Pre-Processing} and \textbf{Post-Processing} for \textbf{Laplace + PA}, \textbf{PrivUnit2+PA}, and \textbf{PrivUnitG+PA} on \textbf{CIFAR-10-C (Brightness, Severity 5)}. Average accuracy is used.}
\centering
\resizebox{\textwidth}{!}{
\begin{tabular}{l cccccccccccc}
\toprule
\multirow{2}{*}{\textbf{Mechanism}} & \multicolumn{12}{c}{Privacy Budget ($\epsilon$)} \\
\cmidrule(l){2-13}
 & 0.5 & 1.0 & 1.5 & 2.0 & 2.5 & 3.0 & 3.5 & 4.0 & 4.5 & 5.0 & 7.5 & 10.0 \\
\midrule
 Laplace+PA ($\boldsymbol{\Lambda}_{\mathrm{Jac}}^{-1/2}\boldsymbol{U}_{\mathrm{Jac}}^{-1},\ell_1, \boldsymbol{U}_{\mathrm{Jac}}\boldsymbol{\Lambda}_{\mathrm{Jac}}^{1/2}$) 
 & \textbf{0.1045} & \textbf{0.1100} & \textbf{0.1161} & \textbf{0.1217} & \textbf{0.1279} & \textbf{0.1344} & \textbf{0.1411} & \textbf{0.1478} & \textbf{0.1543} & \textbf{0.1615} & \textbf{0.1996} & \textbf{0.2428} \\
 Laplace+($\boldsymbol{\Lambda}_{\mathrm{Jac}}^{-1/2}\boldsymbol{U}_{\mathrm{Jac}}^{-1},\ell_1,\boldsymbol{U}_{\mathrm{Jac}}$)
 & 0.1030 & 0.1048 & 0.1070 & 0.1087 & 0.1109 & 0.1130 & 0.1148 & 0.1171 & 0.1188 & 0.1210 & 0.1326 & 0.1447 \\
 Laplace+$(\ell_1)$
 & 0.1001 & 0.1015 & 0.1028 & 0.1042 & 0.1056 & 0.1068 & 0.1081 & 0.1096 & 0.1109 & 0.1124 & 0.1196 & 0.1276 \\
  Laplace+($\boldsymbol{\Lambda}_{\mathrm{Jac}}^{-1/2}\boldsymbol{U}_{\mathrm{PCA}}^{-1},\ell_1, \boldsymbol{U}_{\mathrm{PCA}}\boldsymbol{\Lambda}_{\mathrm{Jac}}^{1/2}$)  
 & 0.1014 & 0.1015 & 0.1016 & 0.1016 & 0.1017 & 0.1017 & 0.1019 & 0.1019 & 0.1019 & 0.1020 & 0.1022 & 0.1024 \\
 Laplace+($\boldsymbol{\Lambda}_{\mathrm{Jac}}^{-1/2}\boldsymbol{U}_{\mathrm{Rand}}^{-1},\ell_1, \boldsymbol{U}_{\mathrm{Rand}}\boldsymbol{\Lambda}_{\mathrm{Jac}}^{1/2}$)  
 & 0.1008 & 0.1008 & 0.1008 & 0.1008 & 0.1008 & 0.1008 & 0.1009 & 0.1009 & 0.1009 & 0.1009 & 0.1010 & 0.1011 \\
 Laplace+PA w/o Pre-processing & 0.1015 & 0.1014 & 0.1014 & 0.1014 & 0.1009 & 0.1009 & 0.1010 & 0.1009 & 0.1007 & 0.1008 & 0.1006 & 0.1001 \\
 Laplace+PA w/o Post-processing & 0.0984 & 0.0982 & 0.0979 & 0.0977 & 0.0976 & 0.0974 & 0.0971 & 0.0969 & 0.0966 & 0.0966 & 0.0954 & 0.0945 \\
 \midrule
 PrivUnit2+PA ($\boldsymbol{\Lambda}_{\mathrm{Jac}}^{-1/2}\boldsymbol{U}_{\mathrm{Jac}}^{-1},\ell_1, \boldsymbol{U}_{\mathrm{Jac}}\boldsymbol{\Lambda}_{\mathrm{Jac}}^{1/2}$)
 & \textbf{0.1196} & \textbf{0.1410} & \textbf{0.1662} & \textbf{0.1945} & \textbf{0.2244} & \textbf{0.2553} & \textbf{0.2856} & \textbf{0.3171} & \textbf{0.3471} & \textbf{0.3760} & \textbf{0.4986} & \textbf{0.5829} \\
 PrivUnit2+($\boldsymbol{\Lambda}_{\mathrm{Jac}}^{-1/2}\boldsymbol{U}_{\mathrm{Jac}}^{-1},\ell_1,\boldsymbol{U}_{\mathrm{Jac}}$)
 & 0.1155 & 0.1319 & 0.1509 & 0.1707 & 0.1926 & 0.2148 & 0.2381 & 0.2614 & 0.2856 & 0.3104 & 0.4209 & 0.5117 \\
 PrivUnit2+($\ell_1$)
 & 0.1111 & 0.1214 & 0.1328 & 0.1443 & 0.1567 & 0.1687 & 0.1820 & 0.1944 & 0.2092 & 0.2225 & 0.2942 & 0.3620 \\
 PrivUnit2+($\boldsymbol{\Lambda}_{\mathrm{Jac}}^{-1/2}\boldsymbol{U}_{\mathrm{PCA}}^{-1},\ell_1, \boldsymbol{U}_{\mathrm{PCA}}\boldsymbol{\Lambda}_{\mathrm{Jac}}^{1/2}$)
 & 0.1000 & 0.1001 & 0.1002 & 0.1003 & 0.1003 & 0.1002 & 0.1005 & 0.1008 & 0.1011 & 0.1013 & 0.1015 & 0.1019 \\
 PrivUnit2+($\boldsymbol{\Lambda}_{\mathrm{Jac}}^{-1/2}\boldsymbol{U}_{\mathrm{Rand}}^{-1},\ell_1, \boldsymbol{U}_{\mathrm{Rand}}\boldsymbol{\Lambda}_{\mathrm{Jac}}^{1/2}$)
 & 0.0993 & 0.0995 & 0.0997 & 0.0999 & 0.1001 & 0.1004 & 0.1007 & 0.1012 & 0.1019 & 0.1030 & 0.1043 & 0.1039 \\
 PrivUnit2+PA w/o Pre-processing & 0.1000 & 0.0994 & 0.0999 & 0.0997 & 0.0997 & 0.0994 & 0.0990 & 0.0986 & 0.0986 & 0.0984 & 0.0976 & 0.0955 \\
 PrivUnit2+PA w/o Post-processing 
 & 0.0995 & 0.0987 & 0.0979 & 0.0971 & 0.0961 & 0.0954 & 0.0947 & 0.0940 & 0.0935 & 0.0931 & 0.0890 & 0.0875 \\
 \midrule
 PrivUnitG+PA ($\boldsymbol{\Lambda}_{\mathrm{Jac}}^{-1/2}\boldsymbol{U}_{\mathrm{Jac}}^{-1},\ell_1, \boldsymbol{U}_{\mathrm{Jac}}\boldsymbol{\Lambda}_{\mathrm{Jac}}^{1/2}$)
 & \textbf{0.1205} & \textbf{0.1421} & \textbf{0.1666} & \textbf{0.1931} & \textbf{0.2230} & \textbf{0.2539} & \textbf{0.2838} & \textbf{0.3147} & \textbf{0.3440} & \textbf{0.3725} & \textbf{0.4887} & \textbf{0.5669} \\
 PrivUnitG+($\boldsymbol{\Lambda}_{\mathrm{Jac}}^{-1/2}\boldsymbol{U}_{\mathrm{Jac}}^{-1},\ell_1,\boldsymbol{U}_{\mathrm{Jac}}$)
 & 0.1146 & 0.1312 & 0.1501 & 0.1700 & 0.1910 & 0.2142 & 0.2373 & 0.2607 & 0.2839 & 0.3078 & 0.4121 & 0.4869 \\
 PrivUnitG+($\ell_1$)
 & 0.1089 & 0.1196 & 0.1303 & 0.1420 & 0.1551 & 0.1679 & 0.1812 & 0.1942 & 0.2071 & 0.2210 & 0.2864 & 0.3378 \\
 PrivUnitG+($\boldsymbol{\Lambda}_{\mathrm{Jac}}^{-1/2}\boldsymbol{U}_{\mathrm{PCA}}^{-1},\ell_1, \boldsymbol{U}_{\mathrm{PCA}}\boldsymbol{\Lambda}_{\mathrm{Jac}}^{1/2}$)
 & 0.0992 & 0.0993 & 0.0994 & 0.0995 & 0.0995 & 0.0996 & 0.0998 & 0.0998 & 0.1000 & 0.1001 & 0.1005 & 0.1009 \\
 PrivUnitG+($\boldsymbol{\Lambda}_{\mathrm{Jac}}^{-1/2}\boldsymbol{U}_{\mathrm{Rand}}^{-1},\ell_1, \boldsymbol{U}_{\mathrm{Rand}}\boldsymbol{\Lambda}_{\mathrm{Jac}}^{1/2}$)
 & 0.1012 & 0.1014 & 0.1016 & 0.1018 & 0.1020 & 0.1022 & 0.1025 & 0.1027 & 0.1030 & 0.1032 & 0.1041 & 0.1049 \\
 PrivUnitG+PA w/o Pre-processing 
 & 0.0986 & 0.0987 & 0.0985 & 0.0979 & 0.0977 & 0.0979 & 0.0976 & 0.0975 & 0.0975 & 0.0978 & 0.0966 & 0.0958 \\
 PrivUnitG+PA w/o Post-processing 
 & 0.0986 & 0.0983 & 0.0973 & 0.0964 & 0.0957 & 0.0951 & 0.0940 & 0.0932 & 0.0924 & 0.0919 & 0.0889 & 0.0869 \\
\bottomrule
\end{tabular}
}
\vspace{-15pt}
\label{tb:pre_post}
\end{table}

\paragraph{Pre-processing and Post-processing.}
We conduct an ablation study to investigate the individual contributions of the pre-processing and post-processing components within PA.

In summary, the ablation results demonstrate that both pre-processing and post-processing components are indispensable. 
Table~\ref{tb:pre_post} provides the results across all three mechanisms on CIFAR-10-C under brightness corruption at severity level 5.
Removing either component causes all three mechanisms to collapse to near-random performance (10\%), regardless of the privacy budget. 
A notable instability is observed when post-processing is ablated: the standard deviation of accuracy grows substantially with increasing $\epsilon$, reaching up to 0.026 for PrivUnit2+PA and 0.025 for PrivUnitG+PA at $\epsilon=10$. 
This suggests that without post-processing, the representations become erratic as the privacy budget relaxes, undermining the reliability of downstream classification.

To disentangle the internal mechanics of the post-processing phase, we evaluate a variant, which retains the noise rotation but omits the scaling procedure: ($\boldsymbol{\Lambda}_{\mathrm{Jac}}^{-1/2}\boldsymbol{U}_{\mathrm{Jac}}^{-1},\ell_1,\boldsymbol{U}_{\mathrm{Jac}}$). 
Across Laplace, PrivUnit2, and PrivUnitG, the performance of this variant falls between the fully ablated post-processing baseline and the full PA integration. 
For instance, with PrivUnit2 at $\epsilon=10$, the variant recovers substantial utility (0.5117) compared to the complete absence of post-processing (0.0875), yet still falls short of the full PA (0.5829). 
This result implies that while directional alignment (rotation) alone successfully mitigates a portion of the noise impact, structural scaling (reshaping) is necessary to fully utilize the proposed approach.

A further distinction emerges when comparing the variant without scaling against the direct $\ell_1$-clipping baseline applied in the original space $\mathcal{Z}$. 
Although both configurations omit anisotropic noise scaling, they differ in one fundamental respect. 
The variant without scaling applies $\ell_1$-clipping within the transformed space $\boldsymbol{\Lambda}_{\mathrm{Jac}}^{-1/2}\boldsymbol{U}_{\mathrm{Jac}}^{-1}(\boldsymbol{z} - \boldsymbol{\mu})$, where the axes are not only aligned with the task-critical subspaces but also inversely scaled according to their respective importance. 
In contrast, the direct $\ell_1$-clipping baseline operates in the raw space $\mathcal{Z}$ without any such alignment. 
The consistent performance gap between these two variants across all three mechanisms demonstrates that clipping in a task-aligned space is inherently beneficial even without rescaling. 
This highlights that the utility of PA stems from the interaction of three key components: space alignment, sensitivity bounding in the transformed space, and anisotropic rescaling.

We also experimented with replacing the Jacobian-guided rotation $\boldsymbol{U}_{\mathrm{Jac}}$ with PCA- or random permutation-based rotations: $\boldsymbol{U}_{\mathrm{PCA}}$ and $\boldsymbol{U}_{\mathrm{Rand}}$. 
These variants resulted in near-random performance, highlighting the necessity of the Jacobian-guided rotation.
Furthermore, compared to the $\ell_1$-clipping baseline, the full PA integration achieves up to 90\% relative improvement (Laplace, $\epsilon=10$), demonstrating that the utility gains stem not from clipping alone, but from the coordinated interaction of the pre- and post-processing pipeline introduced by PA.

\begin{figure}[t]
    \centering
    \includegraphics[width=\textwidth]{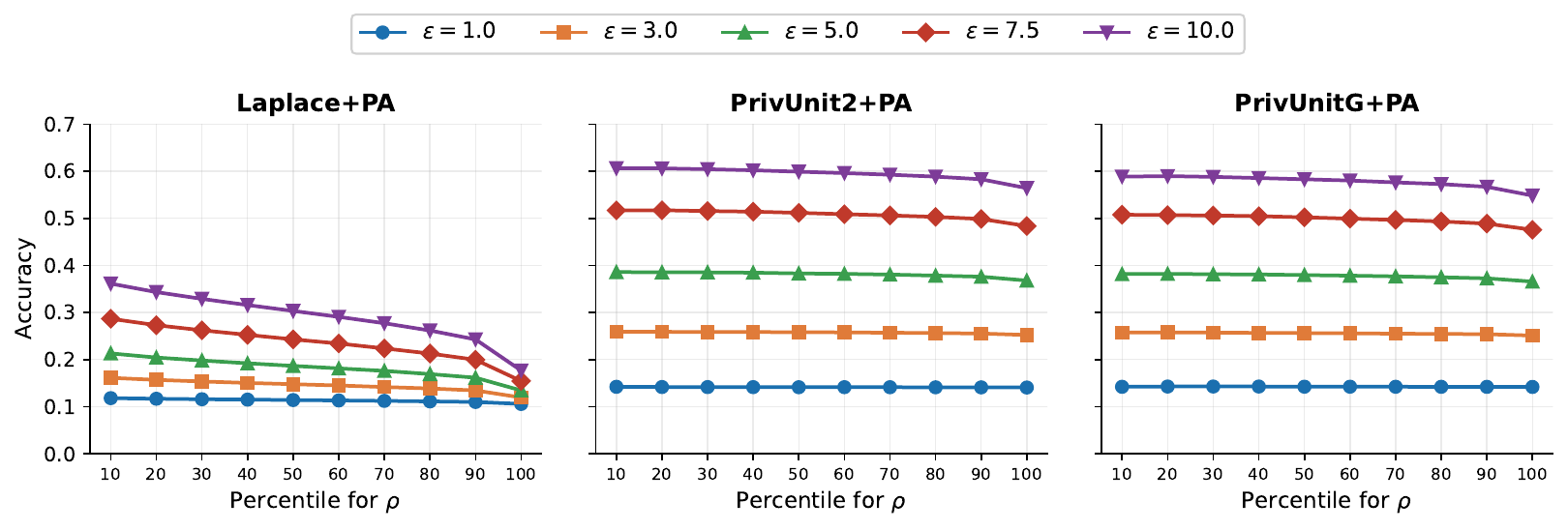} 
    \caption{Effect of the clipping threshold $\rho$ on accuracy across privacy budgets $\epsilon \in \{1, 3, 5, 7.5, 10\}$. }
\vspace{-15pt}
\label{fig_clip}
\end{figure}

\paragraph{Clipping Threshold.} 
We investigate the impact of varying clipping thresholds $\rho$ on ReLU-based nonlinear MLP classifiers, evaluated on the CIFAR-10-C dataset under maximum brightness corruption (Level 5). The clipping threshold $\rho$ is determined based on specific percentiles of the public data (e.g., the original CIFAR-10 training set).

In summary, despite the increased distortion of individual representations caused by clipping, the substantial reduction in the injected noise scale outweighs this effect. 
This suggests that the noise scale, rather than clipping-induced distortion, is the primary bottleneck under LDP constraints.
Figure~\ref{fig_clip} presents the results of integrating PA with the Laplace~\citep{dwork2014algorithmic}, PrivUnit2~\citep{bhowmick2018privunit}, and PrivUnitG~\citep{asi2022privunitg} mechanisms, across clipping thresholds $\rho$ corresponding to the 10th through 100th percentiles. 
Overall, the mechanisms integrated with PA achieve higher accuracy when a lower $\rho$ is applied. 
Tighter clipping thresholds (e.g., at the 10th percentile) consistently yield superior accuracy. 
Although omitted here for brevity, our approach reached peak performance at a $\rho$ corresponding to the 30th percentile for the ReLU-based MLP classifier operating on VAE-encoded MNIST features.

\subsection{Distribution Shift and Additional Baseline}
We provide results of our distribution shift analysis (across 20 corruption-severity combinations) and alternative activations (e.g., GELU, Tanh) in Appendices~\ref{apdx_corruption} and \ref{apdx_activation}, respectively.
In these evaluations, we found that i) PA-integrated approaches demonstrate robust performance across both the type and severity of practical distribution shifts, and ii) PA remains effective for nonlinear classifiers that employ various activation functions within hidden layers.
\section{Limitations}
Our theoretical analysis establishes the optimal basis and scale allocation for the noise-induced component of the first-order utility surrogate under the considered geometric constraint, and further characterizes the relation between the first-order surrogate and nonlinear output distortion and prediction stability.
For nonlinear downstream tasks, the end-to-end privacy guarantee remains exact, whereas the utility improvement over the isotropic baseline is established empirically rather than through a global guarantee on the nonlinear task loss.
A remaining limitation is that the transformation-dependent clipping term and the resulting nonlinear task loss are not jointly optimized with the noise allocation.
Since these effects depend on both the downstream model geometry and the representation distribution, deriving a tractable end-to-end objective that simultaneously accounts for clipping and nonlinear utility remains an important direction for future work.

\clearpage

\bibliographystyle{unsrt}
\bibliography{ref}

@article{kasiviswanathan2011what,
author  = {Kasiviswanathan, Shiva Prasad and Lee, Homin K and Nissim, Kobbi and Raskhodnikova, Sofya and Smith, Adam},
title   = {{What Can We Learn Privately?}},
journal = {SIAM Journal on Computing},
volume  = {40},
number  = {3},
pages   = {793--826},
year    = {2011}
}

@inproceedings{dwork2006calibrating,
author    = {Dwork, Cynthia and McSherry, Frank and Nissim, Kobbi and Smith, Adam},
title     = {{Calibrating noise to sensitivity in private data analysis}},
booktitle = {Proceedings of the Third Conference on Theory of Cryptography (TCC)},
pages     = {265--284},
year      = {2006},
series    = {Lecture Notes in Computer Science},
publisher = {Springer-Verlag}
}

@inproceedings{erlingsson2014rappor,
author    = {Erlingsson, {\'U}lfar and Pihur, Vasyl and Korolova, Aleksandra},
title     = {{RAPPOR: Randomized Aggregatable Privacy-Preserving Ordinal Response}},
booktitle = {Proceedings of the ACM SIGSAC Conference on Computer and Communications Security (CCS)},
pages     = {1054--1067},
year      = {2014}
}

@techreport{apple2017learning,
author      = {{Apple Differential Privacy Team}},
title       = {{Learning with Privacy at Scale}},
institution = {Apple Machine Learning Journal},
year        = {2017},
url         = {https://machinelearning.apple.com/research/learning-with-privacy-at-scale}
}

@inproceedings{ding2017collecting,
author    = {Ding, Bolin and Kulkarni, Janardhan and Yekhanin, Sergey},
title     = {{Collecting telemetry data privately}},
booktitle = {Advances in Neural Information Processing Systems (NIPS)},
volume    = {30},
pages     = {3574--3583},
year      = {2017}
}

@inproceedings{duchi2013local,
author    = {Duchi, John C. and Jordan, Michael I. and Wainwright, Martin J.},
title     = {{Local Privacy and Statistical Minimax Rates}}, 
booktitle = {2013 IEEE 54th Annual Symposium on Foundations of Computer Science (FOCS)}, 
pages     = {429--438},
year      = {2013}
}

@inproceedings{kairouz2016discrete,
author    = {Kairouz, Peter and Bonawitz, Keith and Ramage, Daniel},
title     = {{Discrete Distribution Estimation under Local Privacy}},
booktitle = {Proceedings of The 33rd International Conference on Machine Learning (ICML)},
pages     = {2436--2444},
volume    = {48},
year      = {2016}
}

@article{duchi2018minimax,
author    = {Duchi, John C. and Jordan, Michael I. and Wainwright, Martin J.},
title     = {{Minimax Optimal Procedures for Locally Private Estimation}},
journal   = {Journal of the American Statistical Association},
volume    = {113},
number    = {521},
pages     = {182--201},
year      = {2018}
}

@inproceedings{duan2022utility,
author    = {Duan, Jiawei and Ye, Qingqing and Hu, Haibo},
title     = {{Utility Analysis and Enhancement of LDP Mechanisms in High-Dimensional Space}}, 
booktitle = {2022 IEEE 38th International Conference on Data Engineering (ICDE)}, 
pages     = {407--419},
year      = {2022},
}

@inproceedings{wang2019multidimensional,
author    = {Wang, Ning and Xiao, Xiaokui and Yang, Yin and Zhao, Jun and Hui, Siu Cheung and Shin, Hyejin and Shin, Junbum and Yu, Ge},
title     = {{Collecting and Analyzing Multidimensional Data with Local Differential Privacy}}, 
booktitle = {2019 IEEE 35th International Conference on Data Engineering (ICDE)}, 
pages     = {638--649},
year      = {2019}
}

@article{bhowmick2018privunit,
author    = {Bhowmick, Abhishek and Duchi, John C. and Freudiger, Julien and Kapoor, Gaurav and Rogers, Ryan},
title     = {{Protection Against Reconstruction and Its Applications in Private Federated Learning}},
journal   = {arXiv preprint arXiv:1812.00984},
year      = {2018}
}

@inproceedings{asi2022privunitg,
author    = {Asi, Hilal and Feldman, Vitaly and Koren, Tomer and Talwar, Kunal},
title     = {{Optimal Algorithms for Mean Estimation under Local Differential Privacy}},
booktitle = {Proceedings of the 39th International Conference on Machine Learning (ICML)},
volume    = {162},
pages     = {1046--1056},
year      = {2022}
}

@inproceedings{cheng2022task,
author    = {Cheng, Jiangnan and Tang, Ao and Chinchali, Sandeep},
title     = {{Task-aware Privacy Preservation for Multi-dimensional Data}},
booktitle = {Proceedings of the 39th International Conference on Machine Learning (ICML)},
pages     = {3761--3782},
year      = {2022},
volume    = {162}
}

@inproceedings{phan2017adaptive,
author    = {Phan, NhatHai and Wu, Xintao and Hu, Han and Dou, Dejing},
title     = {{Adaptive Laplace Mechanism: Differential Privacy Preservation in Deep Learning}}, 
booktitle = {2017 IEEE International Conference on Data Mining (ICDM)}, 
year      = {2017},
pages     = {385--394}
}

@article{alaggan2016heterogeneous,
author    = {Alaggan, Mohammad and Gambs, S{\'e}bastien and Kermarrec, Anne-Marie},
title     = {{Heterogeneous Differential Privacy}},
journal   = {Journal of Privacy and Confidentiality},
volume    = {7},
number    = {2},
pages     = {127--158},
year      = {2016}
}

@article{muthukrishnan2025DP,
author    = {Muthukrishnan, Gokularam and Kalyani, Sheetal},
title     = {{Differential Privacy With Higher Utility by Exploiting Coordinate-Wise Disparity: Laplace Mechanism Can Beat Gaussian in High Dimensions}}, 
journal   = {IEEE Transactions on Information Forensics and Security}, 
volume    = {20},
pages     = {2836--2851},
year      = {2025}
}

@article{kifer2014pufferfish,
author    = {Kifer, Daniel and Machanavajjhala, Ashwin},
title     = {{Pufferfish: A Framework for Mathematical Privacy Definitions}},
journal   = {ACM Transactions on Database Systems},
volume    = {39},
number    = {1},
pages     = {3:1--3:36},
year      = {2014}
}

@article{aumuller2024plan,
author    = {Aum{\"u}ller, Martin and Lebeda, Christian Janos and Nelson, Boel and Pagh, Rasmus},
title     = {{PLAN: Variance-Aware Private Mean Estimation}},
journal   = {Proceedings on Privacy Enhancing Technologies},
volume    = {2024},
number    = {3},
pages     = {606--625},
year      = {2024}
}

@inproceedings{dagan2024dimensionfree,
author    = {Dagan, Yuval and Jordan, Michael I. and Yang, Xuelin and Zakynthinou, Lydia and Zhivotovskiy, Nikita},
title     = {{Dimension-free Private Mean Estimation for Anisotropic Distributions}},
booktitle = {Advances in Neural Information Processing Systems (NeurIPS)},
volume    = {37},
number    = {3835},
pages     = {120667--120698},
year      = {2024}
}

@article{hay2010boosting,
author    = {Hay, Michael and Rastogi, Vibhor and Miklau, Gerome and Suciu, Dan},
title     = {{Boosting the Accuracy of Differentially Private Histograms Through Consistency}},
journal   = {Proceedings of the VLDB Endowment},
volume    = {3},
number    = {1},
pages     = {1021--1032},
year      = {2010},
publisher = {VLDB Endowment}
}

@inproceedings{wang2017locally,
author    = {Tianhao Wang and Jeremiah Blocki and Ninghui Li and Somesh Jha},
title     = {{Locally Differentially Private Protocols for Frequency Estimation}},
booktitle = {26th USENIX Security Symposium (USENIX Security 17)},
pages     = {729--745},
year      = {2017}
}

@inproceedings{jia2019calibrate,
author    = {Jia, Jinyuan and Gong, Neil Zhenqiang},
title     = {{Calibrate: Frequency Estimation and Heavy Hitter Identification with Local Differential Privacy via Incorporating Prior Knowledge}}, 
booktitle = {IEEE INFOCOM 2019 - IEEE Conference on Computer Communications}, 
pages     = {2008--2016},
year      = {2019}
}

@article{cormode2019answering,
author    = {Cormode, Graham and Kulkarni, Tejas and Srivastava, Divesh},
title     = {{Answering Range Queries Under Local Differential Privacy}},
journal   = {Proceedings of the VLDB Endowment},
volume    = {12},
number    = {10},
pages     = {1126--1138},
year      = {2019}
}

@inproceedings{wang2020locally,
author    = {Wang, Tianhao and Lopuha{\"{a}}{-}Zwakenberg, Milan and Li, Zitao and Skoric, Boris and Li, Ninghui},
title     = {{Locally Differentially Private Frequency Estimation with Consistency}},
booktitle = {Proceedings of the 27th Network and Distributed System Security Symposium (NDSS)},
year      = {2020}
}

@inproceedings{sajadmanesh2021locally,
author    = {Sajadmanesh, Sina and Gatica-Perez, Daniel},
title     = {{Locally Private Graph Neural Networks}},
booktitle = {Proceedings of the 2021 ACM SIGSAC Conference on Computer and Communications Security (CCS)},
pages     = {2130--2145},
year      = {2021}
}

@inproceedings{fang2023convolution,
author    = {Fang, Huiyu and Chen, Liquan and Liu, Yali and Gao, Yuan},
title     = {{Locally Differentially Private Frequency Estimation Based on Convolution Framework}}, 
booktitle = {2023 IEEE Symposium on Security and Privacy (SP)}, 
pages     = {2208-2222},
year      = {2023}
}

@article{kurakin2022toward,
author    = {Kurakin, Alexey and Song, Shuang and Chien, Steve and Geambasu, Roxana and Terzis, Andreas and Thakurta, Abhradeep},
title     = {{Toward Training at ImageNet Scale with Differential Privacy}},
journal   = {arXiv preprint arXiv:2201.12328},
year      = {2022}
}

@inproceedings{nasr2023effectively,
author    = {Nasr, Milad and Mahloujifar, Saeed and Tang, Xinyu and Mittal, Prateek and Houmansadr, Amir},
title     = {{Effectively Using Public Data in Privacy Preserving Machine Learning}},
booktitle = {Proceedings of the 40th International Conference on Machine Learning (ICML)},
volume    = {202},
pages     = {25718--25732},
year      = {2023},
}

@inproceedings{hou2024pretext,
author    = {Hou, Charlie and Shrivastava, Akshat and Zhan, Hongyuan and Conway, Rylan and Le, Trang and Sagar, Adithya and Fanti, Giulia and Lazar, Daniel},
title     = {{PrE-Text: Training Language Models on Private Federated Data in the Age of LLMs}},
booktitle = {Proceedings of the 41st International Conference on Machine Learning (ICML)},
volume    = {235},
pages     = {19043--19061},
year      = {2024}
}

@article{dwork2014algorithmic,
author    = {Dwork, Cynthia and Roth, Aaron},
title     = {The Algorithmic Foundations of Differential Privacy},
journal   = {Foundations and Trends in Theoretical Computer Science},
volume    = {9},
number    = {3--4},
pages     = {211--407},
year      = {2014}
}

@inproceedings{balle2018improving,
author    = {Balle, Borja and Wang, Yu-Xiang},
title     = {{Improving the Gaussian Mechanism for Differential Privacy: Analytical Calibration and Optimal Denoising}},
booktitle = {Proceedings of the 35th International Conference on Machine Learning (ICML)},
volume    = {80},
pages     = {394--403},
year      = {2018}
}

@article{constantine2014active,
author    = {Constantine, Paul G. and Dow, Eric and Wang, Qiqi},
title     = {{Active Subspace Methods in Theory and Practice: Applications to Kriging Surfaces}},
journal   = {SIAM Journal on Scientific Computing},
volume    = {36},
number    = {4},
pages     = {A1500--A1524},
year      = {2014}
}

@misc{ukpowernetworks_lcl,
author    = {{UK Power Networks}},
title     = {SmartMeter Energy Consumption Data in London Households},
url       = {https://data.london.gov.uk/dataset/smartmeter-energy-use-data-in-london-households},
note      = {Low Carbon London project data}
}

@misc{jeanmidev2019smartmeters,
author       = {Jean-Michel D.},
title        = {{Smart Meters in London}},
year         = {2019},
publisher    = {Kaggle},
howpublished = {\url{https://www.kaggle.com/datasets/jeanmidev/smart-meters-in-london}},
note         = {Accessed: 2026}
}

@techreport{krizhevsky2009cifar,
author      = {Krizhevsky, Alex},
title       = {{Learning Multiple Layers of Features from Tiny Images}},
institution = {University of Toronto},
year        = {2009}
}

@article{lecun1998mnist,
author    = {Lecun, Y. and Bottou, L. and Bengio, Y. and Haffner, P.},
journal   = {Proceedings of the IEEE}, 
title     = {{Gradient-Based Learning Applied to Document Recognition}}, 
volume    = {86},
number    = {11},
pages     = {2278--2324},
year      = {1998}
}

@inproceedings{kingma2013auto,
author    = {Kingma, Diederik P and Welling, Max},
title     = {{Auto-Encoding Variational Bayes}},
booktitle = {International Conference on Learning Representations (ICLR)},
year      = {2014}
}

@inproceedings{he2016deep,
title     = {{Deep Residual Learning For Image Recognition}},
author    = {He, Kaiming and Zhang, Xiangyu and Ren, Shaoqing and Sun, Jian},
booktitle = {{Proceedings of the IEEE Conference on Computer Vision and Pattern Recognition (CVPR)}},
pages     = {770--778},
year      = {2016}
}

@inproceedings{hendrycks2019benchmarking,
title     = {{Benchmarking Neural Network Robustness to Common Corruptions and Perturbations}},
author    = {Dan Hendrycks and Thomas Dietterich},
booktitle = {International Conference on Learning Representations (ICLR)},
year      = {2019}
}

@inproceedings{huang2021instance,
title     = {{Instance-Optimal Mean Estimation Under Differential Privacy}},
author    = {Huang, Ziyue and Liang, Yuting and Yi, Ke},
booktitle = {Advances in Neural Information Processing Systems (NIPS)},
volume    = {34},
number    = {1990},
pages     = {25993--26004},
year      = {2021}
}

\clearpage

\appendix

\section{Appendix}

\subsection{Related Work} \label{appendix:1}
\textbf{Relevance-Aware.} A line of related work dynamically allocates privacy budgets based on feature importance. 
For instance, the Adaptive Laplace Mechanism (AdLM)~\citep{phan2017adaptive} employs Layer-wise Relevance Propagation (LRP) to assign larger privacy budgets to features that contribute more to the final prediction, thereby reducing noise on highly relevant dimensions. 
While both AdLM and our method share a task-aware intuition, they differ fundamentally in their mathematical foundations. 
AdLM relies on LRP to compute scalar attribution scores for individual features. 
In contrast, our approach leverages the Jacobian matrix to evaluate the sensitivity of the output with respect to the input. 
This mathematically grounded approach allows us to analytically decompose the input space into geometrically meaningful structures, i.e., the local row and null spaces, rather than merely assigning scalar weights to individual pixels.

\textbf{Correlation-Aware.} 
Several studies address DP vulnerabilities when the assumption of record independence is violated. 
For instance, Pufferfish~\citep{kifer2014pufferfish} incorporates joint probability distributions directly into the privacy definition to account for data dependencies. 
Similarly, PLAN~\citep{aumuller2024plan} injects anisotropic noise by allocating the privacy budget based on the data geometry inherent in a known covariance matrix, a method later extended to unknown covariances by \citep{dagan2024dimensionfree}. 
While these works~\citep{kifer2014pufferfish, aumuller2024plan, dagan2024dimensionfree} explore anisotropic noise injection guided solely by intrinsic data correlations, our approach fundamentally differs by scaling noise based on the task-dependent importance of subspaces utilized by the downstream model.

\textbf{Feature-Wise DP.}
Alaggan et al.~\citep{alaggan2016heterogeneous} introduced the Heterogeneous Differential Privacy (HDP), which captures the non-uniformity of privacy expectations both among different users and across various data items belonging to the same individual.
HDP enables item-grained privacy, allowing for the adaptive calibration of noise levels based on the specific sensitivity of each piece of information. 
Our approach primarily differs from HDP in its treatment of information sensitivity. 
Unlike HDP, which accounts for non-uniform sensitivity, we use an isotropically calibrated base randomizer and reshape its effective perturbation across task directions. 
Furthermore, we enhance overall utility by applying a bijective transformation that reduces the noise injected into key dimensions.

\textbf{Post-Processing.}
Enhancing utility under LDP often relies on post-processing techniques such as data aggregation, denoising, and statistical inference. 
Foundational mechanisms—including industry deployments by Google~\citep{erlingsson2014rappor}, Apple~\citep{apple2017learning}, and Microsoft~\citep{ding2017collecting}, as well as variance-optimized encodings like OUE/OLH~\citep{wang2017locally}—established standard aggregation pipelines. 
To further reduce estimation errors, recent works have incorporated structural priors, such as hierarchical consistency~\citep{hay2010boosting, cormode2019answering}, probability simplex constraints~\citep{wang2020locally}, iterative calibration~\citep{jia2019calibrate}, and signal-processing filters~\citep{fang2023convolution}. 
Among post-processing efforts for high-dimensional data, \citep{sajadmanesh2021locally} is the closest to our work, proposing KProp to denoise private node features in GNNs via neighborhood averaging. 
However, while they focus on structural aggregation during the model training phase, our research targets the inference phase, uniquely focusing on reducing the noise injected into the task-sensitive subspace.

\textbf{Public Data \& Model Assisted DP.}
Various studies have leveraged public data or pre-trained models to alleviate the utility loss inherent in Differential Privacy (DP). 
For instance, Kurakin et al.~\citep{kurakin2022toward} investigated the use of ResNet-18 models pre-trained on the Places365 dataset, demonstrating that such priors are vital for navigating the loss landscape under noise injection. 
Their findings underscore that leveraging feature representations learned from public data is a key factor in improving both convergence speed and final classification accuracy. 
Similarly, DOPE-SGD \citep{nasr2023effectively} integrates advanced data augmentation with a re-centered gradient clipping strategy informed by public priors. 
This method effectively mitigates the bias in private gradient estimation, leading to substantial utility gains. 
Furthermore, \citep{hou2024pretext} introduced a framework for generating DP synthetic text, showing that models trained on such data achieve superior performance with $100\times$ less communication overhead than on-device baselines. 
By establishing that synthetic data training provides a more favorable privacy-utility trade-off, their research validates the efficacy of using generative models to create private proxies for downstream tasks.
While these works predominantly center on optimizing the model training process, our approach diverges by focusing on enhancing utility during inference under LDP.

\newpage

\subsection{Motivating Example}
\label{appdx:motivating_example}
\begin{figure}[h]
     \centering
     \begin{subfigure}[b]{0.45\textwidth}
         \centering
         \includegraphics[width=\textwidth]{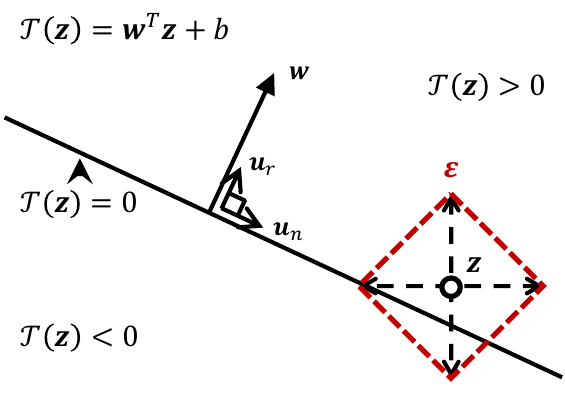}
         \caption{Linear Binary Classification in $\mathbb{R}^2$}
         \label{fig:fig_1a}
     \end{subfigure}
     \hfill
     \begin{subfigure}[b]{0.45\textwidth}
         \centering
         \includegraphics[width=\textwidth]{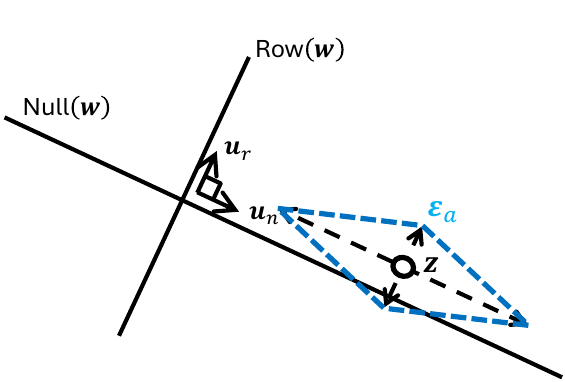}
         \caption{Anisotropically Reshaping Noise in $\mathbb{R}^2$}
         \label{fig:fig_1b}
     \end{subfigure}
     \caption{\textbf{Motivating Example.} $\mathcal{T}(\boldsymbol{z})$ is a linear classifier with normal vector $\boldsymbol{w}$ orthogonal to the decision boundary ($\mathcal{T}(\boldsymbol{z})=0$). $\boldsymbol{u}_{r}$ and $\boldsymbol{u}_{n}$ are basis vectors for the row and null spaces, respectively.
     Dotted arrows denote perturbations around $\boldsymbol{z}$.
     The red and blue dashed lines describe the shape of the Laplace noise.
     Notably, perturbations along $\boldsymbol{u}_{r}$ may alter the prediction, while those along $\boldsymbol{u}_{n}$ do not. 
     \textbf{(a) Isotropic noise.} 
     Isotropic noise $\boldsymbol{\xi}$ is generated. 
     \textbf{(b) Anisotropic Reshaping.} 
     Isotropic noise $\boldsymbol{\xi}$ is reshaped into $\boldsymbol{\xi}_a$ by attenuating the $\boldsymbol{u}_{r}$ axis and amplifying the $\boldsymbol{u}_{n}$ axis.     
     }
     \label{fig:fig_1}
\end{figure}

\subsection{Proposed Approach: Description}
\label{appdx:pa_description}

\begin{figure}[h]
  \centering
  \includegraphics[width=1.0\textwidth]{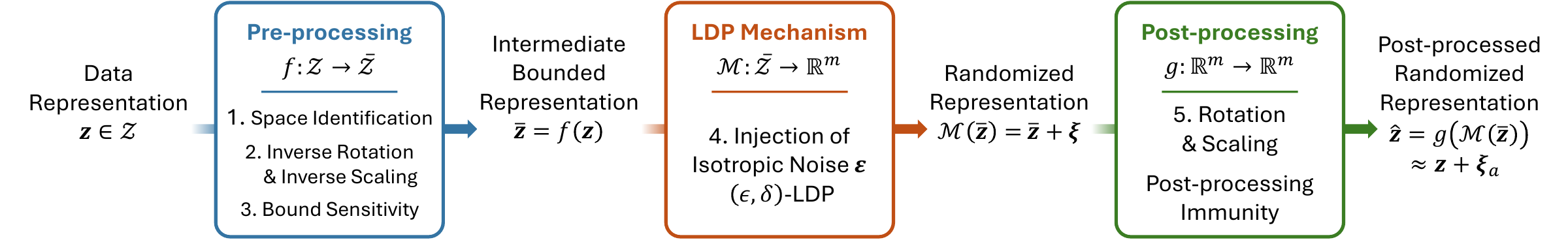} 
  \caption{Overview of the proposed approach.}
  \label{fig:overview}
\end{figure}

We provide a more detailed explanation of the procedure.
Here, we primarily focus on the row space for clarity.
Let $\lambda_r \in (0, 1]$ be a parameter that governs the relationship between the desired variance $\sigma_r^2$ of the noise projected into the row space via anisotropic noise $\boldsymbol{\xi}_a$, and the variance $\sigma^2$ of the isotropic noise $\boldsymbol{\xi}$, such that $\lambda_r = (\frac{\sigma_r}{\sigma})^2$.
The procedure consists of five steps: 
\begin{enumerate}
\item Our approach identifies the row and null spaces based on Jacobian matrices, 
\item It inversely rotates and expands the representation in the row space by a factor of $\frac{1}{\sqrt{\lambda_r}}$, 
\item It bounds the row space sensitivity, 
\item It injects isotropic noise guaranteeing $\epsilon$-LDP, and 
\item It rotates and scales the randomized representation down by a factor of $\sqrt{\lambda_r}$, thereby reshaping the isotropic noise into anisotropic noise and restoring the original scale and orientation of the representation.
\end{enumerate}

The pre-processing function $f$ handles the first three steps, and the post-processing function $g$ manages the final step. 
In the pre-processing step, while space identification and representation alignment are important, bounding sensitivity also plays a crucial role in reducing the noise injected into the row space.
Although bounding incurs utility loss, we assume that it is outweighed by the utility degradation caused by $\epsilon$-LDP randomization under high sensitivity (see Section~\ref{sec:ablation} for details). 
In the fourth step, we inject noise via the conventional mechanism to guarantee $\epsilon$-LDP for the predetermined sensitivity, which is proportional to the bounding threshold. 
In the final step, $g$ applies a fixed invertible affine transformation.
The privacy guarantee is preserved due to the post-processing immunity of differential privacy~\citep{dwork2014algorithmic}.

Note that in our approach, noise control is achieved through two key parameters: the bounding threshold $\rho \in \mathbb{R}^+$ and the ratio $\lambda_r$ of the coordinate-wise noise to the isotropic noise. 
The magnitude of isotropic noise is determined by the privacy budget, the threshold, and the chosen randomization mechanism. 
Finally, through the linear bijective rotation and scaling applied during post-processing, noise satisfying $\epsilon$-LDP is automatically distributed across all coordinates.

\begin{figure}[t]
    \centering
    \includegraphics[width=\textwidth]{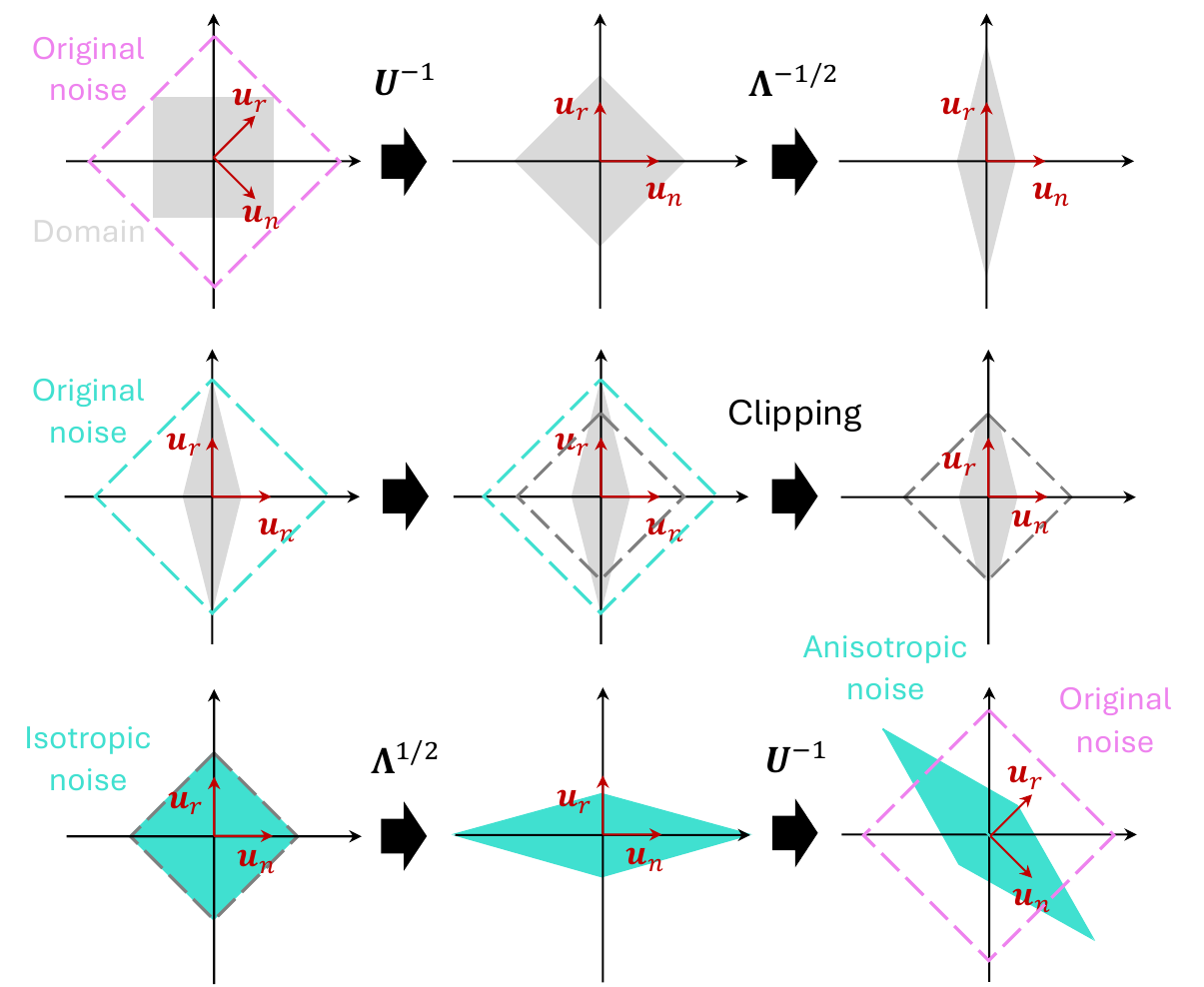} 
    \caption{Processing steps of the proposed approach.}
\end{figure}

\clearpage

\subsection{Theoretical Analysis}
\label{append:analysis}
We provide our analysis under the assumption of $\ell_1$ radial clipping and its corresponding geometric gauge $\sum_{i=1}^m \lambda_i^{-1/2} = m$.
The extension to general $\ell_p$ clipping and the gauge $\sum_{i=1}^m \lambda_{i}^{-p/2} = m$ can be derived analogously.

\begin{definition}[End-to-end perturbation for additive mechanisms]
\label{definition:perturbation}
Radial $\ell_1$ clipping in the transformed space scales $\boldsymbol{L}^{-1}(\boldsymbol{z}-\boldsymbol{\mu})$ by
\begin{equation}
\theta_1(\boldsymbol{z}) =
\begin{cases}
\displaystyle \min\left\{ 1,\, \frac{\rho}{\left\| \boldsymbol{L}^{-1} (\boldsymbol{z}-\boldsymbol{\mu}) \right\|_1} \right\}, & \boldsymbol{L}^{-1} (\boldsymbol{z}-\boldsymbol{\mu}) \neq \boldsymbol{0}, \\[10pt]
1, & \boldsymbol{L}^{-1} (\boldsymbol{z}-\boldsymbol{\mu}) = \boldsymbol{0}.
\end{cases} \nonumber
\end{equation}
This operation shrinks the centered representation toward $\boldsymbol{\mu}$ while preserving the direction of $\boldsymbol{z}-\boldsymbol{\mu}$.

For an additive mechanism, the $i$-th randomized representation can be written as
\begin{align}
\hat{\boldsymbol{z}}_i &= \boldsymbol{L}\left[\theta_1(\boldsymbol{z}_i)\left\{\boldsymbol{L}^{-1}(\boldsymbol{z}_i-\boldsymbol{\mu})\right\} + \boldsymbol{\xi} \right] + \boldsymbol{\mu} \notag \\
&= \theta_1(\boldsymbol{z}_i)\boldsymbol{L}\left\{\boldsymbol{L}^{-1}(\boldsymbol{z}_i-\boldsymbol{\mu})\right\} + \boldsymbol{L}\boldsymbol{\xi} + \boldsymbol{\mu} \notag \\
&= \theta_1(\boldsymbol{z}_i)\left(\boldsymbol{z}_i-\boldsymbol{\mu}\right) + \boldsymbol{L}\boldsymbol{\xi} + \boldsymbol{\mu} \nonumber
\end{align}
where $\boldsymbol{L}=\boldsymbol{U}\boldsymbol{\Lambda}^{1/2}$.

Note that the scaling factor of the $\ell_1$-ball clipping operation evaluates to a scalar given a representation $\boldsymbol{z}$ and the parameters $\{\boldsymbol{U},\boldsymbol{\Lambda},\boldsymbol{\mu},p,\rho\}$.

Therefore, the end-to-end perturbation is
\begin{equation}
\boldsymbol{h}_i = \hat{\boldsymbol{z}}_i-\boldsymbol{z}_i = \boldsymbol{d}_i+\boldsymbol{L}\boldsymbol{\xi}, \nonumber
\end{equation}
where
\begin{equation}
\boldsymbol{d}_i = \bigl(\theta_1(\boldsymbol{z}_i)-1\bigr) (\boldsymbol{z}_i-\boldsymbol{\mu}). \nonumber
\end{equation}
Here, $\boldsymbol{d}_i=\boldsymbol{0}$ for an unclipped sample and points toward $\boldsymbol{\mu}$ otherwise.
\end{definition}

\begin{definition}[Empirical Jacobian Gram second-moment matrix]
Define
\begin{equation}
\widehat{\boldsymbol{\Omega}} = \boldsymbol{J}^\top\boldsymbol{J} = \frac{1}{N} \sum_{i=1}^N \boldsymbol{J}_i^\top\boldsymbol{J}_i. \nonumber
\end{equation}
Let its eigendecomposition be
\begin{equation}
\widehat{\boldsymbol{\Omega}} = \boldsymbol{V} \operatorname{diag} (\gamma_1,\ldots,\gamma_m) \boldsymbol{V}^\top, \nonumber
\end{equation}
where $\gamma_1 \ge \cdots \ge \gamma_m \ge 0$.
\end{definition}

\begin{definition}[Empirical first-order risk surrogate]
Define
\begin{equation}
\widehat{R}_{\mathrm{FO}} = \frac{1}{N} \sum_{i=1}^N \mathbb{E}_{\boldsymbol{\xi}} \left[ \left\| \boldsymbol{J}_i\boldsymbol{h}_i \right\|_2^2 \right]. \nonumber
\end{equation}
Using $\boldsymbol{h}_i = \boldsymbol{d}_i + \boldsymbol{L}\boldsymbol{\xi}$, we obtain
\begin{equation}
\widehat{R}_{\mathrm{FO}} = \frac{1}{N} \sum_{i=1}^N \mathbb{E}_{\boldsymbol{\xi}} \left[ \left\| \boldsymbol{J}_i\boldsymbol{d}_i + \boldsymbol{J}_i \boldsymbol{L}\boldsymbol{\xi} \right\|_2^2 \right]. \nonumber
\end{equation}
Because $\boldsymbol{d}_i$ is fixed conditional on $\boldsymbol{z}_i$ and $\mathbb{E}[\boldsymbol{\xi}]=\boldsymbol{0}$, $\operatorname{Cov}(\boldsymbol{\xi}) = \sigma^2\boldsymbol{I}$, the cross term vanishes, yielding
\begin{equation}
\widehat{R}_{\mathrm{FO}} = \underbrace{ \sigma^2 \operatorname{Tr} \left( \boldsymbol{\Lambda} \boldsymbol{U}^\top \widehat{\boldsymbol{\Omega}} \boldsymbol{U} \right) }_{\widehat{R}_{\mathrm{FO,Noise}}} + \underbrace{ \frac{1}{N} \sum_{i=1}^N \left\| \boldsymbol{J}_i\boldsymbol{d}_i \right\|_2^2 }_{\widehat{R}_{\mathrm{FO,Clip}}}. \nonumber
\end{equation}
The following propositions optimize $\widehat{R}_{\mathrm{FO,Noise}}$, not the clipping term.
\end{definition}

\begin{definition}[Empirical nonlinear risk]
For a nonlinear task model $\mathcal{T}$, we define the empirical nonlinear (ENL) risk as
\begin{equation*}
\widehat{R}_{\mathrm{NL}} = \frac{1}{N} \sum_{i=1}^{N} \mathbb{E}_{\boldsymbol{\xi}} \left[ \left\| \mathcal{T}(\boldsymbol{z}_i+\boldsymbol{h}_i) - \mathcal{T}(\boldsymbol{z}_i) \right\|_2^2 \right].
\end{equation*}
\end{definition}

\begin{proposition}[Optimality of the Jacobian basis for the noise component]
\label{prop:optimality_of_the_Jacobian}
Let
\begin{equation}
\widehat{\boldsymbol{\Omega}} = \boldsymbol{V} \operatorname{diag} (\gamma_1,\ldots,\gamma_m) \boldsymbol{V}^\top, \qquad \gamma_1 \ge \cdots \ge \gamma_m \ge 0.\nonumber
\end{equation}
Fix an ordered positive scale spectrum $0 < \lambda_1 \le \cdots \le \lambda_m$ and define $\boldsymbol{\Lambda} = \operatorname{diag} (\lambda_1,\ldots,\lambda_m)$. Then
\begin{equation}
\min_{\boldsymbol{U}^\top\boldsymbol{U}=\boldsymbol{I}} \operatorname{Tr} \left( \boldsymbol{\Lambda} \boldsymbol{U}^\top \widehat{\boldsymbol{\Omega}} \boldsymbol{U} \right) = \sum_{i=1}^m \lambda_i\gamma_i. \nonumber
\end{equation}
One minimizer is $\boldsymbol{U}=\boldsymbol{V}$, with the largest Jacobian-Gram eigenvalues paired with the smallest scales. Equivalently, for a fixed ordered spectrum, the Jacobian right-singular-vector basis is optimal over all orthogonal bases for the noise-induced first-order objective. Non-uniqueness may arise from sign changes and transformations within repeated-eigenvalue or equal-scale blocks.
\end{proposition}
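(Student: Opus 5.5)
The plan is to reduce the trace objective to a doubly stochastic linear program and then apply the rearrangement inequality. Write $\boldsymbol{Q} = \boldsymbol{V}^\top\boldsymbol{U}$, which is orthogonal whenever $\boldsymbol{U}$ is, and is in bijection with the feasible set. Then
\begin{equation*}
\operatorname{Tr}\left(\boldsymbol{\Lambda}\boldsymbol{U}^\top\widehat{\boldsymbol{\Omega}}\boldsymbol{U}\right) = \operatorname{Tr}\left(\boldsymbol{\Lambda}\boldsymbol{Q}^\top\boldsymbol{\Gamma}\boldsymbol{Q}\right) = \sum_{i=1}^m\sum_{j=1}^m \lambda_i\gamma_j Q_{ji}^2,
\end{equation*}
where $\boldsymbol{\Gamma}=\operatorname{diag}(\gamma_1,\ldots,\gamma_m)$.

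The matrix $\boldsymbol{P}$ with entries $P_{ji}=Q_{ji}^2$ is doubly stochastic, since the rows and columns of an orthogonal matrix are unit vectors. Hence the objective is at least
\begin{equation*}
\min_{\boldsymbol{P}\ \text{doubly stochastic}} \sum_{i,j}\lambda_i\gamma_j P_{ji}.
\end{equation*}
This is a linear function over the Birkhoff polytope. By Birkhoff--von Neumann, its minimum is attained at a permutation matrix $\pi$, giving the value $\sum_i \lambda_i\gamma_{\pi(i)}$.

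The rearrangement inequality then settles which permutation is optimal. The $\lambda_i$ are increasing and the $\gamma_j$ are decreasing, so oppositely ordered pairing minimizes the sum, and the optimal pairing is $\pi=\mathrm{id}$. This gives the lower bound $\sum_i\lambda_i\gamma_i$. An elementary alternative avoids Birkhoff: take any pair with $i<i'$ and $\pi(i)>\pi(i')$ and swap it. The change in value is $(\lambda_{i'}-\lambda_i)(\gamma_{\pi(i')}-\gamma_{\pi(i)})\cdot(-1)\le 0$, so a finite sequence of such swaps reaches the identity without increasing the value.

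For achievability, take $\boldsymbol{U}=\boldsymbol{V}$, so that $\boldsymbol{Q}=\boldsymbol{I}$ and the trace equals $\sum_i\lambda_i\gamma_i$ exactly. For non-uniqueness, note that multiplying columns of $\boldsymbol{V}$ by $\pm1$ leaves every $Q_{ji}^2$ unchanged. So does any orthogonal mixing within a block of equal $\gamma$: there the contribution depends only on block sums of $\boldsymbol{P}$, and those are preserved. The same holds within a block of equal $\lambda$, by symmetry.

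The main obstacle is the step from orthogonal matrices to doubly stochastic ones. One must be careful that this is a relaxation, valid only as a lower bound, and that tightness follows because the permutation matrix attaining it is itself orthogonal. Orthostochastic matrices form a strict subset of the doubly stochastic ones, but that gap does not matter here, since the minimizer is a vertex, which is orthogonal.
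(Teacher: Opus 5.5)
Your proof is correct and follows the paper's argument essentially step for step: the substitution $\boldsymbol{H}=\boldsymbol{V}^\top\boldsymbol{U}$, the relaxation of the orthostochastic matrix $(H_{ji}^2)$ to the Birkhoff polytope, minimization of the linear objective at a permutation vertex via the rearrangement inequality, and tightness because that vertex (here $\boldsymbol{H}=\boldsymbol{I}$, i.e., $\boldsymbol{U}=\boldsymbol{V}$) is itself orthogonal. Two of your steps go beyond the paper and are useful additions: the inversion-swapping argument, and the explicit check that sign flips and orthogonal mixing within equal-$\gamma$ or equal-$\lambda$ blocks leave the objective unchanged, which the paper only asserts.
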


\begin{proof}
Let $\boldsymbol{H} = \boldsymbol{V}^\top\boldsymbol{U}$, which is orthogonal. Then
\begin{equation}
\operatorname{Tr} \left( \boldsymbol{\Lambda} \boldsymbol{U}^\top \widehat{\boldsymbol{\Omega}} \boldsymbol{U} \right) = \operatorname{Tr} \left( \boldsymbol{\Lambda} \boldsymbol{H}^\top \operatorname{diag} (\gamma_1,\ldots,\gamma_m) \boldsymbol{H} \right) = \sum_{i,j} \lambda_i\gamma_j H_{ji}^2. \nonumber
\end{equation}
Define $G_{ij}=H_{ji}^2$. Because $\boldsymbol{H}$ is orthogonal, $\boldsymbol{G}$ is doubly stochastic. Relaxing the feasible set of such orthostochastic matrices to the full Birkhoff polytope therefore provides a lower bound.

The relaxed objective is linear in $\boldsymbol{G}$, so its minimum is attained at an extreme point of the Birkhoff polytope, namely a permutation matrix. By the rearrangement inequality, the minimizing permutation pairs the largest $\gamma_i$ with the smallest $\lambda_i$, giving $\sum_{i=1}^m\lambda_i\gamma_i$.

This lower bound is attainable in the original feasible set because permutation matrices are orthogonal. Under the stated ordering, $\boldsymbol{H}=\boldsymbol{I}$, equivalently $\boldsymbol{U}=\boldsymbol{V}$, attains the bound. Hence the relaxation is tight.
\end{proof}

\begin{remark}[Rank deficiency and the finite cap]
When $\gamma_i=0$, the noise objective does not depend directly on $\lambda_i$. If the scales are optimized under $\sum_{j=1}^m\lambda_j^{-1/2}=m$ without an upper bound, the infimum sends the corresponding scale toward $\lambda_i\rightarrow\infty$. Equivalently, $t_i=\lambda_i^{-1/2}\rightarrow 0$, which leaves more normalization budget for directions with positive $\gamma_i$. A finite optimizer therefore need not exist in an exact null direction.

The cap $\lambda_i\le\lambda_{\max}$ removes this degeneracy. For a near-zero but positive $\gamma_i$, the uncapped optimal scale is finite but may be very large and sensitive to estimation error. In that case, the cap additionally limits extreme anisotropy and improves numerical robustness.
\end{remark}

\begin{proposition}[Joint optimality of the Jacobian basis and scale allocation]
\label{prop:joint_opt_Jac_scale}
Assume $1 < \lambda_{\max} < \infty$ and $\widehat{\boldsymbol{\Omega}} \neq \boldsymbol{0}$. 
Consider
\begin{equation}
\begin{aligned}
\min_{\substack{ \boldsymbol{U}^\top\boldsymbol{U}=\boldsymbol{I}\\ 0<\lambda_i\le\lambda_{\max} }} \quad& \sigma^2 \operatorname{Tr} \left( \boldsymbol{\Lambda} \boldsymbol{U}^\top \widehat{\boldsymbol{\Omega}} \boldsymbol{U} \right) \qquad
\text{subject to}\quad& \sum_{i=1}^m \lambda_i^{-1/2} = m. \nonumber
\end{aligned}
\end{equation}
A simultaneous permutation of the columns of $\boldsymbol{U}$ and the diagonal entries of $\boldsymbol{\Lambda}$ does not change the optimization problem. 
We may therefore assume, without loss of generality, that $0 < \lambda_1 \le \cdots \le \lambda_m$. 
By Proposition~\ref{prop:optimality_of_the_Jacobian}, the minimization over $\boldsymbol{U}$ is attained by the Jacobian basis and reduces the joint problem to
\begin{equation}
\min_{\boldsymbol{\Lambda}} \quad \sigma^2 \sum_{i=1}^m \gamma_i\lambda_i. \nonumber
\end{equation}
Set $t_i=\lambda_i^{-1/2}$ and $t_{\min}=\lambda_{\max}^{-1/2}$. 
The problem becomes
\begin{equation}
\begin{aligned}
\min_{\boldsymbol{t}} \quad& \sum_{i=1}^m \gamma_i t_i^{-2} \qquad
\text{subject to}\quad& \sum_{i=1}^m t_i=m, \qquad t_i \ge t_{\min}. \nonumber
\end{aligned}
\end{equation}
This is a convex optimization problem. Its solution has the water-filling form
\begin{equation}
t_i^\star = \max \left\{ t_{\min}, \alpha \gamma_i^{1/3} \right\}, \qquad \lambda_i^\star = (t_i^\star)^{-2}, \nonumber
\end{equation}
where $\alpha>0$ is chosen so that
\begin{equation}
\sum_{i=1}^m t_i^\star = \sum_{i=1}^m \max \left\{ t_{\min}, \alpha \gamma_i^{1/3} \right\} = m. \nonumber
\end{equation}
Let the resulting capped and free sets be
\begin{equation}
C = \left\{ i: \alpha \gamma_i^{1/3} \le t_{\min} \right\}, \qquad F=C^c. \nonumber
\end{equation}
The normalization constraint then gives $|C|t_{\min} + \alpha\sum_{j\in F}\gamma_j^{1/3} = m$. Therefore,
\begin{equation}
\alpha = \frac{ m-|C|t_{\min} }{ \sum_{j\in F}\gamma_j^{1/3} } = \frac{ m-|C|\lambda_{\max}^{-1/2} }{ \sum_{j\in F}\gamma_j^{1/3} }. \nonumber
\end{equation}
Consequently, the optimal scales can be written as
\begin{equation}
\lambda_i^\star =
\begin{cases}
\lambda_{\max}, & i\in C, \\[10pt]
\displaystyle \left( \frac{ m-|C|\lambda_{\max}^{-1/2} }{ \sum_{j\in F}\gamma_j^{1/3} } \gamma_i^{1/3} \right)^{-2}, & i\in F. \nonumber
\end{cases}
\end{equation}
Thus, on the free coordinates, $\lambda_i^\star \propto \gamma_i^{-2/3}$, whereas zero and sufficiently small singular-value directions receive the finite cap. The resulting scale spectrum is ordered oppositely to the Jacobian-Gram eigenvalues, as required by Proposition~\ref{prop:optimality_of_the_Jacobian}.

When no coordinate is capped, $C=\varnothing$ and $F=\{1,\ldots,m\}$, so $\alpha = \frac{m}{\sum_{j=1}^m \gamma_j^{1/3}}$ and
\begin{equation}
\lambda_i^\star = \left( \frac{ m\,\gamma_i^{1/3} }{ \sum_{j=1}^m \gamma_j^{1/3} } \right)^{-2}. \nonumber
\end{equation}
\end{proposition}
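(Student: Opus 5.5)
The plan is to reduce the joint problem to a separable convex program in the scales, solve that program through its KKT conditions, and then check that the resulting allocation is compatible with the basis chosen in Proposition~\ref{prop:optimality_of_the_Jacobian}.

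\textbf{Reduction to a problem in $\boldsymbol{\Lambda}$ alone.} Let $\boldsymbol{P}$ be a permutation matrix and replace $(\boldsymbol{U},\boldsymbol{\Lambda})$ by $(\boldsymbol{U}\boldsymbol{P},\boldsymbol{P}^\top\boldsymbol{\Lambda}\boldsymbol{P})$. Then $\operatorname{Tr}(\boldsymbol{\Lambda}\boldsymbol{U}^\top\widehat{\boldsymbol{\Omega}}\boldsymbol{U})$ is unchanged. The constraints $\sum_i\lambda_i^{-1/2}=m$ and $0<\lambda_i\le\lambda_{\max}$ are symmetric, so they are unchanged as well. Hence we may assume $\lambda_1\le\cdots\le\lambda_m$.

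For each fixed sorted spectrum, Proposition~\ref{prop:optimality_of_the_Jacobian} gives $\min_{\boldsymbol{U}}\operatorname{Tr}(\cdot)=\sum_i\lambda_i\gamma_i$, attained at $\boldsymbol{U}=\boldsymbol{V}$. Writing the joint minimum as $\min_{\boldsymbol{\Lambda}}\min_{\boldsymbol{U}}$, the problem reduces to minimizing $\sigma^2\sum_i\gamma_i\lambda_i$ over sorted feasible spectra. I will drop the ordering constraint and verify afterwards that the unconstrained optimizer is automatically sorted. This is valid because, by the rearrangement inequality, $\sum_i\gamma_i\lambda_i$ over any unsorted spectrum is at least its value on the sorted rearrangement.

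\textbf{The convex program in $t$.} Substitute $t_i=\lambda_i^{-1/2}$. The cap $\lambda_i\le\lambda_{\max}$ becomes $t_i\ge t_{\min}$, with $t_{\min}=\lambda_{\max}^{-1/2}$. The objective becomes $\sum_i\gamma_i t_i^{-2}$.

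\begin{itemize}
\item Each term $\gamma_i t^{-2}$ is convex on $t>0$, and strictly convex when $\gamma_i>0$. The constraints are affine, so the program is convex.
\item Since $\lambda_{\max}>1$, we have $t_{\min}<1$. Thus $t_i\equiv1$ is strictly feasible and Slater's condition holds, so KKT conditions are necessary and sufficient.
\item A minimizer exists: either restrict to the compact slice $t_{\min}\le t_i\le m$, or note that the objective is coercive as $t_i\to0$ only for $\gamma_i>0$, while the lower bound $t_{\min}$ handles the remaining coordinates.
\end{itemize}

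\textbf{KKT conditions and water-filling.} Introduce a multiplier $\nu$ for $\sum_i t_i=m$ and multipliers $\eta_i\ge0$ for $t_i\ge t_{\min}$. Stationarity reads $-2\gamma_i t_i^{-3}+\nu-\eta_i=0$, together with complementary slackness.

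\begin{itemize}
\item On free coordinates ($\eta_i=0$), stationarity gives $t_i=(2\gamma_i/\nu)^{1/3}$.
\item On capped coordinates, $t_i=t_{\min}$ and $\eta_i=\nu-2\gamma_i t_{\min}^{-3}\ge0$. This holds exactly when $(2\gamma_i/\nu)^{1/3}\le t_{\min}$.
\end{itemize}

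Setting $\alpha=(2/\nu)^{1/3}$ yields $t_i^\star=\max\{t_{\min},\alpha\gamma_i^{1/3}\}$. Coordinates with $\gamma_i=0$ are always capped.

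To fix $\alpha$, consider $S(\alpha)=\sum_i\max\{t_{\min},\alpha\gamma_i^{1/3}\}$.
\begin{itemize}
\item $S$ is continuous and nondecreasing.
\item $S$ is strictly increasing once some term is free; this eventually happens because $\widehat{\boldsymbol{\Omega}}\neq\boldsymbol{0}$ forces some $\gamma_i>0$.
\item $S(0^+)=m\,t_{\min}<m$ and $S(\alpha)\to\infty$.
\end{itemize}
Hence a unique $\alpha>0$ solves $S(\alpha)=m$. Splitting the sum into $C$ and $F$ gives $|C|t_{\min}+\alpha\sum_{j\in F}\gamma_j^{1/3}=m$. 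Solving for $\alpha$ and using $\lambda_i^\star=(t_i^\star)^{-2}$ yields the displayed closed form, including the uncapped case $C=\varnothing$.

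\textbf{Expected obstacles.} The main care point is consistency between the steps rather than any single computation.

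\begin{itemize}
\item Because $t_i^\star$ is nondecreasing in $\gamma_i$, $\lambda_i^\star$ is nonincreasing in $\gamma_i$. With $\gamma_1\ge\cdots\ge\gamma_m$ this gives $\lambda_1^\star\le\cdots\le\lambda_m^\star$. This is the ordering assumed when invoking Proposition~\ref{prop:optimality_of_the_Jacobian}, which legitimizes the reduction.
\item The sets $C$ and $F$ are defined implicitly through $\alpha$, so the formula is self-consistent rather than explicit. I would state this and note that $C$ is a suffix of the indices, so it can be found by scanning thresholds.
\item One must check $m-|C|t_{\min}>0$, so that $\alpha$ is positive. This follows from $t_{\min}<1$ and $|C|\le m$.
\end{itemize}
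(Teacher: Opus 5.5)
Your proposal is correct and follows essentially the same route as the paper: you reduce via Proposition~\ref{prop:optimality_of_the_Jacobian}, substitute $t_i=\lambda_i^{-1/2}$, and read off the water-filling form from KKT stationarity and complementary slackness, with $\alpha$ fixed by the normalization. Your extra checks make explicit what the paper's terse proof leaves implicit or handles by a separate budget argument: existence on the compact feasible slice, Slater, strict monotonicity of $S(\alpha)$, positivity of $m-|C|t_{\min}$, the a posteriori ordering of $\lambda^\star$ that legitimizes the reduction, and capping of the $\gamma_i=0$ coordinates via $\eta_i=\nu>0$ (with $\nu>0$ because some $\gamma_i>0$).
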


\begin{proof}
The Lagrangian is
\begin{equation}
\mathcal{L} = \sum_i \gamma_i t_i^{-2} + \eta \left( \sum_i t_i-m \right) + \sum_i\nu_i(t_{\min}-t_i), \nonumber
\end{equation}
where $\nu_i\ge 0$. The stationarity condition is
\begin{equation}
-2\gamma_i t_i^{-3} + \eta - \nu_i = 0. \nonumber
\end{equation}
For a free coordinate, $\nu_i=0$, so
\begin{equation}
t_i = \left( \frac{2\gamma_i}{\eta} \right)^{1/3} = \alpha \gamma_i^{1/3}. \nonumber
\end{equation}
For a capped coordinate, $t_i=t_{\min}$, and complementary slackness gives
\begin{equation}
\nu_i = \eta - 2\gamma_i t_{\min}^{-3} \ge 0. \nonumber
\end{equation}
Hence capping occurs for sufficiently small $\gamma_i$. Enforcing the normalization constraint gives
\begin{equation}
\alpha = \frac{ m-|C|t_{\min} }{ \sum_{j\in F}\gamma_j^{1/3} }. \nonumber
\end{equation}
Because $\lambda_{\max}>1$ and at least one $\gamma_i>0$, the equation determining $\alpha$ has a unique solution. Strict convexity gives uniqueness on coordinates with $\gamma_i>0$. For $\gamma_i=0$, setting $t_i=t_{\min}$, equivalently $\lambda_i=\lambda_{\max}$, leaves the largest possible normalization budget for the positive-singular-value coordinates and is therefore optimal.
\end{proof}

\begin{proposition}[Remainder bound for the empirical first-order risk surrogate]
\label{prop:remainder_bound}
Let $\mathcal{T}$ be a piecewise-affine ReLU network that is globally $K$-Lipschitz and differentiable at each $\boldsymbol{z}_i$. For each $i$, let $\mathcal{R}_i$ be a closed convex polyhedral activation cell containing $\boldsymbol{z}_i$, on which $\mathcal{T}$ is affine with Jacobian $\boldsymbol{J}_i$, and define the nonlinear remainder $\boldsymbol{\zeta}_i$ by
\begin{equation*}
\mathcal{T}(\boldsymbol{z}_i+\boldsymbol{h}_i) - \mathcal{T}(\boldsymbol{z}_i) = \boldsymbol{J}_i\boldsymbol{h}_i + \boldsymbol{\zeta}_i.
\end{equation*}

Let $\mathcal{A}_i$ denote the event
\begin{equation*}
\mathcal{A}_i = \left\{ \boldsymbol{z}_i+t\boldsymbol{h}_i\in\mathcal{R}_i \ \text{ for all } t\in[0,1] \right\},
\end{equation*}
and define
\begin{equation*}
\bar{P} = \frac{1}{N} \sum_{i=1}^{N} \mathrm{Pr}_{\boldsymbol{\xi}}(\mathcal{A}_i^c).
\end{equation*}

Since $\mathcal{R}_i$ is convex and $\boldsymbol{z}_i\in\mathcal{R}_i$, the event $\mathcal{A}_i$ is equivalently characterized by its endpoint, $\mathcal{A}_i=\{\boldsymbol{z}_i+\boldsymbol{h}_i\in\mathcal{R}_i\}$. On $\mathcal{A}_i$ the map $\mathcal{T}$ agrees with its affine restriction on $\mathcal{R}_i$, so that $\boldsymbol{\zeta}_i=\boldsymbol{0}$; the first-order expression is exact whenever the perturbation stays within $\mathcal{R}_i$.

Assume that
\begin{equation*}
\mathbb{E}_{\boldsymbol{\xi}} \left[ \|\boldsymbol{h}_i\|_2^4 \right] <\infty \qquad \text{for all } i\in\{1,\ldots,N\}.
\end{equation*}

Then
\begin{equation*}
\left| \sqrt{\widehat{R}_{\mathrm{NL}}} - \sqrt{\widehat{R}_{\mathrm{FO}}} \right| \le 2K \left( \frac{1}{N} \sum_{i=1}^{N} \mathbb{E}_{\boldsymbol{\xi}} \left[ \|\boldsymbol{h}_i\|_2^2 \mathbf{1}_{\mathcal{A}_i^c} \right] \right)^{1/2},
\end{equation*}
and consequently,
\begin{equation*}
\left| \sqrt{\widehat{R}_{\mathrm{NL}}} - \sqrt{\widehat{R}_{\mathrm{FO}}} \right| \le 2K \left( \frac{1}{N} \sum_{i=1}^{N} \mathbb{E}_{\boldsymbol{\xi}} \left[ \|\boldsymbol{h}_i\|_2^4 \right] \right)^{1/4} \bar{P}^{1/4}.
\end{equation*}
\end{proposition}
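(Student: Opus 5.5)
We treat both risks as $L^2$ norms on the product space (sample index uniform on $\{1,\dots,N\}$, noise $\boldsymbol\xi$). Write $\boldsymbol a_i=\mathcal{T}(\boldsymbol z_i+\boldsymbol h_i)-\mathcal{T}(\boldsymbol z_i)$ and $\boldsymbol b_i=\boldsymbol J_i\boldsymbol h_i$. Then
\[
\sqrt{\widehat R_{\mathrm{NL}}}=\Bigl(\tfrac1N\sum_i\mathbb{E}\|\boldsymbol a_i\|_2^2\Bigr)^{1/2},\qquad
\sqrt{\widehat R_{\mathrm{FO}}}=\Bigl(\tfrac1N\sum_i\mathbb{E}\|\boldsymbol b_i\|_2^2\Bigr)^{1/2}.
\]
By the reverse triangle inequality (Minkowski) in this $L^2$ space,
\[
\Bigl|\sqrt{\widehat R_{\mathrm{NL}}}-\sqrt{\widehat R_{\mathrm{FO}}}\Bigr|\le\Bigl(\tfrac1N\sum_i\mathbb{E}\|\boldsymbol\zeta_i\|_2^2\Bigr)^{1/2},
\]
since $\boldsymbol\zeta_i=\boldsymbol a_i-\boldsymbol b_i$. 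Every quantity is finite because $\mathbb{E}\|\boldsymbol h_i\|^4<\infty$ implies $\mathbb{E}\|\boldsymbol h_i\|^2<\infty$.

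Next we bound $\boldsymbol\zeta_i$ pointwise. On $\mathcal{A}_i$ the statement already gives $\boldsymbol\zeta_i=\boldsymbol 0$; this uses convexity of $\mathcal{R}_i$ and the fact that $\mathcal{T}$ is affine on $\mathcal{R}_i$ with Jacobian $\boldsymbol J_i$. On $\mathcal{A}_i^c$, global $K$-Lipschitzness gives $\|\boldsymbol a_i\|_2\le K\|\boldsymbol h_i\|_2$. For $\boldsymbol b_i$, we need $\|\boldsymbol J_i\|_{\mathrm{op}}\le K$. This holds because $\boldsymbol z_i$ is a point of differentiability, so $\boldsymbol J_i$ is the true derivative there. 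Any directional derivative of a $K$-Lipschitz map has norm at most $K\|\boldsymbol v\|$, so $\|\boldsymbol J_i\boldsymbol v\|\le K\|\boldsymbol v\|$. Hence
\[
\|\boldsymbol\zeta_i\|_2\le 2K\|\boldsymbol h_i\|_2\,\mathbf 1_{\mathcal A_i^c},
\]
and substituting this into the Minkowski bound yields the first inequality.

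The main subtlety is the operator-norm step. The cell $\mathcal{R}_i$ must have $\boldsymbol z_i$ as a point where the affine piece coincides with the derivative; if $\boldsymbol z_i$ lies on a cell boundary, differentiability at $\boldsymbol z_i$ still forces the affine piece's linear part to equal the derivative along directions into the cell. I would argue that $\boldsymbol J_i$ is defined as the derivative at $\boldsymbol z_i$, so the bound holds regardless. Measurability of $\mathcal{A}_i$ is immediate because $\mathcal{R}_i$ is closed.

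For the second inequality, we apply Cauchy--Schwarz to each expectation:
\[
\mathbb{E}\bigl[\|\boldsymbol h_i\|^2\mathbf 1_{\mathcal A_i^c}\bigr]\le\bigl(\mathbb{E}\|\boldsymbol h_i\|^4\bigr)^{1/2}\Pr(\mathcal A_i^c)^{1/2}.
\]
Averaging over $i$ and applying Cauchy--Schwarz again to the sum gives
\[
\tfrac1N\sum_i\mathbb{E}\bigl[\|\boldsymbol h_i\|^2\mathbf 1_{\mathcal A_i^c}\bigr]\le\Bigl(\tfrac1N\sum_i\mathbb{E}\|\boldsymbol h_i\|^4\Bigr)^{1/2}\bar P^{1/2}.
\]
Taking square roots and multiplying by $2K$ yields the stated bound with exponents $1/4$.
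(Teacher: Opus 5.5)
Your proposal is correct and follows the paper's proof essentially step for step. Both arguments use the reverse triangle inequality in the joint $L^2$ norm (the paper's $\|\cdot\|_{\mathcal H}$), the pointwise bound $\|\boldsymbol\zeta_i\|_2\le 2K\|\boldsymbol h_i\|_2\mathbf 1_{\mathcal A_i^c}$ obtained from Lipschitzness and $\|\boldsymbol J_i\|_{\mathrm{op}}\le K$, and Cauchy--Schwarz. The only cosmetic difference is that you apply Cauchy--Schwarz in two stages (per sample, then over the index), whereas the paper applies it once to the joint average. Your remark that differentiability at $\boldsymbol z_i$ reconciles the affine piece's linear part with the derivative addresses a point the paper simply asserts.
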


\begin{proof}
For a sequence of random vectors $\{\boldsymbol{v}_i\}_{i=1}^{N}$, define
\begin{equation*}
\left\| \{\boldsymbol{v}_i\}_{i=1}^{N} \right\|_{\mathcal{H}} = \left( \frac{1}{N} \sum_{i=1}^{N} \mathbb{E}_{\boldsymbol{\xi}} \left[ \|\boldsymbol{v}_i\|_2^2 \right] \right)^{1/2}.
\end{equation*}

This is the root-mean-square norm with respect to the joint average over the empirical index $i$ (uniform on $\{1,\ldots,N\}$) and the randomness of $\boldsymbol{\xi}$. Under this norm,
\begin{equation*}
\sqrt{\widehat{R}_{\mathrm{NL}}} = \left\| \left\{ \mathcal{T}(\boldsymbol{z}_i+\boldsymbol{h}_i) - \mathcal{T}(\boldsymbol{z}_i) \right\}_{i=1}^{N} \right\|_{\mathcal{H}},
\qquad
\sqrt{\widehat{R}_{\mathrm{FO}}} = \left\| \left\{ \boldsymbol{J}_i\boldsymbol{h}_i \right\}_{i=1}^{N} \right\|_{\mathcal{H}}.
\end{equation*}

Since $\mathcal{T}$ is globally $K$-Lipschitz and differentiable at $\boldsymbol{z}_i$, its Jacobian satisfies
\begin{equation*}
\|\boldsymbol{J}_i\|_{\mathrm{op}} = \sup_{\boldsymbol{u}\neq\boldsymbol{0}} \frac{\|\boldsymbol{J}_i\boldsymbol{u}\|_2}{\|\boldsymbol{u}\|_2} \le K,
\end{equation*}
so both $\|\mathcal{T}(\boldsymbol{z}_i+\boldsymbol{h}_i)-\mathcal{T}(\boldsymbol{z}_i)\|_2$ and $\|\boldsymbol{J}_i\boldsymbol{h}_i\|_2$ are bounded by $K\|\boldsymbol{h}_i\|_2$. Since $\mathbb{E}_{\boldsymbol{\xi}}[\|\boldsymbol{h}_i\|_2^4]<\infty$ implies $\mathbb{E}_{\boldsymbol{\xi}}[\|\boldsymbol{h}_i\|_2^2]<\infty$, both $\widehat{R}_{\mathrm{NL}}$ and $\widehat{R}_{\mathrm{FO}}$ are finite.

By the reverse triangle inequality,
\begin{equation}
\left| \sqrt{\widehat{R}_{\mathrm{NL}}} - \sqrt{\widehat{R}_{\mathrm{FO}}} \right| \le \left\| \left\{ \boldsymbol{\zeta}_i \right\}_{i=1}^{N} \right\|_{\mathcal{H}} = \left( \frac{1}{N} \sum_{i=1}^{N} \mathbb{E}_{\boldsymbol{\xi}} \left[ \|\boldsymbol{\zeta}_i\|_2^2 \right] \right)^{1/2}.
\nonumber
\end{equation}

For the remainder itself, the triangle inequality together with the two bounds above gives the deterministic worst case
\begin{equation}
\|\boldsymbol{\zeta}_i\|_2 = \left\| \mathcal{T}(\boldsymbol{z}_i+\boldsymbol{h}_i) - \mathcal{T}(\boldsymbol{z}_i) - \boldsymbol{J}_i\boldsymbol{h}_i \right\|_2 \le \left\| \mathcal{T}(\boldsymbol{z}_i+\boldsymbol{h}_i) - \mathcal{T}(\boldsymbol{z}_i) \right\|_2 + \|\boldsymbol{J}_i\boldsymbol{h}_i\|_2 \le 2K\|\boldsymbol{h}_i\|_2.\nonumber
\end{equation}

Since $\boldsymbol{\zeta}_i=\boldsymbol{0}$ on $\mathcal{A}_i$, this worst case is only active on the complement:
\begin{equation*}
\|\boldsymbol{\zeta}_i\|_2^2 \le 4K^2 \|\boldsymbol{h}_i\|_2^2 \mathbf{1}_{\mathcal{A}_i^c}.
\end{equation*}

Substituting into the preceding reverse-triangle bound gives
\begin{equation*}
\left| \sqrt{\widehat{R}_{\mathrm{NL}}} - \sqrt{\widehat{R}_{\mathrm{FO}}} \right| \le 2K \left( \frac{1}{N} \sum_{i=1}^{N} \mathbb{E}_{\boldsymbol{\xi}} \left[ \|\boldsymbol{h}_i\|_2^2 \mathbf{1}_{\mathcal{A}_i^c} \right] \right)^{1/2}.
\end{equation*}

Finally, since $\|\cdot\|_{\mathcal{H}}$ is the root-mean-square norm for this joint average, the Cauchy–Schwarz inequality applies to the pair $\|\boldsymbol{h}_i\|_2^2$ and $\mathbf{1}_{\mathcal{A}_i^c}$ over the same average; using $\mathbf{1}_{\mathcal{A}_i^c}^2=\mathbf{1}_{\mathcal{A}_i^c}$,
\begin{align*}
\frac{1}{N} \sum_{i=1}^{N} \mathbb{E}_{\boldsymbol{\xi}} \left[ \|\boldsymbol{h}_i\|_2^2 \mathbf{1}_{\mathcal{A}_i^c} \right] &\le \left( \frac{1}{N} \sum_{i=1}^{N} \mathbb{E}_{\boldsymbol{\xi}} \left[ \|\boldsymbol{h}_i\|_2^4 \right] \right)^{1/2} \left( \frac{1}{N} \sum_{i=1}^{N} \mathbb{E}_{\boldsymbol{\xi}} \left[ \mathbf{1}_{\mathcal{A}_i^c} \right] \right)^{1/2} \\
&= \left( \frac{1}{N} \sum_{i=1}^{N} \mathbb{E}_{\boldsymbol{\xi}} \left[ \|\boldsymbol{h}_i\|_2^4 \right] \right)^{1/2} \bar{P}^{1/2}.
\end{align*}

Taking the outer square root yields
\begin{equation*}
\left| \sqrt{\widehat{R}_{\mathrm{NL}}} - \sqrt{\widehat{R}_{\mathrm{FO}}} \right| \le 2K \left( \frac{1}{N} \sum_{i=1}^{N} \mathbb{E}_{\boldsymbol{\xi}} \left[ \|\boldsymbol{h}_i\|_2^4 \right] \right)^{1/4} \bar{P}^{1/4},
\end{equation*}
which completes the proof.
\end{proof}

\begin{corollary}[Direct bound on the empirical nonlinear-risk discrepancy]
Under the assumptions of Proposition~\ref{prop:remainder_bound}, define
\begin{equation}\nonumber
    B = 2K \left( \frac{1}{N} \sum_{i=1}^{N} \mathbb{E}_{\boldsymbol{\xi}} \left[ \|\boldsymbol{h}_i\|_2^4 \right] \right)^{1/4} \bar{P}^{1/4}.
\end{equation}
Then
\begin{equation}\nonumber
    \left| \widehat{R}_{\mathrm{NL}} - \widehat{R}_{\mathrm{FO}} \right| \le 2B\sqrt{\widehat{R}_{\mathrm{FO}}} + B^2.
\end{equation}
\end{corollary}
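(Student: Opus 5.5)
The bound follows from Proposition~\ref{prop:remainder_bound} by a short algebraic step. Write $a = \sqrt{\widehat{R}_{\mathrm{NL}}}$ and $b = \sqrt{\widehat{R}_{\mathrm{FO}}}$. Both are finite and nonnegative under the fourth-moment assumption, as shown in the proof of Proposition~\ref{prop:remainder_bound}. That proposition gives exactly $|a-b| \le B$ with $B$ as defined in the corollary.

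We factor the risk difference as a difference of squares:
\begin{equation*}
\left| \widehat{R}_{\mathrm{NL}} - \widehat{R}_{\mathrm{FO}} \right| = |a^2 - b^2| = |a-b|\,(a+b).
\end{equation*}
The triangle inequality gives $a \le b + |a-b| \le b + B$. Hence $a + b \le 2b + B$. Multiplying by $|a-b| \le B$ yields
\begin{equation*}
|a^2-b^2| \le B(2b+B) = 2B\sqrt{\widehat{R}_{\mathrm{FO}}} + B^2,
\end{equation*}
which is the claim.

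None of these steps is a real obstacle; all the analytic content sits in Proposition~\ref{prop:remainder_bound}. The only point to check is the direction of the asymmetric step: we must bound $a+b$ in terms of $b$ (the first-order quantity), not $a$. This is done by using $a \le b + B$ rather than the symmetric estimate.

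We should also note that if $\bar{P}=0$, then $B=0$ and the bound correctly states that the two risks coincide.
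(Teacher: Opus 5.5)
Your proposal is correct and follows the paper's proof essentially step for step. Like the paper, you factor $|\widehat{R}_{\mathrm{NL}}-\widehat{R}_{\mathrm{FO}}|$ as a difference of squares, bound $|\sqrt{\widehat{R}_{\mathrm{NL}}}-\sqrt{\widehat{R}_{\mathrm{FO}}}|\le B$ via Proposition~\ref{prop:remainder_bound}, and control the sum of square roots using $\sqrt{\widehat{R}_{\mathrm{NL}}}\le\sqrt{\widehat{R}_{\mathrm{FO}}}+B$.
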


\begin{proof}
By Proposition~\ref{prop:remainder_bound},
\begin{equation}\nonumber
    \left| \sqrt{\widehat{R}_{\mathrm{NL}}} - \sqrt{\widehat{R}_{\mathrm{FO}}} \right| \le B.
\end{equation}
Hence,
\begin{equation}
    \left| \widehat{R}_{\mathrm{NL}} - \widehat{R}_{\mathrm{FO}} \right| 
    = \left| \sqrt{\widehat{R}_{\mathrm{NL}}} - \sqrt{\widehat{R}_{\mathrm{FO}}} \right| \left( \sqrt{\widehat{R}_{\mathrm{NL}}} + \sqrt{\widehat{R}_{\mathrm{FO}}} \right) \le B \left( 2\sqrt{\widehat{R}_{\mathrm{FO}}} + B \right), \nonumber
\end{equation}
where the last inequality follows from
\begin{equation}\nonumber
    \sqrt{\widehat{R}_{\mathrm{NL}}} \le \sqrt{\widehat{R}_{\mathrm{FO}}} + B.
\end{equation}
This proves the result.
\end{proof}

\begin{proposition}[Deterministic bound for the empirical noise-risk discrepancy]
Let $\boldsymbol{\Omega}$ denote the population Jacobian second-moment matrix and let $\widehat{\boldsymbol{\Omega}}$ denote its empirical estimate obtained from $N$ public samples. 

Define
\begin{equation*}
R_{\mathrm{FO,Noise}} = \sigma^2 \operatorname{Tr} \left( \boldsymbol{\Lambda} \boldsymbol{U}^{\top} \boldsymbol{\Omega} \boldsymbol{U} \right)
\end{equation*}
and
\begin{equation*}
\widehat{R}_{\mathrm{FO,Noise}} = \sigma^2 \operatorname{Tr} \left( \boldsymbol{\Lambda} \boldsymbol{U}^{\top} \widehat{\boldsymbol{\Omega}} \boldsymbol{U} \right).
\end{equation*}

Then,
\begin{align*}
\left| R_{\mathrm{FO,Noise}} - \widehat{R}_{\mathrm{FO,Noise}} \right| = \sigma^2 \left| \operatorname{Tr} \left( \boldsymbol{\Lambda} \boldsymbol{U}^{\top} \left( \boldsymbol{\Omega} - \widehat{\boldsymbol{\Omega}} \right) \boldsymbol{U} \right) \right| \le \sigma^2 \left\| \boldsymbol{\Omega} - \widehat{\boldsymbol{\Omega}} \right\|_{\mathrm{op}} \operatorname{Tr}(\boldsymbol{\Lambda}).
\end{align*}
\end{proposition}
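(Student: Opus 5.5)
The argument is linearity of the trace followed by a trace-duality (von Neumann type) bound with one factor positive semidefinite.

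First, by linearity of the trace,
\[
R_{\mathrm{FO,Noise}}-\widehat{R}_{\mathrm{FO,Noise}}=\sigma^2\operatorname{Tr}\bigl(\boldsymbol{\Lambda}\boldsymbol{U}^\top\boldsymbol{\Delta}\boldsymbol{U}\bigr),
\qquad \boldsymbol{\Delta}=\boldsymbol{\Omega}-\widehat{\boldsymbol{\Omega}}.
\]
This gives the stated equality once absolute values are taken, since $\sigma^2>0$. Note that $\boldsymbol{\Delta}$ is symmetric but possibly indefinite. Both $\boldsymbol{\Omega}$ and $\widehat{\boldsymbol{\Omega}}$ are Gram second-moment matrices and hence symmetric positive semidefinite, but their difference need not be.

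Next, I would set $\boldsymbol{M}=\boldsymbol{U}^\top\boldsymbol{\Delta}\boldsymbol{U}$. Since $\boldsymbol{U}$ is orthogonal, $\boldsymbol{M}$ is symmetric, has the same eigenvalues as $\boldsymbol{\Delta}$, and so $\|\boldsymbol{M}\|_{\mathrm{op}}=\|\boldsymbol{\Delta}\|_{\mathrm{op}}$. Because $\boldsymbol{\Lambda}$ is diagonal,
\[
\operatorname{Tr}(\boldsymbol{\Lambda}\boldsymbol{M})=\sum_{i=1}^m\lambda_i M_{ii},
\qquad M_{ii}=\boldsymbol{u}_i^\top\boldsymbol{\Delta}\boldsymbol{u}_i,
\]
where $\boldsymbol{u}_i$ is the $i$-th column of $\boldsymbol{U}$. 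Each column is a unit vector, so the Rayleigh quotient gives $|M_{ii}|\le\|\boldsymbol{\Delta}\|_{\mathrm{op}}$.

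Then, using $\lambda_i>0$ and the triangle inequality,
\[
\Bigl|\sum_i\lambda_iM_{ii}\Bigr|\le\sum_i\lambda_i|M_{ii}|\le\|\boldsymbol{\Delta}\|_{\mathrm{op}}\sum_i\lambda_i=\|\boldsymbol{\Delta}\|_{\mathrm{op}}\operatorname{Tr}(\boldsymbol{\Lambda}).
\]
Multiplying by $\sigma^2$ yields the claimed bound.

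An alternative route is the general inequality $|\operatorname{Tr}(\boldsymbol{A}\boldsymbol{B})|\le\|\boldsymbol{B}\|_{\mathrm{op}}\|\boldsymbol{A}\|_*$, applied with $\boldsymbol{A}=\boldsymbol{\Lambda}\succ0$, for which the nuclear norm equals $\operatorname{Tr}(\boldsymbol{\Lambda})$. I expect no real obstacle. The only points needing care are that positivity of the $\lambda_i$ is required so that $\sum_i\lambda_i|M_{ii}|$ equals $\operatorname{Tr}(\boldsymbol{\Lambda})$ times the bound, and that the bound must handle an indefinite $\boldsymbol{\Delta}$. The Rayleigh-quotient step covers both signs.
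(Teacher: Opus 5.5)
Your proof is correct and is essentially the paper's argument. The paper moves $\boldsymbol{U}$ cyclically inside the trace and applies nuclear--operator norm duality with $\|\boldsymbol{U}\boldsymbol{\Lambda}\boldsymbol{U}^\top\|_* = \operatorname{Tr}(\boldsymbol{\Lambda})$, which is your alternative route; your main route, writing the trace as $\sum_i \lambda_i \boldsymbol{u}_i^\top\boldsymbol{\Delta}\boldsymbol{u}_i$ and bounding each Rayleigh quotient by $\|\boldsymbol{\Delta}\|_{\mathrm{op}}$, simply unpacks that duality step for a positive semidefinite factor, using only $\lambda_i \ge 0$ and $\|\boldsymbol{u}_i\|_2 = 1$.
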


\begin{proof}
Using cyclic invariance of the trace,
\begin{equation*}
\operatorname{Tr} \left( \boldsymbol{\Lambda} \boldsymbol{U}^{\top} \left( \boldsymbol{\Omega} - \widehat{\boldsymbol{\Omega}} \right) \boldsymbol{U} \right) = \operatorname{Tr} \left( \boldsymbol{U} \boldsymbol{\Lambda} \boldsymbol{U}^{\top} \left( \boldsymbol{\Omega} - \widehat{\boldsymbol{\Omega}} \right) \right).
\end{equation*}

By the duality between the nuclear and operator norms,
\begin{equation*}
\left| \operatorname{Tr} \left( \boldsymbol{U} \boldsymbol{\Lambda} \boldsymbol{U}^{\top} \left( \boldsymbol{\Omega} - \widehat{\boldsymbol{\Omega}} \right) \right) \right| \le \left\| \boldsymbol{U} \boldsymbol{\Lambda} \boldsymbol{U}^{\top} \right\|_{*} \|\left( \boldsymbol{\Omega} - \widehat{\boldsymbol{\Omega}} \right)\|_{\mathrm{op}}.
\end{equation*}

Since $\boldsymbol{U}$ is orthogonal and $\boldsymbol{\Lambda}\succeq 0$,
\begin{equation*}
\left\| \boldsymbol{U} \boldsymbol{\Lambda} \boldsymbol{U}^{\top} \right\|_{*} = \operatorname{Tr}(\boldsymbol{\Lambda}).
\end{equation*}

Multiplying by $\sigma^2$ gives the result.
\end{proof}

\begin{corollary}[Finite-sample convergence of the empirical noise risk]
Suppose that the public samples are independently and identically distributed and that the Jacobian second-moment estimator satisfies
\begin{equation*}
\left\| \boldsymbol{\Omega} - \widehat{\boldsymbol{\Omega}} \right\|_{\mathrm{op}} = \mathcal{O}_{P} \left( \sqrt{\frac{\log m}{N}} \right),
\end{equation*}
as obtained, for example, under appropriate boundedness or sub-Gaussian assumptions through a matrix concentration inequality.

Then,
\begin{equation*}
\left| R_{\mathrm{FO,Noise}} - \widehat{R}_{\mathrm{FO,Noise}} \right| = \mathcal{O}_{P} \left( \sigma^2 \operatorname{Tr}(\boldsymbol{\Lambda}) \sqrt{\frac{\log m}{N}} \right).
\end{equation*}
\end{corollary}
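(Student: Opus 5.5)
This corollary follows from the preceding deterministic bound by a probabilistic transfer. The plan is to take the inequality from the previous proposition, $\left| R_{\mathrm{FO,Noise}} - \widehat{R}_{\mathrm{FO,Noise}} \right| \le \sigma^2 \|\boldsymbol{\Omega}-\widehat{\boldsymbol{\Omega}}\|_{\mathrm{op}}\operatorname{Tr}(\boldsymbol{\Lambda})$, which holds pointwise for every realization of the public sample. It then combines this inequality with the assumed stochastic rate for $\|\boldsymbol{\Omega}-\widehat{\boldsymbol{\Omega}}\|_{\mathrm{op}}$.

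The first step is to check that the prefactor $\sigma^2\operatorname{Tr}(\boldsymbol{\Lambda})$ is a legitimate deterministic scaling, or at least one for which the $\mathcal{O}_P$ statement is well posed. For a fixed configuration, $\sigma$ is determined by $\rho$, $\epsilon$ and the base randomizer. By (A1) of Theorem~\ref{thm:privacy}, $\boldsymbol{\Lambda}$ has entries in $[\lambda_{\min},\lambda_{\max}]$, so $\operatorname{Tr}(\boldsymbol{\Lambda}) \le m\lambda_{\max} < \infty$. If $\boldsymbol{U},\boldsymbol{\Lambda}$ are themselves computed from $\widehat{\boldsymbol{\Omega}}$, as in Proposition~\ref{prop:joint_opt_Jac_scale}, the bound still holds pointwise, because the deterministic inequality is uniform over orthogonal $\boldsymbol{U}$ and over $\boldsymbol{\Lambda}\succ 0$. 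The factor $\operatorname{Tr}(\boldsymbol{\Lambda})$ is then a random but bounded quantity, and the stated rate should be read with this factor kept explicit, or replaced by the cap $m\lambda_{\max}$.

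The second step unpacks the definition of $\mathcal{O}_P$. For every $\eta>0$ there exist $M_\eta$ and $N_\eta$ such that $\Pr\left(\|\boldsymbol{\Omega}-\widehat{\boldsymbol{\Omega}}\|_{\mathrm{op}} > M_\eta\sqrt{\log m/N}\right)<\eta$ for all $N\ge N_\eta$. On the complementary event, the deterministic bound gives $\left|R_{\mathrm{FO,Noise}}-\widehat{R}_{\mathrm{FO,Noise}}\right| \le M_\eta\,\sigma^2\operatorname{Tr}(\boldsymbol{\Lambda})\sqrt{\log m/N}$. Event inclusion then transfers the probability bound, which is exactly the claimed $\mathcal{O}_P$ statement.

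The only genuine subtlety, and the step I expect to need care, is the second-moment step: whether the dependence of $\boldsymbol{U},\boldsymbol{\Lambda}$ on the sample breaks the argument. I would handle this by emphasizing that the deterministic bound holds simultaneously for all admissible $(\boldsymbol{U},\boldsymbol{\Lambda})$, so no conditioning or independence is needed. Optionally, I would remark that the assumed operator-norm rate follows from a matrix Bernstein inequality applied to the i.i.d.\ summands $\boldsymbol{J}_i^\top\boldsymbol{J}_i$. These summands are bounded in operator norm by $K^2$ under the Lipschitz assumption of Proposition~\ref{prop:remainder_bound}, and this bound yields the $\sqrt{\log m / N}$ rate.
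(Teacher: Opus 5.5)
Your proposal is correct and takes the same route as the paper. The paper treats this corollary as an immediate consequence of the deterministic bound $|R_{\mathrm{FO,Noise}}-\widehat{R}_{\mathrm{FO,Noise}}|\le\sigma^2\|\boldsymbol{\Omega}-\widehat{\boldsymbol{\Omega}}\|_{\mathrm{op}}\operatorname{Tr}(\boldsymbol{\Lambda})$ combined with the assumed operator-norm rate, exactly as it states explicitly for the public--private analogue. Your additions are sound details the paper leaves implicit: the event-inclusion unpacking of $\mathcal{O}_P$, the point that the bound holds uniformly over orthogonal $\boldsymbol{U}$ and $\boldsymbol{\Lambda}\succ 0$ (so a data-dependent configuration is harmless), and the matrix Bernstein remark using $\|\boldsymbol{J}_i^\top\boldsymbol{J}_i\|_{\mathrm{op}}\le K^2$.
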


\begin{proposition}[Public–private discrepancy bound for the empirical first-order noise risk]
\label{prop:public_private_discrepancy}
Let $\boldsymbol{\Omega}_{\mathrm{pub/priv}}$ denote the Jacobian second-moment matrix under the public/private population.

Let
\begin{equation*}
\widehat{\boldsymbol{\Omega}}_{\mathrm{pub}} = \boldsymbol{J}_{\mathrm{pub}}^{\top} \boldsymbol{J}_{\mathrm{pub}} = \frac{1}{N}\sum_{i=1}^{N} \boldsymbol{J}_{\mathrm{pub},i}^{\top} \boldsymbol{J}_{\mathrm{pub},i}
\end{equation*}
denote the empirical public Jacobian second-moment matrix. 

Define the empirical public first-order noise risk as
\begin{equation*}
\widehat{R}_{\mathrm{pub,FO,Noise}} = \sigma^{2} \operatorname{Tr} \left( \boldsymbol{\Lambda}_{\mathrm{pub}} \boldsymbol{U}_{\mathrm{pub}}^{\top} \widehat{\boldsymbol{\Omega}}_{\mathrm{pub}} \boldsymbol{U}_{\mathrm{pub}} \right),
\end{equation*}
and define the corresponding private-population noise risk, evaluated using the public basis and scale allocation, as
\begin{equation*}
R_{\mathrm{priv,FO,Noise}} = \sigma^{2} \operatorname{Tr} \left( \boldsymbol{\Lambda}_{\mathrm{pub}} \boldsymbol{U}_{\mathrm{pub}}^{\top} \boldsymbol{\Omega}_{\mathrm{priv}} \boldsymbol{U}_{\mathrm{pub}} \right).
\end{equation*}

Then the public–private discrepancy bound for the empirical first-order noise risk is given by 
\begin{equation*}
\left| R_{\mathrm{priv,FO,Noise}} - \widehat{R}_{\mathrm{pub,FO,Noise}} \right| \le \sigma^{2} \left\| \boldsymbol{\Omega}_{\mathrm{priv}} - \widehat{\boldsymbol{\Omega}}_{\mathrm{pub}} \right\|_{\mathrm{op}} \operatorname{Tr} \left( \boldsymbol{\Lambda}_{\mathrm{pub}} \right).
\end{equation*}
where 
\begin{align*}
\left\| \boldsymbol{\Omega}_{\mathrm{priv}} - \widehat{\boldsymbol{\Omega}}_{\mathrm{pub}} \right\|_{\mathrm{op}} &\le \left\| \boldsymbol{\Omega}_{\mathrm{priv}} - \boldsymbol{\Omega}_{\mathrm{pub}} \right\|_{\mathrm{op}} + \left\| \boldsymbol{\Omega}_{\mathrm{pub}} - \widehat{\boldsymbol{\Omega}}_{\mathrm{pub}} \right\|_{\mathrm{op}}.
\end{align*}
\end{proposition}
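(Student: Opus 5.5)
The plan is to reduce this to the deterministic bound proved in the preceding proposition (deterministic bound for the empirical noise-risk discrepancy), with $\boldsymbol{\Omega}$ replaced by $\boldsymbol{\Omega}_{\mathrm{priv}}$ and $\widehat{\boldsymbol{\Omega}}$ by $\widehat{\boldsymbol{\Omega}}_{\mathrm{pub}}$. That proof never used any relation between the two matrices beyond both being symmetric. It also did not use the optimality of $\boldsymbol{U}$ or $\boldsymbol{\Lambda}$; it only needed $\boldsymbol{U}$ orthogonal and $\boldsymbol{\Lambda}\succeq 0$. Both conditions hold for the public configuration $(\boldsymbol{U}_{\mathrm{pub}},\boldsymbol{\Lambda}_{\mathrm{pub}})$, since it is built from the eigendecomposition of $\widehat{\boldsymbol{\Omega}}_{\mathrm{pub}}$ and Eq.~\eqref{eq:scale} gives positive scales.

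Concretely, I would first subtract the two risks. Because the same $\boldsymbol{\Lambda}_{\mathrm{pub}}$, $\boldsymbol{U}_{\mathrm{pub}}$ and $\sigma^2$ appear in both, linearity of the trace gives
\begin{equation*}
R_{\mathrm{priv,FO,Noise}} - \widehat{R}_{\mathrm{pub,FO,Noise}} = \sigma^2 \operatorname{Tr}\left( \boldsymbol{U}_{\mathrm{pub}}\boldsymbol{\Lambda}_{\mathrm{pub}}\boldsymbol{U}_{\mathrm{pub}}^{\top} \left(\boldsymbol{\Omega}_{\mathrm{priv}} - \widehat{\boldsymbol{\Omega}}_{\mathrm{pub}}\right) \right),
\end{equation*}
where I have also used cyclic invariance of the trace. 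Next I apply the nuclear/operator norm duality $|\operatorname{Tr}(\boldsymbol{A}\boldsymbol{B})|\le \|\boldsymbol{A}\|_*\|\boldsymbol{B}\|_{\mathrm{op}}$. Since $\boldsymbol{U}_{\mathrm{pub}}\boldsymbol{\Lambda}_{\mathrm{pub}}\boldsymbol{U}_{\mathrm{pub}}^\top$ is positive semidefinite, its nuclear norm equals its trace, which is $\operatorname{Tr}(\boldsymbol{\Lambda}_{\mathrm{pub}})$. Multiplying by $\sigma^2$ gives the main inequality.

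For the decomposition of the operator norm, I insert and subtract the population public matrix, writing $\boldsymbol{\Omega}_{\mathrm{priv}} - \widehat{\boldsymbol{\Omega}}_{\mathrm{pub}} = (\boldsymbol{\Omega}_{\mathrm{priv}}-\boldsymbol{\Omega}_{\mathrm{pub}}) + (\boldsymbol{\Omega}_{\mathrm{pub}}-\widehat{\boldsymbol{\Omega}}_{\mathrm{pub}})$, and then use the triangle inequality for $\|\cdot\|_{\mathrm{op}}$. The first term measures the distribution shift. The second is the finite-sample error, which the earlier finite-sample corollary controls at rate $\mathcal{O}_P(\sqrt{\log m/N})$.

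I do not expect a genuine obstacle. The only point needing care is that $\boldsymbol{U}_{\mathrm{pub}}$ and $\boldsymbol{\Lambda}_{\mathrm{pub}}$ are fixed: they are computed from public data only, so they are treated as deterministic here. In particular, they are not re-optimized against $\boldsymbol{\Omega}_{\mathrm{priv}}$, which is what lets the same matrix factor out of the difference. I should also confirm that $\boldsymbol{\Omega}_{\mathrm{priv}}$ is finite, i.e.\ that the Jacobians have bounded second moments, so that the operator norm on the right-hand side is well defined.
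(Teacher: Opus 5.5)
Your proposal is correct and follows the paper's proof essentially step for step: you factor out the common public configuration, apply cyclic invariance and the nuclear/operator-norm duality with $\|\boldsymbol{U}_{\mathrm{pub}}\boldsymbol{\Lambda}_{\mathrm{pub}}\boldsymbol{U}_{\mathrm{pub}}^{\top}\|_{*} = \operatorname{Tr}(\boldsymbol{\Lambda}_{\mathrm{pub}})$, then split the operator norm by inserting $\boldsymbol{\Omega}_{\mathrm{pub}}$ and using the triangle inequality. One small point: the duality bound $|\operatorname{Tr}(\boldsymbol{A}\boldsymbol{B})| \le \|\boldsymbol{A}\|_{*}\|\boldsymbol{B}\|_{\mathrm{op}}$ holds for arbitrary matrices, so the symmetry you mention is not needed, though this does not affect your argument.
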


\begin{proof}
By the definitions above,
\begin{equation*}
\left| R_{\mathrm{priv,FO,Noise}} - \widehat{R}_{\mathrm{pub,FO,Noise}} \right| = \sigma^{2} \left| \operatorname{Tr} \left( \boldsymbol{\Lambda}_{\mathrm{pub}} \boldsymbol{U}_{\mathrm{pub}}^{\top} \left( \boldsymbol{\Omega}_{\mathrm{priv}} - \widehat{\boldsymbol{\Omega}}_{\mathrm{pub}} \right) \boldsymbol{U}_{\mathrm{pub}} \right) \right|.
\end{equation*}

Using cyclic invariance of the trace and the duality between the nuclear and operator norms,
\begin{align*}
\left| \operatorname{Tr} \left( \boldsymbol{\Lambda}_{\mathrm{pub}} \boldsymbol{U}_{\mathrm{pub}}^{\top} \left( \boldsymbol{\Omega}_{\mathrm{priv}} - \widehat{\boldsymbol{\Omega}}_{\mathrm{pub}} \right) \boldsymbol{U}_{\mathrm{pub}} \right) \right| &\le \left\| \boldsymbol{U}_{\mathrm{pub}} \boldsymbol{\Lambda}_{\mathrm{pub}} \boldsymbol{U}_{\mathrm{pub}}^{\top} \right\|_{*} \left\| \boldsymbol{\Omega}_{\mathrm{priv}} - \widehat{\boldsymbol{\Omega}}_{\mathrm{pub}} \right\|_{\mathrm{op}}.
\end{align*}

Since $\boldsymbol{U}_{\mathrm{pub}}$ is orthogonal and $\boldsymbol{\Lambda}_{\mathrm{pub}}\succ\boldsymbol{0}$,
\begin{equation*}
\left\| \boldsymbol{U}_{\mathrm{pub}} \boldsymbol{\Lambda}_{\mathrm{pub}} \boldsymbol{U}_{\mathrm{pub}}^{\top} \right\|_{*} = \operatorname{Tr} \left( \boldsymbol{\Lambda}_{\mathrm{pub}} \right).
\end{equation*}
This proves the first inequality. The second follows directly from the triangle inequality for the operator norm.
\end{proof}

\begin{corollary}[Finite-sample convergence of the empirical public noise risk]
Suppose that the public samples are independently and identically distributed and that, under the stated boundedness or sub-Gaussian Jacobian assumptions,
\begin{equation*}
\left\| \boldsymbol{\Omega}_{\mathrm{pub}} - \widehat{\boldsymbol{\Omega}}_{\mathrm{pub}} \right\|_{\mathrm{op}} = \mathcal{O}_{P} \left( \sqrt{\frac{\log m}{N}} \right).
\end{equation*}

Then
\begin{align*}
\left| R_{\mathrm{priv,FO,Noise}} - \widehat{R}_{\mathrm{pub,FO,Noise}} \right| &\le \sigma^{2} \operatorname{Tr} \left( \boldsymbol{\Lambda}_{\mathrm{pub}} \right) \left\| \boldsymbol{\Omega}_{\mathrm{priv}} - \boldsymbol{\Omega}_{\mathrm{pub}} \right\|_{\mathrm{op}} \\
&\quad+ \mathcal{O}_{P} \left( \sigma^{2} \operatorname{Tr} \left( \boldsymbol{\Lambda}_{\mathrm{pub}} \right) \sqrt{\frac{\log m}{N}} \right).
\end{align*}

Hence, the finite-sample estimation component decreases at an $N^{-1/2}$-type rate, up to logarithmic dependence on $m$, whereas the population-level public–private mismatch need not vanish as $N$ increases. In the matched-population case ($\boldsymbol{\Omega}_{\mathrm{priv}}=\boldsymbol{\Omega}_{\mathrm{pub}}$), the bound reduces to
\begin{equation*}
\left| R_{\mathrm{priv,FO,Noise}} - \widehat{R}_{\mathrm{pub,FO,Noise}} \right| = \mathcal{O}_{P} \left( \sigma^{2} \operatorname{Tr} \left( \boldsymbol{\Lambda}_{\mathrm{pub}} \right) \sqrt{\frac{\log m}{N}} \right).
\end{equation*}
\end{corollary}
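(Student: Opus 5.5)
The plan is to obtain the corollary directly from Proposition~\ref{prop:public_private_discrepancy}, adding only the assumed concentration rate and some bookkeeping with stochastic-order notation.

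\textbf{Step 1: start from the deterministic bound.} The proposition gives, for the fixed public configuration $(\boldsymbol{U}_{\mathrm{pub}},\boldsymbol{\Lambda}_{\mathrm{pub}})$,
\begin{equation*}
\bigl| R_{\mathrm{priv,FO,Noise}} - \widehat{R}_{\mathrm{pub,FO,Noise}} \bigr| \le \sigma^2 \operatorname{Tr}(\boldsymbol{\Lambda}_{\mathrm{pub}}) \Bigl( \bigl\| \boldsymbol{\Omega}_{\mathrm{priv}} - \boldsymbol{\Omega}_{\mathrm{pub}} \bigr\|_{\mathrm{op}} + \bigl\| \boldsymbol{\Omega}_{\mathrm{pub}} - \widehat{\boldsymbol{\Omega}}_{\mathrm{pub}} \bigr\|_{\mathrm{op}} \Bigr).
\end{equation*}
This is the second inequality of the proposition substituted into the first. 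Distributing $\sigma^2\operatorname{Tr}(\boldsymbol{\Lambda}_{\mathrm{pub}})$ separates two pieces. The first is deterministic: the population mismatch $\|\boldsymbol{\Omega}_{\mathrm{priv}}-\boldsymbol{\Omega}_{\mathrm{pub}}\|_{\mathrm{op}}$. The second is the random estimation error.

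\textbf{Step 2: apply the assumed rate.} By hypothesis, $\|\boldsymbol{\Omega}_{\mathrm{pub}}-\widehat{\boldsymbol{\Omega}}_{\mathrm{pub}}\|_{\mathrm{op}}=\mathcal{O}_P(\sqrt{\log m/N})$. Multiplying an $\mathcal{O}_P(a_N)$ quantity by a constant $c$ yields $\mathcal{O}_P(c\,a_N)$, since for any $\eta>0$ the same $M_\eta$ works after rescaling. This gives the second term of the claimed bound.

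\textbf{Main obstacle.} The only point needing care is that $\boldsymbol{\Lambda}_{\mathrm{pub}}$ (and $\boldsymbol{U}_{\mathrm{pub}}$) are themselves computed from $\widehat{\boldsymbol{\Omega}}_{\mathrm{pub}}$, so $\operatorname{Tr}(\boldsymbol{\Lambda}_{\mathrm{pub}})$ is a random factor rather than a fixed constant. I would handle this in one of two ways.
\begin{itemize}
\item Interpret the statement conditionally on the selected configuration, as the proposition is stated for arbitrary fixed $(\boldsymbol{U}_{\mathrm{pub}},\boldsymbol{\Lambda}_{\mathrm{pub}})$. The bound is then uniform over configurations, so conditioning on the chosen one is harmless.
\item Use the uniform deterministic bound $\operatorname{Tr}(\boldsymbol{\Lambda}_{\mathrm{pub}})\le m\lambda_{\max}$, which holds by the cap in Proposition~\ref{prop:joint_opt_Jac_scale}. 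This makes the random factor bounded, so the product remains $\mathcal{O}_P$ of the stated order.
\end{itemize}
Either way, the inequality from Step~1 holds pointwise for every realization, so no additional probabilistic coupling is needed.

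\textbf{Step 3: the closing remarks.} The first term does not depend on $N$, which is why the mismatch need not vanish. Setting $\boldsymbol{\Omega}_{\mathrm{priv}}=\boldsymbol{\Omega}_{\mathrm{pub}}$ kills the first term and leaves only the $\mathcal{O}_P\bigl(\sigma^2\operatorname{Tr}(\boldsymbol{\Lambda}_{\mathrm{pub}})\sqrt{\log m/N}\bigr)$ term. For completeness, the assumed operator-norm rate would be justified in a remark rather than proved. Applying matrix Bernstein to the i.i.d.\ bounded summands $\boldsymbol{J}_{\mathrm{pub},i}^\top\boldsymbol{J}_{\mathrm{pub},i}-\boldsymbol{\Omega}_{\mathrm{pub}}$ (bounded via $\|\boldsymbol{J}_i\|_{\mathrm{op}}\le K$) gives the $\sqrt{\log m/N}$ rate.
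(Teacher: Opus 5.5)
Your proposal is correct and follows the paper's proof, which likewise just combines Proposition~\ref{prop:public_private_discrepancy} (including its triangle-inequality split) with the assumed operator-norm rate. The data-dependence of $\operatorname{Tr}(\boldsymbol{\Lambda}_{\mathrm{pub}})$ is not actually an obstacle, and your conditioning option is the weaker way to handle it, because conditioning on a function of $\widehat{\boldsymbol{\Omega}}_{\mathrm{pub}}$ would not inherit the unconditional rate; instead, since the deterministic bound holds for every realization and $\operatorname{Tr}(\boldsymbol{\Lambda}_{\mathrm{pub}})>0$, the estimation term divided by $\sigma^2\operatorname{Tr}(\boldsymbol{\Lambda}_{\mathrm{pub}})\sqrt{\log m/N}$ equals $\|\boldsymbol{\Omega}_{\mathrm{pub}}-\widehat{\boldsymbol{\Omega}}_{\mathrm{pub}}\|_{\mathrm{op}}/\sqrt{\log m/N}$, which is $\mathcal{O}_P(1)$ by hypothesis.
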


\begin{proof}
The result follows by combining Proposition~\ref{prop:public_private_discrepancy} with the assumed operator-norm concentration rate for $\widehat{\boldsymbol{\Omega}}_{\mathrm{pub}}$.
\end{proof}

\paragraph{Prediction stability for nonlinear $\argmax$ tasks.}
We further relate the first-order perturbation analysis to prediction stability for nonlinear classification tasks. 
For piecewise-affine ReLU networks, the first-order score perturbation is exact whenever the perturbed representation remains in the same activation region.
This yields a bound on the prediction-disagreement probability in terms of activation-region crossing and pairwise margin flipping.

For notational convenience, several symbols in this paragraph are reused from other parts of the paper. 
Such reuse is local to this paragraph, and the symbols below should be interpreted according to the definitions given here.

Consider a differentiable nonlinear score function $\mathcal{T} : \mathbb{R}^{m} \rightarrow \mathbb{R}^{K}$, whose final prediction is obtained by
\begin{equation}
\mathcal{C}(\boldsymbol{z}) = \argmax_{k\in\{1,\ldots,K\}} \mathcal{T}_k(\boldsymbol{z}).
\nonumber
\end{equation}
For each sample $\boldsymbol{z}_i$, let $c_i = \mathcal{C}(\boldsymbol{z}_i)$ denote its clean prediction. 
Let $\boldsymbol{J}_i\in\mathbb{R}^{K\times m}$ denote the Jacobian of $\mathcal{T}$ at $\boldsymbol{z}_i$. 
Using the end-to-end perturbation of Definition~\ref{definition:perturbation},
\begin{equation}
\boldsymbol{h}_i = \boldsymbol{d}_i + \boldsymbol{L}\boldsymbol{\xi}, \qquad \boldsymbol{d}_i = \bigl(\theta_1(\boldsymbol{z}_i)-1\bigr) (\boldsymbol{z}_i-\boldsymbol{\mu}), \qquad \boldsymbol{L} = \boldsymbol{U}\boldsymbol{\Lambda}^{1/2},
\nonumber
\end{equation}
the first-order approximation of the pairwise score gap between the clean predicted class $c_i$ and a competing class $k$ is defined by
\begin{align}
\mathcal{T}_{c_i}(\boldsymbol{z}_i+\boldsymbol{h}_i) - \mathcal{T}_{k}(\boldsymbol{z}_i+\boldsymbol{h}_i) 
&\approx \left( \mathcal{T}_{c_i}(\boldsymbol{z}_i) + \nabla\mathcal{T}_{c_i}(\boldsymbol{z}_i)^\top \boldsymbol{h}_i \right) - \left( \mathcal{T}_{k}(\boldsymbol{z}_i) + \nabla\mathcal{T}_{k}(\boldsymbol{z}_i)^\top \boldsymbol{h}_i \right) \nonumber \\
&= \left( \mathcal{T}_{c_i}(\boldsymbol{z}_i) - \mathcal{T}_{k}(\boldsymbol{z}_i) \right) + \left(\nabla\mathcal{T}_{c_i}(\boldsymbol{z}_i)^\top - \nabla\mathcal{T}_{k}(\boldsymbol{z}_i)^\top\right) \boldsymbol{h}_i. \nonumber
\end{align}

\begin{definition}[Clean pairwise margin]
For every competing class $k \neq c_i$, we define the clean pairwise margin as
\begin{equation}
\Delta_{ik} = \mathcal{T}_{c_i}(\boldsymbol{z}_i) - \mathcal{T}_{k}(\boldsymbol{z}_i).
\nonumber
\end{equation}
\end{definition}

\begin{definition}[Pairwise margin gradient]
For every competing class $k \neq c_i$, the pairwise margin gradient $\boldsymbol{a}_{ik} \in \mathbb{R}^{m}$ is defined as
\begin{equation}
\boldsymbol{a}_{ik} = \nabla\mathcal{T}_{c_i}(\boldsymbol{z}_i) - \nabla\mathcal{T}_{k}(\boldsymbol{z}_i) = \boldsymbol{J}_i^{\top} (\boldsymbol{e}_{c_i}-\boldsymbol{e}_k),
\nonumber
\end{equation}
where $\boldsymbol{e}_k$ denotes the $k$-th canonical basis vector of $\mathbb{R}^{K}$. Equivalently, $\boldsymbol{a}_{ik}^{\top}$ is obtained by subtracting the $k$-th row of $\boldsymbol{J}_i$ from its $c_i$-th row.
\end{definition}

\begin{definition}[Deterministic first-order margin]
Substituting the perturbation $\boldsymbol{h}_i = \boldsymbol{d}_i + \boldsymbol{L}\boldsymbol{\xi}$ into the first-order approximation of the pairwise score gap yields
\begin{align*}
\mathcal{T}_{c_i}(\boldsymbol{z}_i+\boldsymbol{h}_i) - \mathcal{T}_{k}(\boldsymbol{z}_i+\boldsymbol{h}_i) 
&\approx \left( \mathcal{T}_{c_i}(\boldsymbol{z}_i) - \mathcal{T}_{k}(\boldsymbol{z}_i) \right) + \left(\nabla\mathcal{T}_{c_i}(\boldsymbol{z}_i)^\top - \nabla\mathcal{T}_{k}(\boldsymbol{z}_i)^\top\right) \boldsymbol{h}_i \\
&= \Delta_{ik} + \boldsymbol{a}_{ik}^{\top}\boldsymbol{d}_i + \boldsymbol{a}_{ik}^{\top}\boldsymbol{L}\boldsymbol{\xi}.
\end{align*}
Isolating the non-random components, we define the deterministic first-order margin after clipping as
\begin{equation}
\bar{\Delta}_{ik} = \Delta_{ik} + \boldsymbol{a}_{ik}^{\top}\boldsymbol{d}_i.
\nonumber
\end{equation}
\end{definition}

\begin{definition}[Noise-induced pairwise perturbation]
Define the noise-induced pairwise perturbation by
\begin{equation}
\Xi_{ik} = \boldsymbol{a}_{ik}^{\top}\boldsymbol{L}\boldsymbol{\xi}.
\nonumber
\end{equation}
Define its variance by
\begin{equation}
\operatorname{Var}(\Xi_{ik}) = \sigma^2 \sum_{j=1}^{m} \lambda_j \left( \boldsymbol{a}_{ik}^{\top}\boldsymbol{u}_j \right)^2.
\nonumber
\end{equation}
\end{definition}

Define the largest coordinate-wise projected noise scale by
\begin{equation}
M_{ik} = \max_{1\le j\le m} \sqrt{\lambda_j} \left| \boldsymbol{a}_{ik}^{\top}\boldsymbol{u}_j \right|,
\nonumber
\end{equation}

\begin{proposition}[Prediction disagreement bound for nonlinear ReLU classifiers]
\label{prop:nonlinear_argmax_disagreement}

Suppose that $\mathcal{T}:\mathbb{R}^m\to\mathbb{R}^K$ is a piecewise-affine ReLU network and is differentiable at $\boldsymbol{z}_i$.
Let $\mathcal{R}_i$ be the activation cell containing $\boldsymbol{z}_i$ on which $\mathcal{T}$ is affine with Jacobian $\boldsymbol{J}_i$, and let
$\mathcal{A}_i = \left\{ \boldsymbol{z}_i+t\boldsymbol{h}_i\in\mathcal{R}_i \text{ for all } t\in[0,1] \right\}$ denote the event that the perturbation remains in the same activation cell.

For each competing class $k\neq c_i$, suppose that the deterministic first-order margin after clipping satisfies
\begin{equation}\nonumber
    \bar{\Delta}_{ik} = \Delta_{ik} + \boldsymbol{a}_{ik}^{\top}\boldsymbol{d}_i > 0.
\end{equation}

For each pair satisfying $\operatorname{Var}(\Xi_{ik})>0$, define
\begin{equation}\nonumber
    \phi_{ik} = \frac{\bar{\Delta}_{ik}^{2}} {4\operatorname{Var}(\Xi_{ik})},
    \qquad
    \psi_{ik} = \frac{\bar{\Delta}_{ik}} {2\sigma M_{ik}}.
\nonumber
\end{equation}

Then the prediction-disagreement probability of the nonlinear
classifier satisfies
\begin{equation}
    \mathrm{Pr}_{\boldsymbol{\xi}} \left[ \mathcal{C}(\hat{\boldsymbol{z}}_i) \neq \mathcal{C}(\boldsymbol{z}_i) \right]
    \le \mathrm{Pr}_{\boldsymbol{\xi}}(\mathcal{A}_i^c) + \sum_{\substack{k\neq c_i\\ \operatorname{Var}(\Xi_{ik})>0}} \exp\left( -\min\{\phi_{ik},\psi_{ik}\} \right).
    \nonumber
\end{equation}

If $\operatorname{Var}(\Xi_{ik})=0$, then all projected
coefficients vanish and hence $\Xi_{ik}\equiv0$.
Since $\bar{\Delta}_{ik}>0$, the corresponding pair contributes
zero probability.

\end{proposition}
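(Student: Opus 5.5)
The plan is to split the disagreement event along the activation-cell event $\mathcal{A}_i$, use exactness of the affine restriction on $\mathcal{A}_i$ to reduce each margin flip to a one-sided tail event of $\Xi_{ik}$, and control that tail with a Bernstein-type Chernoff bound.

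\textbf{Step 1 (split).} Write
\begin{equation*}
\{\mathcal{C}(\hat{\boldsymbol{z}}_i)\neq c_i\}\subseteq \mathcal{A}_i^c\cup\bigl(\mathcal{A}_i\cap\{\mathcal{C}(\hat{\boldsymbol{z}}_i)\neq c_i\}\bigr).
\end{equation*}
This gives the first term $\mathrm{Pr}_{\boldsymbol{\xi}}(\mathcal{A}_i^c)$. On $\mathcal{A}_i$, as argued in Proposition~\ref{prop:remainder_bound}, $\mathcal{T}$ coincides with its affine restriction on $\mathcal{R}_i$, so the first-order expansion is exact. Hence for every $k\neq c_i$,
\begin{equation*}
\mathcal{T}_{c_i}(\hat{\boldsymbol{z}}_i)-\mathcal{T}_k(\hat{\boldsymbol{z}}_i)=\bar{\Delta}_{ik}+\Xi_{ik}.
\end{equation*}

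\textbf{Step 2 (reduce to margin flips).} If the prediction changes, some $k\neq c_i$ must satisfy $\mathcal{T}_k(\hat{\boldsymbol{z}}_i)\ge\mathcal{T}_{c_i}(\hat{\boldsymbol{z}}_i)$, that is, $\Xi_{ik}\le-\bar{\Delta}_{ik}$. A union bound over $k$ then leaves the one-sided tail probabilities $\mathrm{Pr}(-\Xi_{ik}\ge\bar{\Delta}_{ik})$ to control. Pairs with $\operatorname{Var}(\Xi_{ik})=0$ have $\Xi_{ik}\equiv0$; since $\bar{\Delta}_{ik}>0$, they contribute nothing.

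\textbf{Step 3 (tail bound).} Write
\begin{equation*}
-\Xi_{ik}=\sum_j c_j\,\xi_j, \qquad c_j=-\sqrt{\lambda_j}\,\boldsymbol{a}_{ik}^\top\boldsymbol{u}_j.
\end{equation*}
The weights satisfy $\max_j|c_j|=M_{ik}$ and $\sigma^2\sum_j c_j^2=\operatorname{Var}(\Xi_{ik})$. I would make the noise model explicit: the coordinates $\xi_j$ are independent, mean zero, with variance $\sigma^2$, and obey the sub-exponential MGF condition
\begin{equation*}
\mathbb{E}\,e^{s\xi_j}\le e^{s^2\sigma^2}\quad\text{for } |s|\le 1/\sigma .
\end{equation*}
For Laplace noise with scale $b=\sigma/\sqrt2$ this holds, because $\mathbb{E}e^{s\xi_j}=1/(1-s^2\sigma^2/2)$ and $1/(1-x)\le e^{2x}$ for $x\le 1/2$.

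By independence, for $0\le s\le 1/(\sigma M_{ik})$ we get
\begin{equation*}
\log\mathbb{E}\,e^{-s\Xi_{ik}}\le s^2\operatorname{Var}(\Xi_{ik}).
\end{equation*}
Chernoff then gives, with $t=\bar{\Delta}_{ik}$,
\begin{equation*}
\mathrm{Pr}(-\Xi_{ik}\ge t)\le\exp\bigl(-st+s^2\operatorname{Var}(\Xi_{ik})\bigr).
\end{equation*}
Consider two cases.
\begin{itemize}
\item If $t/(2\operatorname{Var})\le 1/(\sigma M_{ik})$, take $s=t/(2\operatorname{Var})$. This gives $\exp(-\phi_{ik})$.
\item Otherwise take $s=1/(\sigma M_{ik})$. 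The case condition means $\operatorname{Var}<t\sigma M_{ik}/2$, so $s^2\operatorname{Var}<t/(2\sigma M_{ik})$. The exponent is therefore at most $-t/(2\sigma M_{ik})=-\psi_{ik}$.
\end{itemize}
In both cases the tail is at most $\exp(-\min\{\phi_{ik},\psi_{ik}\})$. Summing over $k$ completes the bound.

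\textbf{Main obstacle.} The main obstacle is Step 3's distributional assumption. The statement is phrased for an additive base randomizer with $\operatorname{Cov}(\boldsymbol{\xi})=\sigma^2\boldsymbol{I}$, but the MGF condition is genuinely needed and holds coordinate-wise for Laplace (and trivially for Gaussian). I would state it as a standing assumption for additive i.i.d.\ sub-exponential noise. Non-additive randomizers such as PrivUnit would need separate treatment, since their output is not of the form $\bar{\boldsymbol{z}}+\boldsymbol{\xi}$ with $\boldsymbol{\xi}$ independent of the input.
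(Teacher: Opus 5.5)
Your proposal is correct and follows the paper's proof essentially step for step. The shared steps are the split on $\mathcal{A}_i$ using exactness of the affine restriction on the cell, the union bound over the margin-flip events $\{\Xi_{ik}\le-\bar{\Delta}_{ik}\}$, the Laplace MGF bound via $-\log(1-x)\le 2x$ on $|t|\le 1/(\sigma M_{ik})$, and the same two-case choice of the Chernoff parameter. The differences are cosmetic: you bound the lower tail of $\Xi_{ik}$ directly rather than appealing to symmetry, and you state explicitly the noise assumption that the paper's proof adopts silently, namely $\xi_j\sim\mathrm{Lap}(0,\sigma/\sqrt{2})$ or, in your more general form, the sub-exponential MGF condition, which also covers Gaussian noise.
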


\begin{proof}
On the event $\mathcal{A}_i$, the perturbation remains inside the activation cell $\mathcal{R}_i$. Since $\mathcal{T}$ is affine on $\mathcal{R}_i$, the first-order expression is exact:
\begin{equation}\nonumber
    \mathcal{T}(\boldsymbol{z}_i+\boldsymbol{h}_i) - \mathcal{T}(\boldsymbol{z}_i) = \boldsymbol{J}_i\boldsymbol{h}_i.
\end{equation}

Therefore, for every competing class $k\neq c_i$, on $\mathcal{A}_i$ the actual pairwise score gap satisfies
\begin{align}
    \mathcal{T}_{c_i}(\boldsymbol{z}_i+\boldsymbol{h}_i) - \mathcal{T}_{k}(\boldsymbol{z}_i+\boldsymbol{h}_i) = \Delta_{ik} + \boldsymbol{a}_{ik}^{\top}\boldsymbol{h}_i = \Delta_{ik} + \boldsymbol{a}_{ik}^{\top}\boldsymbol{d}_i + \boldsymbol{a}_{ik}^{\top}\boldsymbol{L}\boldsymbol{\xi} = \bar{\Delta}_{ik} + \Xi_{ik}.
    \nonumber
\end{align}

If the nonlinear prediction changes while $\mathcal{A}_i$ occurs, then the score of at least one competing class must reach or exceed the score of the clean predicted class. Thus,
\begin{align}
    \left\{ \mathcal{C}(\hat{\boldsymbol{z}}_i) \neq \mathcal{C}(\boldsymbol{z}_i) \right\} 
    &\subseteq \mathcal{A}_i^c \cup \bigcup_{k\neq c_i} \left\{ \Xi_{ik} \le -\bar{\Delta}_{ik} \right\}. \nonumber
\end{align}

Applying the union bound gives
\begin{equation}
    \mathrm{Pr}_{\boldsymbol{\xi}} \left[ \mathcal{C}(\hat{\boldsymbol{z}}_i) \neq \mathcal{C}(\boldsymbol{z}_i) \right]
    \le \mathrm{Pr}_{\boldsymbol{\xi}} (\mathcal{A}_i^c) + \sum_{k\neq c_i} \mathrm{Pr}_{\boldsymbol{\xi}} \left[ \Xi_{ik} \le -\bar{\Delta}_{ik} \right].
    \nonumber
\end{equation}

Define $\kappa_{ikj} = \sqrt{\lambda_j} \left( \boldsymbol{a}_{ik}^{\top}\boldsymbol{u}_j \right)$, so that $\Xi_{ik} = \sum_{j=1}^{m}\kappa_{ikj}\xi_j$ and $M_{ik} = \max_j|\kappa_{ikj}|.$

For $\xi_j\sim\operatorname{Lap}(0,\sigma/\sqrt{2})$,
\begin{equation}\nonumber
    \mathbb{E}[e^{t\xi_j}] = \left( 1-\frac{\sigma^2t^2}{2} \right)^{-1}, \qquad |t|<\frac{\sqrt{2}}{\sigma}.
\end{equation}
By independence,
\begin{equation}\nonumber
    \log \mathbb{E} \left[ e^{t\Xi_{ik}} \right] = \sum_{j=1}^{m} -\log \left( 1 - \frac{\sigma^2\kappa_{ikj}^{2}t^2}{2} \right).
\end{equation}

For $|t| \le t_{\max} = \frac{1}{\sigma M_{ik}}$, we have $\sigma^2\kappa_{ikj}^2t^2/2\le 1/2$ for every $j$. Using $-\log(1-x)\le 2x$ for $x\in[0,1/2]$ yields
\begin{equation}
    \log \mathbb{E} \left[ e^{t\Xi_{ik}} \right] \le \sigma^2t^2 \sum_{j=1}^{m}\kappa_{ikj}^2 = t^2\operatorname{Var}(\Xi_{ik}). \nonumber
\end{equation}

Thus, for any $\tau>0$ and $0<t\le t_{\max}$, the Chernoff bound gives
\begin{equation}\nonumber
    \Pr[\Xi_{ik}\ge\tau] \le \exp \left( -t\tau + t^2\operatorname{Var}(\Xi_{ik}) \right).
\end{equation}

If $\tau \le \frac{ 2\operatorname{Var}(\Xi_{ik}) }{ \sigma M_{ik} }$, the unconstrained minimizer $t^\star = \frac{\tau}{2\operatorname{Var}(\Xi_{ik})}$ is admissible, which gives
\begin{equation}\nonumber
    \Pr[\Xi_{ik}\ge\tau] \le \exp \left( -\frac{\tau^2}{4\operatorname{Var}(\Xi_{ik})} \right).
\end{equation}

Otherwise, taking $t=t_{\max}$ gives
\begin{align}
    -t_{\max}\tau + t_{\max}^2\operatorname{Var}(\Xi_{ik})
    = -\frac{\tau}{\sigma M_{ik}} + \frac{ \operatorname{Var}(\Xi_{ik}) }{ \sigma^2M_{ik}^2 } < -\frac{\tau}{2\sigma M_{ik}}, \nonumber
\end{align}
and 
\begin{equation}\nonumber
    \Pr[\Xi_{ik}\ge\tau] \le \exp \left( -\frac{\tau}{2\sigma M_{ik}} \right).
\end{equation}

Combining the two regimes,
\begin{equation}\nonumber
    \Pr[\Xi_{ik}\ge\tau] \le \exp \left[ -\min \left\{ \frac{\tau^2}{4\operatorname{Var}(\Xi_{ik})}, \frac{\tau}{2\sigma M_{ik}} \right\} \right].
\end{equation}

Since $\Xi_{ik}$ is symmetric about zero, the same bound holds for the left tail:
\begin{equation}\nonumber
    \Pr[\Xi_{ik}\le-\tau] \le \exp \left[ -\min \left\{ \frac{\tau^2}{4\operatorname{Var}(\Xi_{ik})}, \frac{\tau}{2\sigma M_{ik}} \right\} \right].
\end{equation}

Substituting $\tau=\bar{\Delta}_{ik}>0$ gives
\begin{equation}\nonumber
    \Pr \left[ \Xi_{ik} \le -\bar{\Delta}_{ik} \right] \le \exp \left( -\min\{\phi_{ik},\psi_{ik}\} \right).
\end{equation}

For $\operatorname{Var}(\Xi_{ik})=0$, all projected coefficients vanish, and hence $\Xi_{ik}\equiv 0$. Since $\bar{\Delta}_{ik}>0$, it follows that
\begin{equation}
    \Pr \left[ \Xi_{ik} \le -\bar{\Delta}_{ik} \right] = 0. \nonumber
\end{equation}

Combining the positive-variance tail bounds with the zero-variance case and applying the union bound above proves the result.
\end{proof}

\begin{remark}
The proposition separates nonlinear prediction disagreement into two
sources: activation-region crossing and within-region pairwise margin
flipping. The latter is controlled by the projected noise variance and
the largest coordinate-wise projected noise scale. We use this result
to characterize prediction stability of the proposed transformation;
we do not claim that the resulting disagreement bound is jointly
optimized with respect to clipping and noise allocation.
\end{remark}

\subsection{Sensitivity Amplification in Transformed Space} \label{appendix:3}
Let $\boldsymbol{\phi} = \boldsymbol{z} - \boldsymbol{z}'$ be a perturbation in the original space $\mathcal{Z}$, and $\boldsymbol{\Sigma} = \boldsymbol{L}\boldsymbol{L}^\top$ be the covariance matrix. 
The worst-case sensitivity of the transformed space under the inverse Cholesky factor $\boldsymbol{L}^{-1}$, defined by the induced matrix norm $\|\boldsymbol{L}^{-1}\|_p = \max_{\boldsymbol{\phi} \neq \mathbf{0}} \frac{\|\boldsymbol{L}^{-1}\boldsymbol{\phi}\|_p}{\|\boldsymbol{\phi}\|_p}$, is amplified ($\|\boldsymbol{L}^{-1}\|_p > 1$) if the transformation reduces variance along any principal coordinate, for both $p \in \{1, 2\}$.

\textbf{Proof.} The maximum perturbation bound is fundamentally determined by the induced $\ell_p$-norm of the transformation matrix: $\|\boldsymbol{L}^{-1}\boldsymbol{\phi}\|_p \leq \|\boldsymbol{L}^{-1}\|_p \|\boldsymbol{\phi}\|_p$. 
We prove the sensitivity amplification for each norm independently.

\textbf{Case 1: $\ell_2$-norm ($p=2$).} 
The induced $\ell_2$-norm (spectral norm) of a matrix is its maximum singular value. 
For $\boldsymbol{L}^{-1}$, this is related to the eigenvalues of the precision matrix $\boldsymbol{\Sigma}^{-1}$:$$\|\boldsymbol{L}^{-1}\|_2 = \sqrt{\lambda_{\max}((\boldsymbol{L}^{-1})^\top \boldsymbol{L}^{-1})} = \sqrt{\lambda_{\max}(\boldsymbol{\Sigma}^{-1})} = \frac{1}{\sqrt{\lambda_{\min}(\boldsymbol{\Sigma})}}$$
If $\boldsymbol{L}$ applies a pure rotation without rescaling ($\boldsymbol{\Sigma} = \boldsymbol{I}$), then $\lambda_{\min}(\boldsymbol{\Sigma}) = 1$, yielding $\|\boldsymbol{L}^{-1}\|_2 = 1$ (distance is perfectly preserved). 
However, if the transformation shrinks the space such that the minimum variance is less than 1 ($\lambda_{\min}(\boldsymbol{\Sigma}) < 1$), the bound becomes strictly greater than 1:$$\|\boldsymbol{L}^{-1}\|_2 > 1 \implies \max_{\boldsymbol{\phi} \neq \mathbf{0}} \frac{\|\boldsymbol{L}^{-1}\boldsymbol{\phi}\|_2}{\|\boldsymbol{\phi}\|_2} > 1$$
Thus, the worst-case $\ell_2$ sensitivity increases under variance reduction.

\textbf{Case 2: $\ell_1$-norm ($p=1$).}
The induced $\ell_1$-norm of a matrix is its maximum absolute column sum: $\|\boldsymbol{L}^{-1}\|_1 = \max_{j} \sum_{i} |(\boldsymbol{L}^{-1})_{ij}|$.
Unlike the $\ell_2$-norm, the $\ell_1$-norm is coordinate-dependent. 
For a pure rotation matrix $\boldsymbol{U}$ (where no coordinate scaling occurs), the rotation misaligns the canonical axes, generally resulting in $\|\boldsymbol{U}^{-1}\|_1 \geq 1$.
Furthermore, if $\boldsymbol{L}$ involves shrinking (rescaling), we utilize the submultiplicative property of induced norms on the identity matrix $\boldsymbol{I} = \boldsymbol{L}\boldsymbol{L}^{-1}$:$$1 = \|\boldsymbol{I}\|_1 \leq \|\boldsymbol{L}\|_1 \|\boldsymbol{L}^{-1}\|_1 \implies \|\boldsymbol{L}^{-1}\|_1 \geq \frac{1}{\|\boldsymbol{L}\|_1}$$
If the transformed space is compressed such that the absolute column sums of $\boldsymbol{L}$ are bounded below 1 ($\|\boldsymbol{L}\|_1 < 1$), it guarantees that the inverse transformation expands the space:
$$\|\boldsymbol{L}^{-1}\|_1 \geq \frac{1}{\|\boldsymbol{L}\|_1} > 1$$
Combining the effects of rotation and scaling, if the transformation satisfies $\|L\|_1<1$, then $\|L^{-1}\|_1>1$, thereby amplifying the worst-case $\ell_1$ sensitivity.

\subsection{Privacy Guarantee}
\label{appendix:privacy_guarantee}

\emph{Neighboring relation}: Under the standard $(\epsilon,\delta)$-LDP definition, the privacy inequality must hold for \emph{every} pair of possible local inputs $\boldsymbol{z}, \boldsymbol{z}' \in \mathcal{Z}$. Expressed as a neighboring relation, this is the complete relation $\mathcal{Z} \times \mathcal{Z}$: an arbitrary replacement of the entire representation.

\privacytheorem*

\begin{proof}
Fix arbitrary $\boldsymbol{z},\boldsymbol{z}'\in\mathcal{Z}$ and a measurable set $S\subseteq\mathbb{R}^m$. The public configuration $\boldsymbol{\phi}$ is fixed under the neighboring comparison; by (A1) it does not depend on the input, so the same deterministic map $f_{\boldsymbol{\phi}}$ is applied to both $\boldsymbol{z}$ and $\boldsymbol{z}'$. By (A2),
$$
f_{\boldsymbol{\phi}}(\boldsymbol{z}), f_{\boldsymbol{\phi}}(\boldsymbol{z}') \in \bar{\mathcal{Z}}_{p,\rho}.
$$
Therefore, by (A3), the uniform $(\epsilon,\delta)$-LDP guarantee of $\mathcal{M}$ applies to this pair.

Since $g_{\boldsymbol{\phi}}$ is deterministic and measurable, its pre-image
$$
T = g_{\boldsymbol{\phi}}^{-1}(S) = \left\{ \boldsymbol{v}\in\mathbb{R}^m: g_{\boldsymbol{\phi}}(\boldsymbol{v})\in S \right\}
$$
is measurable. Hence,
$$
\begin{aligned}
    \Pr[ \Phi_{\boldsymbol{\phi}}(\boldsymbol{z})\in S ]
    & = \Pr[ \mathcal{M}(f_{\boldsymbol{\phi}}(\boldsymbol{z}))\in T ] 
    \leq e^\epsilon \Pr[ \mathcal{M}(f_{\boldsymbol{\phi}}(\boldsymbol{z}'))\in T ] + \delta 
    = e^\epsilon \Pr[ \Phi_{\boldsymbol{\phi}}(\boldsymbol{z}')\in S ] + \delta.
\end{aligned}
$$
Since $\boldsymbol{z}$, $\boldsymbol{z}'$, and $S$ were arbitrary, $\Phi_{\boldsymbol{\phi}}$ satisfies $(\epsilon,\delta)$-LDP on $\mathcal{Z}$. The pure $\epsilon$-LDP result follows as the special case $\delta=0$.
\end{proof}

We provide the instantiations of randomizers, including Laplace, AGM, and PrivUnit, as Corollaries below.

\begin{corollary}[Laplace mechanism~\citep{dwork2014algorithmic}]
Let $p=1$, so that the intermediate domain is
$$ \bar{\mathcal{Z}}_{1,\rho} = \left\{ \bar{\boldsymbol{z}}\in\mathbb{R}^m: \|\bar{\boldsymbol{z}}\|_1\le\rho \right\}, $$
and let the output space be $\mathcal{Y}=\mathbb{R}^m$. Because the LDP neighboring relation is the complete relation, the relevant sensitivity is the diameter of the entire intermediate domain. For every $\bar{\boldsymbol{z}},\bar{\boldsymbol{z}}^{\prime}\in \bar{\mathcal{Z}}_{1,\rho}$,
$$ \|\bar{\boldsymbol{z}}-\bar{\boldsymbol{z}}^{\prime}\|_1 \le \|\bar{\boldsymbol{z}}\|_1 + \|\bar{\boldsymbol{z}}^{\prime}\|_1 \le 2\rho. $$
This bound is attained by $\bar{\boldsymbol{z}}=\rho\boldsymbol{e}_1$ and $\bar{\boldsymbol{z}}^{\prime}=-\rho\boldsymbol{e}_1$. Hence,
$$ \Delta_1 = \sup_{\bar{\boldsymbol{z}},\bar{\boldsymbol{z}}^{\prime}\in \bar{\mathcal{Z}}_{1,\rho}} \|\bar{\boldsymbol{z}}-\bar{\boldsymbol{z}}^{\prime}\|_1 = 2\rho. $$
Define the coordinate-wise Laplace randomizer
$$ \mathcal{M}_{\mathrm{Lap}}(\bar{\boldsymbol{z}}) = \bar{\boldsymbol{z}} + \boldsymbol{\xi}, \qquad \xi_j \overset{\mathrm{i.i.d.}}{\sim} \operatorname{Lap}(0,b), \qquad b=\frac{2\rho}{\epsilon}. $$
For every output point $\boldsymbol{y}\in\mathbb{R}^m$ and every $\bar{\boldsymbol{z}},\bar{\boldsymbol{z}}^{\prime}\in \bar{\mathcal{Z}}_{1,\rho}$, the ratio of the corresponding densities satisfies
$$ \begin{aligned} \frac{ q(\boldsymbol{y}\mid\bar{\boldsymbol{z}}) }{ q(\boldsymbol{y}\mid\bar{\boldsymbol{z}}^{\prime}) } &= \exp\!\left( \frac{ \|\boldsymbol{y}-\bar{\boldsymbol{z}}^{\prime}\|_1 - \|\boldsymbol{y}-\bar{\boldsymbol{z}}\|_1 }{b} \right) \le \exp\!\left( \frac{ \|\bar{\boldsymbol{z}}-\bar{\boldsymbol{z}}^{\prime}\|_1 }{b} \right) \le \exp\!\left( \frac{2\rho}{b} \right) = e^\epsilon. \end{aligned} $$
Integrating this pointwise inequality over any measurable set $T\subseteq\mathbb{R}^m$ gives
$$ \Pr[ \mathcal{M}_{\mathrm{Lap}}(\bar{\boldsymbol{z}})\in T ] \le e^\epsilon \Pr[ \mathcal{M}_{\mathrm{Lap}}(\bar{\boldsymbol{z}}^{\prime})\in T ]. $$
Therefore, $\mathcal{M}_{\mathrm{Lap}}$ satisfies Assumption (A3) with $\delta=0$ uniformly over $\bar{\mathcal{Z}}_{1,\rho}$. The input-domain map is
$$ f_{\boldsymbol{\phi}}(\boldsymbol{z}) = \theta_1 \left( \boldsymbol{L}^{-1} (\boldsymbol{z}-\boldsymbol{\mu}); \rho \right), $$
and the public reconstruction map is
$$ g_{\boldsymbol{\phi}}(\boldsymbol{v}) = \boldsymbol{L}\boldsymbol{v} + \boldsymbol{\mu}. $$
No separate zero-vector convention is required, because the Laplace randomizer is defined on every point of $\bar{\mathcal{Z}}_{1,\rho}$, including the origin. Consequently,
$$ \Phi_{\mathrm{Lap},\boldsymbol{\phi}} = g_{\boldsymbol{\phi}} \circ \mathcal{M}_{\mathrm{Lap}} \circ f_{\boldsymbol{\phi}} $$
satisfies pure $\epsilon$-LDP on the original representation domain $\mathcal{Z}$. 
The calibration $b=2\rho/\epsilon$ depends only on the chosen norm, radius, and privacy budget; neither $\boldsymbol{U}$ nor $\boldsymbol{\Lambda}$ enters the calibration. 
Thus, the public anisotropic rotation and rescaling do not incur an additional privacy cost.
\end{corollary}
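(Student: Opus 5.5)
The statement to prove is the Laplace corollary: $\Phi_{\mathrm{Lap},\boldsymbol{\phi}}$ satisfies pure $\epsilon$-LDP on $\mathcal{Z}$. The plan is to reduce it to Theorem~\ref{thm:privacy}. Assumptions (A1), (A2) and (A4) hold by construction, since $\boldsymbol{\phi}$, $p=1$ and $\rho$ are fixed from public information, radial clipping maps every input into $\bar{\mathcal{Z}}_{1,\rho}$, and $g_{\boldsymbol{\phi}}(\boldsymbol{v})=\boldsymbol{L}\boldsymbol{v}+\boldsymbol{\mu}$ is an invertible affine map, hence measurable. The only real work is (A3): the coordinate-wise Laplace randomizer with $b=2\rho/\epsilon$ is $\epsilon$-LDP uniformly over $\bar{\mathcal{Z}}_{1,\rho}$.

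First I will compute the sensitivity under the complete neighboring relation. By the triangle inequality, $\|\bar{\boldsymbol{z}}-\bar{\boldsymbol{z}}'\|_1\le\|\bar{\boldsymbol{z}}\|_1+\|\bar{\boldsymbol{z}}'\|_1\le 2\rho$. The pair $\pm\rho\boldsymbol{e}_1$ attains this bound, so $\Delta_1=2\rho$.

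Next comes the pointwise density ratio. The density of $\mathcal{M}_{\mathrm{Lap}}(\bar{\boldsymbol{z}})$ at $\boldsymbol{y}$ is $(2b)^{-m}\exp(-\|\boldsymbol{y}-\bar{\boldsymbol{z}}\|_1/b)$. The ratio for $\bar{\boldsymbol{z}}$ versus $\bar{\boldsymbol{z}}'$ is therefore $\exp\bigl((\|\boldsymbol{y}-\bar{\boldsymbol{z}}'\|_1-\|\boldsymbol{y}-\bar{\boldsymbol{z}}\|_1)/b\bigr)$. The reverse triangle inequality bounds the exponent by $\|\bar{\boldsymbol{z}}-\bar{\boldsymbol{z}}'\|_1/b\le 2\rho/b=\epsilon$. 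Integrating this pointwise bound over an arbitrary measurable $T$ gives $\Pr[\mathcal{M}_{\mathrm{Lap}}(\bar{\boldsymbol{z}})\in T]\le e^\epsilon\Pr[\mathcal{M}_{\mathrm{Lap}}(\bar{\boldsymbol{z}}')\in T]$. This establishes (A3) with $\delta=0$, including at the origin, where the Laplace randomizer is well defined with no special convention.

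Finally I will invoke Theorem~\ref{thm:privacy} with $\delta=0$. I will also note that the calibration $b$ depends only on $(\rho,\epsilon)$ and not on $\boldsymbol{U}$ or $\boldsymbol{\Lambda}$, so the reshaping adds no privacy cost.

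There is no substantive obstacle. The one point needing care is stating clearly that sensitivity is measured by the diameter of the clipped set $\bar{\mathcal{Z}}_{1,\rho}$, not by $\mathcal{Z}$, which may be unbounded. Clipping in the transformed space is exactly what makes the argument independent of $\boldsymbol{L}$.
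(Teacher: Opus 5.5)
Your proposal is correct and takes essentially the same route as the paper: you identify $\Delta_1=2\rho$ as the $\ell_1$-diameter of the clipped domain $\bar{\mathcal{Z}}_{1,\rho}$ (attained at $\pm\rho\boldsymbol{e}_1$), bound the pointwise Laplace density ratio by $e^{\epsilon}$ via the reverse triangle inequality, integrate over an arbitrary measurable $T$ to obtain (A3) with $\delta=0$, and conclude via Theorem~\ref{thm:privacy}. Your remark that $b=2\rho/\epsilon$ involves neither $\boldsymbol{U}$ nor $\boldsymbol{\Lambda}$ matches the paper's closing observation that the reshaping incurs no additional privacy cost.
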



\begin{corollary}[Analytically calibrated Gaussian mechanism~\citep{balle2018improving}]
Let $p=2$, so that the intermediate domain is
$$ \bar{\mathcal{Z}}_{2,\rho} = \left\{ \bar{\boldsymbol{z}}\in\mathbb{R}^m: \|\bar{\boldsymbol{z}}\|_2\le\rho \right\}, $$
and let the output space be $\mathcal{Y}=\mathbb{R}^m$. Because the LDP neighboring relation is the complete relation, the relevant $\ell_2$-sensitivity is the diameter of the entire intermediate domain. For every $\bar{\boldsymbol{z}},\bar{\boldsymbol{z}}^{\prime}\in\bar{\mathcal{Z}}_{2,\rho}$,
$$ \|\bar{\boldsymbol{z}}-\bar{\boldsymbol{z}}^{\prime}\|_2 \le \|\bar{\boldsymbol{z}}\|_2+\|\bar{\boldsymbol{z}}^{\prime}\|_2 \le 2\rho. $$
This bound is attained by $\bar{\boldsymbol{z}}=\rho\boldsymbol{e}_1$ and $\bar{\boldsymbol{z}}^{\prime}=-\rho\boldsymbol{e}_1$. Hence,
$$ \Delta_2 = \sup_{\bar{\boldsymbol{z}},\bar{\boldsymbol{z}}^{\prime}\in \bar{\mathcal{Z}}_{2,\rho}} \|\bar{\boldsymbol{z}}-\bar{\boldsymbol{z}}^{\prime}\|_2 = 2\rho. $$
Define the isotropic Gaussian randomizer
$$ \mathcal{M}_{\mathrm{G}}(\bar{\boldsymbol{z}}) = \bar{\boldsymbol{z}}+\boldsymbol{\xi}, \qquad \boldsymbol{\xi} \sim \mathcal{N} \left( \boldsymbol{0}, \sigma^2\boldsymbol{I}_m \right). $$
For prescribed $\epsilon>0$ and $\delta\in(0,1)$, choose $\sigma>0$ according to the analytical Gaussian calibration for sensitivity $\Delta_2=2\rho$; equivalently, choose $\sigma$ such that
$$ \Psi\!\left( \frac{\Delta_2}{2\sigma} - \frac{\epsilon\sigma}{\Delta_2} \right) - e^\epsilon \Psi\!\left( -\frac{\Delta_2}{2\sigma} - \frac{\epsilon\sigma}{\Delta_2} \right) \le \delta, $$
where $\Psi$ denotes the cumulative distribution function of the standard normal distribution. Under this calibration, for every $\bar{\boldsymbol{z}},\bar{\boldsymbol{z}}^{\prime}\in\bar{\mathcal{Z}}_{2,\rho}$ and every measurable set $T\subseteq\mathbb{R}^m$,
$$ \Pr[ \mathcal{M}_{\mathrm{AGM}}(\bar{\boldsymbol{z}})\in T ] \le e^\epsilon \Pr[ \mathcal{M}_{\mathrm{AGM}}(\bar{\boldsymbol{z}}^{\prime})\in T ] + \delta. $$
Therefore, $\mathcal{M}_{\mathrm{AGM}}$ satisfies Assumption (A3) uniformly over $\bar{\mathcal{Z}}_{2,\rho}$. The input-domain map is
$$ f_{\boldsymbol{\phi}}(\boldsymbol{z}) = \theta_2 \left( \boldsymbol{L}^{-1} (\boldsymbol{z}-\boldsymbol{\mu}); \rho \right), $$
and the public reconstruction map is
$$ g_{\boldsymbol{\phi}}(\boldsymbol{v}) = \boldsymbol{L}\boldsymbol{v} + \boldsymbol{\mu}. $$
No separate zero-vector convention is required, because the Gaussian randomizer is defined on every point of $\bar{\mathcal{Z}}_{2,\rho}$, including the origin. 
Consequently,
$$ \Phi_{\mathrm{AGM},\boldsymbol{\phi}} = g_{\boldsymbol{\phi}} \circ \mathcal{M}_{\mathrm{AGM}} \circ f_{\boldsymbol{\phi}} $$
satisfies $(\epsilon,\delta)$-LDP on the original representation domain $\mathcal{Z}$. 
The analytical calibration depends on $(\epsilon,\delta,\rho)$ through the sensitivity $\Delta_2=2\rho$; neither the public basis $\boldsymbol{U}$ nor the finite scaling spectrum $\boldsymbol{\Lambda}$ enters the calibration. 
Thus, the public anisotropic rotation and rescaling do not incur an additional privacy cost. This instantiation is included to show that the proposed wrapper also composes with approximate-LDP mechanisms. 
\end{corollary}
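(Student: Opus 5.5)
The plan is to verify the hypotheses (A1)--(A4) of Theorem~\ref{thm:privacy} for $p=2$ and the isotropic Gaussian randomizer, and then read off the conclusion from that theorem. (A1) and (A4) do not depend on the randomizer: $\boldsymbol{U}$, $\boldsymbol{\Lambda}$, $\boldsymbol{\mu}$ and $\rho$ are public and fixed, and $g_{\boldsymbol{\phi}}(\boldsymbol{v})=\boldsymbol{L}\boldsymbol{v}+\boldsymbol{\mu}$ is affine, hence deterministic and measurable. For (A2), I will read the input-domain map as in (A2) of the theorem, namely $f_{\boldsymbol{\phi}}(\boldsymbol{z})=\theta_2(\boldsymbol{L}^{-1}(\boldsymbol{z}-\boldsymbol{\mu});\rho)\,\boldsymbol{L}^{-1}(\boldsymbol{z}-\boldsymbol{\mu})$: the scalar $\theta_2$ multiplies the transformed vector. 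Then $\|f_{\boldsymbol{\phi}}(\boldsymbol{z})\|_2=\min(\|\boldsymbol{L}^{-1}(\boldsymbol{z}-\boldsymbol{\mu})\|_2,\rho)\le\rho$, so $f_{\boldsymbol{\phi}}(\mathcal{Z})\subseteq\bar{\mathcal{Z}}_{2,\rho}$. So the only substantive work is (A3).

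For (A3) I first compute the sensitivity. Because the LDP neighboring relation is the complete relation, the sensitivity is the $\ell_2$-diameter of the ball. The triangle inequality gives $\|\bar{\boldsymbol{z}}-\bar{\boldsymbol{z}}'\|_2\le\|\bar{\boldsymbol{z}}\|_2+\|\bar{\boldsymbol{z}}'\|_2\le 2\rho$. The pair $\pm\rho\boldsymbol{e}_1$ attains this bound, so $\Delta_2=2\rho$.

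I then invoke the analytic Gaussian mechanism theorem of \citet{balle2018improving}. If $\sigma$ satisfies the stated $\Psi$-inequality for a given $\Delta_2$, then $\bar{\boldsymbol{z}}+\mathcal{N}(\boldsymbol{0},\sigma^2\boldsymbol{I}_m)$ is $(\epsilon,\delta)$-indistinguishable for every pair with $\|\bar{\boldsymbol{z}}-\bar{\boldsymbol{z}}'\|_2\le\Delta_2$. Applying it with $\Delta_2=2\rho$ gives the uniform guarantee over $\bar{\mathcal{Z}}_{2,\rho}$, which is exactly (A3).

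The step that needs care is this ``$\le\Delta_2$'' part: the guarantee must also hold for pairs closer than the worst case. I would handle it with the reduction used in that paper. By rotation invariance of the isotropic Gaussian, the privacy loss depends only on $r=\|\bar{\boldsymbol{z}}-\bar{\boldsymbol{z}}'\|_2$. The exact privacy profile, $\delta(r)=\Psi(\tfrac{r}{2\sigma}-\tfrac{\epsilon\sigma}{r})-e^\epsilon\Psi(-\tfrac{r}{2\sigma}-\tfrac{\epsilon\sigma}{r})$, is nondecreasing in $r$: it is the hockey-stick divergence between $\mathcal{N}(0,\sigma^2)$ and $\mathcal{N}(r,\sigma^2)$, and that divergence increases with the mean shift, for example because the shift-$r$ pair is a post-processing of the shift-$r'$ pair when $r\le r'$. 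Hence $\delta(r)\le\delta(2\rho)\le\delta$ for all pairs in the ball. The Gaussian is also defined everywhere on $\bar{\mathcal{Z}}_{2,\rho}$, including the origin, so no zero-vector convention is needed.

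With (A1)--(A4) in place, Theorem~\ref{thm:privacy} gives that $\Phi_{\mathrm{AGM},\boldsymbol{\phi}}=g_{\boldsymbol{\phi}}\circ\mathcal{M}_{\mathrm{AGM}}\circ f_{\boldsymbol{\phi}}$ is $(\epsilon,\delta)$-LDP on $\mathcal{Z}$. Finally, I will note that $\sigma$ depends only on $(\epsilon,\delta,\rho)$ and not on $\boldsymbol{U}$ or $\boldsymbol{\Lambda}$. So the public rotation and rescaling add no privacy cost; they enter only through the pre-image $g_{\boldsymbol{\phi}}^{-1}(S)$ in the post-processing step of that theorem.
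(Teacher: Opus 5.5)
Your proposal is correct and follows essentially the same route as the paper. Both bound the $\ell_2$-diameter of $\bar{\mathcal{Z}}_{2,\rho}$ by $2\rho$ (attained at $\pm\rho\boldsymbol{e}_1$), invoke the Balle--Wang analytic Gaussian calibration to get (A3) uniformly over the ball, and conclude via Theorem~\ref{thm:privacy} with $\sigma$ depending only on $(\epsilon,\delta,\rho)$. Your monotonicity-in-$r$ step is valid, e.g.\ via the kernel $x\mapsto (r/r')x+\eta$ with $\eta\sim\mathcal{N}(0,\sigma^2(1-(r/r')^2))$, though it only spells out what the cited theorem already covers. Your reading of $f_{\boldsymbol{\phi}}$ as $\theta_2(\cdot)$ multiplying $\boldsymbol{L}^{-1}(\boldsymbol{z}-\boldsymbol{\mu})$ correctly fixes the corollary's displayed formula, which drops that factor.
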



\begin{corollary}[PrivUnit mechanism~\citep{bhowmick2018privunit,asi2022privunitg}]
\label{corollary:privunit}
Let the output space be $\mathcal{Y}=\mathbb{S}^{m-1}$. 
PrivUnit2~\citep{bhowmick2018privunit} satisfies pure $\epsilon$-LDP uniformly over $\mathbb{S}^{m-1}$. 
Define
$$ N(\bar{\boldsymbol{z}}) = \begin{cases} \bar{\boldsymbol{z}}/\|\bar{\boldsymbol{z}}\|_2, & \bar{\boldsymbol{z}}\neq\boldsymbol{0}, \\ \boldsymbol{\eta}_0, & \bar{\boldsymbol{z}}=\boldsymbol{0}, \end{cases} $$
where $\boldsymbol{\eta}_0\in\mathbb{S}^{m-1}$ is a fixed public unit vector, and let
$$ \mathcal{M}_{\mathrm{PU2}} = \operatorname{PU2}_{\epsilon,m}\circ N. $$
For any $\bar{\boldsymbol{z}},\bar{\boldsymbol{z}}^{\prime}\in \bar{\mathcal{Z}}_{p,\rho}$, their normalized images lie in $\mathbb{S}^{m-1}$; hence the sphere-level guarantee applies directly, and $\mathcal{M}_{\mathrm{PU2}}$ satisfies Assumption (A3) with $\delta=0$. Moreover, for every non-zero $\bar{\boldsymbol{z}}$,
$$ N\!\left(\operatorname{Clip}_p(\bar{\boldsymbol{z}};\rho)\right) = N(\bar{\boldsymbol{z}}), $$
because radial clipping multiplies its input by a positive scalar. 
Thus, $p$ and $\rho$ do not affect the PrivUnit2 input direction. Let $\tau_{\epsilon,m}\neq0$ be the publicly fixed cap threshold selected by the optimized implementation as a fixed point of its mean directional response; in the implementation considered here, the cap threshold and directional debiasing constant therefore coincide. 
The reconstruction
$$ g_{\mathrm{PU2},\boldsymbol{\phi}}(\boldsymbol{y}) = \boldsymbol{L} \left( \frac{\rho}{\tau_{\epsilon,m}}\boldsymbol{y} \right) + \boldsymbol{\mu} $$
uses only public constants. 
Therefore, no input-specific transformed norm is released and no additional privacy budget is consumed. Consequently,
$$ \Phi_{\mathrm{PU2},\boldsymbol{\phi}} = g_{\mathrm{PU2},\boldsymbol{\phi}} \circ \mathcal{M}_{\mathrm{PU2}} \circ f_{\boldsymbol{\phi}} $$
satisfies pure $\epsilon$-LDP on $\mathcal{Z}$. 
The same preprocessing and composition argument applies to PrivUnitG~\citep{asi2022privunitg} under its cited pure $\epsilon$-LDP guarantee for unit-sphere inputs, with its Gaussian-based $\mathbb{R}^m$-valued randomizer and mechanism-specific public reconstruction map.
\end{corollary}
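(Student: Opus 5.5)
The plan is to treat the complete mechanism $\Phi_{\boldsymbol{\phi}} = g_{\boldsymbol{\phi}} \circ \mathcal{M} \circ f_{\boldsymbol{\phi}}$ as a composition of three maps and check that privacy is lost at none of them. The order of steps is: reduce to a fixed deterministic pre-processing map, invoke the base randomizer's guarantee on the bounded domain, and conclude by post-processing immunity. Fix arbitrary inputs $\boldsymbol{z},\boldsymbol{z}'\in\mathcal{Z}$ and a measurable output set $S\subseteq\mathbb{R}^m$.

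\emph{Step 1: the pre-processing map is fixed.} By (A1), the configuration $\boldsymbol{\phi}$, $p$, and $\rho$ are chosen from public information only and are fixed before privatization. Conditional on this public side information, $f_{\boldsymbol{\phi}}$ is therefore one and the same deterministic map for both inputs. In particular, no data-dependent choice of $\boldsymbol{L}$ or $\rho$ can leak information through the pair $(f_{\boldsymbol{\phi}}(\boldsymbol{z}), f_{\boldsymbol{\phi}}(\boldsymbol{z}'))$.

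\emph{Step 2: the images land in the bounded domain.} We verify $f_{\boldsymbol{\phi}}(\mathcal{Z})\subseteq\bar{\mathcal{Z}}_{p,\rho}$ as asserted in (A2). Write $\boldsymbol{w}=\boldsymbol{L}^{-1}(\boldsymbol{z}-\boldsymbol{\mu})$, which is well-defined because $\lambda_{\min}>0$ makes $\boldsymbol{L}$ invertible.
\begin{itemize}
\item If $\boldsymbol{w}=\boldsymbol{0}$, then the output is $\boldsymbol{0}$, which lies in the ball.
\item If $\boldsymbol{w}\neq\boldsymbol{0}$, then $\|\theta_p(\boldsymbol{w};\rho)\,\boldsymbol{w}\|_p=\min(\|\boldsymbol{w}\|_p,\rho)\le\rho$.
\end{itemize}
This holds whether or not $\mathcal{Z}$ is bounded. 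By (A3), $\mathcal{M}$ is $(\epsilon,\delta)$-LDP uniformly over $\bar{\mathcal{Z}}_{p,\rho}$, so for every measurable $T$,
\[
\Pr[\mathcal{M}(f_{\boldsymbol{\phi}}(\boldsymbol{z}))\in T]\le e^{\epsilon}\Pr[\mathcal{M}(f_{\boldsymbol{\phi}}(\boldsymbol{z}'))\in T]+\delta .
\]

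\emph{Step 3: post-processing.} Set $T=g_{\boldsymbol{\phi}}^{-1}(S)$. By (A4), $g_{\boldsymbol{\phi}}$ is an affine map fixed from public information, hence continuous and measurable, so $T$ is measurable. Since $\{\Phi_{\boldsymbol{\phi}}(\boldsymbol{z})\in S\}=\{\mathcal{M}(f_{\boldsymbol{\phi}}(\boldsymbol{z}))\in T\}$, the inequality of Step 2 transfers directly to $\Phi_{\boldsymbol{\phi}}$. Because $\boldsymbol{z}$, $\boldsymbol{z}'$, and $S$ were arbitrary, $\Phi_{\boldsymbol{\phi}}$ satisfies $(\epsilon,\delta)$-LDP on $\mathcal{Z}$, and pure $\epsilon$-LDP follows as the special case $\delta=0$.

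The main obstacle is conceptual rather than technical. The argument is only valid if nothing in $f_{\boldsymbol{\phi}}$, $\mathcal{M}$'s calibration, or $g_{\boldsymbol{\phi}}$ depends on the private input; for instance, the calibration must depend only on $\rho$ and never on $\|\boldsymbol{w}\|_p$. I would handle this by stressing that the noise scale is set from the sensitivity of $\bar{\mathcal{Z}}_{p,\rho}$ alone, for example $2\rho$ for Laplace, and not from $\boldsymbol{U}$ or $\boldsymbol{\Lambda}$. For randomizers whose native domain is the sphere, such as PrivUnit, I would add a normalization map with a public default direction at $\boldsymbol{0}$. This keeps (A3) meaningful on all of $\bar{\mathcal{Z}}_{p,\rho}$, with that normalization folded into $\mathcal{M}$.
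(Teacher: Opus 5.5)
Your overall route is the paper's. The corollary is Theorem~\ref{thm:privacy} instantiated with the normalization folded into the base randomizer, $\mathcal{M}_{\mathrm{PU2}}=\operatorname{PU2}_{\epsilon,m}\circ N$, followed by a public post-processing map. Your Steps~1--3 are a correct proof of that general theorem. The problem is that nearly all your effort re-derives the theorem, while the claims specific to this corollary are only gestured at or missing.

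The main one is the (A3) verification. You need to state that for any $\bar{\boldsymbol z},\bar{\boldsymbol z}'\in\bar{\mathcal Z}_{p,\rho}$, including $\boldsymbol 0$ (which $N$ sends to the public $\boldsymbol\eta_0$), both $N(\bar{\boldsymbol z})$ and $N(\bar{\boldsymbol z}')$ lie in $\mathbb{S}^{m-1}$. The uniform sphere-level guarantee then gives $\Pr[\mathcal{M}_{\mathrm{PU2}}(\bar{\boldsymbol z})\in T]\le e^{\epsilon}\Pr[\mathcal{M}_{\mathrm{PU2}}(\bar{\boldsymbol z}')\in T]$ for every measurable $T\subseteq\mathbb{S}^{m-1}$. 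Saying the normalization ``keeps (A3) meaningful'' does not yet establish this inequality.

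Second, your Step~3 invokes (A4), whose map is $\boldsymbol L\boldsymbol v+\boldsymbol\mu$. The map here is instead $g_{\mathrm{PU2},\boldsymbol\phi}(\boldsymbol y)=(\rho/\tau_{\epsilon,m})\boldsymbol L\boldsymbol y+\boldsymbol\mu$. You should note that Step~3 only uses that $g$ is deterministic, measurable, and built from public constants, and that $\tau_{\epsilon,m}\neq 0$ depends only on $(\epsilon,m)$.

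Your closing remark (``calibration must depend only on $\rho$'') is Laplace-centred and misplaces where $\rho$ enters in this case. PrivUnit's calibration depends on $(\epsilon,m)$ alone; $\rho$ appears only in the reconstruction. What must be checked is that this rescaling uses the public $\rho$ rather than the input norm $\|\bar{\boldsymbol z}\|_2$. Restoring the true norm would require releasing an input-specific quantity that $\operatorname{PU2}_{\epsilon,m}$ never privatized; this is the statement's ``no input-specific transformed norm is released.''

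You also omit the direction-invariance claim, which takes one line. For $\bar{\boldsymbol z}\neq\boldsymbol 0$, clipping returns $\theta\bar{\boldsymbol z}$ with $\theta=\min(1,\rho/\|\bar{\boldsymbol z}\|_p)>0$, so $N(\theta\bar{\boldsymbol z})=N(\bar{\boldsymbol z})$.

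Finally, the PrivUnitG case needs one sentence: the same composition argument applies with its $\mathbb{R}^m$-valued randomizer composed with $N$ and its own public reconstruction map.

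None of this is hard, but it is exactly what the corollary adds beyond Theorem~\ref{thm:privacy}.
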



\begin{corollary}[Invariance of the certified privacy parameters]
Fix the base randomizer and its mechanism-specific privacy calibration, together with any public quantities on which that calibration depends. For the additive mechanisms, these quantities include the clipping norm $p$ and radius $\rho$. Let $\boldsymbol{\phi}_1$ and $\boldsymbol{\phi}_2$ be any two publicly selected configurations satisfying (A1)-(A4). Then the corresponding complete mechanisms $\Phi_{\boldsymbol{\phi}_1}$ and $\Phi_{\boldsymbol{\phi}_2}$ both satisfy the same certified $(\epsilon,\delta)$-LDP parameters. Consequently, any public choice of the Jacobian basis, active rank, offset, or bounded scaling spectrum may affect utility, but it does not alter the stated privacy parameters, provided that the resulting configuration remains fixed and admissible under the corresponding mechanism-specific assumptions.
\end{corollary}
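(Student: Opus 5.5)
The plan is to obtain the corollary by applying Theorem~\ref{thm:privacy} twice and checking that the certified pair $(\epsilon,\delta)$ produced by the theorem is a function only of the base randomizer's calibration data. It must not depend on $\boldsymbol{\phi}$. The first step is to make the calibration explicit. By (A3), the certificate of $\mathcal{M}$ is a statement about the fixed domain $\bar{\mathcal{Z}}_{p,\rho}$ together with the calibration $(\epsilon,\delta,p,\rho)$ and any mechanism-specific public constants. Examples are $b=2\rho/\epsilon$ for Laplace, the analytic $\sigma$ for $\Delta_2=2\rho$ for AGM, and $\tau_{\epsilon,m}$ for PrivUnit2. None of these objects involves $\boldsymbol{U}$, $\boldsymbol{\Lambda}$, or $\boldsymbol{\mu}$. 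Since the corollary holds the base randomizer and its calibration fixed, the same $\mathcal{M}$ with the same guarantee on $\bar{\mathcal{Z}}_{p,\rho}$ is used for both configurations.

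Second, for $j\in\{1,2\}$, I will verify that $\boldsymbol{\phi}_j$ together with the fixed $p,\rho$ satisfies (A1)--(A4). This holds by the hypothesis of admissibility. The one point to check is (A2): radial clipping with the same $p,\rho$ sends $f_{\boldsymbol{\phi}_j}(\mathcal{Z})$ into the same ball $\bar{\mathcal{Z}}_{p,\rho}$ whatever $\boldsymbol{L}_j^{-1}$ and $\boldsymbol{\mu}_j$ are. This is immediate from $\theta_p(\bar{\boldsymbol z};\rho)\|\bar{\boldsymbol z}\|_p\le\rho$. Theorem~\ref{thm:privacy} then gives that $\Phi_{\boldsymbol{\phi}_j}=g_{\boldsymbol{\phi}_j}\circ\mathcal{M}\circ f_{\boldsymbol{\phi}_j}$ is $(\epsilon,\delta)$-LDP. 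The parameters are exactly those of $\mathcal{M}$ on $\bar{\mathcal{Z}}_{p,\rho}$, because the proof of the theorem passes the inequality of $\mathcal{M}$ through the preimage $g_{\boldsymbol{\phi}_j}^{-1}(S)$ without modifying $\epsilon$ or $\delta$. The two certified pairs are therefore literally the same pair.

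The main obstacle is mechanism-specific calibration that could secretly depend on the configuration. One instance is a reconstruction constant, which enters only through $g$ and so is covered by post-processing. Another is an implementation that might recompute the noise scale from the sensitivity of $\boldsymbol{L}^{-1}\mathcal{Z}$ rather than from $\rho$. To rule this out, I will cite the instantiation corollaries, which show that the calibration depends only on $(\epsilon,\delta,p,\rho)$ and, for PrivUnit, on $(\epsilon,m)$. I will also note that for PrivUnit the normalization $N$ discards the radius entirely. The final "consequently" clause then follows. Any public choice of basis, active rank, offset, or bounded spectrum changes only $\boldsymbol{\phi}$. It therefore yields another admissible configuration with the same certificate, while utility, which depends on $\boldsymbol{L}$ through $\boldsymbol{L}\boldsymbol{\xi}$ and the clipping term, may differ.
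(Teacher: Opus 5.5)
Your proposal is correct and follows essentially the same route as the paper. You apply Theorem~\ref{thm:privacy} separately to each admissible $\boldsymbol{\phi}_j$, observe that the certificate is inherited unchanged from $\mathcal{M}$ on the fixed ball $\bar{\mathcal{Z}}_{p,\rho}$ and so depends on $\boldsymbol{\phi}_j$ only through admissibility, and note the PrivUnit specialization in which normalization makes the calibration depend only on $(\epsilon,m)$. Your extra checks, that (A2) lands in the same ball for every $\boldsymbol{L}_j,\boldsymbol{\mu}_j$ and that no calibration constant secretly depends on $\boldsymbol{U},\boldsymbol{\Lambda},\boldsymbol{\mu}$, just make explicit what the paper's two-line proof leaves implicit.
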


\begin{proof}
The theorem applies to each $\boldsymbol{\phi}_i$ separately, and its conclusion depends on $\boldsymbol{\phi}_i$ only through the admissibility conditions (A1)-(A4). 
For PrivUnit-based mechanisms, Corollary~\ref{corollary:privunit} gives a stronger specialization: the privacy calibration depends only on $(\epsilon,m)$. Thus, the public clipping norm $p$ and radius $\rho$ may also differ across admissible configurations without changing the certified privacy parameters.
\end{proof}


\begin{corollary}[Image-level LDP under client-side feature extraction]
\label{corollary:item-level}
Let $\mathcal{X}$ be the domain of private images and let
$$ \operatorname{Enc}_{\mathrm{pub}}: \mathcal{X} \rightarrow \mathcal{Z} $$
be a fixed, deterministic, and measurable feature extractor satisfying
$$ \operatorname{Enc}_{\mathrm{pub}}(\mathcal{X}) \subseteq \mathcal{Z}. $$
Assume that $\operatorname{Enc}_{\mathrm{pub}}$ is fixed before any private input is processed, executed locally on the client, and that neither the raw image nor the unprivatized representation is released before local randomization. This is the deployment protocol illustrated in Fig. 6. For any admissible public configuration $\boldsymbol{\phi}$, let
$$ \Phi_{\boldsymbol{\phi}}: \mathcal{Z} \rightarrow \mathbb{R}^{m} $$
be the representation-level mechanism from the preceding theorem, satisfying $(\epsilon,\delta)$-LDP on $\mathcal{Z}$ under the complete neighboring relation $\mathcal{Z}\times\mathcal{Z}$. Define
$$ \widetilde{\Phi}_{\boldsymbol{\phi}} = \Phi_{\boldsymbol{\phi}} \circ \operatorname{Enc}_{\mathrm{pub}}: \mathcal{X} \rightarrow \mathbb{R}^{m}. $$
Then $\widetilde{\Phi}_{\boldsymbol{\phi}}$ satisfies $(\epsilon,\delta)$-LDP on $\mathcal{X}$ under the complete neighboring relation $\mathcal{X}\times\mathcal{X}$. That is, for every $\boldsymbol{x},\boldsymbol{x}'\in\mathcal{X}$ and every measurable set $S\subseteq\mathbb{R}^{m}$,
$$ \Pr[ \widetilde{\Phi}_{\boldsymbol{\phi}}(\boldsymbol{x})\in S ] \le e^\epsilon \Pr[ \widetilde{\Phi}_{\boldsymbol{\phi}}(\boldsymbol{x}')\in S ] + \delta. $$
\end{corollary}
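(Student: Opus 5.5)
The plan is a preimage argument. Composing a mechanism with a deterministic, data-independent map on the input side preserves the LDP inequality verbatim, because the complete neighboring relation on $\mathcal{X}$ is carried into the complete relation on $\mathcal{Z}$. This is the input-side counterpart of the post-processing step in the proof of Theorem~\ref{thm:privacy}. Note that any input-side map is fine here; only the output side needs post-processing immunity.

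\textbf{Step 1.} Fix arbitrary $\boldsymbol{x},\boldsymbol{x}'\in\mathcal{X}$ and a measurable $S\subseteq\mathbb{R}^m$. Set $\boldsymbol{z}=\operatorname{Enc}_{\mathrm{pub}}(\boldsymbol{x})$ and $\boldsymbol{z}'=\operatorname{Enc}_{\mathrm{pub}}(\boldsymbol{x}')$.

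\textbf{Step 2.} Use the inclusion $\operatorname{Enc}_{\mathrm{pub}}(\mathcal{X})\subseteq\mathcal{Z}$ to conclude that both $\boldsymbol{z}$ and $\boldsymbol{z}'$ lie in $\mathcal{Z}$. Since the encoder is deterministic, and fixed before any private input is seen, it contributes no randomness. Hence, for each $\boldsymbol{x}$, the output law of $\widetilde{\Phi}_{\boldsymbol{\phi}}(\boldsymbol{x})$ is exactly that of $\Phi_{\boldsymbol{\phi}}(\boldsymbol{z})$. In particular, $\Pr[\widetilde{\Phi}_{\boldsymbol{\phi}}(\boldsymbol{x})\in S]=\Pr[\Phi_{\boldsymbol{\phi}}(\boldsymbol{z})\in S]$, and the same identity holds for $\boldsymbol{x}'$.

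\textbf{Step 3.} Apply Theorem~\ref{thm:privacy} to the pair $(\boldsymbol{z},\boldsymbol{z}')\in\mathcal{Z}\times\mathcal{Z}$. This is legitimate because the theorem quantifies over every pair in the complete relation. It gives $\Pr[\Phi_{\boldsymbol{\phi}}(\boldsymbol{z})\in S]\le e^\epsilon\Pr[\Phi_{\boldsymbol{\phi}}(\boldsymbol{z}')\in S]+\delta$. Substituting the identities from Step 2 yields the claim, and since $\boldsymbol{x},\boldsymbol{x}',S$ were arbitrary, the corollary follows.

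\textbf{Potential gaps.} Two points need care.
\begin{enumerate}
\item Measurability of $\boldsymbol{x}\mapsto\Pr[\Phi_{\boldsymbol{\phi}}(\operatorname{Enc}_{\mathrm{pub}}(\boldsymbol{x}))\in S]$. This is not actually needed, since the argument works pointwise in $\boldsymbol{x}$. The measurability assumption on the encoder only matters if one wants $\widetilde{\Phi}_{\boldsymbol{\phi}}$ to be a well-defined Markov kernel.
\item Independence of the configuration from the data. The configuration $\boldsymbol{\phi}$, like $\operatorname{Enc}_{\mathrm{pub}}$ itself, must not depend on $\boldsymbol{x}$. This is guaranteed by (A1) together with the client-side deployment assumption, and I would state it explicitly.
\end{enumerate}
I expect no real obstacle beyond these two points.
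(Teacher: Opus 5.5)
Your proposal is correct and follows the paper's proof essentially verbatim: fix $\boldsymbol{x},\boldsymbol{x}'$ and $S$, use $\operatorname{Enc}_{\mathrm{pub}}(\mathcal{X})\subseteq\mathcal{Z}$ to place both encoded points in $\mathcal{Z}$, apply the complete-relation guarantee of $\Phi_{\boldsymbol{\phi}}$ to that pair, and substitute back. Your side remarks are also accurate: measurability is not needed for the pointwise inequality, and $\boldsymbol{\phi}$ must be data-independent, which the paper ensures through (A1) and the corollary's hypotheses.
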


\begin{proof}
Fix arbitrary $\boldsymbol{x},\boldsymbol{x}'\in\mathcal{X}$ and a measurable set $S\subseteq\mathbb{R}^{m}$. Since
$$ \operatorname{Enc}_{\mathrm{pub}}(\boldsymbol{x}), \operatorname{Enc}_{\mathrm{pub}}(\boldsymbol{x}') \in \mathcal{Z}, $$
the $(\epsilon,\delta)$-LDP guarantee of $\Phi_{\boldsymbol{\phi}}$ applies directly to this pair. Therefore,
$$ \begin{aligned} \Pr[ \widetilde{\Phi}_{\boldsymbol{\phi}}(\boldsymbol{x})\in S ] &= \Pr[ \Phi_{\boldsymbol{\phi}} (\operatorname{Enc}_{\mathrm{pub}}(\boldsymbol{x})) \in S ] \\ &\le e^\epsilon \Pr[ \Phi_{\boldsymbol{\phi}} (\operatorname{Enc}_{\mathrm{pub}}(\boldsymbol{x}')) \in S ] + \delta \\ &= e^\epsilon \Pr[ \widetilde{\Phi}_{\boldsymbol{\phi}}(\boldsymbol{x}')\in S ] + \delta. \end{aligned} $$
Because $\boldsymbol{x}$, $\boldsymbol{x}'$, and $S$ were arbitrary, the claim follows.
\end{proof}

The client-side condition is essential to the image-level interpretation. If the raw image is transmitted to a server before feature extraction or local randomization, that server observes the unprotected image, and the corollary does not provide image-level LDP against it. In that deployment, the stated guarantee applies only to the representation released after randomization.

\textbf{Limitations.} The primary privacy unit of the mechanism is the representation. Image-level LDP follows only under the client-side deployment described in Fig. 6, where neither the raw image nor the unprivatized representation is released. Our method also relies on a fixed public encoder and downstream task model, which limits applicability when such public components are unavailable or poorly matched to the private distribution.

\subsection{Compatibility} \label{sec:apdx_comp}
Our approach can be seamlessly integrated with mechanisms that achieve randomization by sampling noise from Laplace, Gaussian, or other distributions and adding it to the representation. 
Furthermore, it is compatible with mechanisms based on probabilistic sampling designed to preserve directional information~\citep{bhowmick2018privunit,asi2022privunitg}. 
This is achieved simply by incorporating an additional step to transform $\bar{\boldsymbol{z}}$ into the required input format of the target mechanism, followed by an inverse transformation prior to applying the post-processing function. 

\subsection{Aggregation}\label{sec:apdx_agg}
Because of the heavy randomization in LDP, practical applications frequently adopt an aggregation step to average out the injected noise. 
In our proposed approach, this aggregation takes place between the randomization and post-processing steps. 
Since we inject zero-mean, isotropic noise during randomization, the aggregation step remains unbiased and reduces the variance of the injected noise as the number of samples increases.
Our post-processing function is applied to this aggregated output, ensuring that the resulting representations perform effectively in the downstream model.

\subsection{Privacy-Preserving Federated Learning} \label{sec:apdx_fl}
Our approach is applicable to federated learning (FL) employing LDP for privacy preservation. 

Consider an FL system where $K$ clients ($K \in \mathbb{Z}^+$) compute and send their local gradients $\boldsymbol{g}^{(k)} \in \mathbb{R}^d$ for $k \in \{1,\dots,K\}$ to a central server, which aggregates them to update the global model and distributes the updated model to the clients.
Assuming an untrusted server and network, clients apply LDP to randomize their gradients prior to transmission. 
Let $\boldsymbol{g}_{i}^{(k)} \in \mathbb{R}^d$ denote the gradient computed from the $i$-th sample ($i \in \{1,\dots,N\}, N \in \mathbb{Z}^+$) at the $k$-th client. 
The gradient $\boldsymbol{g}_{i}^{(k)}$ consists of $d$ components, expressed as $\boldsymbol{g}_{i}^{(k)} = \begin{bmatrix} g_{1i}^{(k)} & \dots & g_{di}^{(k)} \end{bmatrix}^\top$.
For simplicity, we assume that each component lies within a bounded range.
Without considering privacy preservation, the local gradient $\boldsymbol{g}^{(k)}$ is computed as: $\boldsymbol{g}^{(k)} = \frac{1}{N} \sum_{i=1}^N \boldsymbol{g}_{i}^{(k)}$.

When applying LDP for privacy preservation, two conventional methods are used for randomizing the gradient. 
The first directly randomizes each $\boldsymbol{g}_{i}^{(k)}$ prior to averaging. 
The second adds noise directly to the aggregated mean, utilizing the sensitivity of the mean function determined by the bounded $\boldsymbol{g}_{i}^{(k)}$.
In contrast to these methods, one can preserve privacy by first permuting the $N$-dimensional vectors, denoted as $\begin{bmatrix} g_{j1}^{(k)} & \dots & g_{jN}^{(k)}\end{bmatrix}^\top \in \mathbb{R}^N$ for $j \in \{1,\dots,d\}$, and then randomizing them using an LDP mechanism $\mathcal{M}$. 
In this context, the downstream operation (or model) is represented as an $N$-dimensional row vector $\boldsymbol{W}=\frac{1}{N}\mathbf{1}_N^\top \in \mathbb{R}^{1 \times N}$, which corresponds to the element-wise summation and scaling of the permuted vectors. 
The mechanism $\mathcal{M}$ is applied between our pre-processing and post-processing steps.
Notably, this method can also be extended to central differential privacy (CDP).

Furthermore, the linear formulation of federated aggregation suggests an additional application of our approach to secure aggregation with DP-SGD.

Consider $K$ clients, each holding a $d$-dimensional local gradient $\boldsymbol{g}^{(k)} \in \mathbb{R}^{d}$.
Let $\boldsymbol{W} = \begin{bmatrix} w^{(1)} & \cdots & w^{(K)} \end{bmatrix} \in \mathbb{R}^{1\times K}$ denote a known aggregation-weight vector, where $w^{(k)}$ is the weight assigned to client $k$.
Let $\boldsymbol{G} = \begin{bmatrix} \boldsymbol{g}^{(1)} & \cdots & \boldsymbol{g}^{(K)} \end{bmatrix}^\top \in \mathbb{R}^{K\times d}$ denote the row-wise stacked local gradient matrix.
The server computes the weighted aggregate as
\begin{equation}\nonumber
\boldsymbol{W}\boldsymbol{G} = \left( \sum_{k=1}^{K} w^{(k)}\boldsymbol{g}^{(k)} \right)^\top \in \mathbb{R}^{1\times d}.
\end{equation}
For example, uniform aggregation corresponds to $w^{(k)}=1/K$, whereas FedAvg-type aggregation may assign weights according to the number of local samples held by each participating client.

Suppose that the participating clients can establish shared random seeds.
Our approach allows the perturbations applied to the local gradients to be structured according to the row space and null space of the aggregation operator $\boldsymbol{W}$.
In particular, correlated perturbations can be constructed within $\operatorname{Null}(\boldsymbol{W})$ such that they cancel under the prescribed aggregation.
In contrast, perturbations lying in $\operatorname{Row}(\boldsymbol{W})$ remain visible after aggregation.
Accordingly, the null-space component can serve as an aggregation-canceling correlated mask, whereas the row-space component determines the perturbation that remains in the aggregated update.
When Gaussian perturbations are employed and appropriately calibrated, the surviving row-space component can serve as the privacy-preserving noise in the aggregated update, analogous to the Gaussian perturbation used in DP-SGD.

\subsection{Experimental Details} \label{appendix:exp}

We evaluate our method across three downstream tasks, including one time-series regression and two image classification tasks. 
The evaluation covers a range of privacy budgets ($\epsilon \in [0.5, 10]$), representation dimensionalities ($m\in\{16,32,64\}$), and downstream models, such as linear regression (LR), linear classifiers (LC) and multi-layer perceptron (MLP) classifiers. 

\subsubsection{Global Setup}

\textbf{Experimental Setup.}
All experiments were conducted on two workstations: one equipped with an Intel Xeon Platinum 8358 CPU, 64 GB of RAM, and a single NVIDIA L40S GPU, and the other with an Intel Core i7-8700 CPU, 32 GB of RAM, and a single NVIDIA GTX 1080 Ti GPU.

\textbf{Hyperparameters.}
All clipping thresholds $\rho$ are set to the 90th percentile of the corresponding norms computed over the public training set. 
Furthermore, we set $\lambda_{\mathrm{max}} = 1000$ based on the experimental observations provided in Appendix~\ref{appdx:finite_cap}.
                                                
\subsubsection{Setup 1: Regression on London Smart Meters Dataset}    
\textbf{Dataset.} We evaluate our method using a linear regression (LR) model trained on the publicly available London Smart Meters dataset \citep{jeanmidev2019smartmeters,ukpowernetworks_lcl}. 
This dataset records the half-hourly electricity consumption of 5,547 London households over approximately 27 months. 
The dataset provides several aggregated features for each household, including
\textit{household\_id}, \textit{day}, \textit{energy\_count}, \textit{energy\_mean}, \textit{energy\_median}, \textit{energy\_max}, \textit{energy\_min}, and \textit{energy\_std}. 

\textbf{Downstream Task.} The linear regression (LR) predicts the next day's \textit{energy\_mean}. 
We use a $m$-dimensional vector $\boldsymbol{z}$ as the input, which is made up of \textit{energy\_mean} values from the previous $m$ consecutive days.
The LR model is trained via Ordinary Least Squares (OLS).

\textbf{Privacy Model.} We employ \emph{user-level} LDP as our privacy constraint, considering that smart meter readings can expose private household activities. The goal of our experiments is to improve data utility under this localized privacy guarantee.

\begin{figure}[h]
  \centering
  \includegraphics[width=0.5\textwidth]{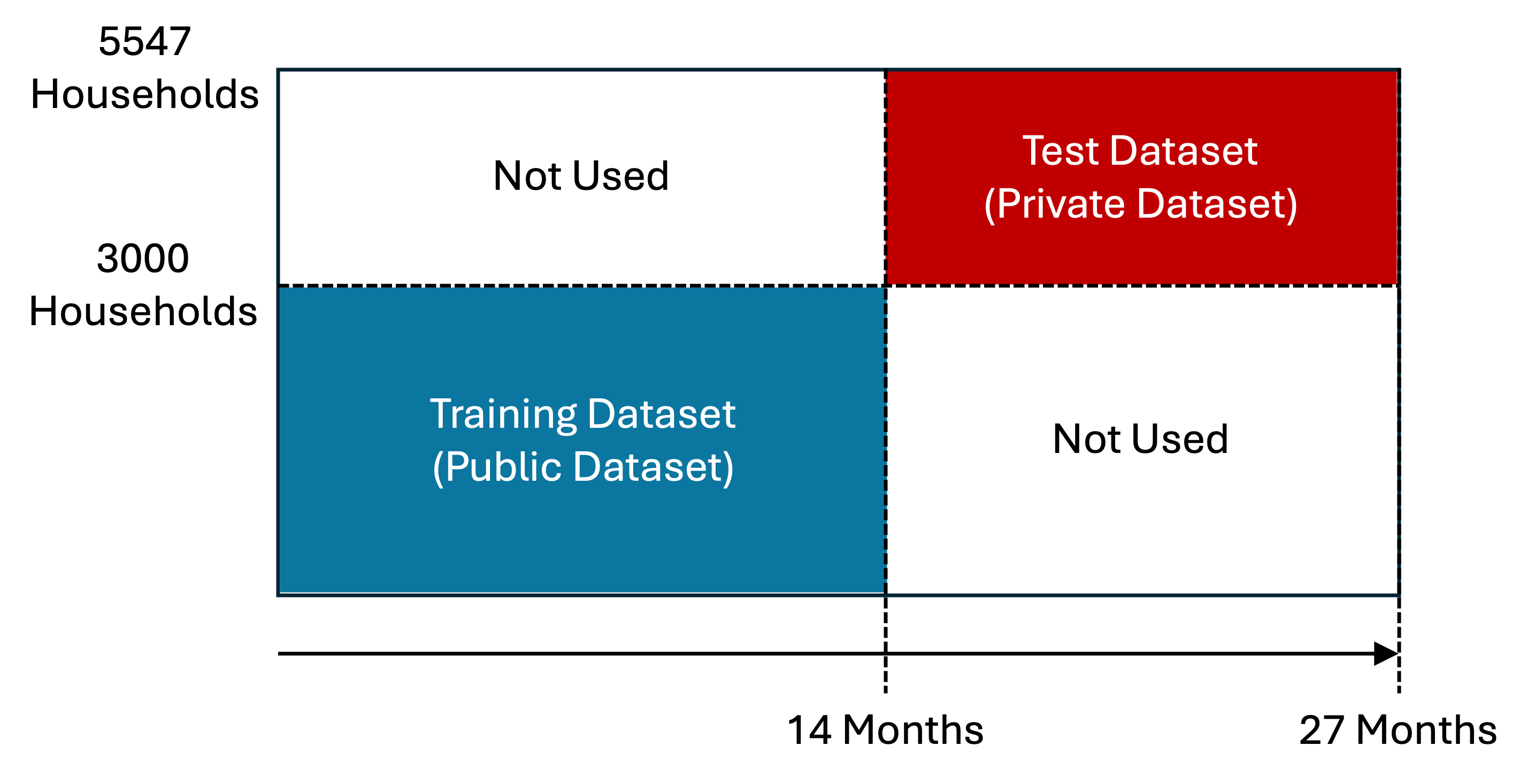} 
  \caption{London Smart Meters Dataset Splitting Scheme for Evaluation with Regression}
  \label{fig:london_smart_meters_dataset}
\end{figure}

\textbf{Public and Private Datasets.}
The dataset is partitioned both by household and chronologically into training and test sets, which serve as the public and private datasets, respectively. 
Specifically, we utilize the initial 14 months of data from 3,000 randomly selected households as the public dataset to train the LR model. 
The subsequent 13 months of data from the remaining households constitute the private dataset used to evaluate the mechanisms.

\textbf{Evaluation Protocol.} We adopt an evaluation scenario in which an untrusted server collects and aggregates randomized vectors from households in the private dataset pool before performing the regression task. 
To avoid privacy budget composition across evaluation rounds, the private household pool of 2,547 households is partitioned into disjoint, equal-sized subsets across the $N$ rounds prior to evaluation. 
Each round is assigned an exclusive subset of $\lfloor 2547 / N \rfloor$ households, ensuring that no household appears in more than one round. 
On each date, $K$ ($1 \leq K \leq \lfloor 2547 / N \rfloor$) households are then uniformly sampled at random without replacement from the round's assigned subset. 
For instance, with $N=20$, each round is allocated 127 households, yielding a maximum of $K = 127$ per round. 
Furthermore, evaluation dates are spaced at least $m$ days apart, ensuring that no household's measurement from a single day appears in more than one evaluation window. 
Together, these conditions guarantee that each household contributes to at most one aggregation across the entire evaluation, ensuring that the privacy cost incurred is $\epsilon$ per household regardless of $N$.

The evaluation is conducted over $N$ rounds. 
For instance, setting $m=16$ corresponds to performing $N=20$ regression rounds over a span of 320 days. 

We evaluate utility by measuring the Mean Squared Error (MSE) against ground-truth values. 

\textbf{Composition.} In our evaluation, the sequential composition of privacy budgets is circumvented. 
As detailed above, the partitioning of the household pool into disjoint subsets and the  $m$-day spacing between evaluation dates ensure that the data from any given household is queried at most once across all $N$ rounds. 
According to the parallel composition theorem of differential privacy, evaluating queries on disjoint subsets of data does not accumulate the privacy cost. 
Therefore, the total privacy guarantee for any individual household remains bounded by the single-query budget $\epsilon$, without requiring advanced composition mechanisms or budget degradation.

\subsubsection{Setup 2: Image Classification on MNIST and CIFAR-10 Datasets} 
\textbf{Dataset.} We evaluate our method using linear and nonlinear classifiers pre-trained on two public datasets, MNIST and CIFAR-10:
MNIST~\citep{lecun1998mnist} consists of 70,000 grayscale images of handwritten digits (0--9) with a size of $28 \times 28$ pixels. 
It is partitioned into a training set of 60,000 images and a test set of 10,000 images.
CIFAR-10~\citep{krizhevsky2009cifar} is utilized to assess performance on more complex, high-dimensional color images. 
It contains 60,000 $32 \times 32$ images across 10 classes, with 6,000 images per class. 
The dataset is divided into 50,000 training images and 10,000 test images.
Additionally, we incorporate CIFAR-10-C~\citep{hendrycks2019benchmarking} to evaluate the robustness of our mechanisms against common real-world corruptions. 
This dataset consists of the original CIFAR-10 test set subjected to 19 types of corruptions, including numerous forms of noise, blur, weather, and digital effects, across five levels of severity. 
It serves as a benchmark for assessing the model's performance under distribution shifts and corrupted inputs.

In our experiments, the training sets from CIFAR-10 and MNIST are treated as public. 
These are partitioned into training and validation sets with an 80:20 split to pre-train the public feature extractors (VAE~\citep{kingma2013auto} and ResNet-20~\citep{he2016deep}) and classifiers. 
To assess the utility of the mechanisms, the test sets of CIFAR-10-C~\citep{hendrycks2019benchmarking} (e.g., brightness, fog, defocus blur, and Gaussian noise), CIFAR-10, and MNIST are treated as private. 
Note that the CIFAR-10-C test set exhibits a distribution shift compared to the CIFAR-10 training set.

\begin{figure}[h]
  \centering
  \includegraphics[width=0.5\textwidth]{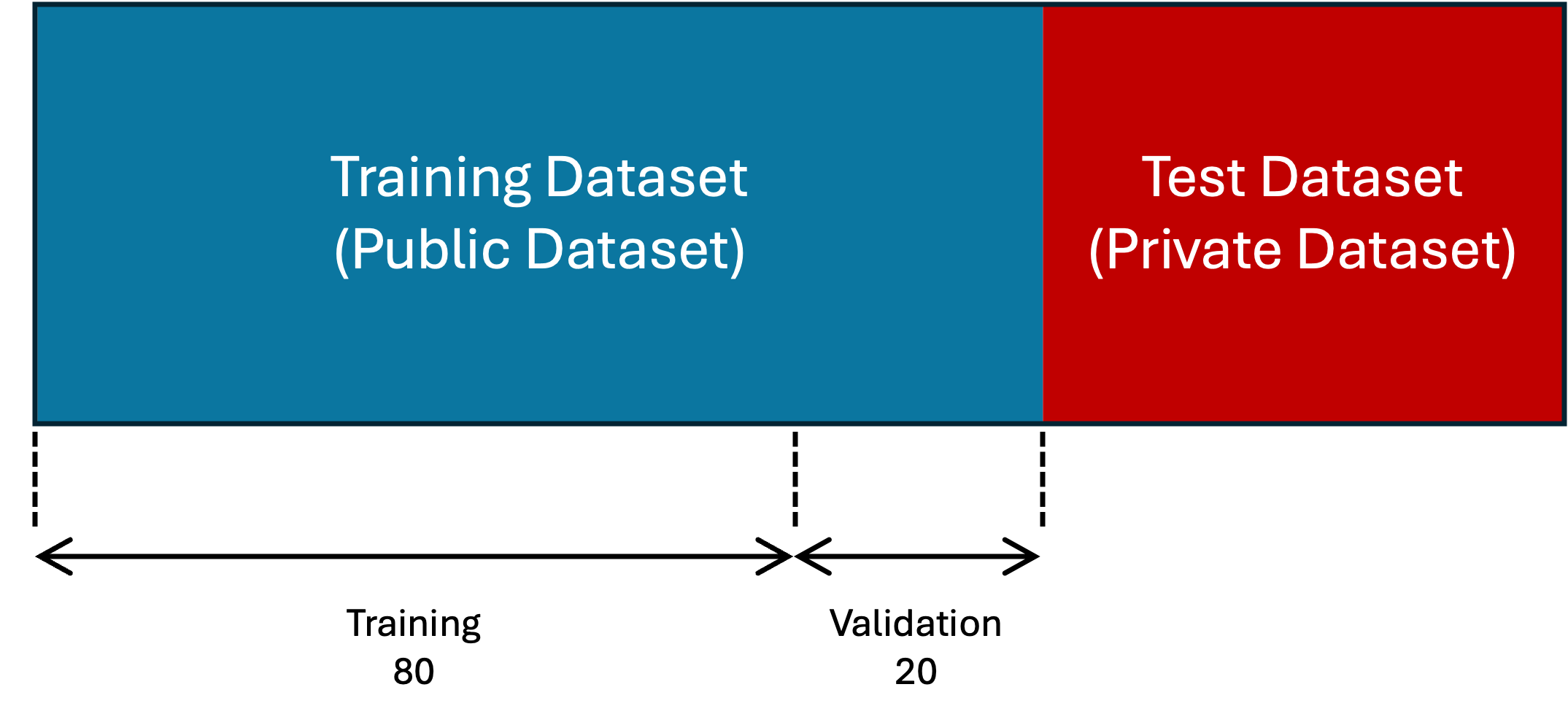} 
  \caption{Image Dataset Splitting Scheme for Evaluation with Classification using MNIST and CIFAR-10}
  \label{fig:image_dataset}
\end{figure}

\textbf{Downstream Task.} For the classification task, the objective is to predict the class label associated with each input image, such as digits for MNIST or object categories for CIFAR-10 and CIFAR-10-C.
We use an $m$-dimensional representation vector $\boldsymbol{z}$ as the input, which is extracted from the raw image data through a pre-trained feature extractor.
The classification models are trained by minimizing the standard cross-entropy loss.

\textbf{Privacy Model.} For the image classification tasks using MNIST, CIFAR-10 and CIFAR-10-C, unlike the regression task, we apply \emph{item-level} LDP, where the privacy guarantee is provided for each individual image sample.

\textbf{Public and Private Datasets.} 
Consistent with the regression task, the training datasets are treated as public. 
These are further partitioned into training and validation sets with an 80:20 split to pre-train the public feature extractor and classifiers. 
Conversely, the test datasets are treated as private, serving as the evaluation data to assess the utility of our method under LDP constraints.

\textbf{Evaluation Protocol.}
We consider a distributed setting where a central server aggregates representations $\boldsymbol{z}$ from $K$ virtual clients. 
These representations are extracted by processing private test images through a public feature extractor. 
For an $N$-round evaluation, the 10,000 samples in the private test dataset are partitioned into $N$ disjoint subsets and randomly assigned to clients, regardless of their class labels. 
For instance, with $N=10000$ and $K=1$, each client is allocated 1 unique test sample. 
To avoid privacy budget accumulation, we ensure that each representation vector is transmitted only once across the $N$ rounds.

\textbf{Utility Metrics and Composition.}
Utility is quantified by classification accuracy. 
From a privacy perspective, because each representation is transmitted only once, the budget does not accumulate over multiple rounds. 

\begin{figure}[t]
  \centering
  \includegraphics[width=1.0\textwidth]{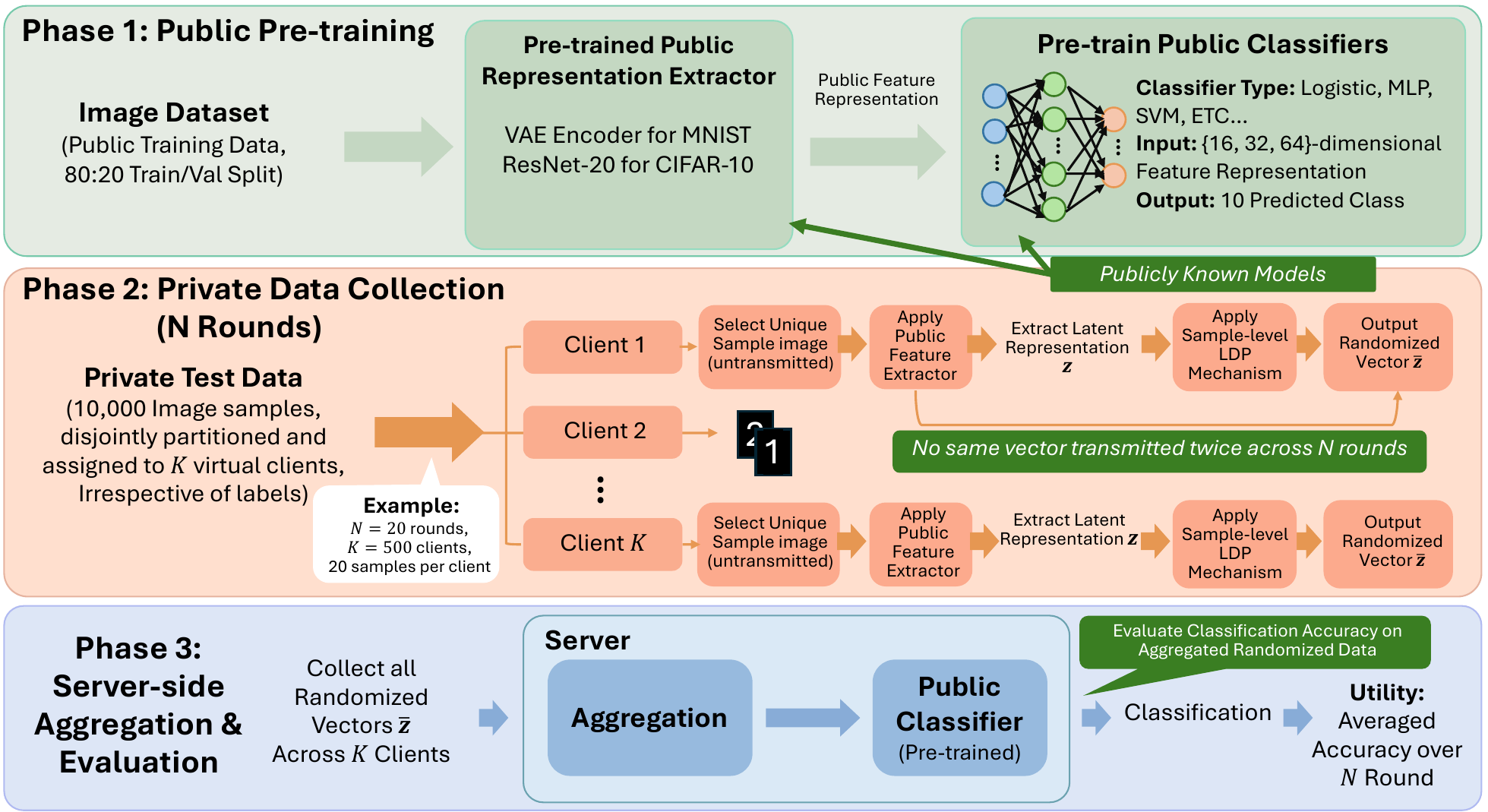} 
  \caption{Evaluation Protocol for Classification.}
\end{figure}
    
\subsubsection{Downstream Models}
\textbf{Linear Regression.} 
For the regression task, we evaluate the utility of our representations using the London smart meter dataset. 
The input feature is an $m$-dimensional representation vector $\boldsymbol{z}$, constructed from a sliding window of $m$ consecutive days of energy data. 
The models are trained via OLS.
The objective is to predict an energy metric for the following day, such as the \textit{energy\_mean}. 
The OLS algorithm fits the input data to learn a coefficient vector $\boldsymbol{W} \in \mathbb{R}^{m}$ and an intercept scalar, minimizing the mean squared error (MSE) between the predicted and actual target.

\textbf{Linear Classifier.} 
In our experiments, we use a multinomial linear regression model for CIFAR-10 classification. 
Specifically, we utilize a linear classifier head corresponding to the final fully-connected layer of the ResNet-20 architecture. 
This head directly maps the fixed 64-dimensional feature vectors, obtained from the pre-trained feature extractor, to 10 class logits. 
Under this standard pre-training, the model achieves an accuracy over 90\% on the CIFAR-10 validation set. 
Since the classifier relies on a single linear layer mapping to 10 output classes, the Jacobian matrix of its outputs with respect to the input representations has a rank of at most 10. 
This restricted rank bounds the task-relevant subspace.

\textbf{Nonlinear MLP Classifier.} 
To demonstrate that our approach generalizes to complex nonlinear settings, we additionally employ Multi-Layer Perceptron (MLP) classifiers. 
For the CIFAR-10 dataset, the MLP processes 64-dimensional ResNet features through two hidden layers with widths of 10 and 32, interleaved with nonlinear activations (ReLU, GELU, or Tanh). 
To evaluate our method across different input dimensions, we also apply a ReLU-based MLP with an identical architecture to MNIST classification, taking an $m$-dimensional latent vector ($m =32$) from a jointly fine-tuned VAE encoder as input.

\subsubsection{Feature Extraction}
For the image classification tasks, we employ two distinct models for feature extraction: a standard Variational Autoencoder (VAE) and a ResNet-20 architecture.

The VAE is applied to the MNIST dataset to facilitate experiments across various latent dimensions, enabling us to evaluate the impact of dimensionality on LDP utility. 
Specifically, we investigate latent representation vectors of size $m=32$.

Conversely, we utilize ResNet-20 for the CIFAR-10 and CIFAR-10-C datasets. 
Unlike the simple grayscale digits in MNIST, CIFAR-10 and CIFAR-10-C consist of complex color images, necessitating a feature extractor capable of processing multi-channel inputs. 
ResNet-20 was chosen for its optimal balance of efficiency and performance; it is a relatively lightweight model that minimizes computational overhead while achieving approximately 90\% accuracy on CIFAR-10. 
To construct the base representations for our evaluations, we utilize the pre-trained ResNet-20 to extract a fixed 64-dimensional feature vector, which serves as the input to the downstream linear classifier.

\textbf{VAE.} For the MNIST dataset, we pre-train a standard Variational Autoencoder (VAE) composed of multi-layer perceptrons (MLPs) using the public dataset. 
The encoder network takes the flattened $28 \times 28$ grayscale images as input and processes them through two shared hidden layers, each containing 512 units with ReLU activations. 
This is followed by two parallel linear layers that output the mean and log-variance of the $m$-dimensional latent space. The decoder mirrors this architecture, mapping the latent vector back to the original image space through two 512-unit hidden layers, culminating in a sigmoid output layer. 
The VAE is trained to minimize the standard objective function, combining binary cross-entropy for reconstruction loss and Kullback-Leibler (KL) divergence. 
We utilize the Adam optimizer with a learning rate of $10^{-3}$ and a batch size of 512. To prevent overfitting, training incorporates early stopping with a patience of 10 epochs, monitoring the validation loss on the 20\% split of the public training data.

Following this unsupervised pre-training, the encoder is jointly fine-tuned end-to-end with the downstream classifiers using cross-entropy loss. 
This joint optimization ensures that the encoder generates task-specific representations optimized for classification accuracy rather than solely for image reconstruction. 
Finally, for the downstream LDP evaluations, we deterministically extract the mean vector directly from this fine-tuned encoder to serve as the base representation vector, bypassing the reparameterization trick to ensure stable evaluation.

\textbf{ResNet-20.} For the CIFAR-10 and CIFAR-10-C datasets, we implement a standard ResNet-20 architecture as the feature extractor. 
The network processes the 3-channel color images through an initial convolutional layer (16 channels), followed by three sequential stages of residual blocks. 
Each stage consists of three basic blocks, progressively increasing the feature channels from 16 to 32 and finally to 64, with spatial downsampling applied via strided convolutions. 
Network weights are initialized using Kaiming normal initialization. 
To extract the representation vector $\boldsymbol{z}$, the final feature map is processed through an adaptive average pooling layer and flattened, yielding a 64-dimensional deterministic feature vector.

Similar to the pre-trained classifiers, the feature extractors are pre-trained exclusively on public datasets, specifically the training sets of MNIST and CIFAR-10. 
During evaluation, these extractors process inputs from private datasets (the test sets of MNIST, CIFAR-10, and CIFAR-10-C) to produce an $m$-dimensional vector $\boldsymbol{z}$ as output.

\clearpage

\subsubsection{Main Results: Comparison with the Task-Aware Approach}
\label{apdx:task-aware}
We evaluate the performance of our PA-integrated mechanisms against the Task-Aware Approach proposed by Cheng et al.~\citep{cheng2022task} for MNIST classification.
Detailed methodological descriptions are available in the original work, and its implementation can be found in the official GitHub repository: \url{https://github.com/chengjiangnan/task_aware_privacy}.
We use their best performance settings.

Mechanisms with fully integrated PA consistently outperform the Task-Aware approach across all privacy budgets. 
Notably, the Task-Aware approach performs at a level similar to or below Laplace+PA while a low-dimensional latent space of $m=3$ was selected to achieve its best performance.
Given that the scale of the injected noise grows at a rate of $\mathcal{O}(m)$ with increasing dimensionality, this setting provided an advantageous condition for the Task-Aware approach.
The performance gap is substantial when comparing the Task-Aware approach to the PrivUnit2+PA and PrivUnitG+PA mechanisms. 
For instance, at $\epsilon=7.5$, Task-Aware achieves an accuracy of only 0.2857, whereas PrivUnit2+PA and PrivUnitG+PA reach 0.7252 and 0.6578, respectively, representing a substantial difference of over 0.3.
CW+PA w/o bounding underperforms relative to the Task-Aware approach at all $\epsilon$ values. 
This is primarily because CW is optimized for the DMSE objective.

\begin{figure}[h]
  \centering
  \includegraphics[width=0.8\textwidth]{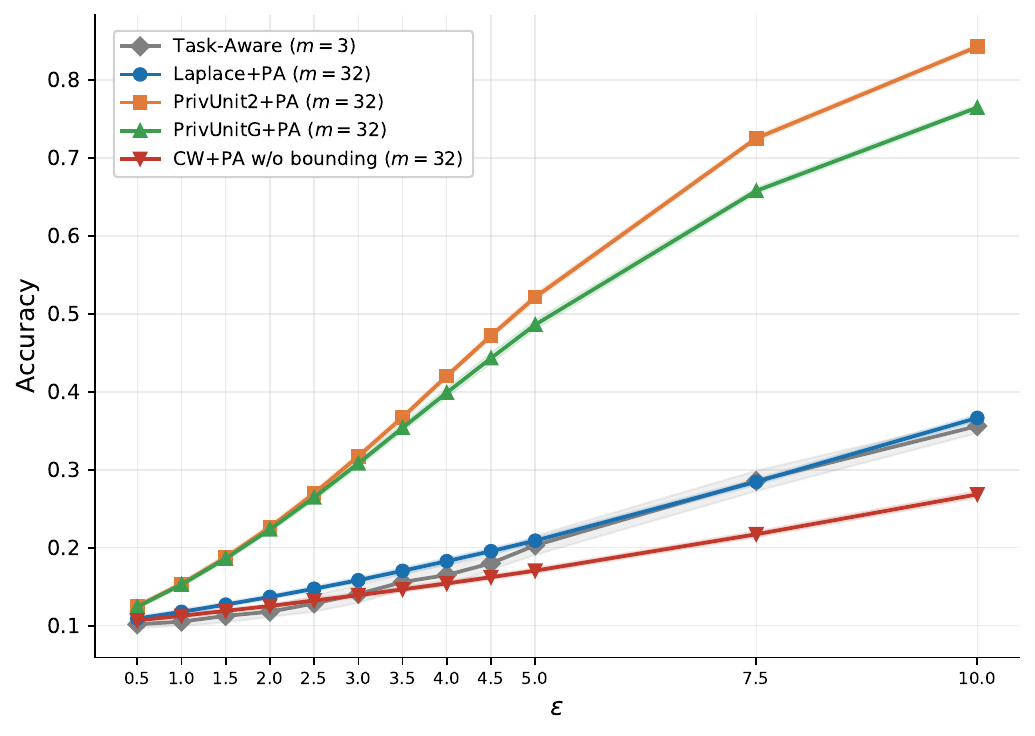} 
  \caption{Test accuracy of five mechanisms on MNIST under $\epsilon$-LDP, with shaded regions indicating $\pm 1$ standard deviation. PrivUnit2+PA and PrivUnitG+PA substantially outperform the baselines across all privacy budgets, while Laplace+PA performs comparably to Task-Aware despite using a larger projection dimension ($m=32$ vs. $m=3$).}
\end{figure}

\begin{table}[h]
\caption{Main results for \textbf{Nonlinear Classification} on \textbf{MNIST} under $\epsilon$-LDP mechanisms. We evaluate utility by measuring the classification accuracy using ground-truth labels. Higher accuracy indicates higher utility.}
\centering
\resizebox{\textwidth}{!}{
\begin{tabular}{lc cccccccccccc}
\toprule
& \multicolumn{12}{c}{Privacy Budget ($\epsilon$)} \\
 \cmidrule(l){3-14}
 \textbf{Mechanism} & $m$ & 0.5 & 1.0 & 1.5 & 2.0 & 2.5 & 3.0 & 3.5 & 4.0 & 4.5 & 5.0 & 7.5 & 10.0 \\
 \midrule
 \makecell[l]{Task-Aware}  
 & 3 & \makecell{0.1018\\(0.0049)} & \makecell{0.1052\\(0.0053)} & \makecell{0.1126\\(0.0083)} & \makecell{0.1180\\(0.0067)} & \makecell{0.1285\\(0.0105)} & \makecell{0.1407\\(0.0114)} & \makecell{0.1560\\(0.0101)} & \makecell{0.1650\\(0.0135)} & \makecell{0.1801\\(0.0084)} & \makecell{0.2031\\(0.0123)} & \makecell{0.2857\\(0.0130)} & \makecell{0.3561\\(0.0091)} \\
 \midrule
 \makecell[l]{Laplace+PA} 
 & 32 & \makecell{0.1092 \\ (0.0030)} & \makecell{0.1178 \\ (0.0032)} & \makecell{0.1271 \\ (0.0029)} & \makecell{0.1369 \\ (0.0029)} & \makecell{0.1474 \\ (0.0032)} & \makecell{0.1584 \\ (0.0035)} & \makecell{0.1704 \\ (0.0037)} & \makecell{0.1828 \\ (0.0040)} & \makecell{0.1956 \\ (0.0039)} & \makecell{0.2091 \\ (0.0037)} & \makecell{0.2846 \\ (0.0037)} & \makecell{0.3666 \\ (0.0041)} \\
 \midrule
 \makecell[l]{PrivUnit2+PA} 
 & 32 & \makecell{0.1250 \\ (0.0025)} & \makecell{0.1536 \\ (0.0032)} & \makecell{0.1875 \\ (0.0038)} & \makecell{0.2263 \\ (0.0038)} & \makecell{0.2696 \\ (0.0040)} & \makecell{0.3174 \\ (0.0036)} & \makecell{0.3674 \\ (0.0039)} & \makecell{0.4203 \\ (0.0038)} & \makecell{0.4720 \\ (0.0041)} & \makecell{0.5214 \\ (0.0043)} & \makecell{0.7252 \\ (0.0036)} & \makecell{0.8427 \\ (0.0032)} \\
 \midrule
 \makecell[l]{PrivUnitG+PA} 
 & 32 & \makecell{0.1242 \\ (0.0027)} & \makecell{0.1528 \\ (0.0025)} & \makecell{0.1862 \\ (0.0035)} & \makecell{0.2236 \\ (0.0040)} & \makecell{0.2646 \\ (0.0044)} & \makecell{0.3083 \\ (0.0047)} & \makecell{0.3538 \\ (0.0051)} & \makecell{0.3989 \\ (0.0059)} & \makecell{0.4432 \\ (0.0066)} & \makecell{0.4861 \\ (0.0057)} & \makecell{0.6578 \\ (0.0042)} & \makecell{0.7647 \\ (0.0037)} \\
 \midrule
 \makecell[l]{CW+PA w/o bounding} 
 & 32 & \makecell{0.1069 \\ (0.0027)} & \makecell{0.1127 \\ (0.0027)} & \makecell{0.1191 \\ (0.0025)} & \makecell{0.1253 \\ (0.0027)} & \makecell{0.1323 \\ (0.0028)} & \makecell{0.1392 \\ (0.0029)} & \makecell{0.1466 \\ (0.0031)} & \makecell{0.1542 \\ (0.0032)} & \makecell{0.1623 \\ (0.0032)} & \makecell{0.1706 \\ (0.0033)} & \makecell{0.2173 \\ (0.0038)} & \makecell{0.2685 \\ (0.0043)} \\
 \bottomrule
\end{tabular}
}
\end{table}

\clearpage

\clearpage

\subsubsection{Main Results: Accuracy of Nonlinear MLP Classifiers with ReLU Activations on CIFAR-10-C (Brightness, Fog, Defocus Blur, and Gaussian) Test Datasets}~\label{apdx_corruption}
A central assumption of our approach is that the Jacobian computed on public data captures the task-critical subspaces of the private data. 
To stress-test this assumption, we evaluate PA across four corruption types (Brightness, Fog, Defocus Blur, and Gaussian Noise) and five severity levels on CIFAR-10-C, yielding 20 distinct distribution shift scenarios. 
These corruptions vary not only in severity but also in nature: Brightness and Fog primarily alter global statistics, Defocus Blur degrades spatial frequency, and Gaussian Noise directly corrupts pixel-level information, offering a diverse probe of how different types of public-private mismatch affect the utility of Jacobian-based subspace identification.

A direct comparison of absolute accuracy across severities is confounded by the fact that the no-randomization baseline, which serves as a natural upper bound on achievable utility, itself degrades as corruption severity increases. For instance, under Gaussian Noise at severity 5, the no-randomization accuracy drops to 0.2621, leaving far less room for any mechanism to demonstrate improvement. 
To account for this, we measure the performance of PA in terms of the fraction of recoverable accuracy it restores.
Formally, for each corruption type and severity, we define the recoverable gap as the difference between the no-randomization accuracy and the base mechanism accuracy, and the performance gain as the improvement brought by integrating PA:
$$ \text{Recovery Ratio} = \frac{\text{Mean Acc(Baseline+PA)} - \text{Mean Acc(Baseline)}}{\text{Mean Acc(No Randomization)} - \text{Mean Acc(Baseline)}} $$
A higher recovery ratio indicates that PA recovers a larger fraction of the utility lost to LDP noise, independent of the absolute difficulty imposed by the corruption. 
This metric cleanly isolates PA's contribution from the confounding effect of corruption-induced accuracy degradation.

Tables~\ref{tb:brightness} through \ref{tb:Gaussian} present the accuracy of PrivUnit2+PA and PrivUnitG+PA across all 20 distribution shift scenarios, while Figure~\ref{fig:distribution_shift} summarizes the corresponding recovery ratios at representative privacy budgets $\epsilon \in \{1.0, 3.0, 5.0, 7.5\}$. 

In summary, the results demonstrate that \emph{PA is robust to both the type and severity of practical distribution shifts}.

These results show several consistent patterns.
First, PA consistently enhances utility across all 20 scenarios. 
The recovery ratio remains positive for every combination of corruption type and severity. 
This confirms that the Jacobian computed on clean public data remains a viable proxy for identifying task-critical subspaces even when private data undergoes substantial distributional shift. 
Such findings provide direct empirical support for our central hypothesis.

Second, the recovery ratio scales positively with the privacy budget. 
Across all corruption types and severities, the ratio grows monotonically with $\epsilon$. 
This reflects the fact that PA becomes increasingly effective as the noise scale decreases, a trend observed without exception across all 20 scenarios.

However, interpretations of absolute accuracy necessitate careful consideration. 
The no-randomization accuracy, which serves as the upper bound on recoverable utility, itself degrades as corruption severity increases. 
This is particularly evident under Gaussian Noise, where the no-randomization accuracy drops to 0.2621 at severity 5 and drastically narrows the recoverable gap. 
Consequently, a lower recovery ratio for a specific mechanism, such as Laplace+PA, may not stem from a failure of PA to improve utility. 
Instead, it may result from a disproportionately large gap created by the baseline mechanism's own instability. 
The recovery ratio metric effectively decouples these distinct sources of variation.

Furthermore, the type of corruption modulates the stability of subspace identification. 
Brightness corruption yields a nearly constant recovery ratio across severities, suggesting that luminance shifts leave the downstream model's gradient directions largely intact. 
In contrast, Fog and Defocus Blur show a modest decline in recovery ratio as severity increases. 
This might be due to the progressive degradation of spatial frequency content, which creates a mismatch between public and private feature distributions. 
Gaussian Noise, conversely, exhibits an apparent increase in the recovery ratio at higher severities. 
This is not indicative of enhanced PA effectiveness. 
Rather, it reflects the rapid contraction of the recoverable gap as no-randomization accuracy collapses, which mechanically inflates the ratio even when absolute gains remain stable.

\begin{figure}[h]
  \centering
  \includegraphics[width=1.0\textwidth]{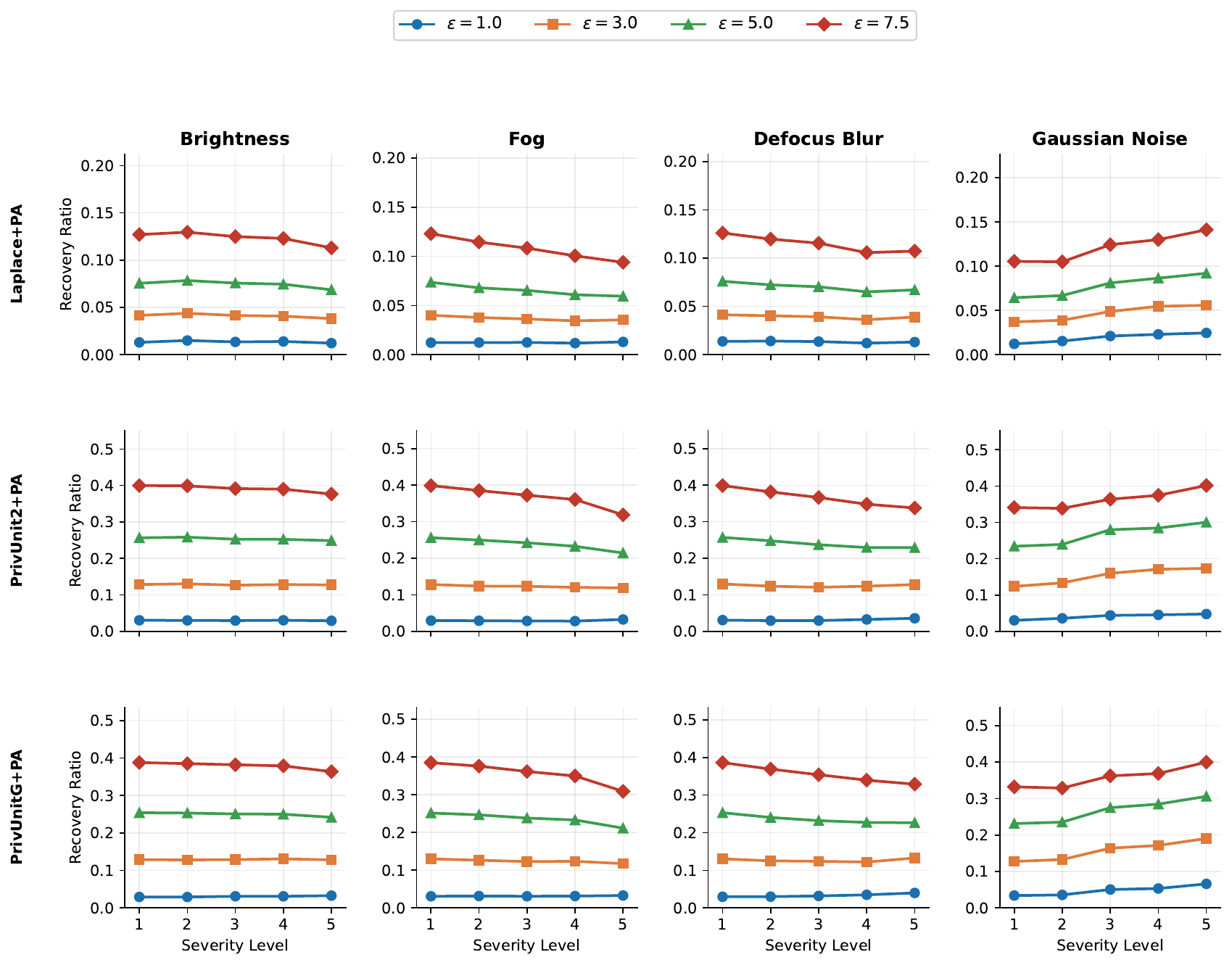} 
  \caption{\textbf{Recovery Ratio of PA across corruption types, severity levels, and privacy budgets.} Recovery Ratio is defined as $(\text{Acc(Baseline+PA)} - \text{Acc(Baseline)}) / (\text{Acc(No Randomization)} - \text{Acc(Baseline)})$. Results are shown for Laplace, PrivUnit2, and PrivUnitG baselines (rows) on four CIFAR-10-C corruption types (columns) at severity levels 1–5, with $\epsilon \in {1.0, 3.0, 5.0, 7.5}$. A higher Recovery Ratio indicates that PA restores a greater fraction of the accuracy degraded by LDP noise, independent of the absolute difficulty imposed by the corruption.}
  \label{fig:distribution_shift}
\end{figure}


\begin{table}[h]
\caption{Main results for \textbf{Nonlinear MLP Classifier with ReLU} $(m=64)$ on \textbf{CIFAR-10} and \textbf{CIFAR-10-C (Brightness)} under $\epsilon$-LDP mechanisms. We evaluate utility by measuring the Accuracy. Higher accuracy indicates higher utility.}
\centering
\resizebox{\textwidth}{!}{
\begin{tabular}{ll cccccccccccc}
\toprule
& \multicolumn{12}{c}{Privacy Budget ($\epsilon$)} \\
 \cmidrule(l){3-14}
 \textbf{Mechanism} & $S$ & 0.5 & 1.0 & 1.5 & 2.0 & 2.5 & 3.0 & 3.5 & 4.0 & 4.5 & 5.0 & 7.5 & 10.0 \\
 \midrule
 \multirow{6}{*} \makecell{No randomization} 
 & 0 & 0.8937 & 0.8937 & 0.8937 & 0.8937 & 0.8937 & 0.8937 & 0.8937 & 0.8937 & 0.8937 & 0.8937 & 0.8937 & 0.8937 \\
 & 1 & 0.8909 & 0.8909 & 0.8909 & 0.8909 & 0.8909 & 0.8909 & 0.8909 & 0.8909 & 0.8909 & 0.8909 & 0.8909 & 0.8909 \\
 & 2 & 0.8856 & 0.8856 & 0.8856 & 0.8856 & 0.8856 & 0.8856 & 0.8856 & 0.8856 & 0.8856 & 0.8856 & 0.8856 & 0.8856 \\
 & 3 & 0.8783 & 0.8783 & 0.8783 & 0.8783 & 0.8783 & 0.8783 & 0.8783 & 0.8783 & 0.8783 & 0.8783 & 0.8783 & 0.8783 \\
 & 4 & 0.8661 & 0.8661 & 0.8661 & 0.8661 & 0.8661 & 0.8661 & 0.8661 & 0.8661 & 0.8661 & 0.8661 & 0.8661 & 0.8661 \\
 & 5 & 0.8365 & 0.8365 & 0.8365 & 0.8365 & 0.8365 & 0.8365 & 0.8365 & 0.8365 & 0.8365 & 0.8365 & 0.8365 & 0.8365 \\
 \midrule
 \multirow{6}{*} \makecell{Laplace ($\ell_1$)}
 & 0 & \makecell{0.1002 \\ (0.0029)} & \makecell{0.1017 \\ (0.0029)} & \makecell{0.1031 \\ (0.0028)} & \makecell{0.1046 \\ (0.0028)} & \makecell{0.1059 \\ (0.0027)} & \makecell{0.1074 \\ (0.0026)} & \makecell{0.1088 \\ (0.0025)} & \makecell{0.1103 \\ (0.0027)} & \makecell{0.1119 \\ (0.0026)} & \makecell{0.1133 \\ (0.0025)} & \makecell{0.1212 \\ (0.0026)} & \makecell{0.1301 \\ (0.0024)} \\
 & 1 & \makecell{0.1002 \\ (0.0029)} & \makecell{0.1017 \\ (0.0029)} & \makecell{0.1031 \\ (0.0028)} & \makecell{0.1046 \\ (0.0028)} & \makecell{0.1060 \\ (0.0027)} & \makecell{0.1073 \\ (0.0026)} & \makecell{0.1088 \\ (0.0026)} & \makecell{0.1103 \\ (0.0027)} & \makecell{0.1119 \\ (0.0026)} & \makecell{0.1134 \\ (0.0025)} & \makecell{0.1212 \\ (0.0026)} & \makecell{0.1301 \\ (0.0025)} \\
 & 2 & \makecell{0.1002 \\ (0.0029)} & \makecell{0.1016 \\ (0.0029)} & \makecell{0.1030 \\ (0.0029)} & \makecell{0.1045 \\ (0.0028)} & \makecell{0.1059 \\ (0.0027)} & \makecell{0.1073 \\ (0.0026)} & \makecell{0.1086 \\ (0.0027)} & \makecell{0.1102 \\ (0.0026)} & \makecell{0.1116 \\ (0.0026)} & \makecell{0.1131 \\ (0.0026)} & \makecell{0.1209 \\ (0.0026)} & \makecell{0.1295 \\ (0.0026)} \\
 & 3 & \makecell{0.1002 \\ (0.0029)} & \makecell{0.1016 \\ (0.0029)} & \makecell{0.1030 \\ (0.0029)} & \makecell{0.1044 \\ (0.0028)} & \makecell{0.1058 \\ (0.0028)} & \makecell{0.1071 \\ (0.0026)} & \makecell{0.1084 \\ (0.0026)} & \makecell{0.1099 \\ (0.0026)} & \makecell{0.1113 \\ (0.0026)} & \makecell{0.1128 \\ (0.0026)} & \makecell{0.1203 \\ (0.0027)} & \makecell{0.1287 \\ (0.0026)} \\
 & 4 & \makecell{0.1001 \\ (0.0029)} & \makecell{0.1015 \\ (0.0029)} & \makecell{0.1028 \\ (0.0028)} & \makecell{0.1042 \\ (0.0028)} & \makecell{0.1056 \\ (0.0028)} & \makecell{0.1068 \\ (0.0027)} & \makecell{0.1081 \\ (0.0026)} & \makecell{0.1096 \\ (0.0026)} & \makecell{0.1109 \\ (0.0026)} & \makecell{0.1124 \\ (0.0025)} & \makecell{0.1196 \\ (0.0026)} & \makecell{0.1276 \\ (0.0024)} \\
 & 5 & \makecell{0.1001 \\ (0.0029)} & \makecell{0.1013 \\ (0.0028)} & \makecell{0.1025 \\ (0.0028)} & \makecell{0.1038 \\ (0.0028)} & \makecell{0.1050 \\ (0.0028)} & \makecell{0.1063 \\ (0.0027)} & \makecell{0.1074 \\ (0.0026)} & \makecell{0.1087 \\ (0.0026)} & \makecell{0.1099 \\ (0.0026)} & \makecell{0.1111 \\ (0.0025)} & \makecell{0.1177 \\ (0.0027)} & \makecell{0.1251 \\ (0.0025)} \\
 
 \midrule
 \multirow{6}{*} \makecell{Laplace+PA} 
 & 0 & \makecell{0.1062 \\ (0.0024)} & \makecell{0.1122 \\ (0.0022)} & \makecell{0.1188 \\ (0.0023)} & \makecell{0.1258 \\ (0.0025)} & \makecell{0.1329 \\ (0.0023)} & \makecell{0.1404 \\ (0.0023)} & \makecell{0.1480 \\ (0.0024)} & \makecell{0.1563 \\ (0.0023)} & \makecell{0.1650 \\ (0.0024)} & \makecell{0.1737 \\ (0.0024)} & \makecell{0.2204 \\ (0.0038)} & \makecell{0.2723 \\ (0.0046)} \\
 & 1 & \makecell{0.1059 \\ (0.0024)} & \makecell{0.1120 \\ (0.0027)} & \makecell{0.1187 \\ (0.0027)} & \makecell{0.1256 \\ (0.0025)} & \makecell{0.1328 \\ (0.0031)} & \makecell{0.1399 \\ (0.0032)} & \makecell{0.1475 \\ (0.0030)} & \makecell{0.1554 \\ (0.0033)} & \makecell{0.1637 \\ (0.0034)} & \makecell{0.1721 \\ (0.0034)} & \makecell{0.2190 \\ (0.0036)} & \makecell{0.2703 \\ (0.0045)} \\
 & 2 & \makecell{0.1071 \\ (0.0023)} & \makecell{0.1134 \\ (0.0023)} & \makecell{0.1201 \\ (0.0022)} & \makecell{0.1270 \\ (0.0024)} & \makecell{0.1340 \\ (0.0025)} & \makecell{0.1414 \\ (0.0030)} & \makecell{0.1489 \\ (0.0033)} & \makecell{0.1568 \\ (0.0034)} & \makecell{0.1651 \\ (0.0035)} & \makecell{0.1737 \\ (0.0035)} & \makecell{0.2200 \\ (0.0044)} & \makecell{0.2704 \\ (0.0054)} \\
 & 3 & \makecell{0.1063 \\ (0.0022)} & \makecell{0.1121 \\ (0.0021)} & \makecell{0.1184 \\ (0.0022)} & \makecell{0.1248 \\ (0.0023)} & \makecell{0.1321 \\ (0.0026)} & \makecell{0.1391 \\ (0.0026)} & \makecell{0.1466 \\ (0.0025)} & \makecell{0.1545 \\ (0.0026)} & \makecell{0.1628 \\ (0.0026)} & \makecell{0.1708 \\ (0.0026)} & \makecell{0.2150 \\ (0.0034)} & \makecell{0.2650 \\ (0.0046)} \\
 & 4 & \makecell{0.1061 \\ (0.0028)} & \makecell{0.1122 \\ (0.0026)} & \makecell{0.1181 \\ (0.0027)} & \makecell{0.1241 \\ (0.0031)} & \makecell{0.1307 \\ (0.0034)} & \makecell{0.1378 \\ (0.0034)} & \makecell{0.1452 \\ (0.0032)} & \makecell{0.1528 \\ (0.0034)} & \makecell{0.1605 \\ (0.0036)} & \makecell{0.1686 \\ (0.0036)} & \makecell{0.2114 \\ (0.0050)} & \makecell{0.2583 \\ (0.0044)} \\
 & 5 & \makecell{0.1048 \\ (0.0033)} & \makecell{0.1103 \\ (0.0033)} & \makecell{0.1161 \\ (0.0033)} & \makecell{0.1219 \\ (0.0036)} & \makecell{0.1280 \\ (0.0034)} & \makecell{0.1342 \\ (0.0034)} & \makecell{0.1405 \\ (0.0035)} & \makecell{0.1467 \\ (0.0037)} & \makecell{0.1538 \\ (0.0039)} & \makecell{0.1609 \\ (0.0037)} & \makecell{0.1989 \\ (0.0045)} & \makecell{0.2416 \\ (0.0050)} \\
 
 \midrule
 \multirow{6}{*} \makecell{PrivUnit2} 
 & 0 & \makecell{0.1107 \\ (0.0032)} & \makecell{0.1221 \\ (0.0032)} & \makecell{0.1348 \\ (0.0032)} & \makecell{0.1481 \\ (0.0031)} & \makecell{0.1620 \\ (0.0028)} & \makecell{0.1762 \\ (0.0029)} & \makecell{0.1913 \\ (0.0038)} & \makecell{0.2063 \\ (0.0044)} & \makecell{0.2229 \\ (0.0044)} & \makecell{0.2383 \\ (0.0040)} & \makecell{0.3185 \\ (0.0042)} & \makecell{0.3978 \\ (0.0035)} \\
 & 1 & \makecell{0.1105 \\ (0.0030)} & \makecell{0.1219 \\ (0.0030)} & \makecell{0.1346 \\ (0.0031)} & \makecell{0.1480 \\ (0.0034)} & \makecell{0.1620 \\ (0.0029)} & \makecell{0.1761 \\ (0.0029)} & \makecell{0.1911 \\ (0.0038)} & \makecell{0.2059 \\ (0.0045)} & \makecell{0.2224 \\ (0.0046)} & \makecell{0.2378 \\ (0.0041)} & \makecell{0.3179 \\ (0.0044)} & \makecell{0.3962 \\ (0.0035)} \\
 & 2 & \makecell{0.1104 \\ (0.0032)} & \makecell{0.1216 \\ (0.0031)} & \makecell{0.1342 \\ (0.0029)} & \makecell{0.1475 \\ (0.0032)} & \makecell{0.1613 \\ (0.0028)} & \makecell{0.1754 \\ (0.0026)} & \makecell{0.1901 \\ (0.0033)} & \makecell{0.2048 \\ (0.0041)} & \makecell{0.2211 \\ (0.0043)} & \makecell{0.2362 \\ (0.0040)} & \makecell{0.3154 \\ (0.0047)} & \makecell{0.3930 \\ (0.0039)} \\
 & 3 & \makecell{0.1103 \\ (0.0034)} & \makecell{0.1214 \\ (0.0033)} & \makecell{0.1336 \\ (0.0031)} & \makecell{0.1468 \\ (0.0034)} & \makecell{0.1603 \\ (0.0029)} & \makecell{0.1740 \\ (0.0028)} & \makecell{0.1884 \\ (0.0036)} & \makecell{0.2031 \\ (0.0039)} & \makecell{0.2188 \\ (0.0040)} & \makecell{0.2338 \\ (0.0041)} & \makecell{0.3117 \\ (0.0047)} & \makecell{0.3873 \\ (0.0038)} \\
 & 4 & \makecell{0.1103 \\ (0.0033)} & \makecell{0.1212 \\ (0.0032)} & \makecell{0.1332 \\ (0.0030)} & \makecell{0.1458 \\ (0.0032)} & \makecell{0.1591 \\ (0.0029)} & \makecell{0.1727 \\ (0.0029)} & \makecell{0.1867 \\ (0.0038)} & \makecell{0.2011 \\ (0.0043)} & \makecell{0.2162 \\ (0.0043)} & \makecell{0.2308 \\ (0.0043)} & \makecell{0.3068 \\ (0.0037)} & \makecell{0.3802 \\ (0.0030)} \\
 & 5 & \makecell{0.1099 \\ (0.0035)} & \makecell{0.1203 \\ (0.0033)} & \makecell{0.1316 \\ (0.0034)} & \makecell{0.1434 \\ (0.0037)} & \makecell{0.1560 \\ (0.0033)} & \makecell{0.1686 \\ (0.0034)} & \makecell{0.1820 \\ (0.0040)} & \makecell{0.1950 \\ (0.0047)} & \makecell{0.2092 \\ (0.0043)} & \makecell{0.2227 \\ (0.0044)} & \makecell{0.2933 \\ (0.0028)} & \makecell{0.3615 \\ (0.0024)} \\
 
 \midrule
 \multirow{6}{*} \makecell{PrivUnit2+PA} 
 & 0 & \makecell{0.1203 \\ (0.0022)} & \makecell{0.1440 \\ (0.0030)} & \makecell{0.1710 \\ (0.0027)} & \makecell{0.2012 \\ (0.0028)} & \makecell{0.2341 \\ (0.0031)} & \makecell{0.2673 \\ (0.0040)} & \makecell{0.3027 \\ (0.0039)} & \makecell{0.3377 \\ (0.0041)} & \makecell{0.3733 \\ (0.0048)} & \makecell{0.4076 \\ (0.0052)} & \makecell{0.5477 \\ (0.0055)} & \makecell{0.6451 \\ (0.0057)} \\
 & 1 & \makecell{0.1216 \\ (0.0032)} & \makecell{0.1452 \\ (0.0030)} & \makecell{0.1722 \\ (0.0037)} & \makecell{0.2017 \\ (0.0034)} & \makecell{0.2339 \\ (0.0036)} & \makecell{0.2679 \\ (0.0039)} & \makecell{0.3029 \\ (0.0034)} & \makecell{0.3368 \\ (0.0039)} & \makecell{0.3724 \\ (0.0039)} & \makecell{0.4053 \\ (0.0043)} & \makecell{0.5470 \\ (0.0033)} & \makecell{0.6432 \\ (0.0052)} \\
 & 2 & \makecell{0.1206 \\ (0.0027)} & \makecell{0.1445 \\ (0.0023)} & \makecell{0.1720 \\ (0.0028)} & \makecell{0.2018 \\ (0.0030)} & \makecell{0.2345 \\ (0.0038)} & \makecell{0.2680 \\ (0.0044)} & \makecell{0.3029 \\ (0.0045)} & \makecell{0.3365 \\ (0.0050)} & \makecell{0.3708 \\ (0.0054)} & \makecell{0.4039 \\ (0.0051)} & \makecell{0.5429 \\ (0.0050)} & \makecell{0.6386 \\ (0.0045)} \\
 & 3 & \makecell{0.1204 \\ (0.0026)} & \makecell{0.1437 \\ (0.0028)} & \makecell{0.1702 \\ (0.0031)} & \makecell{0.1993 \\ (0.0031)} & \makecell{0.2304 \\ (0.0033)} & \makecell{0.2631 \\ (0.0030)} & \makecell{0.2964 \\ (0.0035)} & \makecell{0.3302 \\ (0.0039)} & \makecell{0.3644 \\ (0.0032)} & \makecell{0.3966 \\ (0.0035)} & \makecell{0.5335 \\ (0.0045)} & \makecell{0.6277 \\ (0.0038)} \\
 & 4 & \makecell{0.1205 \\ (0.0027)} & \makecell{0.1438 \\ (0.0029)} & \makecell{0.1699 \\ (0.0029)} & \makecell{0.1987 \\ (0.0026)} & \makecell{0.2299 \\ (0.0021)} & \makecell{0.2617 \\ (0.0026)} & \makecell{0.2949 \\ (0.0024)} & \makecell{0.3280 \\ (0.0028)} & \makecell{0.3603 \\ (0.0028)} & \makecell{0.3911 \\ (0.0038)} & \makecell{0.5249 \\ (0.0050)} & \makecell{0.6165 \\ (0.0055)} \\
 & 5 & \makecell{0.1194 \\ (0.0020)} & \makecell{0.1410 \\ (0.0026)} & \makecell{0.1660 \\ (0.0034)} & \makecell{0.1936 \\ (0.0031)} & \makecell{0.2223 \\ (0.0034)} & \makecell{0.2536 \\ (0.0043)} & \makecell{0.2840 \\ (0.0049)} & \makecell{0.3152 \\ (0.0054)} & \makecell{0.3453 \\ (0.0062)} & \makecell{0.3754 \\ (0.0059)} & \makecell{0.4978 \\ (0.0049)} & \makecell{0.5840 \\ (0.0045)} \\
 
 \midrule
 \multirow{6}{*} \makecell{PrivUnitG} 
 & 0 & \makecell{0.1100 \\ (0.0027)} & \makecell{0.1212 \\ (0.0030)} & \makecell{0.1335 \\ (0.0028)} & \makecell{0.1464 \\ (0.0035)} & \makecell{0.1600 \\ (0.0037)} & \makecell{0.1741 \\ (0.0040)} & \makecell{0.1887 \\ (0.0044)} & \makecell{0.2036 \\ (0.0039)} & \makecell{0.2187 \\ (0.0040)} & \makecell{0.2342 \\ (0.0043)} & \makecell{0.3103 \\ (0.0034)} & \makecell{0.3685 \\ (0.0035)} \\
 & 1 & \makecell{0.1097 \\ (0.0030)} & \makecell{0.1210 \\ (0.0028)} & \makecell{0.1335 \\ (0.0028)} & \makecell{0.1465 \\ (0.0034)} & \makecell{0.1599 \\ (0.0038)} & \makecell{0.1738 \\ (0.0040)} & \makecell{0.1885 \\ (0.0046)} & \makecell{0.2034 \\ (0.0042)} & \makecell{0.2182 \\ (0.0042)} & \makecell{0.2336 \\ (0.0045)} & \makecell{0.3100 \\ (0.0037)} & \makecell{0.3679 \\ (0.0033)} \\
 & 2 & \makecell{0.1096 \\ (0.0029)} & \makecell{0.1207 \\ (0.0026)} & \makecell{0.1332 \\ (0.0028)} & \makecell{0.1460 \\ (0.0036)} & \makecell{0.1594 \\ (0.0037)} & \makecell{0.1731 \\ (0.0038)} & \makecell{0.1877 \\ (0.0044)} & \makecell{0.2024 \\ (0.0039)} & \makecell{0.2172 \\ (0.0043)} & \makecell{0.2324 \\ (0.0042)} & \makecell{0.3076 \\ (0.0041)} & \makecell{0.3653 \\ (0.0036)} \\
 & 3 & \makecell{0.1096 \\ (0.0029)} & \makecell{0.1206 \\ (0.0027)} & \makecell{0.1326 \\ (0.0028)} & \makecell{0.1453 \\ (0.0033)} & \makecell{0.1587 \\ (0.0041)} & \makecell{0.1723 \\ (0.0038)} & \makecell{0.1864 \\ (0.0043)} & \makecell{0.2007 \\ (0.0041)} & \makecell{0.2152 \\ (0.0042)} & \makecell{0.2301 \\ (0.0044)} & \makecell{0.3037 \\ (0.0041)} & \makecell{0.3605 \\ (0.0032)} \\
 & 4 & \makecell{0.1092 \\ (0.0029)} & \makecell{0.1202 \\ (0.0031)} & \makecell{0.1321 \\ (0.0029)} & \makecell{0.1443 \\ (0.0032)} & \makecell{0.1577 \\ (0.0040)} & \makecell{0.1709 \\ (0.0040)} & \makecell{0.1848 \\ (0.0042)} & \makecell{0.1986 \\ (0.0041)} & \makecell{0.2127 \\ (0.0040)} & \makecell{0.2276 \\ (0.0041)} & \makecell{0.2986 \\ (0.0039)} & \makecell{0.3539 \\ (0.0035)} \\
 & 5 & \makecell{0.1083 \\ (0.0030)} & \makecell{0.1190 \\ (0.0031)} & \makecell{0.1299 \\ (0.0031)} & \makecell{0.1415 \\ (0.0032)} & \makecell{0.1541 \\ (0.0041)} & \makecell{0.1665 \\ (0.0040)} & \makecell{0.1798 \\ (0.0042)} & \makecell{0.1931 \\ (0.0038)} & \makecell{0.2061 \\ (0.0038)} & \makecell{0.2197 \\ (0.0040)} & \makecell{0.2858 \\ (0.0039)} & \makecell{0.3375 \\ (0.0034)} \\
 
 \midrule
 \multirow{6}{*} \makecell{PrivUnitG+PA} 
 & 0 & \makecell{0.1213 \\ (0.0041)} & \makecell{0.1456 \\ (0.0049)} & \makecell{0.1726 \\ (0.0050)} & \makecell{0.2020 \\ (0.0047)} & \makecell{0.2349 \\ (0.0044)} & \makecell{0.2684 \\ (0.0043)} & \makecell{0.3029 \\ (0.0041)} & \makecell{0.3367 \\ (0.0043)} & \makecell{0.3700 \\ (0.0044)} & \makecell{0.4016 \\ (0.0046)} & \makecell{0.5364 \\ (0.0044)} & \makecell{0.6254 \\ (0.0045)} \\
 & 1 & \makecell{0.1199 \\ (0.0030)} & \makecell{0.1435 \\ (0.0030)} & \makecell{0.1706 \\ (0.0033)} & \makecell{0.2000 \\ (0.0029)} & \makecell{0.2326 \\ (0.0030)} & \makecell{0.2662 \\ (0.0031)} & \makecell{0.3002 \\ (0.0034)} & \makecell{0.3350 \\ (0.0034)} & \makecell{0.3685 \\ (0.0036)} & \makecell{0.4005 \\ (0.0036)} & \makecell{0.5352 \\ (0.0035)} & \makecell{0.6234 \\ (0.0039)} \\
 & 2 & \makecell{0.1199 \\ (0.0022)} & \makecell{0.1430 \\ (0.0026)} & \makecell{0.1697 \\ (0.0037)} & \makecell{0.1992 \\ (0.0047)} & \makecell{0.2308 \\ (0.0047)} & \makecell{0.2644 \\ (0.0050)} & \makecell{0.2985 \\ (0.0056)} & \makecell{0.3328 \\ (0.0060)} & \makecell{0.3660 \\ (0.0062)} & \makecell{0.3977 \\ (0.0059)} & \makecell{0.5301 \\ (0.0062)} & \makecell{0.6182 \\ (0.0051)} \\
 & 3 & \makecell{0.1207 \\ (0.0029)} & \makecell{0.1441 \\ (0.0030)} & \makecell{0.1710 \\ (0.0028)} & \makecell{0.1996 \\ (0.0029)} & \makecell{0.2310 \\ (0.0033)} & \makecell{0.2633 \\ (0.0038)} & \makecell{0.2963 \\ (0.0038)} & \makecell{0.3287 \\ (0.0039)} & \makecell{0.3618 \\ (0.0041)} & \makecell{0.3926 \\ (0.0048)} & \makecell{0.5231 \\ (0.0048)} & \makecell{0.6083 \\ (0.0059)} \\
 & 4 & \makecell{0.1204 \\ (0.0029)} & \makecell{0.1433 \\ (0.0035)} & \makecell{0.1697 \\ (0.0037)} & \makecell{0.1983 \\ (0.0034)} & \makecell{0.2298 \\ (0.0033)} & \makecell{0.2619 \\ (0.0032)} & \makecell{0.2942 \\ (0.0035)} & \makecell{0.3257 \\ (0.0033)} & \makecell{0.3576 \\ (0.0040)} & \makecell{0.3872 \\ (0.0041)} & \makecell{0.5136 \\ (0.0040)} & \makecell{0.5976 \\ (0.0045)} \\
 & 5 & \makecell{0.1201 \\ (0.0038)} & \makecell{0.1425 \\ (0.0038)} & \makecell{0.1672 \\ (0.0035)} & \makecell{0.1936 \\ (0.0043)} & \makecell{0.2227 \\ (0.0043)} & \makecell{0.2524 \\ (0.0041)} & \makecell{0.2822 \\ (0.0052)} & \makecell{0.3121 \\ (0.0056)} & \makecell{0.3414 \\ (0.0061)} & \makecell{0.3688 \\ (0.0057)} & \makecell{0.4860 \\ (0.0045)} & \makecell{0.5648 \\ (0.0048)} \\
 
 \midrule
 \multirow{6}{*} \makecell{CW+PA w/o bounding} 
 & 0 & \makecell{0.1039 \\ (0.0024)} & \makecell{0.1077 \\ (0.0025)} & \makecell{0.1120 \\ (0.0026)} & \makecell{0.1161 \\ (0.0026)} & \makecell{0.1205 \\ (0.0026)} & \makecell{0.1251 \\ (0.0027)} & \makecell{0.1298 \\ (0.0026)} & \makecell{0.1346 \\ (0.0025)} & \makecell{0.1397 \\ (0.0026)} & \makecell{0.1446 \\ (0.0028)} & \makecell{0.1716 \\ (0.0027)} & \makecell{0.2008 \\ (0.0026)} \\
 & 1 & \makecell{0.1038 \\ (0.0030)} & \makecell{0.1075 \\ (0.0032)} & \makecell{0.1115 \\ (0.0030)} & \makecell{0.1156 \\ (0.0028)} & \makecell{0.1200 \\ (0.0029)} & \makecell{0.1245 \\ (0.0027)} & \makecell{0.1291 \\ (0.0029)} & \makecell{0.1338 \\ (0.0029)} & \makecell{0.1386 \\ (0.0030)} & \makecell{0.1434 \\ (0.0029)} & \makecell{0.1698 \\ (0.0038)} & \makecell{0.1999 \\ (0.0041)} \\
 & 2 & \makecell{0.1050 \\ (0.0022)} & \makecell{0.1088 \\ (0.0022)} & \makecell{0.1128 \\ (0.0022)} & \makecell{0.1174 \\ (0.0023)} & \makecell{0.1215 \\ (0.0024)} & \makecell{0.1261 \\ (0.0025)} & \makecell{0.1305 \\ (0.0031)} & \makecell{0.1353 \\ (0.0031)} & \makecell{0.1399 \\ (0.0032)} & \makecell{0.1449 \\ (0.0034)} & \makecell{0.1715 \\ (0.0035)} & \makecell{0.2003 \\ (0.0041)} \\
 & 3 & \makecell{0.1041 \\ (0.0026)} & \makecell{0.1076 \\ (0.0025)} & \makecell{0.1114 \\ (0.0026)} & \makecell{0.1155 \\ (0.0027)} & \makecell{0.1196 \\ (0.0028)} & \makecell{0.1239 \\ (0.0030)} & \makecell{0.1285 \\ (0.0029)} & \makecell{0.1329 \\ (0.0031)} & \makecell{0.1380 \\ (0.0029)} & \makecell{0.1429 \\ (0.0032)} & \makecell{0.1686 \\ (0.0033)} & \makecell{0.1961 \\ (0.0035)} \\
 & 4 & \makecell{0.1046 \\ (0.0032)} & \makecell{0.1082 \\ (0.0032)} & \makecell{0.1119 \\ (0.0031)} & \makecell{0.1159 \\ (0.0030)} & \makecell{0.1197 \\ (0.0031)} & \makecell{0.1238 \\ (0.0034)} & \makecell{0.1281 \\ (0.0033)} & \makecell{0.1325 \\ (0.0032)} & \makecell{0.1370 \\ (0.0032)} & \makecell{0.1416 \\ (0.0033)} & \makecell{0.1662 \\ (0.0040)} & \makecell{0.1934 \\ (0.0050)} \\
 & 5 & \makecell{0.1025 \\ (0.0032)} & \makecell{0.1058 \\ (0.0030)} & \makecell{0.1092 \\ (0.0031)} & \makecell{0.1128 \\ (0.0030)} & \makecell{0.1164 \\ (0.0033)} & \makecell{0.1201 \\ (0.0036)} & \makecell{0.1241 \\ (0.0039)} & \makecell{0.1282 \\ (0.0037)} & \makecell{0.1323 \\ (0.0038)} & \makecell{0.1366 \\ (0.0037)} & \makecell{0.1581 \\ (0.0042)} & \makecell{0.1828 \\ (0.0040)} \\
 \bottomrule
\end{tabular}
}\label{tb:brightness}
\end{table}


\begin{table}[h]
\caption{Main results for \textbf{Nonlinear MLP Classifier with ReLU} $(m=64)$ on \textbf{CIFAR-10} and \textbf{CIFAR-10-C (Fog)} under $\epsilon$-LDP mechanisms. We evaluate utility by measuring the Accuracy. Higher accuracy indicates higher utility.}
\centering
\resizebox{\textwidth}{!}{
\begin{tabular}{ll cccccccccccc}
\toprule
& \multicolumn{12}{c}{Privacy Budget ($\epsilon$)} \\
 \cmidrule(l){3-14}
 \textbf{Mechanism} & $S$ & 0.5 & 1.0 & 1.5 & 2.0 & 2.5 & 3.0 & 3.5 & 4.0 & 4.5 & 5.0 & 7.5 & 10.0 \\
 \midrule
 \multirow{6}{*} \makecell {No randomization} 
 & 0 & 0.8937 & 0.8937 & 0.8937 & 0.8937 & 0.8937 & 0.8937 & 0.8937 & 0.8937 & 0.8937 & 0.8937 & 0.8937 & 0.8937 \\
 & 1 & 0.8924 & 0.8924 & 0.8924 & 0.8924 & 0.8924 & 0.8924 & 0.8924 & 0.8924 & 0.8924 & 0.8924 & 0.8924 & 0.8924 \\
 & 2 & 0.8701 & 0.8701 & 0.8701 & 0.8701 & 0.8701 & 0.8701 & 0.8701 & 0.8701 & 0.8701 & 0.8701 & 0.8701 & 0.8701 \\
 & 3 & 0.8460 & 0.8460 & 0.8460 & 0.8460 & 0.8460 & 0.8460 & 0.8460 & 0.8460 & 0.8460 & 0.8460 & 0.8460 & 0.8460 \\
 & 4 & 0.7978 & 0.7978 & 0.7978 & 0.7978 & 0.7978 & 0.7978 & 0.7978 & 0.7978 & 0.7978 & 0.7978 & 0.7978 & 0.7978 \\
 & 5 & 0.6138 & 0.6138 & 0.6138 & 0.6138 & 0.6138 & 0.6138 & 0.6138 & 0.6138 & 0.6138 & 0.6138 & 0.6138 & 0.6138 \\
 
 \midrule
 \multirow{6}{*} \makecell{Laplace ($\ell_1$)}
 & 0 & \makecell{0.1002 \\ (0.0029)} & \makecell{0.1017 \\ (0.0029)} & \makecell{0.1031 \\ (0.0028)} & \makecell{0.1046 \\ (0.0028)} & \makecell{0.1059 \\ (0.0027)} & \makecell{0.1074 \\ (0.0026)} & \makecell{0.1088 \\ (0.0025)} & \makecell{0.1103 \\ (0.0027)} & \makecell{0.1119 \\ (0.0026)} & \makecell{0.1133 \\ (0.0025)} & \makecell{0.1212 \\ (0.0026)} & \makecell{0.1301 \\ (0.0024)} \\
 & 1 & \makecell{0.1002 \\ (0.0028)} & \makecell{0.1016 \\ (0.0029)} & \makecell{0.1030 \\ (0.0028)} & \makecell{0.1044 \\ (0.0028)} & \makecell{0.1058 \\ (0.0027)} & \makecell{0.1072 \\ (0.0026)} & \makecell{0.1085 \\ (0.0026)} & \makecell{0.1100 \\ (0.0025)} & \makecell{0.1115 \\ (0.0027)} & \makecell{0.1131 \\ (0.0026)} & \makecell{0.1207 \\ (0.0027)} & \makecell{0.1291 \\ (0.0026)} \\
 & 2 & \makecell{0.1001 \\ (0.0029)} & \makecell{0.1014 \\ (0.0029)} & \makecell{0.1026 \\ (0.0029)} & \makecell{0.1038 \\ (0.0027)} & \makecell{0.1052 \\ (0.0028)} & \makecell{0.1064 \\ (0.0027)} & \makecell{0.1076 \\ (0.0027)} & \makecell{0.1091 \\ (0.0026)} & \makecell{0.1105 \\ (0.0027)} & \makecell{0.1118 \\ (0.0026)} & \makecell{0.1187 \\ (0.0026)} & \makecell{0.1263 \\ (0.0026)} \\
 & 3 & \makecell{0.1000 \\ (0.0029)} & \makecell{0.1011 \\ (0.0029)} & \makecell{0.1023 \\ (0.0029)} & \makecell{0.1034 \\ (0.0028)} & \makecell{0.1046 \\ (0.0028)} & \makecell{0.1058 \\ (0.0028)} & \makecell{0.1069 \\ (0.0027)} & \makecell{0.1082 \\ (0.0025)} & \makecell{0.1095 \\ (0.0026)} & \makecell{0.1108 \\ (0.0025)} & \makecell{0.1171 \\ (0.0027)} & \makecell{0.1240 \\ (0.0027)} \\
 & 4 & \makecell{0.0999 \\ (0.0029)} & \makecell{0.1009 \\ (0.0029)} & \makecell{0.1020 \\ (0.0029)} & \makecell{0.1029 \\ (0.0028)} & \makecell{0.1039 \\ (0.0026)} & \makecell{0.1049 \\ (0.0027)} & \makecell{0.1060 \\ (0.0027)} & \makecell{0.1072 \\ (0.0027)} & \makecell{0.1082 \\ (0.0026)} & \makecell{0.1094 \\ (0.0027)} & \makecell{0.1149 \\ (0.0027)} & \makecell{0.1210 \\ (0.0027)} \\
 & 5 & \makecell{0.0996 \\ (0.0029)} & \makecell{0.1003 \\ (0.0029)} & \makecell{0.1010 \\ (0.0029)} & \makecell{0.1017 \\ (0.0028)} & \makecell{0.1024 \\ (0.0027)} & \makecell{0.1033 \\ (0.0027)} & \makecell{0.1041 \\ (0.0026)} & \makecell{0.1047 \\ (0.0026)} & \makecell{0.1054 \\ (0.0026)} & \makecell{0.1063 \\ (0.0026)} & \makecell{0.1105 \\ (0.0028)} & \makecell{0.1146 \\ (0.0028)} \\
 
 \midrule
 \multirow{6}{*} \makecell{Laplace+PA} 
 & 0 & \makecell{0.1062 \\ (0.0024)} & \makecell{0.1122 \\ (0.0022)} & \makecell{0.1188 \\ (0.0023)} & \makecell{0.1258 \\ (0.0025)} & \makecell{0.1329 \\ (0.0023)} & \makecell{0.1404 \\ (0.0023)} & \makecell{0.1480 \\ (0.0024)} & \makecell{0.1563 \\ (0.0023)} & \makecell{0.1650 \\ (0.0024)} & \makecell{0.1737 \\ (0.0024)} & \makecell{0.2204 \\ (0.0038)} & \makecell{0.2723 \\ (0.0046)} \\
 & 1 & \makecell{0.1052 \\ (0.0021)} & \makecell{0.1113 \\ (0.0024)} & \makecell{0.1176 \\ (0.0024)} & \makecell{0.1246 \\ (0.0025)} & \makecell{0.1313 \\ (0.0028)} & \makecell{0.1387 \\ (0.0028)} & \makecell{0.1462 \\ (0.0025)} & \makecell{0.1541 \\ (0.0023)} & \makecell{0.1622 \\ (0.0023)} & \makecell{0.1705 \\ (0.0025)} & \makecell{0.2157 \\ (0.0033)} & \makecell{0.2654 \\ (0.0035)} \\
 & 2 & \makecell{0.1053 \\ (0.0023)} & \makecell{0.1109 \\ (0.0019)} & \makecell{0.1168 \\ (0.0021)} & \makecell{0.1228 \\ (0.0020)} & \makecell{0.1291 \\ (0.0025)} & \makecell{0.1353 \\ (0.0026)} & \makecell{0.1424 \\ (0.0029)} & \makecell{0.1494 \\ (0.0033)} & \makecell{0.1561 \\ (0.0035)} & \makecell{0.1634 \\ (0.0035)} & \makecell{0.2048 \\ (0.0040)} & \makecell{0.2497 \\ (0.0053)} \\
 & 3 & \makecell{0.1055 \\ (0.0033)} & \makecell{0.1104 \\ (0.0032)} & \makecell{0.1157 \\ (0.0036)} & \makecell{0.1213 \\ (0.0037)} & \makecell{0.1268 \\ (0.0039)} & \makecell{0.1327 \\ (0.0041)} & \makecell{0.1387 \\ (0.0040)} & \makecell{0.1454 \\ (0.0038)} & \makecell{0.1519 \\ (0.0040)} & \makecell{0.1589 \\ (0.0042)} & \makecell{0.1961 \\ (0.0044)} & \makecell{0.2377 \\ (0.0047)} \\
 & 4 & \makecell{0.1046 \\ (0.0037)} & \makecell{0.1091 \\ (0.0038)} & \makecell{0.1139 \\ (0.0039)} & \makecell{0.1189 \\ (0.0038)} & \makecell{0.1240 \\ (0.0042)} & \makecell{0.1288 \\ (0.0040)} & \makecell{0.1342 \\ (0.0040)} & \makecell{0.1397 \\ (0.0040)} & \makecell{0.1454 \\ (0.0041)} & \makecell{0.1514 \\ (0.0040)} & \makecell{0.1836 \\ (0.0048)} & \makecell{0.2195 \\ (0.0042)} \\
 & 5 & \makecell{0.1040 \\ (0.0028)} & \makecell{0.1070 \\ (0.0031)} & \makecell{0.1106 \\ (0.0031)} & \makecell{0.1142 \\ (0.0031)} & \makecell{0.1178 \\ (0.0031)} & \makecell{0.1214 \\ (0.0033)} & \makecell{0.1250 \\ (0.0035)} & \makecell{0.1286 \\ (0.0035)} & \makecell{0.1323 \\ (0.0033)} & \makecell{0.1365 \\ (0.0033)} & \makecell{0.1578 \\ (0.0033)} & \makecell{0.1814 \\ (0.0033)} \\
 
 \midrule
 \multirow{6}{*} \makecell{PrivUnit2} 
 & 0 & \makecell{0.1107 \\ (0.0032)} & \makecell{0.1221 \\ (0.0032)} & \makecell{0.1348 \\ (0.0032)} & \makecell{0.1481 \\ (0.0031)} & \makecell{0.1620 \\ (0.0028)} & \makecell{0.1762 \\ (0.0029)} & \makecell{0.1913 \\ (0.0038)} & \makecell{0.2063 \\ (0.0044)} & \makecell{0.2229 \\ (0.0044)} & \makecell{0.2383 \\ (0.0040)} & \makecell{0.3185 \\ (0.0042)} & \makecell{0.3978 \\ (0.0035)} \\
 & 1 & \makecell{0.1102 \\ (0.0031)} & \makecell{0.1216 \\ (0.0031)} & \makecell{0.1341 \\ (0.0033)} & \makecell{0.1474 \\ (0.0035)} & \makecell{0.1610 \\ (0.0031)} & \makecell{0.1751 \\ (0.0029)} & \makecell{0.1898 \\ (0.0035)} & \makecell{0.2047 \\ (0.0042)} & \makecell{0.2212 \\ (0.0042)} & \makecell{0.2364 \\ (0.0039)} & \makecell{0.3155 \\ (0.0042)} & \makecell{0.3937 \\ (0.0036)} \\
 & 2 & \makecell{0.1099 \\ (0.0034)} & \makecell{0.1209 \\ (0.0031)} & \makecell{0.1328 \\ (0.0031)} & \makecell{0.1455 \\ (0.0033)} & \makecell{0.1586 \\ (0.0029)} & \makecell{0.1719 \\ (0.0033)} & \makecell{0.1859 \\ (0.0036)} & \makecell{0.1999 \\ (0.0039)} & \makecell{0.2154 \\ (0.0045)} & \makecell{0.2293 \\ (0.0040)} & \makecell{0.3046 \\ (0.0038)} & \makecell{0.3786 \\ (0.0030)} \\
 & 3 & \makecell{0.1094 \\ (0.0033)} & \makecell{0.1198 \\ (0.0033)} & \makecell{0.1313 \\ (0.0034)} & \makecell{0.1432 \\ (0.0031)} & \makecell{0.1553 \\ (0.0031)} & \makecell{0.1675 \\ (0.0036)} & \makecell{0.1806 \\ (0.0042)} & \makecell{0.1936 \\ (0.0044)} & \makecell{0.2082 \\ (0.0046)} & \makecell{0.2209 \\ (0.0040)} & \makecell{0.2921 \\ (0.0036)} & \makecell{0.3610 \\ (0.0025)} \\
 & 4 & \makecell{0.1091 \\ (0.0036)} & \makecell{0.1187 \\ (0.0035)} & \makecell{0.1291 \\ (0.0036)} & \makecell{0.1399 \\ (0.0034)} & \makecell{0.1508 \\ (0.0034)} & \makecell{0.1618 \\ (0.0037)} & \makecell{0.1736 \\ (0.0042)} & \makecell{0.1853 \\ (0.0045)} & \makecell{0.1983 \\ (0.0050)} & \makecell{0.2106 \\ (0.0046)} & \makecell{0.2741 \\ (0.0044)} & \makecell{0.3366 \\ (0.0034)} \\
 & 5 & \makecell{0.1067 \\ (0.0035)} & \makecell{0.1139 \\ (0.0036)} & \makecell{0.1217 \\ (0.0037)} & \makecell{0.1296 \\ (0.0037)} & \makecell{0.1375 \\ (0.0035)} & \makecell{0.1455 \\ (0.0037)} & \makecell{0.1540 \\ (0.0044)} & \makecell{0.1625 \\ (0.0047)} & \makecell{0.1713 \\ (0.0044)} & \makecell{0.1799 \\ (0.0039)} & \makecell{0.2241 \\ (0.0047)} & \makecell{0.2671 \\ (0.0035)} \\
 
 \midrule
 \multirow{6}{*} \makecell{PrivUnit2+PA} 
 & 0 & \makecell{0.1203 \\ (0.0022)} & \makecell{0.1440 \\ (0.0030)} & \makecell{0.1710 \\ (0.0027)} & \makecell{0.2012 \\ (0.0028)} & \makecell{0.2341 \\ (0.0031)} & \makecell{0.2673 \\ (0.0040)} & \makecell{0.3027 \\ (0.0039)} & \makecell{0.3377 \\ (0.0041)} & \makecell{0.3733 \\ (0.0048)} & \makecell{0.4076 \\ (0.0052)} & \makecell{0.5477 \\ (0.0055)} & \makecell{0.6451 \\ (0.0057)} \\
 & 1 & \makecell{0.1202 \\ (0.0028)} & \makecell{0.1442 \\ (0.0024)} & \makecell{0.1707 \\ (0.0027)} & \makecell{0.2007 \\ (0.0029)} & \makecell{0.2332 \\ (0.0032)} & \makecell{0.2669 \\ (0.0032)} & \makecell{0.3014 \\ (0.0038)} & \makecell{0.3356 \\ (0.0052)} & \makecell{0.3701 \\ (0.0052)} & \makecell{0.4044 \\ (0.0047)} & \makecell{0.5455 \\ (0.0044)} & \makecell{0.6413 \\ (0.0044)} \\
 & 2 & \makecell{0.1206 \\ (0.0032)} & \makecell{0.1426 \\ (0.0036)} & \makecell{0.1684 \\ (0.0033)} & \makecell{0.1961 \\ (0.0036)} & \makecell{0.2263 \\ (0.0033)} & \makecell{0.2583 \\ (0.0039)} & \makecell{0.2918 \\ (0.0041)} & \makecell{0.3242 \\ (0.0050)} & \makecell{0.3572 \\ (0.0043)} & \makecell{0.3893 \\ (0.0044)} & \makecell{0.5224 \\ (0.0036)} & \makecell{0.6164 \\ (0.0047)} \\
 & 3 & \makecell{0.1185 \\ (0.0024)} & \makecell{0.1404 \\ (0.0029)} & \makecell{0.1651 \\ (0.0030)} & \makecell{0.1921 \\ (0.0039)} & \makecell{0.2204 \\ (0.0041)} & \makecell{0.2511 \\ (0.0046)} & \makecell{0.2820 \\ (0.0041)} & \makecell{0.3119 \\ (0.0047)} & \makecell{0.3427 \\ (0.0053)} & \makecell{0.3723 \\ (0.0054)} & \makecell{0.4984 \\ (0.0052)} & \makecell{0.5885 \\ (0.0044)} \\
 & 4 & \makecell{0.1175 \\ (0.0036)} & \makecell{0.1377 \\ (0.0036)} & \makecell{0.1608 \\ (0.0036)} & \makecell{0.1855 \\ (0.0039)} & \makecell{0.2115 \\ (0.0036)} & \makecell{0.2383 \\ (0.0036)} & \makecell{0.2663 \\ (0.0033)} & \makecell{0.2941 \\ (0.0042)} & \makecell{0.3212 \\ (0.0058)} & \makecell{0.3472 \\ (0.0057)} & \makecell{0.4629 \\ (0.0052)} & \makecell{0.5453 \\ (0.0043)} \\
 & 5 & \makecell{0.1149 \\ (0.0026)} & \makecell{0.1301 \\ (0.0030)} & \makecell{0.1469 \\ (0.0030)} & \makecell{0.1643 \\ (0.0029)} & \makecell{0.1822 \\ (0.0039)} & \makecell{0.2011 \\ (0.0035)} & \makecell{0.2194 \\ (0.0043)} & \makecell{0.2374 \\ (0.0044)} & \makecell{0.2552 \\ (0.0047)} & \makecell{0.2729 \\ (0.0048)} & \makecell{0.3483 \\ (0.0047)} & \makecell{0.4033 \\ (0.0057)} \\
 
 \midrule
 \multirow{6}{*} \makecell{PrivUnitG} 
 & 0 & \makecell{0.1100 \\ (0.0027)} & \makecell{0.1212 \\ (0.0030)} & \makecell{0.1335 \\ (0.0028)} & \makecell{0.1464 \\ (0.0035)} & \makecell{0.1600 \\ (0.0037)} & \makecell{0.1741 \\ (0.0040)} & \makecell{0.1887 \\ (0.0044)} & \makecell{0.2036 \\ (0.0039)} & \makecell{0.2187 \\ (0.0040)} & \makecell{0.2342 \\ (0.0043)} & \makecell{0.3103 \\ (0.0034)} & \makecell{0.3685 \\ (0.0035)} \\
 & 1 & \makecell{0.1095 \\ (0.0028)} & \makecell{0.1207 \\ (0.0028)} & \makecell{0.1331 \\ (0.0030)} & \makecell{0.1460 \\ (0.0033)} & \makecell{0.1591 \\ (0.0038)} & \makecell{0.1728 \\ (0.0039)} & \makecell{0.1875 \\ (0.0044)} & \makecell{0.2019 \\ (0.0042)} & \makecell{0.2170 \\ (0.0043)} & \makecell{0.2326 \\ (0.0041)} & \makecell{0.3077 \\ (0.0035)} & \makecell{0.3660 \\ (0.0037)} \\
 & 2 & \makecell{0.1088 \\ (0.0030)} & \makecell{0.1191 \\ (0.0030)} & \makecell{0.1313 \\ (0.0031)} & \makecell{0.1438 \\ (0.0038)} & \makecell{0.1565 \\ (0.0039)} & \makecell{0.1698 \\ (0.0036)} & \makecell{0.1834 \\ (0.0033)} & \makecell{0.1972 \\ (0.0033)} & \makecell{0.2116 \\ (0.0037)} & \makecell{0.2263 \\ (0.0036)} & \makecell{0.2972 \\ (0.0032)} & \makecell{0.3530 \\ (0.0034)} \\
 & 3 & \makecell{0.1081 \\ (0.0026)} & \makecell{0.1180 \\ (0.0030)} & \makecell{0.1293 \\ (0.0030)} & \makecell{0.1410 \\ (0.0034)} & \makecell{0.1537 \\ (0.0042)} & \makecell{0.1659 \\ (0.0038)} & \makecell{0.1785 \\ (0.0039)} & \makecell{0.1918 \\ (0.0038)} & \makecell{0.2051 \\ (0.0041)} & \makecell{0.2184 \\ (0.0045)} & \makecell{0.2851 \\ (0.0039)} & \makecell{0.3363 \\ (0.0035)} \\
 & 4 & \makecell{0.1078 \\ (0.0029)} & \makecell{0.1170 \\ (0.0031)} & \makecell{0.1274 \\ (0.0030)} & \makecell{0.1379 \\ (0.0035)} & \makecell{0.1488 \\ (0.0036)} & \makecell{0.1600 \\ (0.0038)} & \makecell{0.1718 \\ (0.0038)} & \makecell{0.1835 \\ (0.0038)} & \makecell{0.1955 \\ (0.0039)} & \makecell{0.2075 \\ (0.0041)} & \makecell{0.2672 \\ (0.0039)} & \makecell{0.3137 \\ (0.0040)} \\
 & 5 & \makecell{0.1054 \\ (0.0027)} & \makecell{0.1119 \\ (0.0029)} & \makecell{0.1196 \\ (0.0028)} & \makecell{0.1275 \\ (0.0033)} & \makecell{0.1356 \\ (0.0036)} & \makecell{0.1439 \\ (0.0034)} & \makecell{0.1522 \\ (0.0035)} & \makecell{0.1609 \\ (0.0033)} & \makecell{0.1696 \\ (0.0034)} & \makecell{0.1779 \\ (0.0036)} & \makecell{0.2200 \\ (0.0039)} & \makecell{0.2518 \\ (0.0038)} \\
 
 \midrule
 \multirow{6}{*} \makecell{PrivUnitG+PA} 
 & 0 & \makecell{0.1213 \\ (0.0041)} & \makecell{0.1456 \\ (0.0049)} & \makecell{0.1726 \\ (0.0050)} & \makecell{0.2020 \\ (0.0047)} & \makecell{0.2349 \\ (0.0044)} & \makecell{0.2684 \\ (0.0043)} & \makecell{0.3029 \\ (0.0041)} & \makecell{0.3367 \\ (0.0043)} & \makecell{0.3700 \\ (0.0044)} & \makecell{0.4016 \\ (0.0046)} & \makecell{0.5364 \\ (0.0044)} & \makecell{0.6254 \\ (0.0045)} \\
 & 1 & \makecell{0.1210 \\ (0.0026)} & \makecell{0.1445 \\ (0.0030)} & \makecell{0.1718 \\ (0.0030)} & \makecell{0.2011 \\ (0.0036)} & \makecell{0.2330 \\ (0.0039)} & \makecell{0.2664 \\ (0.0041)} & \makecell{0.3004 \\ (0.0048)} & \makecell{0.3340 \\ (0.0053)} & \makecell{0.3671 \\ (0.0050)} & \makecell{0.3986 \\ (0.0051)} & \makecell{0.5328 \\ (0.0053)} & \makecell{0.6207 \\ (0.0053)} \\
 & 2 & \makecell{0.1205 \\ (0.0021)} & \makecell{0.1428 \\ (0.0023)} & \makecell{0.1690 \\ (0.0028)} & \makecell{0.1971 \\ (0.0035)} & \makecell{0.2269 \\ (0.0036)} & \makecell{0.2583 \\ (0.0033)} & \makecell{0.2909 \\ (0.0036)} & \makecell{0.3234 \\ (0.0039)} & \makecell{0.3549 \\ (0.0041)} & \makecell{0.3851 \\ (0.0040)} & \makecell{0.5125 \\ (0.0043)} & \makecell{0.5968 \\ (0.0053)} \\
 & 3 & \makecell{0.1186 \\ (0.0026)} & \makecell{0.1405 \\ (0.0025)} & \makecell{0.1650 \\ (0.0026)} & \makecell{0.1917 \\ (0.0029)} & \makecell{0.2201 \\ (0.0033)} & \makecell{0.2495 \\ (0.0032)} & \makecell{0.2796 \\ (0.0044)} & \makecell{0.3093 \\ (0.0050)} & \makecell{0.3392 \\ (0.0055)} & \makecell{0.3679 \\ (0.0056)} & \makecell{0.4879 \\ (0.0055)} & \makecell{0.5678 \\ (0.0050)} \\
 & 4 & \makecell{0.1180 \\ (0.0036)} & \makecell{0.1384 \\ (0.0036)} & \makecell{0.1613 \\ (0.0042)} & \makecell{0.1859 \\ (0.0043)} & \makecell{0.2121 \\ (0.0041)} & \makecell{0.2388 \\ (0.0045)} & \makecell{0.2663 \\ (0.0051)} & \makecell{0.2932 \\ (0.0056)} & \makecell{0.3198 \\ (0.0057)} & \makecell{0.3451 \\ (0.0054)} & \makecell{0.4528 \\ (0.0043)} & \makecell{0.5262 \\ (0.0037)} \\
 & 5 & \makecell{0.1136 \\ (0.0035)} & \makecell{0.1284 \\ (0.0034)} & \makecell{0.1448 \\ (0.0033)} & \makecell{0.1621 \\ (0.0038)} & \makecell{0.1807 \\ (0.0044)} & \makecell{0.1991 \\ (0.0038)} & \makecell{0.2178 \\ (0.0038)} & \makecell{0.2353 \\ (0.0041)} & \makecell{0.2535 \\ (0.0041)} & \makecell{0.2701 \\ (0.0041)} & \makecell{0.3417 \\ (0.0049)} & \makecell{0.3914 \\ (0.0045)} \\
 
 \midrule
 \multirow{6}{*} \makecell{CW+PA w/o bounding} 
 & 0 & \makecell{0.1039 \\ (0.0024)} & \makecell{0.1077 \\ (0.0025)} & \makecell{0.1120 \\ (0.0026)} & \makecell{0.1161 \\ (0.0026)} & \makecell{0.1205 \\ (0.0026)} & \makecell{0.1251 \\ (0.0027)} & \makecell{0.1298 \\ (0.0026)} & \makecell{0.1346 \\ (0.0025)} & \makecell{0.1397 \\ (0.0026)} & \makecell{0.1446 \\ (0.0028)} & \makecell{0.1716 \\ (0.0027)} & \makecell{0.2008 \\ (0.0026)} \\
 & 1 & \makecell{0.1025 \\ (0.0029)} & \makecell{0.1064 \\ (0.0029)} & \makecell{0.1104 \\ (0.0029)} & \makecell{0.1145 \\ (0.0027)} & \makecell{0.1187 \\ (0.0028)} & \makecell{0.1232 \\ (0.0025)} & \makecell{0.1278 \\ (0.0026)} & \makecell{0.1325 \\ (0.0025)} & \makecell{0.1372 \\ (0.0028)} & \makecell{0.1420 \\ (0.0029)} & \makecell{0.1684 \\ (0.0034)} & \makecell{0.1973 \\ (0.0044)} \\
 & 2 & \makecell{0.1035 \\ (0.0023)} & \makecell{0.1067 \\ (0.0024)} & \makecell{0.1103 \\ (0.0023)} & \makecell{0.1140 \\ (0.0024)} & \makecell{0.1178 \\ (0.0023)} & \makecell{0.1220 \\ (0.0025)} & \makecell{0.1262 \\ (0.0026)} & \makecell{0.1303 \\ (0.0027)} & \makecell{0.1346 \\ (0.0028)} & \makecell{0.1389 \\ (0.0028)} & \makecell{0.1620 \\ (0.0031)} & \makecell{0.1881 \\ (0.0032)} \\
 & 3 & \makecell{0.1035 \\ (0.0030)} & \makecell{0.1066 \\ (0.0030)} & \makecell{0.1101 \\ (0.0029)} & \makecell{0.1135 \\ (0.0032)} & \makecell{0.1168 \\ (0.0032)} & \makecell{0.1204 \\ (0.0032)} & \makecell{0.1241 \\ (0.0035)} & \makecell{0.1279 \\ (0.0036)} & \makecell{0.1315 \\ (0.0036)} & \makecell{0.1356 \\ (0.0037)} & \makecell{0.1571 \\ (0.0039)} & \makecell{0.1805 \\ (0.0039)} \\
 & 4 & \makecell{0.1032 \\ (0.0033)} & \makecell{0.1059 \\ (0.0035)} & \makecell{0.1087 \\ (0.0036)} & \makecell{0.1117 \\ (0.0035)} & \makecell{0.1147 \\ (0.0033)} & \makecell{0.1178 \\ (0.0032)} & \makecell{0.1210 \\ (0.0032)} & \makecell{0.1244 \\ (0.0034)} & \makecell{0.1277 \\ (0.0036)} & \makecell{0.1309 \\ (0.0037)} & \makecell{0.1493 \\ (0.0035)} & \makecell{0.1699 \\ (0.0037)} \\
 & 5 & \makecell{0.1020 \\ (0.0029)} & \makecell{0.1041 \\ (0.0027)} & \makecell{0.1063 \\ (0.0030)} & \makecell{0.1084 \\ (0.0032)} & \makecell{0.1108 \\ (0.0031)} & \makecell{0.1129 \\ (0.0032)} & \makecell{0.1154 \\ (0.0032)} & \makecell{0.1177 \\ (0.0031)} & \makecell{0.1202 \\ (0.0035)} & \makecell{0.1225 \\ (0.0036)} & \makecell{0.1350 \\ (0.0040)} & \makecell{0.1484 \\ (0.0042)} \\
 \bottomrule
\end{tabular}
}\label{tb:fog}
\end{table}


\begin{table}[h]
\caption{Main results for \textbf{Nonlinear MLP Classifier with ReLU} $(m=64)$ on \textbf{CIFAR-10} and \textbf{CIFAR-10-C (Defocus Blur)} under $\epsilon$-LDP mechanisms. We evaluate utility by measuring the Accuracy. Higher accuracy indicates higher utility.}
\centering
\resizebox{\textwidth}{!}{
\begin{tabular}{ll cccccccccccc}
\toprule
& \multicolumn{12}{c}{Privacy Budget ($\epsilon$)} \\
 \cmidrule(l){3-14}
 \textbf{Mechanism} & $S$ & 0.5 & 1.0 & 1.5 & 2.0 & 2.5 & 3.0 & 3.5 & 4.0 & 4.5 & 5.0 & 7.5 & 10.0 \\
 \midrule
 \multirow{6}{*} \makecell {No randomization} 
 & 0 & 0.8937 & 0.8937 & 0.8937 & 0.8937 & 0.8937 & 0.8937 & 0.8937 & 0.8937 & 0.8937 & 0.8937 & 0.8937 & 0.8937 \\
 & 1 & 0.8905 & 0.8905 & 0.8905 & 0.8905 & 0.8905 & 0.8905 & 0.8905 & 0.8905 & 0.8905 & 0.8905 & 0.8905 & 0.8905 \\
 & 2 & 0.8729 & 0.8729 & 0.8729 & 0.8729 & 0.8729 & 0.8729 & 0.8729 & 0.8729 & 0.8729 & 0.8729 & 0.8729 & 0.8729 \\
 & 3 & 0.8057 & 0.8057 & 0.8057 & 0.8057 & 0.8057 & 0.8057 & 0.8057 & 0.8057 & 0.8057 & 0.8057 & 0.8057 & 0.8057 \\
 & 4 & 0.6922 & 0.6922 & 0.6922 & 0.6922 & 0.6922 & 0.6922 & 0.6922 & 0.6922 & 0.6922 & 0.6922 & 0.6922 & 0.6922 \\
 & 5 & 0.4898 & 0.4898 & 0.4898 & 0.4898 & 0.4898 & 0.4898 & 0.4898 & 0.4898 & 0.4898 & 0.4898 & 0.4898 & 0.4898 \\
 
 \midrule
 \multirow{6}{*} \makecell{Laplace ($\ell_1$)}
 & 0 & \makecell{0.1002 \\ (0.0029)} & \makecell{0.1017 \\ (0.0029)} & \makecell{0.1031 \\ (0.0028)} & \makecell{0.1046 \\ (0.0028)} & \makecell{0.1059 \\ (0.0027)} & \makecell{0.1074 \\ (0.0026)} & \makecell{0.1088 \\ (0.0025)} & \makecell{0.1103 \\ (0.0027)} & \makecell{0.1119 \\ (0.0026)} & \makecell{0.1133 \\ (0.0025)} & \makecell{0.1212 \\ (0.0026)} & \makecell{0.1301 \\ (0.0024)} \\
 & 1 & \makecell{0.1002 \\ (0.0029)} & \makecell{0.1017 \\ (0.0029)} & \makecell{0.1031 \\ (0.0029)} & \makecell{0.1045 \\ (0.0028)} & \makecell{0.1059 \\ (0.0027)} & \makecell{0.1073 \\ (0.0026)} & \makecell{0.1088 \\ (0.0026)} & \makecell{0.1103 \\ (0.0025)} & \makecell{0.1118 \\ (0.0026)} & \makecell{0.1133 \\ (0.0026)} & \makecell{0.1210 \\ (0.0027)} & \makecell{0.1297 \\ (0.0026)} \\
 & 2 & \makecell{0.1002 \\ (0.0029)} & \makecell{0.1015 \\ (0.0029)} & \makecell{0.1028 \\ (0.0030)} & \makecell{0.1042 \\ (0.0029)} & \makecell{0.1055 \\ (0.0027)} & \makecell{0.1068 \\ (0.0026)} & \makecell{0.1081 \\ (0.0026)} & \makecell{0.1095 \\ (0.0026)} & \makecell{0.1110 \\ (0.0026)} & \makecell{0.1124 \\ (0.0027)} & \makecell{0.1196 \\ (0.0027)} & \makecell{0.1276 \\ (0.0025)} \\
 & 3 & \makecell{0.1000 \\ (0.0029)} & \makecell{0.1012 \\ (0.0029)} & \makecell{0.1022 \\ (0.0029)} & \makecell{0.1034 \\ (0.0028)} & \makecell{0.1046 \\ (0.0028)} & \makecell{0.1058 \\ (0.0026)} & \makecell{0.1070 \\ (0.0026)} & \makecell{0.1082 \\ (0.0026)} & \makecell{0.1095 \\ (0.0027)} & \makecell{0.1106 \\ (0.0027)} & \makecell{0.1167 \\ (0.0028)} & \makecell{0.1237 \\ (0.0027)} \\
 & 4 & \makecell{0.0998 \\ (0.0028)} & \makecell{0.1007 \\ (0.0029)} & \makecell{0.1017 \\ (0.0030)} & \makecell{0.1025 \\ (0.0028)} & \makecell{0.1036 \\ (0.0027)} & \makecell{0.1045 \\ (0.0027)} & \makecell{0.1054 \\ (0.0027)} & \makecell{0.1065 \\ (0.0026)} & \makecell{0.1075 \\ (0.0026)} & \makecell{0.1085 \\ (0.0027)} & \makecell{0.1133 \\ (0.0028)} & \makecell{0.1190 \\ (0.0029)} \\
 & 5 & \makecell{0.0995 \\ (0.0029)} & \makecell{0.1001 \\ (0.0029)} & \makecell{0.1007 \\ (0.0030)} & \makecell{0.1013 \\ (0.0029)} & \makecell{0.1018 \\ (0.0029)} & \makecell{0.1025 \\ (0.0029)} & \makecell{0.1032 \\ (0.0028)} & \makecell{0.1039 \\ (0.0028)} & \makecell{0.1046 \\ (0.0026)} & \makecell{0.1054 \\ (0.0027)} & \makecell{0.1087 \\ (0.0028)} & \makecell{0.1123 \\ (0.0029)} \\
 
 \midrule
 \multirow{6}{*} \makecell{Laplace+PA} 
 & 0 & \makecell{0.1062 \\ (0.0024)} & \makecell{0.1122 \\ (0.0022)} & \makecell{0.1188 \\ (0.0023)} & \makecell{0.1258 \\ (0.0025)} & \makecell{0.1329 \\ (0.0023)} & \makecell{0.1404 \\ (0.0023)} & \makecell{0.1480 \\ (0.0024)} & \makecell{0.1563 \\ (0.0023)} & \makecell{0.1650 \\ (0.0024)} & \makecell{0.1737 \\ (0.0024)} & \makecell{0.2204 \\ (0.0038)} & \makecell{0.2723 \\ (0.0046)} \\
 & 1 & \makecell{0.1064 \\ (0.0029)} & \makecell{0.1126 \\ (0.0030)} & \makecell{0.1190 \\ (0.0029)} & \makecell{0.1256 \\ (0.0029)} & \makecell{0.1324 \\ (0.0032)} & \makecell{0.1397 \\ (0.0031)} & \makecell{0.1476 \\ (0.0032)} & \makecell{0.1558 \\ (0.0034)} & \makecell{0.1639 \\ (0.0036)} & \makecell{0.1724 \\ (0.0035)} & \makecell{0.2180 \\ (0.0038)} & \makecell{0.2692 \\ (0.0033)} \\
 & 2 & \makecell{0.1065 \\ (0.0031)} & \makecell{0.1124 \\ (0.0036)} & \makecell{0.1185 \\ (0.0037)} & \makecell{0.1246 \\ (0.0038)} & \makecell{0.1312 \\ (0.0041)} & \makecell{0.1377 \\ (0.0044)} & \makecell{0.1445 \\ (0.0046)} & \makecell{0.1517 \\ (0.0046)} & \makecell{0.1594 \\ (0.0047)} & \makecell{0.1674 \\ (0.0047)} & \makecell{0.2098 \\ (0.0052)} & \makecell{0.2583 \\ (0.0052)} \\
 & 3 & \makecell{0.1059 \\ (0.0030)} & \makecell{0.1108 \\ (0.0031)} & \makecell{0.1160 \\ (0.0030)} & \makecell{0.1215 \\ (0.0031)} & \makecell{0.1273 \\ (0.0030)} & \makecell{0.1333 \\ (0.0029)} & \makecell{0.1397 \\ (0.0029)} & \makecell{0.1460 \\ (0.0033)} & \makecell{0.1527 \\ (0.0034)} & \makecell{0.1595 \\ (0.0032)} & \makecell{0.1963 \\ (0.0046)} & \makecell{0.2358 \\ (0.0054)} \\
 & 4 & \makecell{0.1036 \\ (0.0040)} & \makecell{0.1078 \\ (0.0041)} & \makecell{0.1121 \\ (0.0045)} & \makecell{0.1166 \\ (0.0047)} & \makecell{0.1211 \\ (0.0051)} & \makecell{0.1258 \\ (0.0050)} & \makecell{0.1306 \\ (0.0048)} & \makecell{0.1358 \\ (0.0050)} & \makecell{0.1412 \\ (0.0048)} & \makecell{0.1465 \\ (0.0049)} & \makecell{0.1745 \\ (0.0048)} & \makecell{0.2052 \\ (0.0043)} \\
 & 5 & \makecell{0.1025 \\ (0.0033)} & \makecell{0.1052 \\ (0.0031)} & \makecell{0.1083 \\ (0.0033)} & \makecell{0.1113 \\ (0.0033)} & \makecell{0.1144 \\ (0.0032)} & \makecell{0.1176 \\ (0.0033)} & \makecell{0.1210 \\ (0.0033)} & \makecell{0.1244 \\ (0.0034)} & \makecell{0.1277 \\ (0.0033)} & \makecell{0.1312 \\ (0.0036)} & \makecell{0.1496 \\ (0.0037)} & \makecell{0.1690 \\ (0.0037)} \\
 
 \midrule
 \multirow{6}{*} \makecell{PrivUnit2} 
 & 0 & \makecell{0.1107 \\ (0.0032)} & \makecell{0.1221 \\ (0.0032)} & \makecell{0.1348 \\ (0.0032)} & \makecell{0.1481 \\ (0.0031)} & \makecell{0.1620 \\ (0.0028)} & \makecell{0.1762 \\ (0.0029)} & \makecell{0.1913 \\ (0.0038)} & \makecell{0.2063 \\ (0.0044)} & \makecell{0.2229 \\ (0.0044)} & \makecell{0.2383 \\ (0.0040)} & \makecell{0.3185 \\ (0.0042)} & \makecell{0.3978 \\ (0.0035)} \\
 & 1 & \makecell{0.1105 \\ (0.0031)} & \makecell{0.1220 \\ (0.0030)} & \makecell{0.1347 \\ (0.0031)} & \makecell{0.1479 \\ (0.0034)} & \makecell{0.1617 \\ (0.0028)} & \makecell{0.1759 \\ (0.0029)} & \makecell{0.1910 \\ (0.0035)} & \makecell{0.2059 \\ (0.0040)} & \makecell{0.2224 \\ (0.0041)} & \makecell{0.2375 \\ (0.0036)} & \makecell{0.3176 \\ (0.0037)} & \makecell{0.3966 \\ (0.0039)} \\
 & 2 & \makecell{0.1098 \\ (0.0033)} & \makecell{0.1209 \\ (0.0031)} & \makecell{0.1331 \\ (0.0031)} & \makecell{0.1463 \\ (0.0036)} & \makecell{0.1596 \\ (0.0033)} & \makecell{0.1733 \\ (0.0035)} & \makecell{0.1877 \\ (0.0042)} & \makecell{0.2020 \\ (0.0044)} & \makecell{0.2177 \\ (0.0043)} & \makecell{0.2322 \\ (0.0037)} & \makecell{0.3094 \\ (0.0045)} & \makecell{0.3841 \\ (0.0036)} \\
 & 3 & \makecell{0.1094 \\ (0.0038)} & \makecell{0.1194 \\ (0.0035)} & \makecell{0.1304 \\ (0.0036)} & \makecell{0.1422 \\ (0.0035)} & \makecell{0.1541 \\ (0.0035)} & \makecell{0.1662 \\ (0.0039)} & \makecell{0.1789 \\ (0.0044)} & \makecell{0.1917 \\ (0.0045)} & \makecell{0.2052 \\ (0.0044)} & \makecell{0.2177 \\ (0.0041)} & \makecell{0.2849 \\ (0.0043)} & \makecell{0.3510 \\ (0.0038)} \\
 & 4 & \makecell{0.1082 \\ (0.0034)} & \makecell{0.1166 \\ (0.0031)} & \makecell{0.1259 \\ (0.0033)} & \makecell{0.1357 \\ (0.0031)} & \makecell{0.1456 \\ (0.0031)} & \makecell{0.1557 \\ (0.0032)} & \makecell{0.1665 \\ (0.0038)} & \makecell{0.1769 \\ (0.0039)} & \makecell{0.1881 \\ (0.0044)} & \makecell{0.1984 \\ (0.0038)} & \makecell{0.2528 \\ (0.0044)} & \makecell{0.3062 \\ (0.0042)} \\
 & 5 & \makecell{0.1057 \\ (0.0038)} & \makecell{0.1118 \\ (0.0039)} & \makecell{0.1183 \\ (0.0038)} & \makecell{0.1252 \\ (0.0036)} & \makecell{0.1325 \\ (0.0037)} & \makecell{0.1392 \\ (0.0038)} & \makecell{0.1463 \\ (0.0035)} & \makecell{0.1532 \\ (0.0038)} & \makecell{0.1606 \\ (0.0037)} & \makecell{0.1678 \\ (0.0034)} & \makecell{0.2023 \\ (0.0040)} & \makecell{0.2362 \\ (0.0034)} \\
 
 \midrule
 \multirow{6}{*} \makecell{PrivUnit2+PA} 
 & 0 & \makecell{0.1203 \\ (0.0022)} & \makecell{0.1440 \\ (0.0030)} & \makecell{0.1710 \\ (0.0027)} & \makecell{0.2012 \\ (0.0028)} & \makecell{0.2341 \\ (0.0031)} & \makecell{0.2673 \\ (0.0040)} & \makecell{0.3027 \\ (0.0039)} & \makecell{0.3377 \\ (0.0041)} & \makecell{0.3733 \\ (0.0048)} & \makecell{0.4076 \\ (0.0052)} & \makecell{0.5477 \\ (0.0055)} & \makecell{0.6451 \\ (0.0057)} \\
 & 1 & \makecell{0.1213 \\ (0.0023)} & \makecell{0.1456 \\ (0.0023)} & \makecell{0.1727 \\ (0.0024)} & \makecell{0.2024 \\ (0.0030)} & \makecell{0.2343 \\ (0.0033)} & \makecell{0.2685 \\ (0.0030)} & \makecell{0.3026 \\ (0.0030)} & \makecell{0.3373 \\ (0.0038)} & \makecell{0.3719 \\ (0.0051)} & \makecell{0.4054 \\ (0.0054)} & \makecell{0.5462 \\ (0.0049)} & \makecell{0.6428 \\ (0.0055)} \\
 & 2 & \makecell{0.1200 \\ (0.0022)} & \makecell{0.1429 \\ (0.0025)} & \makecell{0.1692 \\ (0.0030)} & \makecell{0.1981 \\ (0.0033)} & \makecell{0.2280 \\ (0.0031)} & \makecell{0.2598 \\ (0.0044)} & \makecell{0.2927 \\ (0.0044)} & \makecell{0.3254 \\ (0.0047)} & \makecell{0.3586 \\ (0.0052)} & \makecell{0.3910 \\ (0.0043)} & \makecell{0.5245 \\ (0.0054)} & \makecell{0.6183 \\ (0.0048)} \\
 & 3 & \makecell{0.1183 \\ (0.0030)} & \makecell{0.1396 \\ (0.0038)} & \makecell{0.1636 \\ (0.0038)} & \makecell{0.1883 \\ (0.0033)} & \makecell{0.2150 \\ (0.0029)} & \makecell{0.2433 \\ (0.0027)} & \makecell{0.2718 \\ (0.0031)} & \makecell{0.3007 \\ (0.0038)} & \makecell{0.3288 \\ (0.0040)} & \makecell{0.3571 \\ (0.0047)} & \makecell{0.4757 \\ (0.0045)} & \makecell{0.5595 \\ (0.0062)} \\
 & 4 & \makecell{0.1172 \\ (0.0029)} & \makecell{0.1353 \\ (0.0027)} & \makecell{0.1554 \\ (0.0030)} & \makecell{0.1770 \\ (0.0029)} & \makecell{0.1992 \\ (0.0029)} & \makecell{0.2220 \\ (0.0036)} & \makecell{0.2450 \\ (0.0037)} & \makecell{0.2670 \\ (0.0041)} & \makecell{0.2893 \\ (0.0041)} & \makecell{0.3117 \\ (0.0050)} & \makecell{0.4057 \\ (0.0049)} & \makecell{0.4735 \\ (0.0044)} \\
 & 5 & \makecell{0.1121 \\ (0.0032)} & \makecell{0.1253 \\ (0.0039)} & \makecell{0.1398 \\ (0.0044)} & \makecell{0.1543 \\ (0.0045)} & \makecell{0.1690 \\ (0.0051)} & \makecell{0.1841 \\ (0.0054)} & \makecell{0.1993 \\ (0.0060)} & \makecell{0.2141 \\ (0.0053)} & \makecell{0.2282 \\ (0.0054)} & \makecell{0.2416 \\ (0.0050)} & \makecell{0.2994 \\ (0.0041)} & \makecell{0.3423 \\ (0.0038)} \\
 
 \midrule
 \multirow{6}{*} \makecell{PrivUnitG} 
 & 0 & \makecell{0.1100 \\ (0.0027)} & \makecell{0.1212 \\ (0.0030)} & \makecell{0.1335 \\ (0.0028)} & \makecell{0.1464 \\ (0.0035)} & \makecell{0.1600 \\ (0.0037)} & \makecell{0.1741 \\ (0.0040)} & \makecell{0.1887 \\ (0.0044)} & \makecell{0.2036 \\ (0.0039)} & \makecell{0.2187 \\ (0.0040)} & \makecell{0.2342 \\ (0.0043)} & \makecell{0.3103 \\ (0.0034)} & \makecell{0.3685 \\ (0.0035)} \\
 & 1 & \makecell{0.1100 \\ (0.0027)} & \makecell{0.1209 \\ (0.0028)} & \makecell{0.1334 \\ (0.0030)} & \makecell{0.1467 \\ (0.0034)} & \makecell{0.1597 \\ (0.0037)} & \makecell{0.1736 \\ (0.0037)} & \makecell{0.1886 \\ (0.0044)} & \makecell{0.2033 \\ (0.0043)} & \makecell{0.2183 \\ (0.0044)} & \makecell{0.2336 \\ (0.0042)} & \makecell{0.3101 \\ (0.0041)} & \makecell{0.3687 \\ (0.0032)} \\
 & 2 & \makecell{0.1092 \\ (0.0029)} & \makecell{0.1201 \\ (0.0028)} & \makecell{0.1323 \\ (0.0027)} & \makecell{0.1449 \\ (0.0031)} & \makecell{0.1577 \\ (0.0040)} & \makecell{0.1714 \\ (0.0037)} & \makecell{0.1855 \\ (0.0039)} & \makecell{0.1995 \\ (0.0041)} & \makecell{0.2144 \\ (0.0042)} & \makecell{0.2286 \\ (0.0043)} & \makecell{0.3018 \\ (0.0042)} & \makecell{0.3578 \\ (0.0034)} \\
 & 3 & \makecell{0.1083 \\ (0.0025)} & \makecell{0.1181 \\ (0.0029)} & \makecell{0.1287 \\ (0.0029)} & \makecell{0.1400 \\ (0.0033)} & \makecell{0.1517 \\ (0.0038)} & \makecell{0.1641 \\ (0.0034)} & \makecell{0.1764 \\ (0.0037)} & \makecell{0.1891 \\ (0.0041)} & \makecell{0.2024 \\ (0.0042)} & \makecell{0.2156 \\ (0.0044)} & \makecell{0.2790 \\ (0.0041)} & \makecell{0.3284 \\ (0.0040)} \\
 & 4 & \makecell{0.1067 \\ (0.0025)} & \makecell{0.1152 \\ (0.0030)} & \makecell{0.1247 \\ (0.0029)} & \makecell{0.1343 \\ (0.0032)} & \makecell{0.1440 \\ (0.0034)} & \makecell{0.1541 \\ (0.0033)} & \makecell{0.1648 \\ (0.0037)} & \makecell{0.1754 \\ (0.0040)} & \makecell{0.1861 \\ (0.0042)} & \makecell{0.1967 \\ (0.0038)} & \makecell{0.2489 \\ (0.0040)} & \makecell{0.2887 \\ (0.0044)} \\
 & 5 & \makecell{0.1048 \\ (0.0033)} & \makecell{0.1105 \\ (0.0034)} & \makecell{0.1168 \\ (0.0033)} & \makecell{0.1236 \\ (0.0036)} & \makecell{0.1304 \\ (0.0037)} & \makecell{0.1374 \\ (0.0037)} & \makecell{0.1445 \\ (0.0040)} & \makecell{0.1517 \\ (0.0038)} & \makecell{0.1586 \\ (0.0038)} & \makecell{0.1660 \\ (0.0038)} & \makecell{0.2003 \\ (0.0036)} & \makecell{0.2253 \\ (0.0042)} \\
 
 \midrule
 \multirow{6}{*} \makecell{PrivUnitG+PA} 
 & 0 & \makecell{0.1213 \\ (0.0041)} & \makecell{0.1456 \\ (0.0049)} & \makecell{0.1726 \\ (0.0050)} & \makecell{0.2020 \\ (0.0047)} & \makecell{0.2349 \\ (0.0044)} & \makecell{0.2684 \\ (0.0043)} & \makecell{0.3029 \\ (0.0041)} & \makecell{0.3367 \\ (0.0043)} & \makecell{0.3700 \\ (0.0044)} & \makecell{0.4016 \\ (0.0046)} & \makecell{0.5364 \\ (0.0044)} & \makecell{0.6254 \\ (0.0045)} \\
 & 1 & \makecell{0.1201 \\ (0.0033)} & \makecell{0.1438 \\ (0.0034)} & \makecell{0.1713 \\ (0.0033)} & \makecell{0.2007 \\ (0.0035)} & \makecell{0.2332 \\ (0.0038)} & \makecell{0.2671 \\ (0.0041)} & \makecell{0.3011 \\ (0.0042)} & \makecell{0.3349 \\ (0.0047)} & \makecell{0.3680 \\ (0.0042)} & \makecell{0.3998 \\ (0.0053)} & \makecell{0.5342 \\ (0.0055)} & \makecell{0.6230 \\ (0.0042)} \\
 & 2 & \makecell{0.1195 \\ (0.0030)} & \makecell{0.1427 \\ (0.0033)} & \makecell{0.1685 \\ (0.0038)} & \makecell{0.1971 \\ (0.0039)} & \makecell{0.2275 \\ (0.0045)} & \makecell{0.2591 \\ (0.0049)} & \makecell{0.2908 \\ (0.0052)} & \makecell{0.3231 \\ (0.0052)} & \makecell{0.3540 \\ (0.0065)} & \makecell{0.3835 \\ (0.0066)} & \makecell{0.5125 \\ (0.0060)} & \makecell{0.5982 \\ (0.0063)} \\
 & 3 & \makecell{0.1194 \\ (0.0035)} & \makecell{0.1400 \\ (0.0038)} & \makecell{0.1639 \\ (0.0035)} & \makecell{0.1891 \\ (0.0034)} & \makecell{0.2158 \\ (0.0034)} & \makecell{0.2436 \\ (0.0035)} & \makecell{0.2722 \\ (0.0040)} & \makecell{0.2999 \\ (0.0048)} & \makecell{0.3266 \\ (0.0050)} & \makecell{0.3525 \\ (0.0051)} & \makecell{0.4654 \\ (0.0054)} & \makecell{0.5407 \\ (0.0055)} \\
 & 4 & \makecell{0.1168 \\ (0.0020)} & \makecell{0.1353 \\ (0.0023)} & \makecell{0.1547 \\ (0.0028)} & \makecell{0.1757 \\ (0.0030)} & \makecell{0.1974 \\ (0.0035)} & \makecell{0.2197 \\ (0.0038)} & \makecell{0.2428 \\ (0.0035)} & \makecell{0.2662 \\ (0.0032)} & \makecell{0.2882 \\ (0.0036)} & \makecell{0.3093 \\ (0.0033)} & \makecell{0.3994 \\ (0.0031)} & \makecell{0.4603 \\ (0.0038)} \\
 & 5 & \makecell{0.1125 \\ (0.0032)} & \makecell{0.1256 \\ (0.0037)} & \makecell{0.1399 \\ (0.0034)} & \makecell{0.1546 \\ (0.0036)} & \makecell{0.1695 \\ (0.0040)} & \makecell{0.1842 \\ (0.0036)} & \makecell{0.1984 \\ (0.0039)} & \makecell{0.2121 \\ (0.0046)} & \makecell{0.2261 \\ (0.0051)} & \makecell{0.2393 \\ (0.0045)} & \makecell{0.2955 \\ (0.0046)} & \makecell{0.3348 \\ (0.0047)} \\
 
 \midrule
 \multirow{6}{*} \makecell{CW+PA w/o bounding} 
 & 0 & \makecell{0.1039 \\ (0.0024)} & \makecell{0.1077 \\ (0.0025)} & \makecell{0.1120 \\ (0.0026)} & \makecell{0.1161 \\ (0.0026)} & \makecell{0.1205 \\ (0.0026)} & \makecell{0.1251 \\ (0.0027)} & \makecell{0.1298 \\ (0.0026)} & \makecell{0.1346 \\ (0.0025)} & \makecell{0.1397 \\ (0.0026)} & \makecell{0.1446 \\ (0.0028)} & \makecell{0.1716 \\ (0.0027)} & \makecell{0.2008 \\ (0.0026)} \\
 & 1 & \makecell{0.1035 \\ (0.0030)} & \makecell{0.1076 \\ (0.0027)} & \makecell{0.1116 \\ (0.0028)} & \makecell{0.1161 \\ (0.0026)} & \makecell{0.1204 \\ (0.0027)} & \makecell{0.1250 \\ (0.0031)} & \makecell{0.1297 \\ (0.0033)} & \makecell{0.1346 \\ (0.0034)} & \makecell{0.1392 \\ (0.0032)} & \makecell{0.1442 \\ (0.0032)} & \makecell{0.1711 \\ (0.0041)} & \makecell{0.2011 \\ (0.0040)} \\
 & 2 & \makecell{0.1043 \\ (0.0031)} & \makecell{0.1079 \\ (0.0029)} & \makecell{0.1114 \\ (0.0032)} & \makecell{0.1154 \\ (0.0034)} & \makecell{0.1196 \\ (0.0035)} & \makecell{0.1235 \\ (0.0034)} & \makecell{0.1274 \\ (0.0033)} & \makecell{0.1318 \\ (0.0035)} & \makecell{0.1364 \\ (0.0038)} & \makecell{0.1410 \\ (0.0038)} & \makecell{0.1657 \\ (0.0039)} & \makecell{0.1926 \\ (0.0041)} \\
 & 3 & \makecell{0.1041 \\ (0.0027)} & \makecell{0.1074 \\ (0.0029)} & \makecell{0.1104 \\ (0.0030)} & \makecell{0.1137 \\ (0.0031)} & \makecell{0.1172 \\ (0.0031)} & \makecell{0.1208 \\ (0.0031)} & \makecell{0.1244 \\ (0.0030)} & \makecell{0.1281 \\ (0.0032)} & \makecell{0.1321 \\ (0.0030)} & \makecell{0.1361 \\ (0.0028)} & \makecell{0.1574 \\ (0.0029)} & \makecell{0.1804 \\ (0.0039)} \\
 & 4 & \makecell{0.1022 \\ (0.0035)} & \makecell{0.1049 \\ (0.0035)} & \makecell{0.1079 \\ (0.0036)} & \makecell{0.1108 \\ (0.0038)} & \makecell{0.1135 \\ (0.0038)} & \makecell{0.1163 \\ (0.0040)} & \makecell{0.1192 \\ (0.0041)} & \makecell{0.1220 \\ (0.0042)} & \makecell{0.1251 \\ (0.0043)} & \makecell{0.1283 \\ (0.0042)} & \makecell{0.1453 \\ (0.0043)} & \makecell{0.1634 \\ (0.0043)} \\
 & 5 & \makecell{0.1017 \\ (0.0030)} & \makecell{0.1035 \\ (0.0032)} & \makecell{0.1053 \\ (0.0033)} & \makecell{0.1072 \\ (0.0036)} & \makecell{0.1091 \\ (0.0036)} & \makecell{0.1113 \\ (0.0036)} & \makecell{0.1131 \\ (0.0035)} & \makecell{0.1150 \\ (0.0035)} & \makecell{0.1171 \\ (0.0036)} & \makecell{0.1193 \\ (0.0035)} & \makecell{0.1300 \\ (0.0031)} & \makecell{0.1416 \\ (0.0034)} \\
 \bottomrule
\end{tabular}
}\label{tb:defocus_blur}
\end{table}


\begin{table}[h]
\caption{Main results for \textbf{Nonlinear MLP Classifier with ReLU} $(m=64)$ on \textbf{CIFAR-10} and \textbf{CIFAR-10-C (Gaussian Noise)} under $\epsilon$-LDP mechanisms. We evaluate utility by measuring the Accuracy. Higher accuracy indicates higher utility.}
\centering
\resizebox{\textwidth}{!}{
\begin{tabular}{ll cccccccccccc}
\toprule
& \multicolumn{12}{c}{Privacy Budget ($\epsilon$)} \\
 \cmidrule(l){3-14}
 \textbf{Mechanism} & $S$ & 0.5 & 1.0 & 1.5 & 2.0 & 2.5 & 3.0 & 3.5 & 4.0 & 4.5 & 5.0 & 7.5 & 10.0 \\
 \midrule
 \multirow{6}{*} \makecell {No randomization} 
 & 0 & 0.8937 & 0.8937 & 0.8937 & 0.8937 & 0.8937 & 0.8937 & 0.8937 & 0.8937 & 0.8937 & 0.8937 & 0.8937 & 0.8937 \\
 & 1 & 0.7046 & 0.7046 & 0.7046 & 0.7046 & 0.7046 & 0.7046 & 0.7046 & 0.7046 & 0.7046 & 0.7046 & 0.7046 & 0.7046 \\
 & 2 & 0.5093 & 0.5093 & 0.5093 & 0.5093 & 0.5093 & 0.5093 & 0.5093 & 0.5093 & 0.5093 & 0.5093 & 0.5093 & 0.5093 \\
 & 3 & 0.3512 & 0.3512 & 0.3512 & 0.3512 & 0.3512 & 0.3512 & 0.3512 & 0.3512 & 0.3512 & 0.3512 & 0.3512 & 0.3512 \\
 & 4 & 0.3040 & 0.3040 & 0.3040 & 0.3040 & 0.3040 & 0.3040 & 0.3040 & 0.3040 & 0.3040 & 0.3040 & 0.3040 & 0.3040 \\
 & 5 & 0.2621 & 0.2621 & 0.2621 & 0.2621 & 0.2621 & 0.2621 & 0.2621 & 0.2621 & 0.2621 & 0.2621 & 0.2621 & 0.2621 \\
 
 \midrule
 \multirow{6}{*} \makecell{Laplace ($\ell_1$)}
 & 0 & \makecell{0.1002 \\ (0.0029)} & \makecell{0.1017 \\ (0.0029)} & \makecell{0.1031 \\ (0.0028)} & \makecell{0.1046 \\ (0.0028)} & \makecell{0.1059 \\ (0.0027)} & \makecell{0.1074 \\ (0.0026)} & \makecell{0.1088 \\ (0.0025)} & \makecell{0.1103 \\ (0.0027)} & \makecell{0.1119 \\ (0.0026)} & \makecell{0.1133 \\ (0.0025)} & \makecell{0.1212 \\ (0.0026)} & \makecell{0.1301 \\ (0.0024)} \\
 & 1 & \makecell{0.0999 \\ (0.0029)} & \makecell{0.1007 \\ (0.0029)} & \makecell{0.1017 \\ (0.0029)} & \makecell{0.1026 \\ (0.0028)} & \makecell{0.1035 \\ (0.0028)} & \makecell{0.1044 \\ (0.0027)} & \makecell{0.1055 \\ (0.0027)} & \makecell{0.1065 \\ (0.0026)} & \makecell{0.1074 \\ (0.0024)} & \makecell{0.1085 \\ (0.0025)} & \makecell{0.1138 \\ (0.0025)} & \makecell{0.1194 \\ (0.0028)} \\
 & 2 & \makecell{0.0995 \\ (0.0029)} & \makecell{0.1000 \\ (0.0029)} & \makecell{0.1007 \\ (0.0029)} & \makecell{0.1015 \\ (0.0028)} & \makecell{0.1021 \\ (0.0028)} & \makecell{0.1028 \\ (0.0027)} & \makecell{0.1035 \\ (0.0026)} & \makecell{0.1041 \\ (0.0025)} & \makecell{0.1048 \\ (0.0025)} & \makecell{0.1054 \\ (0.0025)} & \makecell{0.1091 \\ (0.0025)} & \makecell{0.1132 \\ (0.0029)} \\
 & 3 & \makecell{0.0993 \\ (0.0029)} & \makecell{0.0998 \\ (0.0028)} & \makecell{0.1002 \\ (0.0028)} & \makecell{0.1008 \\ (0.0026)} & \makecell{0.1012 \\ (0.0026)} & \makecell{0.1017 \\ (0.0026)} & \makecell{0.1021 \\ (0.0025)} & \makecell{0.1025 \\ (0.0025)} & \makecell{0.1029 \\ (0.0026)} & \makecell{0.1033 \\ (0.0026)} & \makecell{0.1058 \\ (0.0027)} & \makecell{0.1087 \\ (0.0027)} \\
 & 4 & \makecell{0.0993 \\ (0.0029)} & \makecell{0.0996 \\ (0.0028)} & \makecell{0.0999 \\ (0.0028)} & \makecell{0.1004 \\ (0.0027)} & \makecell{0.1008 \\ (0.0026)} & \makecell{0.1010 \\ (0.0027)} & \makecell{0.1016 \\ (0.0026)} & \makecell{0.1019 \\ (0.0025)} & \makecell{0.1024 \\ (0.0024)} & \makecell{0.1028 \\ (0.0024)} & \makecell{0.1048 \\ (0.0024)} & \makecell{0.1070 \\ (0.0024)} \\
 & 5 & \makecell{0.0993 \\ (0.0029)} & \makecell{0.0995 \\ (0.0029)} & \makecell{0.0999 \\ (0.0028)} & \makecell{0.1002 \\ (0.0027)} & \makecell{0.1006 \\ (0.0027)} & \makecell{0.1009 \\ (0.0026)} & \makecell{0.1013 \\ (0.0026)} & \makecell{0.1016 \\ (0.0026)} & \makecell{0.1020 \\ (0.0027)} & \makecell{0.1023 \\ (0.0026)} & \makecell{0.1041 \\ (0.0026)} & \makecell{0.1059 \\ (0.0026)} \\
 
 \midrule
 \multirow{6}{*} \makecell{Laplace+PA} 
 & 0 & \makecell{0.1062 \\ (0.0024)} & \makecell{0.1122 \\ (0.0022)} & \makecell{0.1188 \\ (0.0023)} & \makecell{0.1258 \\ (0.0025)} & \makecell{0.1329 \\ (0.0023)} & \makecell{0.1404 \\ (0.0023)} & \makecell{0.1480 \\ (0.0024)} & \makecell{0.1563 \\ (0.0023)} & \makecell{0.1650 \\ (0.0024)} & \makecell{0.1737 \\ (0.0024)} & \makecell{0.2204 \\ (0.0038)} & \makecell{0.2723 \\ (0.0046)} \\
 & 1 & \makecell{0.1037 \\ (0.0023)} & \makecell{0.1081 \\ (0.0024)} & \makecell{0.1126 \\ (0.0027)} & \makecell{0.1171 \\ (0.0028)} & \makecell{0.1219 \\ (0.0029)} & \makecell{0.1267 \\ (0.0032)} & \makecell{0.1316 \\ (0.0029)} & \makecell{0.1365 \\ (0.0029)} & \makecell{0.1416 \\ (0.0030)} & \makecell{0.1469 \\ (0.0030)} & \makecell{0.1761 \\ (0.0036)} & \makecell{0.2074 \\ (0.0040)} \\
 & 2 & \makecell{0.1034 \\ (0.0031)} & \makecell{0.1063 \\ (0.0033)} & \makecell{0.1093 \\ (0.0035)} & \makecell{0.1123 \\ (0.0033)} & \makecell{0.1154 \\ (0.0037)} & \makecell{0.1186 \\ (0.0037)} & \makecell{0.1220 \\ (0.0040)} & \makecell{0.1252 \\ (0.0041)} & \makecell{0.1287 \\ (0.0039)} & \makecell{0.1324 \\ (0.0037)} & \makecell{0.1511 \\ (0.0044)} & \makecell{0.1718 \\ (0.0045)} \\
 & 3 & \makecell{0.1029 \\ (0.0027)} & \makecell{0.1051 \\ (0.0026)} & \makecell{0.1072 \\ (0.0026)} & \makecell{0.1094 \\ (0.0028)} & \makecell{0.1116 \\ (0.0032)} & \makecell{0.1139 \\ (0.0031)} & \makecell{0.1163 \\ (0.0028)} & \makecell{0.1186 \\ (0.0026)} & \makecell{0.1211 \\ (0.0027)} & \makecell{0.1234 \\ (0.0028)} & \makecell{0.1363 \\ (0.0028)} & \makecell{0.1490 \\ (0.0030)} \\
 & 4 & \makecell{0.1024 \\ (0.0020)} & \makecell{0.1043 \\ (0.0022)} & \makecell{0.1063 \\ (0.0021)} & \makecell{0.1081 \\ (0.0025)} & \makecell{0.1104 \\ (0.0025)} & \makecell{0.1121 \\ (0.0028)} & \makecell{0.1140 \\ (0.0026)} & \makecell{0.1160 \\ (0.0025)} & \makecell{0.1181 \\ (0.0026)} & \makecell{0.1202 \\ (0.0026)} & \makecell{0.1307 \\ (0.0031)} & \makecell{0.1415 \\ (0.0039)} \\
 & 5 & \makecell{0.1022 \\ (0.0033)} & \makecell{0.1035 \\ (0.0035)} & \makecell{0.1048 \\ (0.0035)} & \makecell{0.1065 \\ (0.0034)} & \makecell{0.1083 \\ (0.0034)} & \makecell{0.1099 \\ (0.0032)} & \makecell{0.1117 \\ (0.0032)} & \makecell{0.1136 \\ (0.0031)} & \makecell{0.1153 \\ (0.0034)} & \makecell{0.1170 \\ (0.0034)} & \makecell{0.1264 \\ (0.0033)} & \makecell{0.1357 \\ (0.0035)} \\

 \midrule
 \multirow{6}{*} \makecell{PrivUnit2} 
 & 0 & \makecell{0.1107 \\ (0.0032)} & \makecell{0.1221 \\ (0.0032)} & \makecell{0.1348 \\ (0.0032)} & \makecell{0.1481 \\ (0.0031)} & \makecell{0.1620 \\ (0.0028)} & \makecell{0.1762 \\ (0.0029)} & \makecell{0.1913 \\ (0.0038)} & \makecell{0.2063 \\ (0.0044)} & \makecell{0.2229 \\ (0.0044)} & \makecell{0.2383 \\ (0.0040)} & \makecell{0.3185 \\ (0.0042)} & \makecell{0.3978 \\ (0.0035)} \\
 & 1 & \makecell{0.1081 \\ (0.0041)} & \makecell{0.1166 \\ (0.0037)} & \makecell{0.1265 \\ (0.0036)} & \makecell{0.1366 \\ (0.0033)} & \makecell{0.1469 \\ (0.0031)} & \makecell{0.1569 \\ (0.0032)} & \makecell{0.1678 \\ (0.0037)} & \makecell{0.1783 \\ (0.0042)} & \makecell{0.1897 \\ (0.0043)} & \makecell{0.2008 \\ (0.0038)} & \makecell{0.2576 \\ (0.0044)} & \makecell{0.3113 \\ (0.0037)} \\
 & 2 & \makecell{0.1058 \\ (0.0036)} & \makecell{0.1126 \\ (0.0036)} & \makecell{0.1200 \\ (0.0038)} & \makecell{0.1274 \\ (0.0035)} & \makecell{0.1349 \\ (0.0034)} & \makecell{0.1421 \\ (0.0033)} & \makecell{0.1500 \\ (0.0038)} & \makecell{0.1580 \\ (0.0047)} & \makecell{0.1666 \\ (0.0047)} & \makecell{0.1749 \\ (0.0044)} & \makecell{0.2143 \\ (0.0029)} & \makecell{0.2503 \\ (0.0026)} \\
 & 3 & \makecell{0.1046 \\ (0.0034)} & \makecell{0.1096 \\ (0.0032)} & \makecell{0.1150 \\ (0.0031)} & \makecell{0.1205 \\ (0.0031)} & \makecell{0.1257 \\ (0.0033)} & \makecell{0.1310 \\ (0.0034)} & \makecell{0.1370 \\ (0.0039)} & \makecell{0.1429 \\ (0.0045)} & \makecell{0.1480 \\ (0.0045)} & \makecell{0.1539 \\ (0.0044)} & \makecell{0.1813 \\ (0.0034)} & \makecell{0.2053 \\ (0.0046)} \\
 & 4 & \makecell{0.1038 \\ (0.0037)} & \makecell{0.1080 \\ (0.0034)} & \makecell{0.1125 \\ (0.0036)} & \makecell{0.1176 \\ (0.0038)} & \makecell{0.1220 \\ (0.0037)} & \makecell{0.1267 \\ (0.0035)} & \makecell{0.1318 \\ (0.0039)} & \makecell{0.1367 \\ (0.0047)} & \makecell{0.1414 \\ (0.0047)} & \makecell{0.1465 \\ (0.0044)} & \makecell{0.1694 \\ (0.0033)} & \makecell{0.1887 \\ (0.0039)} \\
 & 5 & \makecell{0.1029 \\ (0.0038)} & \makecell{0.1067 \\ (0.0038)} & \makecell{0.1107 \\ (0.0039)} & \makecell{0.1150 \\ (0.0038)} & \makecell{0.1187 \\ (0.0038)} & \makecell{0.1225 \\ (0.0037)} & \makecell{0.1268 \\ (0.0041)} & \makecell{0.1311 \\ (0.0046)} & \makecell{0.1350 \\ (0.0048)} & \makecell{0.1392 \\ (0.0044)} & \makecell{0.1575 \\ (0.0027)} & \makecell{0.1746 \\ (0.0032)} \\

 \midrule
 \multirow{6}{*} \makecell{PrivUnit2+PA} 
 & 0 & \makecell{0.1203 \\ (0.0022)} & \makecell{0.1440 \\ (0.0030)} & \makecell{0.1710 \\ (0.0027)} & \makecell{0.2012 \\ (0.0028)} & \makecell{0.2341 \\ (0.0031)} & \makecell{0.2673 \\ (0.0040)} & \makecell{0.3027 \\ (0.0039)} & \makecell{0.3377 \\ (0.0041)} & \makecell{0.3733 \\ (0.0048)} & \makecell{0.4076 \\ (0.0052)} & \makecell{0.5477 \\ (0.0055)} & \makecell{0.6451 \\ (0.0057)} \\
 & 1 & \makecell{0.1159 \\ (0.0030)} & \makecell{0.1344 \\ (0.0035)} & \makecell{0.1551 \\ (0.0038)} & \makecell{0.1774 \\ (0.0041)} & \makecell{0.2007 \\ (0.0036)} & \makecell{0.2247 \\ (0.0033)} & \makecell{0.2491 \\ (0.0033)} & \makecell{0.2734 \\ (0.0042)} & \makecell{0.2961 \\ (0.0039)} & \makecell{0.3188 \\ (0.0039)} & \makecell{0.4099 \\ (0.0057)} & \makecell{0.4742 \\ (0.0040)} \\
 & 2 & \makecell{0.1125 \\ (0.0019)} & \makecell{0.1268 \\ (0.0024)} & \makecell{0.1419 \\ (0.0027)} & \makecell{0.1580 \\ (0.0030)} & \makecell{0.1744 \\ (0.0029)} & \makecell{0.1911 \\ (0.0027)} & \makecell{0.2077 \\ (0.0024)} & \makecell{0.2240 \\ (0.0028)} & \makecell{0.2395 \\ (0.0028)} & \makecell{0.2549 \\ (0.0026)} & \makecell{0.3142 \\ (0.0040)} & \makecell{0.3528 \\ (0.0046)} \\
 & 3 & \makecell{0.1096 \\ (0.0025)} & \makecell{0.1202 \\ (0.0030)} & \makecell{0.1315 \\ (0.0035)} & \makecell{0.1429 \\ (0.0035)} & \makecell{0.1545 \\ (0.0040)} & \makecell{0.1662 \\ (0.0041)} & \makecell{0.1777 \\ (0.0038)} & \makecell{0.1884 \\ (0.0037)} & \makecell{0.1983 \\ (0.0029)} & \makecell{0.2091 \\ (0.0033)} & \makecell{0.2431 \\ (0.0038)} & \makecell{0.2646 \\ (0.0042)} \\
 & 4 & \makecell{0.1080 \\ (0.0034)} & \makecell{0.1169 \\ (0.0030)} & \makecell{0.1263 \\ (0.0030)} & \makecell{0.1363 \\ (0.0031)} & \makecell{0.1467 \\ (0.0036)} & \makecell{0.1570 \\ (0.0039)} & \makecell{0.1661 \\ (0.0036)} & \makecell{0.1749 \\ (0.0037)} & \makecell{0.1830 \\ (0.0031)} & \makecell{0.1913 \\ (0.0020)} & \makecell{0.2198 \\ (0.0023)} & \makecell{0.2356 \\ (0.0036)} \\
 & 5 & \makecell{0.1067 \\ (0.0039)} & \makecell{0.1141 \\ (0.0036)} & \makecell{0.1218 \\ (0.0041)} & \makecell{0.1299 \\ (0.0039)} & \makecell{0.1385 \\ (0.0036)} & \makecell{0.1467 \\ (0.0040)} & \makecell{0.1545 \\ (0.0037)} & \makecell{0.1618 \\ (0.0038)} & \makecell{0.1693 \\ (0.0033)} & \makecell{0.1761 \\ (0.0027)} & \makecell{0.1995 \\ (0.0024)} & \makecell{0.2126 \\ (0.0035)} \\

 \midrule
 \multirow{6}{*} \makecell{PrivUnitG} 
 & 0 & \makecell{0.1100 \\ (0.0027)} & \makecell{0.1212 \\ (0.0030)} & \makecell{0.1335 \\ (0.0028)} & \makecell{0.1464 \\ (0.0035)} & \makecell{0.1600 \\ (0.0037)} & \makecell{0.1741 \\ (0.0040)} & \makecell{0.1887 \\ (0.0044)} & \makecell{0.2036 \\ (0.0039)} & \makecell{0.2187 \\ (0.0040)} & \makecell{0.2342 \\ (0.0043)} & \makecell{0.3103 \\ (0.0034)} & \makecell{0.3685 \\ (0.0035)} \\
 & 1 & \makecell{0.1071 \\ (0.0028)} & \makecell{0.1158 \\ (0.0029)} & \makecell{0.1250 \\ (0.0034)} & \makecell{0.1348 \\ (0.0039)} & \makecell{0.1448 \\ (0.0040)} & \makecell{0.1553 \\ (0.0039)} & \makecell{0.1658 \\ (0.0040)} & \makecell{0.1764 \\ (0.0043)} & \makecell{0.1873 \\ (0.0047)} & \makecell{0.1983 \\ (0.0043)} & \makecell{0.2513 \\ (0.0035)} & \makecell{0.2920 \\ (0.0037)} \\
 & 2 & \makecell{0.1051 \\ (0.0028)} & \makecell{0.1118 \\ (0.0031)} & \makecell{0.1190 \\ (0.0032)} & \makecell{0.1264 \\ (0.0036)} & \makecell{0.1336 \\ (0.0040)} & \makecell{0.1415 \\ (0.0041)} & \makecell{0.1493 \\ (0.0043)} & \makecell{0.1569 \\ (0.0043)} & \makecell{0.1646 \\ (0.0043)} & \makecell{0.1726 \\ (0.0046)} & \makecell{0.2101 \\ (0.0046)} & \makecell{0.2378 \\ (0.0045)} \\
 & 3 & \makecell{0.1031 \\ (0.0026)} & \makecell{0.1078 \\ (0.0023)} & \makecell{0.1132 \\ (0.0025)} & \makecell{0.1188 \\ (0.0028)} & \makecell{0.1242 \\ (0.0029)} & \makecell{0.1297 \\ (0.0034)} & \makecell{0.1352 \\ (0.0039)} & \makecell{0.1405 \\ (0.0046)} & \makecell{0.1458 \\ (0.0045)} & \makecell{0.1511 \\ (0.0046)} & \makecell{0.1772 \\ (0.0051)} & \makecell{0.1965 \\ (0.0044)} \\
 & 4 & \makecell{0.1024 \\ (0.0028)} & \makecell{0.1067 \\ (0.0029)} & \makecell{0.1113 \\ (0.0030)} & \makecell{0.1162 \\ (0.0032)} & \makecell{0.1209 \\ (0.0034)} & \makecell{0.1254 \\ (0.0036)} & \makecell{0.1298 \\ (0.0041)} & \makecell{0.1345 \\ (0.0041)} & \makecell{0.1390 \\ (0.0041)} & \makecell{0.1438 \\ (0.0044)} & \makecell{0.1656 \\ (0.0035)} & \makecell{0.1807 \\ (0.0035)} \\
 & 5 & \makecell{0.1021 \\ (0.0026)} & \makecell{0.1059 \\ (0.0027)} & \makecell{0.1100 \\ (0.0028)} & \makecell{0.1139 \\ (0.0032)} & \makecell{0.1175 \\ (0.0035)} & \makecell{0.1213 \\ (0.0037)} & \makecell{0.1250 \\ (0.0037)} & \makecell{0.1290 \\ (0.0035)} & \makecell{0.1332 \\ (0.0039)} & \makecell{0.1372 \\ (0.0043)} & \makecell{0.1550 \\ (0.0040)} & \makecell{0.1681 \\ (0.0037)} \\
 
 \midrule
 \multirow{6}{*} \makecell{PrivUnitG+PA} 
 & 0 & \makecell{0.1213 \\ (0.0041)} & \makecell{0.1456 \\ (0.0049)} & \makecell{0.1726 \\ (0.0050)} & \makecell{0.2020 \\ (0.0047)} & \makecell{0.2349 \\ (0.0044)} & \makecell{0.2684 \\ (0.0043)} & \makecell{0.3029 \\ (0.0041)} & \makecell{0.3367 \\ (0.0043)} & \makecell{0.3700 \\ (0.0044)} & \makecell{0.4016 \\ (0.0046)} & \makecell{0.5364 \\ (0.0044)} & \makecell{0.6254 \\ (0.0045)} \\
 & 1 & \makecell{0.1174 \\ (0.0025)} & \makecell{0.1358 \\ (0.0029)} & \makecell{0.1570 \\ (0.0028)} & \makecell{0.1787 \\ (0.0035)} & \makecell{0.2017 \\ (0.0041)} & \makecell{0.2253 \\ (0.0045)} & \makecell{0.2491 \\ (0.0052)} & \makecell{0.2723 \\ (0.0050)} & \makecell{0.2938 \\ (0.0052)} & \makecell{0.3154 \\ (0.0051)} & \makecell{0.4018 \\ (0.0059)} & \makecell{0.4619 \\ (0.0051)} \\
 & 2 & \makecell{0.1121 \\ (0.0034)} & \makecell{0.1260 \\ (0.0036)} & \makecell{0.1415 \\ (0.0034)} & \makecell{0.1577 \\ (0.0031)} & \makecell{0.1740 \\ (0.0029)} & \makecell{0.1903 \\ (0.0029)} & \makecell{0.2069 \\ (0.0030)} & \makecell{0.2230 \\ (0.0026)} & \makecell{0.2383 \\ (0.0028)} & \makecell{0.2518 \\ (0.0026)} & \makecell{0.3084 \\ (0.0034)} & \makecell{0.3454 \\ (0.0038)} \\
 & 3 & \makecell{0.1100 \\ (0.0038)} & \makecell{0.1201 \\ (0.0039)} & \makecell{0.1316 \\ (0.0042)} & \makecell{0.1430 \\ (0.0033)} & \makecell{0.1542 \\ (0.0028)} & \makecell{0.1660 \\ (0.0031)} & \makecell{0.1775 \\ (0.0032)} & \makecell{0.1872 \\ (0.0033)} & \makecell{0.1971 \\ (0.0032)} & \makecell{0.2061 \\ (0.0036)} & \makecell{0.2402 \\ (0.0030)} & \makecell{0.2625 \\ (0.0030)} \\
 & 4 & \makecell{0.1081 \\ (0.0027)} & \makecell{0.1172 \\ (0.0029)} & \makecell{0.1271 \\ (0.0034)} & \makecell{0.1366 \\ (0.0035)} & \makecell{0.1467 \\ (0.0036)} & \makecell{0.1560 \\ (0.0038)} & \makecell{0.1650 \\ (0.0040)} & \makecell{0.1735 \\ (0.0036)} & \makecell{0.1822 \\ (0.0036)} & \makecell{0.1894 \\ (0.0032)} & \makecell{0.2166 \\ (0.0019)} & \makecell{0.2345 \\ (0.0028)} \\
 & 5 & \makecell{0.1081 \\ (0.0015)} & \makecell{0.1162 \\ (0.0020)} & \makecell{0.1242 \\ (0.0026)} & \makecell{0.1317 \\ (0.0024)} & \makecell{0.1403 \\ (0.0023)} & \makecell{0.1481 \\ (0.0021)} & \makecell{0.1556 \\ (0.0022)} & \makecell{0.1626 \\ (0.0019)} & \makecell{0.1691 \\ (0.0020)} & \makecell{0.1754 \\ (0.0019)} & \makecell{0.1978 \\ (0.0028)} & \makecell{0.2118 \\ (0.0028)} \\
 
 \midrule
 \multirow{6}{*} \makecell{CW+PA w/o bounding} 
 & 0 & \makecell{0.1039 \\ (0.0024)} & \makecell{0.1077 \\ (0.0025)} & \makecell{0.1120 \\ (0.0026)} & \makecell{0.1161 \\ (0.0026)} & \makecell{0.1205 \\ (0.0026)} & \makecell{0.1251 \\ (0.0027)} & \makecell{0.1298 \\ (0.0026)} & \makecell{0.1346 \\ (0.0025)} & \makecell{0.1397 \\ (0.0026)} & \makecell{0.1446 \\ (0.0028)} & \makecell{0.1716 \\ (0.0027)} & \makecell{0.2008 \\ (0.0026)} \\
 & 1 & \makecell{0.1027 \\ (0.0016)} & \makecell{0.1051 \\ (0.0025)} & \makecell{0.1079 \\ (0.0027)} & \makecell{0.1106 \\ (0.0029)} & \makecell{0.1133 \\ (0.0029)} & \makecell{0.1161 \\ (0.0029)} & \makecell{0.1191 \\ (0.0028)} & \makecell{0.1221 \\ (0.0030)} & \makecell{0.1252 \\ (0.0029)} & \makecell{0.1286 \\ (0.0028)} & \makecell{0.1452 \\ (0.0028)} & \makecell{0.1636 \\ (0.0030)} \\ 
 & 2 & \makecell{0.1022 \\ (0.0034)} & \makecell{0.1041 \\ (0.0034)} & \makecell{0.1059 \\ (0.0036)} & \makecell{0.1077 \\ (0.0036)} & \makecell{0.1098 \\ (0.0035)} & \makecell{0.1119 \\ (0.0037)} & \makecell{0.1138 \\ (0.0037)} & \makecell{0.1159 \\ (0.0038)} & \makecell{0.1180 \\ (0.0036)} & \makecell{0.1201 \\ (0.0037)} & \makecell{0.1309 \\ (0.0038)} & \makecell{0.1429 \\ (0.0039)} \\ 
 & 3 & \makecell{0.1019 \\ (0.0023)} & \makecell{0.1034 \\ (0.0023)} & \makecell{0.1048 \\ (0.0023)} & \makecell{0.1060 \\ (0.0024)} & \makecell{0.1075 \\ (0.0027)} & \makecell{0.1089 \\ (0.0027)} & \makecell{0.1103 \\ (0.0028)} & \makecell{0.1118 \\ (0.0028)} & \makecell{0.1133 \\ (0.0028)} & \makecell{0.1150 \\ (0.0028)} & \makecell{0.1229 \\ (0.0031)} & \makecell{0.1313 \\ (0.0032)} \\
 & 4 & \makecell{0.1021 \\ (0.0027)} & \makecell{0.1033 \\ (0.0028)} & \makecell{0.1043 \\ (0.0026)} & \makecell{0.1055 \\ (0.0026)} & \makecell{0.1065 \\ (0.0026)} & \makecell{0.1077 \\ (0.0027)} & \makecell{0.1089 \\ (0.0026)} & \makecell{0.1102 \\ (0.0024)} & \makecell{0.1114 \\ (0.0023)} & \makecell{0.1126 \\ (0.0024)} & \makecell{0.1191 \\ (0.0030)} & \makecell{0.1263 \\ (0.0032)} \\
 & 5 & \makecell{0.1017 \\ (0.0029)} & \makecell{0.1027 \\ (0.0029)} & \makecell{0.1037 \\ (0.0028)} & \makecell{0.1047 \\ (0.0029)} & \makecell{0.1058 \\ (0.0030)} & \makecell{0.1069 \\ (0.0032)} & \makecell{0.1080 \\ (0.0032)} & \makecell{0.1090 \\ (0.0032)} & \makecell{0.1099 \\ (0.0034)} & \makecell{0.1112 \\ (0.0034)} & \makecell{0.1168 \\ (0.0036)} & \makecell{0.1225 \\ (0.0038)} \\
  \bottomrule
\end{tabular}
}\label{tb:Gaussian}
\end{table}

\clearpage

\subsubsection{Main Results: Accuracy of Linear Regression, Linear Classification, and Nonlinear Classification on CIFAR-10 and MNIST Test Datasets}\label{apdx_activation}
We provide our main results for linear regression, linear classification, and nonlinear classification under $\epsilon$-LDP mechanisms.
In the tables, PA denotes the proposed approach.

\begin{table}[h]
\caption{Main results for $(m=16)$ \textbf{Linear Regression} on \textbf{LHSM} ($m=16$) under $\epsilon$-LDP mechanisms. We evaluate utility by measuring the RMSE using real values. Lower RMSE indicates higher utility.}
\centering
\resizebox{\textwidth}{!}{
\begin{tabular}{l cccccccccccc}
\toprule
& \multicolumn{12}{c}{Privacy Budget ($\epsilon$)} \\
 \cmidrule(l){2-13}
 \textbf{Mechanism} & 0.5 & 1.0 & 1.5 & 2.0 & 2.5 & 3.0 & 3.5 & 4.0 & 4.5 & 5.0 & 7.5 & 10.0 \\
 \midrule
 \makecell[l]{No randomization} & 0.0691 & 0.0691 & 0.0691 & 0.0691 & 0.0691 & 0.0691 & 0.0691 & 0.0691 & 0.0691 & 0.0691 & 0.0691 & 0.0691 \\
 \midrule
 \makecell[l]{Laplace ($\ell_1$)} &
\makecell{17.6293 \\ (0.3513)} & \makecell{8.8154 \\ (0.1761)} & \makecell{5.8778 \\ (0.1177)} & \makecell{4.4093 \\ (0.0885)} & \makecell{3.5284 \\ (0.0710)} & \makecell{2.9413 \\ (0.0593)} & \makecell{2.5221 \\ (0.0509)} & \makecell{2.2078 \\ (0.0447)} & \makecell{1.9635 \\ (0.0398)} & \makecell{1.7682 \\ (0.0359)} & \makecell{1.1832 \\ (0.0242)} & \makecell{0.8921 \\ (0.0184)} \\
 \midrule
 \makecell[l]{Laplace+PA} &
\makecell{1.0520 \\ (0.0245)} & \makecell{0.5387 \\ (0.0122)} & \makecell{0.3729 \\ (0.0086)} & \makecell{0.2935 \\ (0.0075)} & \makecell{0.2482 \\ (0.0074)} & \makecell{0.2197 \\ (0.0077)} & \makecell{0.2006 \\ (0.0080)} & \makecell{0.1872 \\ (0.0084)} & \makecell{0.1774 \\ (0.0088)} & \makecell{0.1700 \\ (0.0091)} & \makecell{0.1511 \\ (0.0101)} & \makecell{0.1439 \\ (0.0105)} \\
 \midrule
 \makecell[l]{PrivUnit2} &
\makecell{3.9709 \\ (0.0484)} & \makecell{2.0035 \\ (0.0248)} & \makecell{1.3492 \\ (0.0144)} & \makecell{1.0252 \\ (0.0131)} & \makecell{0.8348 \\ (0.0108)} & \makecell{0.7097 \\ (0.0087)} & \makecell{0.6222 \\ (0.0093)} & \makecell{0.5577 \\ (0.0077)} & \makecell{0.5083 \\ (0.0082)} & \makecell{0.4710 \\ (0.0064)} & \makecell{0.3619 \\ (0.0062)} & \makecell{0.3155 \\ (0.0058)} \\
 \midrule
 \makecell[l]{PrivUnit2+PA} &
\makecell{0.9048 \\ (0.0155)} & \makecell{0.4776 \\ (0.0081)} & \makecell{0.3437 \\ (0.0064)} & \makecell{0.2822 \\ (0.0057)} & \makecell{0.2490 \\ (0.0053)} & \makecell{0.2282 \\ (0.0062)} & \makecell{0.2150 \\ (0.0066)} & \makecell{0.2058 \\ (0.0068)} & \makecell{0.1991 \\ (0.0072)} & \makecell{0.1945 \\ (0.0072)} & \makecell{0.1826 \\ (0.0080)} & \makecell{0.1780 \\ (0.0083)} \\
 \midrule
 \makecell[l]{PrivUnitG} &
\makecell{3.9645 \\ (0.0625)} & \makecell{1.9924 \\ (0.0252)} & \makecell{1.3471 \\ (0.0171)} & \makecell{1.0290 \\ (0.0136)} & \makecell{0.8425 \\ (0.0100)} & \makecell{0.7192 \\ (0.0092)} & \makecell{0.6333 \\ (0.0077)} & \makecell{0.5707 \\ (0.0075)} & \makecell{0.5235 \\ (0.0070)} & \makecell{0.4865 \\ (0.0061)} & \makecell{0.3831 \\ (0.0059)} & \makecell{0.3365 \\ (0.0053)} \\
 \midrule
 \makecell[l]{PrivUnitG+PA} &
\makecell{0.8996 \\ (0.0106)} & \makecell{0.4736 \\ (0.0069)} & \makecell{0.3406 \\ (0.0055)} & \makecell{0.2801 \\ (0.0048)} & \makecell{0.2471 \\ (0.0051)} & \makecell{0.2269 \\ (0.0054)} & \makecell{0.2136 \\ (0.0061)} & \makecell{0.2044 \\ (0.0065)} & \makecell{0.1981 \\ (0.0072)} & \makecell{0.1936 \\ (0.0071)} & \makecell{0.1822 \\ (0.0081)} & \makecell{0.1765 \\ (0.0084)} \\
 \midrule
 \makecell[l]{CW+PA w/o bounding} &
\makecell{1.1897 \\ (0.0281)} & \makecell{0.5976 \\ (0.0142)} & \makecell{0.4016 \\ (0.0096)} & \makecell{0.3045 \\ (0.0073)} & \makecell{0.2470 \\ (0.0060)} & \makecell{0.2093 \\ (0.0051)} & \makecell{0.1828 \\ (0.0045)} & \makecell{0.1634 \\ (0.0040)} & \makecell{0.1486 \\ (0.0037)} & \makecell{0.1371 \\ (0.0035)} & \makecell{0.1048 \\ (0.0031)} & \makecell{0.0908 \\ (0.0030)} \\
 \midrule
 \makecell[l]{Task-Aware} &
\makecell{0.1920 \\ (0.0093)} & \makecell{0.1920 \\ (0.0093)} & \makecell{0.1918 \\ (0.0093)} & \makecell{0.1916 \\ (0.0094)} & \makecell{0.1913 \\ (0.0094)} & \makecell{0.1910 \\ (0.0094)} & \makecell{0.1906 \\ (0.0094)} & \makecell{0.1901 \\ (0.0094)} & \makecell{0.1896 \\ (0.0093)} & \makecell{0.1890 \\ (0.0093)} & \makecell{0.1852 \\ (0.0093)} & \makecell{0.1802 \\ (0.0092)} \\
 \bottomrule
\end{tabular}
}
\end{table}

\begin{table}[h]
\caption{Main results for $(m=64)$ \textbf{Linear Classification} on \textbf{CIFAR-10} under $\epsilon$-LDP mechanisms. We evaluate utility by measuring the Accuracy. Higher accuracy indicates higher utility.}
\centering
\resizebox{\textwidth}{!}{
\begin{tabular}{l cccccccccccc}
\toprule
& \multicolumn{12}{c}{Privacy Budget ($\epsilon$)} \\
 \cmidrule(l){2-13}
 \textbf{Mechanism} & 0.5 & 1.0 & 1.5 & 2.0 & 2.5 & 3.0 & 3.5 & 4.0 & 4.5 & 5.0 & 7.5 & 10.0 \\
 \midrule
 {No randomization} &
0.9058 & 0.9058 & 0.9058 & 0.9058 & 0.9058 & 0.9058 & 0.9058 & 0.9058 & 0.9058 & 0.9058 & 0.9058 & 0.9058 \\
 \midrule
 \makecell[l]{Laplace ($\ell_1$)} &
\makecell{0.1020 \\ (0.0027)} & \makecell{0.1041 \\ (0.0028)} & \makecell{0.1063 \\ (0.0030)} & \makecell{0.1083 \\ (0.0028)} & \makecell{0.1103 \\ (0.0028)} & \makecell{0.1126 \\ (0.0027)} & \makecell{0.1151 \\ (0.0028)} & \makecell{0.1176 \\ (0.0030)} & \makecell{0.1199 \\ (0.0029)} & \makecell{0.1221 \\ (0.0029)} & \makecell{0.1347 \\ (0.0027)} & \makecell{0.1480 \\ (0.0029)} \\
 \midrule
 \makecell[l]{Laplace+PA} &
\makecell{0.1085 \\ (0.0033)} & \makecell{0.1170 \\ (0.0035)} & \makecell{0.1258 \\ (0.0037)} & \makecell{0.1352 \\ (0.0037)} & \makecell{0.1454 \\ (0.0035)} & \makecell{0.1555 \\ (0.0036)} & \makecell{0.1661 \\ (0.0038)} & \makecell{0.1777 \\ (0.0038)} & \makecell{0.1891 \\ (0.0041)} & \makecell{0.2006 \\ (0.0035)} & \makecell{0.2658 \\ (0.0038)} & \makecell{0.3342 \\ (0.0045)} \\
 \midrule
 \makecell[l]{PrivUnit2} &
\makecell{0.1174 \\ (0.0034)} & \makecell{0.1363 \\ (0.0038)} & \makecell{0.1572 \\ (0.0047)} & \makecell{0.1805 \\ (0.0048)} & \makecell{0.2047 \\ (0.0052)} & \makecell{0.2299 \\ (0.0051)} & \makecell{0.2561 \\ (0.0054)} & \makecell{0.2827 \\ (0.0056)} & \makecell{0.3109 \\ (0.0047)} & \makecell{0.3397 \\ (0.0047)} & \makecell{0.4681 \\ (0.0047)} & \makecell{0.5728 \\ (0.0040)} \\
 \midrule
 \makecell[l]{PrivUnit2+PA} &
\makecell{0.1258 \\ (0.0026)} & \makecell{0.1587 \\ (0.0033)} & \makecell{0.1989 \\ (0.0038)} & \makecell{0.2461 \\ (0.0039)} & \makecell{0.2967 \\ (0.0033)} & \makecell{0.3496 \\ (0.0037)} & \makecell{0.4014 \\ (0.0049)} & \makecell{0.4510 \\ (0.0049)} & \makecell{0.4962 \\ (0.0055)} & \makecell{0.5368 \\ (0.0045)} & \makecell{0.6782 \\ (0.0040)} & \makecell{0.7519 \\ (0.0036)} \\
 \midrule
 \makecell[l]{PrivUnitG} &
\makecell{0.1161 \\ (0.0037)} & \makecell{0.1353 \\ (0.0041)} & \makecell{0.1564 \\ (0.0044)} & \makecell{0.1795 \\ (0.0047)} & \makecell{0.2032 \\ (0.0050)} & \makecell{0.2279 \\ (0.0055)} & \makecell{0.2546 \\ (0.0057)} & \makecell{0.2808 \\ (0.0054)} & \makecell{0.3067 \\ (0.0059)} & \makecell{0.3330 \\ (0.0059)} & \makecell{0.4551 \\ (0.0054)} & \makecell{0.5392 \\ (0.0053)} \\
 \midrule
 \makecell[l]{PrivUnitG+PA} &
\makecell{0.1265 \\ (0.0039)} & \makecell{0.1598 \\ (0.0040)} & \makecell{0.1997 \\ (0.0048)} & \makecell{0.2464 \\ (0.0040)} & \makecell{0.2971 \\ (0.0047)} & \makecell{0.3492 \\ (0.0042)} & \makecell{0.3997 \\ (0.0041)} & \makecell{0.4472 \\ (0.0038)} & \makecell{0.4912 \\ (0.0037)} & \makecell{0.5302 \\ (0.0037)} & \makecell{0.6675 \\ (0.0037)} & \makecell{0.7485 \\ (0.0031)} \\
 \midrule
 \makecell[l]{CW+PA w/o bounding} &
\makecell{0.1057 \\ (0.0030)} & \makecell{0.1114 \\ (0.0032)} & \makecell{0.1170 \\ (0.0032)} & \makecell{0.1227 \\ (0.0034)} & \makecell{0.1286 \\ (0.0035)} & \makecell{0.1349 \\ (0.0033)} & \makecell{0.1414 \\ (0.0033)} & \makecell{0.1480 \\ (0.0033)} & \makecell{0.1546 \\ (0.0030)} & \makecell{0.1615 \\ (0.0034)} & \makecell{0.2001 \\ (0.0038)} & \makecell{0.2426 \\ (0.0039)} \\
 \midrule
 \rowcolor{gray!15} {Task-Aware} &
\makecell{0.1000 \\ (0.0000)} & \makecell{0.1000 \\ (0.0000)} & \makecell{0.1001 \\ (0.0002)} & \makecell{0.1002 \\ (0.0004)} & \makecell{0.1005 \\ (0.0005)} & \makecell{0.1011 \\ (0.0007)} & \makecell{0.1018 \\ (0.0008)} & \makecell{0.1028 \\ (0.0011)} & \makecell{0.1039 \\ (0.0013)} & \makecell{0.1053 \\ (0.0017)} & \makecell{0.1145 \\ (0.0014)} & \makecell{0.1276 \\ (0.0022)} \\
 \bottomrule
\end{tabular}
}
\end{table} \label{tb:4}

\begin{table}[h]
\caption{Main results for \textbf{Nonlinear MLP Classifier with ReLU} $(m=64)$ on \textbf{CIFAR-10} under $\epsilon$-LDP mechanisms. We evaluate utility by measuring the Accuracy. Higher accuracy indicates higher utility.}
\centering
\resizebox{\textwidth}{!}{
\begin{tabular}{ll cccccccccccc}
\toprule
& \multicolumn{12}{c}{Privacy Budget ($\epsilon$)} \\
 \cmidrule(l){2-13}
 \textbf{Mechanism} & 0.5 & 1.0 & 1.5 & 2.0 & 2.5 & 3.0 & 3.5 & 4.0 & 4.5 & 5.0 & 7.5 & 10.0 \\
 \midrule
 \makecell[l]{No randomization} 
 & 0.8937 & 0.8937 & 0.8937 & 0.8937 & 0.8937 & 0.8937 & 0.8937 & 0.8937 & 0.8937 & 0.8937 & 0.8937 & 0.8937 \\
 
 \midrule
 \makecell[l]{Laplace ($\ell_1$)}
 & \makecell{0.1002 \\ (0.0029)} & \makecell{0.1017 \\ (0.0029)} & \makecell{0.1031 \\ (0.0028)} & \makecell{0.1046 \\ (0.0028)} & \makecell{0.1059 \\ (0.0027)} & \makecell{0.1074 \\ (0.0026)} & \makecell{0.1088 \\ (0.0025)} & \makecell{0.1103 \\ (0.0027)} & \makecell{0.1119 \\ (0.0026)} & \makecell{0.1133 \\ (0.0025)} & \makecell{0.1212 \\ (0.0026)} & \makecell{0.1301 \\ (0.0024)} \\
 
 \midrule
 \makecell[l]{Laplace+PA} 
 & \makecell{0.1062 \\ (0.0024)} & \makecell{0.1122 \\ (0.0022)} & \makecell{0.1188 \\ (0.0023)} & \makecell{0.1258 \\ (0.0025)} & \makecell{0.1329 \\ (0.0023)} & \makecell{0.1404 \\ (0.0023)} & \makecell{0.1480 \\ (0.0024)} & \makecell{0.1563 \\ (0.0023)} & \makecell{0.1650 \\ (0.0024)} & \makecell{0.1737 \\ (0.0024)} & \makecell{0.2204 \\ (0.0038)} & \makecell{0.2723 \\ (0.0046)} \\
 
 \midrule
 \makecell[l]{PrivUnit2} 
 & \makecell{0.1107 \\ (0.0032)} & \makecell{0.1221 \\ (0.0032)} & \makecell{0.1348 \\ (0.0032)} & \makecell{0.1481 \\ (0.0031)} & \makecell{0.1620 \\ (0.0028)} & \makecell{0.1762 \\ (0.0029)} & \makecell{0.1913 \\ (0.0038)} & \makecell{0.2063 \\ (0.0044)} & \makecell{0.2229 \\ (0.0044)} & \makecell{0.2383 \\ (0.0040)} & \makecell{0.3185 \\ (0.0042)} & \makecell{0.3978 \\ (0.0035)} \\
 
 \midrule
 \makecell[l]{PrivUnit2+PA} 
 & \makecell{0.1203 \\ (0.0022)} & \makecell{0.1440 \\ (0.0030)} & \makecell{0.1710 \\ (0.0027)} & \makecell{0.2012 \\ (0.0028)} & \makecell{0.2341 \\ (0.0031)} & \makecell{0.2673 \\ (0.0040)} & \makecell{0.3027 \\ (0.0039)} & \makecell{0.3377 \\ (0.0041)} & \makecell{0.3733 \\ (0.0048)} & \makecell{0.4076 \\ (0.0052)} & \makecell{0.5477 \\ (0.0055)} & \makecell{0.6451 \\ (0.0057)} \\
 
 \midrule
 \makecell[l]{PrivUnitG} 
 & \makecell{0.1100 \\ (0.0027)} & \makecell{0.1212 \\ (0.0030)} & \makecell{0.1335 \\ (0.0028)} & \makecell{0.1464 \\ (0.0035)} & \makecell{0.1600 \\ (0.0037)} & \makecell{0.1741 \\ (0.0040)} & \makecell{0.1887 \\ (0.0044)} & \makecell{0.2036 \\ (0.0039)} & \makecell{0.2187 \\ (0.0040)} & \makecell{0.2342 \\ (0.0043)} & \makecell{0.3103 \\ (0.0034)} & \makecell{0.3685 \\ (0.0035)} \\
 
 \midrule
 \makecell[l]{PrivUnitG+PA} 
 & \makecell{0.1213 \\ (0.0041)} & \makecell{0.1456 \\ (0.0049)} & \makecell{0.1726 \\ (0.0050)} & \makecell{0.2020 \\ (0.0047)} & \makecell{0.2349 \\ (0.0044)} & \makecell{0.2684 \\ (0.0043)} & \makecell{0.3029 \\ (0.0041)} & \makecell{0.3367 \\ (0.0043)} & \makecell{0.3700 \\ (0.0044)} & \makecell{0.4016 \\ (0.0046)} & \makecell{0.5364 \\ (0.0044)} & \makecell{0.6254 \\ (0.0045)} \\
 
 \midrule
 \makecell[l]{CW+PA w/o bounding} 
 & \makecell{0.1039 \\ (0.0024)} & \makecell{0.1077 \\ (0.0025)} & \makecell{0.1120 \\ (0.0026)} & \makecell{0.1161 \\ (0.0026)} & \makecell{0.1205 \\ (0.0026)} & \makecell{0.1251 \\ (0.0027)} & \makecell{0.1298 \\ (0.0026)} & \makecell{0.1346 \\ (0.0025)} & \makecell{0.1397 \\ (0.0026)} & \makecell{0.1446 \\ (0.0028)} & \makecell{0.1716 \\ (0.0027)} & \makecell{0.2008 \\ (0.0026)} \\
 
 \bottomrule
\end{tabular}
}
\end{table}

\begin{table}[h]
\caption{Main results for $(m=64)$ \textbf{Nonlinear MLP Classifier with GELU} on \textbf{CIFAR-10} under $\epsilon$-LDP mechanisms. We evaluate utility by measuring the Accuracy. Higher accuracy indicates higher utility.}
\centering
\resizebox{\textwidth}{!}{
\begin{tabular}{l cccccccccccc}
\toprule
& \multicolumn{12}{c}{Privacy Budget ($\epsilon$)} \\
 \cmidrule(l){2-13}
 \textbf{Mechanism} & 0.5 & 1.0 & 1.5 & 2.0 & 2.5 & 3.0 & 3.5 & 4.0 & 4.5 & 5.0 & 7.5 & 10.0 \\
 \midrule
 {No randomization} &
0.9012 & 0.9012 & 0.9012 & 0.9012 & 0.9012 & 0.9012 & 0.9012 & 0.9012 & 0.9012 & 0.9012 & 0.9012 & 0.9012 \\
 \midrule
 \makecell[l]{Laplace ($\ell_1$)} &
\makecell{0.1015 \\ (0.0033)} & \makecell{0.1021 \\ (0.0034)} & \makecell{0.1027 \\ (0.0033)} & \makecell{0.1033 \\ (0.0032)} & \makecell{0.1039 \\ (0.0033)} & \makecell{0.1044 \\ (0.0033)} & \makecell{0.1050 \\ (0.0032)} & \makecell{0.1055 \\ (0.0032)} & \makecell{0.1061 \\ (0.0031)} & \makecell{0.1068 \\ (0.0032)} & \makecell{0.1100 \\ (0.0033)} & \makecell{0.1134 \\ (0.0036)} \\
 \midrule
 \makecell[l]{Laplace+PA} &
\makecell{0.1064 \\ (0.0032)} & \makecell{0.1114 \\ (0.0034)} & \makecell{0.1166 \\ (0.0032)} & \makecell{0.1221 \\ (0.0033)} & \makecell{0.1275 \\ (0.0036)} & \makecell{0.1336 \\ (0.0035)} & \makecell{0.1400 \\ (0.0038)} & \makecell{0.1464 \\ (0.0036)} & \makecell{0.1532 \\ (0.0042)} & \makecell{0.1598 \\ (0.0041)} & \makecell{0.1968 \\ (0.0041)} & \makecell{0.2380 \\ (0.0042)} \\
 \midrule
 \makecell[l]{PrivUnit2} &
\makecell{0.1055 \\ (0.0028)} & \makecell{0.1094 \\ (0.0029)} & \makecell{0.1144 \\ (0.0031)} & \makecell{0.1191 \\ (0.0032)} & \makecell{0.1237 \\ (0.0033)} & \makecell{0.1286 \\ (0.0034)} & \makecell{0.1336 \\ (0.0036)} & \makecell{0.1384 \\ (0.0041)} & \makecell{0.1436 \\ (0.0044)} & \makecell{0.1492 \\ (0.0052)} & \makecell{0.1723 \\ (0.0038)} & \makecell{0.1987 \\ (0.0034)} \\
 \midrule
 \makecell[l]{PrivUnit2+PA} &
\makecell{0.1172 \\ (0.0032)} & \makecell{0.1375 \\ (0.0027)} & \makecell{0.1606 \\ (0.0029)} & \makecell{0.1852 \\ (0.0030)} & \makecell{0.2115 \\ (0.0032)} & \makecell{0.2400 \\ (0.0035)} & \makecell{0.2692 \\ (0.0034)} & \makecell{0.2985 \\ (0.0037)} & \makecell{0.3276 \\ (0.0048)} & \makecell{0.3565 \\ (0.0054)} & \makecell{0.4834 \\ (0.0052)} & \makecell{0.5766 \\ (0.0053)} \\
 \midrule
 \makecell[l]{PrivUnitG} &
\makecell{0.1044 \\ (0.0029)} & \makecell{0.1087 \\ (0.0030)} & \makecell{0.1133 \\ (0.0031)} & \makecell{0.1181 \\ (0.0034)} & \makecell{0.1229 \\ (0.0030)} & \makecell{0.1278 \\ (0.0031)} & \makecell{0.1325 \\ (0.0030)} & \makecell{0.1376 \\ (0.0033)} & \makecell{0.1424 \\ (0.0033)} & \makecell{0.1471 \\ (0.0035)} & \makecell{0.1709 \\ (0.0036)} & \makecell{0.1900 \\ (0.0039)} \\
 \midrule
 \makecell[l]{PrivUnitG+PA} &
\makecell{0.1181 \\ (0.0027)} & \makecell{0.1384 \\ (0.0027)} & \makecell{0.1612 \\ (0.0029)} & \makecell{0.1854 \\ (0.0033)} & \makecell{0.2122 \\ (0.0036)} & \makecell{0.2399 \\ (0.0040)} & \makecell{0.2682 \\ (0.0038)} & \makecell{0.2968 \\ (0.0038)} & \makecell{0.3255 \\ (0.0043)} & \makecell{0.3523 \\ (0.0052)} & \makecell{0.4730 \\ (0.0044)} & \makecell{0.5591 \\ (0.0051)} \\
 \midrule
 \makecell[l]{CW+PA w/o bounding} &
\makecell{0.1045 \\ (0.0032)} & \makecell{0.1076 \\ (0.0031)} & \makecell{0.1109 \\ (0.0032)} & \makecell{0.1142 \\ (0.0031)} & \makecell{0.1176 \\ (0.0033)} & \makecell{0.1211 \\ (0.0033)} & \makecell{0.1248 \\ (0.0033)} & \makecell{0.1283 \\ (0.0035)} & \makecell{0.1321 \\ (0.0033)} & \makecell{0.1357 \\ (0.0035)} & \makecell{0.1567 \\ (0.0039)} & \makecell{0.1801 \\ (0.0046)} \\
 \bottomrule
\end{tabular}
}
\end{table}

\begin{table}[h]
\caption{Main results for $(m=64)$ \textbf{Nonlinear MLP Classifier with Tanh} on \textbf{CIFAR-10} under $\epsilon$-LDP mechanisms. We evaluate utility by measuring the Accuracy. Higher accuracy indicates higher utility.}
\centering
\resizebox{\textwidth}{!}{
\begin{tabular}{l cccccccccccc}
\toprule
& \multicolumn{12}{c}{Privacy Budget ($\epsilon$)} \\
 \cmidrule(l){2-13}
 \textbf{Mechanism} & 0.5 & 1.0 & 1.5 & 2.0 & 2.5 & 3.0 & 3.5 & 4.0 & 4.5 & 5.0 & 7.5 & 10.0 \\
 \midrule
 {No randomization} &
0.9014 & 0.9014 & 0.9014 & 0.9014 & 0.9014 & 0.9014 & 0.9014 & 0.9014 & 0.9014 & 0.9014 & 0.9014 & 0.9014 \\
 \midrule
 \makecell[l]{Laplace ($\ell_1$)} &
\makecell{0.1013 \\ (0.0022)} & \makecell{0.1025 \\ (0.0022)} & \makecell{0.1035 \\ (0.0023)} & \makecell{0.1046 \\ (0.0022)} & \makecell{0.1057 \\ (0.0022)} & \makecell{0.1069 \\ (0.0023)} & \makecell{0.1079 \\ (0.0022)} & \makecell{0.1091 \\ (0.0022)} & \makecell{0.1102 \\ (0.0021)} & \makecell{0.1113 \\ (0.0021)} & \makecell{0.1171 \\ (0.0021)} & \makecell{0.1237 \\ (0.0025)} \\
 \midrule
 \makecell[l]{Laplace+PA} &
\makecell{0.1064 \\ (0.0030)} & \makecell{0.1133 \\ (0.0031)} & \makecell{0.1201 \\ (0.0033)} & \makecell{0.1278 \\ (0.0033)} & \makecell{0.1353 \\ (0.0035)} & \makecell{0.1437 \\ (0.0038)} & \makecell{0.1524 \\ (0.0038)} & \makecell{0.1615 \\ (0.0045)} & \makecell{0.1714 \\ (0.0045)} & \makecell{0.1816 \\ (0.0047)} & \makecell{0.2374 \\ (0.0044)} & \makecell{0.3008 \\ (0.0047)} \\
 \midrule
 \makecell[l]{PrivUnit2} &
\makecell{0.1085 \\ (0.0031)} & \makecell{0.1176 \\ (0.0030)} & \makecell{0.1270 \\ (0.0025)} & \makecell{0.1367 \\ (0.0028)} & \makecell{0.1472 \\ (0.0028)} & \makecell{0.1576 \\ (0.0032)} & \makecell{0.1682 \\ (0.0033)} & \makecell{0.1789 \\ (0.0027)} & \makecell{0.1909 \\ (0.0028)} & \makecell{0.2026 \\ (0.0032)} & \makecell{0.2607 \\ (0.0048)} & \makecell{0.3219 \\ (0.0056)} \\
 \midrule
 \makecell[l]{PrivUnit2+PA} &
\makecell{0.1215 \\ (0.0032)} & \makecell{0.1491 \\ (0.0038)} & \makecell{0.1829 \\ (0.0038)} & \makecell{0.2217 \\ (0.0045)} & \makecell{0.2646 \\ (0.0048)} & \makecell{0.3106 \\ (0.0036)} & \makecell{0.3587 \\ (0.0034)} & \makecell{0.4059 \\ (0.0040)} & \makecell{0.4532 \\ (0.0051)} & \makecell{0.4973 \\ (0.0051)} & \makecell{0.6615 \\ (0.0042)} & \makecell{0.7488 \\ (0.0037)} \\
 \midrule
 \makecell[l]{PrivUnitG} &
\makecell{0.1086 \\ (0.0027)} & \makecell{0.1172 \\ (0.0031)} & \makecell{0.1264 \\ (0.0034)} & \makecell{0.1362 \\ (0.0034)} & \makecell{0.1462 \\ (0.0035)} & \makecell{0.1569 \\ (0.0032)} & \makecell{0.1671 \\ (0.0034)} & \makecell{0.1780 \\ (0.0034)} & \makecell{0.1887 \\ (0.0035)} & \makecell{0.1997 \\ (0.0040)} & \makecell{0.2537 \\ (0.0043)} & \makecell{0.2985 \\ (0.0037)} \\
 \midrule
 \makecell[l]{PrivUnitG+PA} &
\makecell{0.1240 \\ (0.0027)} & \makecell{0.1510 \\ (0.0033)} & \makecell{0.1837 \\ (0.0035)} & \makecell{0.2216 \\ (0.0051)} & \makecell{0.2637 \\ (0.0051)} & \makecell{0.3090 \\ (0.0054)} & \makecell{0.3562 \\ (0.0057)} & \makecell{0.4025 \\ (0.0054)} & \makecell{0.4476 \\ (0.0051)} & \makecell{0.4894 \\ (0.0047)} & \makecell{0.6495 \\ (0.0049)} & \makecell{0.7423 \\ (0.0041)} \\
 \midrule
 \makecell[l]{CW+PA w/o bounding} &
\makecell{0.1039 \\ (0.0029)} & \makecell{0.1080 \\ (0.0029)} & \makecell{0.1124 \\ (0.0029)} & \makecell{0.1168 \\ (0.0031)} & \makecell{0.1217 \\ (0.0032)} & \makecell{0.1263 \\ (0.0032)} & \makecell{0.1316 \\ (0.0032)} & \makecell{0.1367 \\ (0.0032)} & \makecell{0.1419 \\ (0.0032)} & \makecell{0.1473 \\ (0.0033)} & \makecell{0.1773 \\ (0.0039)} & \makecell{0.2119 \\ (0.0044)} \\
 \bottomrule
\end{tabular}
}
\end{table}

\begin{table}[h]
\caption{Main results for \textbf{Nonlinear MLP Classifier with ReLU} on \textbf{MNIST (VAE)} $(m=32)$ under $\epsilon$-LDP mechanisms. We evaluate utility by measuring the classification accuracy using ground-truth labels. Higher accuracy indicates higher utility.}
\centering
\resizebox{\textwidth}{!}{
\begin{tabular}{l cccccccccccc}
\toprule
& \multicolumn{12}{c}{Privacy Budget ($\epsilon$)} \\
 \cmidrule(l){2-13}
 \textbf{Mechanism} & 0.5 & 1.0 & 1.5 & 2.0 & 2.5 & 3.0 & 3.5 & 4.0 & 4.5 & 5.0 & 7.5 & 10.0 \\
 \midrule
 \makecell[l]{No randomization} &
\makecell{0.9829 \\ (0.0000)} & \makecell{0.9829 \\ (0.0000)} & \makecell{0.9829 \\ (0.0000)} & \makecell{0.9829 \\ (0.0000)} & \makecell{0.9829 \\ (0.0000)} & \makecell{0.9829 \\ (0.0000)} & \makecell{0.9829 \\ (0.0000)} & \makecell{0.9829 \\ (0.0000)} & \makecell{0.9829 \\ (0.0000)} & \makecell{0.9829 \\ (0.0000)} & \makecell{0.9829 \\ (0.0000)} & \makecell{0.9829 \\ (0.0000)} \\
 \midrule
 \makecell[l]{Laplace ($\ell_1$)} &
\makecell{0.1050 \\ (0.0032)} & \makecell{0.1078 \\ (0.0034)} & \makecell{0.1108 \\ (0.0034)} & \makecell{0.1139 \\ (0.0035)} & \makecell{0.1171 \\ (0.0034)} & \makecell{0.1202 \\ (0.0035)} & \makecell{0.1234 \\ (0.0037)} & \makecell{0.1267 \\ (0.0039)} & \makecell{0.1302 \\ (0.0040)} & \makecell{0.1336 \\ (0.0039)} & \makecell{0.1524 \\ (0.0039)} & \makecell{0.1732 \\ (0.0041)} \\
 \midrule
 \makecell[l]{Laplace+PA} &
\makecell{0.1092 \\ (0.0030)} & \makecell{0.1178 \\ (0.0032)} & \makecell{0.1271 \\ (0.0029)} & \makecell{0.1369 \\ (0.0029)} & \makecell{0.1474 \\ (0.0032)} & \makecell{0.1584 \\ (0.0035)} & \makecell{0.1704 \\ (0.0037)} & \makecell{0.1828 \\ (0.0040)} & \makecell{0.1956 \\ (0.0039)} & \makecell{0.2091 \\ (0.0037)} & \makecell{0.2846 \\ (0.0037)} & \makecell{0.3666 \\ (0.0041)} \\
 \midrule
 \makecell[l]{PrivUnit2} &
\makecell{0.1152 \\ (0.0044)} & \makecell{0.1307 \\ (0.0043)} & \makecell{0.1474 \\ (0.0041)} & \makecell{0.1659 \\ (0.0047)} & \makecell{0.1855 \\ (0.0044)} & \makecell{0.2063 \\ (0.0046)} & \makecell{0.2289 \\ (0.0046)} & \makecell{0.2526 \\ (0.0051)} & \makecell{0.2767 \\ (0.0054)} & \makecell{0.3021 \\ (0.0054)} & \makecell{0.4360 \\ (0.0047)} & \makecell{0.5729 \\ (0.0027)} \\
 \midrule
 \makecell[l]{PrivUnit2+PA} &
\makecell{0.1250 \\ (0.0025)} & \makecell{0.1536 \\ (0.0032)} & \makecell{0.1875 \\ (0.0038)} & \makecell{0.2263 \\ (0.0038)} & \makecell{0.2696 \\ (0.0040)} & \makecell{0.3174 \\ (0.0036)} & \makecell{0.3674 \\ (0.0039)} & \makecell{0.4203 \\ (0.0038)} & \makecell{0.4720 \\ (0.0041)} & \makecell{0.5214 \\ (0.0043)} & \makecell{0.7252 \\ (0.0036)} & \makecell{0.8427 \\ (0.0032)} \\
 \midrule
 \makecell[l]{PrivUnitG} &
\makecell{0.1139 \\ (0.0034)} & \makecell{0.1290 \\ (0.0034)} & \makecell{0.1462 \\ (0.0037)} & \makecell{0.1640 \\ (0.0032)} & \makecell{0.1822 \\ (0.0029)} & \makecell{0.2014 \\ (0.0028)} & \makecell{0.2217 \\ (0.0031)} & \makecell{0.2424 \\ (0.0035)} & \makecell{0.2632 \\ (0.0038)} & \makecell{0.2838 \\ (0.0043)} & \makecell{0.3862 \\ (0.0054)} & \makecell{0.4682 \\ (0.0053)} \\
 \midrule
 \makecell[l]{PrivUnitG+PA} &
\makecell{0.1242 \\ (0.0027)} & \makecell{0.1528 \\ (0.0025)} & \makecell{0.1862 \\ (0.0035)} & \makecell{0.2236 \\ (0.0040)} & \makecell{0.2646 \\ (0.0044)} & \makecell{0.3083 \\ (0.0047)} & \makecell{0.3538 \\ (0.0051)} & \makecell{0.3989 \\ (0.0059)} & \makecell{0.4432 \\ (0.0066)} & \makecell{0.4861 \\ (0.0057)} & \makecell{0.6578 \\ (0.0042)} & \makecell{0.7647 \\ (0.0037)} \\
 \midrule
 \makecell[l]{CW+PA w/o bounding} &
\makecell{0.1069 \\ (0.0027)} & \makecell{0.1127 \\ (0.0027)} & \makecell{0.1191 \\ (0.0025)} & \makecell{0.1253 \\ (0.0027)} & \makecell{0.1323 \\ (0.0028)} & \makecell{0.1392 \\ (0.0029)} & \makecell{0.1466 \\ (0.0031)} & \makecell{0.1542 \\ (0.0032)} & \makecell{0.1623 \\ (0.0032)} & \makecell{0.1706 \\ (0.0033)} & \makecell{0.2173 \\ (0.0038)} & \makecell{0.2685 \\ (0.0043)} \\
 \bottomrule
\end{tabular}
}
\end{table}

\clearpage

\subsubsection{Supplemental Results: Sensitivity Analysis of the Finite Scale Cap} \label{appdx:finite_cap}

We provide our sensitivity analysis on $\lambda_{\mathrm{max}}$ using CIFAR-10-C (Brightness, Severity Level 5). In the tables, PA denotes the proposed approach.

To provide further clarity, we conducted an additional sensitivity analysis of the finite cap $\lambda_{\max}$ used for near-null directions. For Laplace+PA at $\epsilon=10$, the only visibly lower mean occurs at $\lambda_{\max}=10$. For $\lambda_{\max}\ge 10^2$, accuracy ranges from $0.2376$ to $0.2433$, a difference of $0.0057$, while the standard deviations across 20 seeds range from approximately $0.0035$ to $0.0047$. PrivUnit2+PA and PrivUnitG+PA likewise exhibit little variation across the tested range.

\begin{table}[h]
\caption{Sensitivity analysis on the $\lambda_{\mathrm{max}}$ for \textbf{Laplace+PA}, \textbf{PrivUnit2+PA}, and \textbf{PrivUnitG+PA} on \textbf{CIFAR-10-C (Brightness Corruption At The Highest Severity Level 5)} Test Dataset under $\epsilon$-LDP mechanisms. We evaluate utility by measuring the Accuracy.}
\centering
\resizebox{\textwidth}{!}{
\begin{tabular}{ll cccccccccccc}
\toprule
& \multicolumn{12}{c}{Privacy Budget ($\epsilon$)} \\
\cmidrule(l){3-14}
\textbf{Mechanism} & $\lambda_{\mathrm{max}}$ & 0.5 & 1.0 & 1.5 & 2.0 & 2.5 & 3.0 & 3.5 & 4.0 & 4.5 & 5.0 & 7.5 & 10.0 \\
\midrule
\multirow[t]{5}{*}{\makecell{Laplace+PA}} 
& $10^1$ & \makecell{0.1044 \\ (0.0038)} & \makecell{0.1091 \\ (0.0039)} & \makecell{0.1141 \\ (0.0039)} & \makecell{0.1195 \\ (0.0036)} & \makecell{0.1251 \\ (0.0035)} & \makecell{0.1308 \\ (0.0035)} & \makecell{0.1365 \\ (0.0035)} & \makecell{0.1424 \\ (0.0036)} & \makecell{0.1485 \\ (0.0037)} & \makecell{0.1552 \\ (0.0040)} & \makecell{0.1891 \\ (0.0038)} & \makecell{0.2264 \\ (0.0045)} \\
& $10^2$ & \makecell{0.1045 \\ (0.0022)} & \makecell{0.1095 \\ (0.0021)} & \makecell{0.1152 \\ (0.0022)} & \makecell{0.1208 \\ (0.0018)} & \makecell{0.1267 \\ (0.0019)} & \makecell{0.1327 \\ (0.0024)} & \makecell{0.1390 \\ (0.0025)} & \makecell{0.1454 \\ (0.0027)} & \makecell{0.1517 \\ (0.0027)} & \makecell{0.1589 \\ (0.0027)} & \makecell{0.1963 \\ (0.0036)} & \makecell{0.2376 \\ (0.0035)} \\
& $10^3$ & \makecell{0.1052 \\ (0.0031)} & \makecell{0.1104 \\ (0.0031)} & \makecell{0.1160 \\ (0.0031)} & \makecell{0.1220 \\ (0.0032)} & \makecell{0.1278 \\ (0.0034)} & \makecell{0.1343 \\ (0.0036)} & \makecell{0.1406 \\ (0.0038)} & \makecell{0.1470 \\ (0.0040)} & \makecell{0.1538 \\ (0.0041)} & \makecell{0.1609 \\ (0.0043)} & \makecell{0.1987 \\ (0.0045)} & \makecell{0.2401 \\ (0.0047)} \\
& $10^4$ & \makecell{0.1049 \\ (0.0032)} & \makecell{0.1101 \\ (0.0035)} & \makecell{0.1157 \\ (0.0034)} & \makecell{0.1216 \\ (0.0036)} & \makecell{0.1277 \\ (0.0036)} & \makecell{0.1341 \\ (0.0034)} & \makecell{0.1404 \\ (0.0033)} & \makecell{0.1473 \\ (0.0033)} & \makecell{0.1541 \\ (0.0034)} & \makecell{0.1612 \\ (0.0036)} & \makecell{0.2002 \\ (0.0034)} & \makecell{0.2424 \\ (0.0037)} \\
& $10^5$ & \makecell{0.1056 \\ (0.0016)} & \makecell{0.1106 \\ (0.0018)} & \makecell{0.1163 \\ (0.0021)} & \makecell{0.1222 \\ (0.0022)} & \makecell{0.1283 \\ (0.0020)} & \makecell{0.1342 \\ (0.0022)} & \makecell{0.1405 \\ (0.0028)} & \makecell{0.1476 \\ (0.0031)} & \makecell{0.1543 \\ (0.0028)} & \makecell{0.1608 \\ (0.0031)} & \makecell{0.1990 \\ (0.0035)} & \makecell{0.2409 \\ (0.0037)} \\

\midrule
\multirow[t]{5}{*}{\makecell{PrivUnit2+PA}} 
& $10^1$ & \makecell{0.1189 \\ (0.0036)} & \makecell{0.1404 \\ (0.0035)} & \makecell{0.1653 \\ (0.0038)} & \makecell{0.1922 \\ (0.0036)} & \makecell{0.2218 \\ (0.0034)} & \makecell{0.2522 \\ (0.0039)} & \makecell{0.2832 \\ (0.0040)} & \makecell{0.3146 \\ (0.0046)} & \makecell{0.3438 \\ (0.0047)} & \makecell{0.3733 \\ (0.0042)} & \makecell{0.4983 \\ (0.0051)} & \makecell{0.5851 \\ (0.0053)} \\
& $10^2$ & \makecell{0.1187 \\ (0.0031)} & \makecell{0.1408 \\ (0.0034)} & \makecell{0.1663 \\ (0.0037)} & \makecell{0.1940 \\ (0.0038)} & \makecell{0.2236 \\ (0.0036)} & \makecell{0.2537 \\ (0.0040)} & \makecell{0.2846 \\ (0.0043)} & \makecell{0.3160 \\ (0.0047)} & \makecell{0.3459 \\ (0.0047)} & \makecell{0.3745 \\ (0.0051)} & \makecell{0.4989 \\ (0.0051)} & \makecell{0.5855 \\ (0.0053)} \\
& $10^3$ & \makecell{0.1187 \\ (0.0025)} & \makecell{0.1409 \\ (0.0028)} & \makecell{0.1660 \\ (0.0034)} & \makecell{0.1943 \\ (0.0032)} & \makecell{0.2241 \\ (0.0036)} & \makecell{0.2549 \\ (0.0033)} & \makecell{0.2865 \\ (0.0034)} & \makecell{0.3177 \\ (0.0028)} & \makecell{0.3483 \\ (0.0037)} & \makecell{0.3763 \\ (0.0038)} & \makecell{0.5012 \\ (0.0043)} & \makecell{0.5851 \\ (0.0062)} \\
& $10^4$ & \makecell{0.1205 \\ (0.0032)} & \makecell{0.1423 \\ (0.0029)} & \makecell{0.1680 \\ (0.0027)} & \makecell{0.1953 \\ (0.0030)} & \makecell{0.2246 \\ (0.0037)} & \makecell{0.2548 \\ (0.0036)} & \makecell{0.2857 \\ (0.0038)} & \makecell{0.3161 \\ (0.0030)} & \makecell{0.3457 \\ (0.0027)} & \makecell{0.3745 \\ (0.0029)} & \makecell{0.4986 \\ (0.0057)} & \makecell{0.5852 \\ (0.0043)} \\
& $10^5$ & \makecell{0.1189 \\ (0.0041)} & \makecell{0.1416 \\ (0.0038)} & \makecell{0.1671 \\ (0.0037)} & \makecell{0.1946 \\ (0.0038)} & \makecell{0.2242 \\ (0.0044)} & \makecell{0.2541 \\ (0.0049)} & \makecell{0.2852 \\ (0.0053)} & \makecell{0.3159 \\ (0.0054)} & \makecell{0.3464 \\ (0.0054)} & \makecell{0.3755 \\ (0.0055)} & \makecell{0.4975 \\ (0.0047)} & \makecell{0.5845 \\ (0.0045)} \\

\midrule
\multirow[t]{5}{*}{\makecell{PrivUnitG+PA}} 
& $10^1$ & \makecell{0.1195 \\ (0.0033)} & \makecell{0.1415 \\ (0.0031)} & \makecell{0.1665 \\ (0.0035)} & \makecell{0.1940 \\ (0.0033)} & \makecell{0.2234 \\ (0.0035)} & \makecell{0.2538 \\ (0.0042)} & \makecell{0.2842 \\ (0.0036)} & \makecell{0.3142 \\ (0.0035)} & \makecell{0.3430 \\ (0.0033)} & \makecell{0.3716 \\ (0.0032)} & \makecell{0.4889 \\ (0.0050)} & \makecell{0.5658 \\ (0.0052)} \\
& $10^2$ & \makecell{0.1198 \\ (0.0031)} & \makecell{0.1414 \\ (0.0032)} & \makecell{0.1665 \\ (0.0038)} & \makecell{0.1931 \\ (0.0034)} & \makecell{0.2215 \\ (0.0038)} & \makecell{0.2514 \\ (0.0040)} & \makecell{0.2814 \\ (0.0037)} & \makecell{0.3117 \\ (0.0038)} & \makecell{0.3410 \\ (0.0039)} & \makecell{0.3688 \\ (0.0032)} & \makecell{0.4874 \\ (0.0046)} & \makecell{0.5652 \\ (0.0041)} \\
& $10^3$ & \makecell{0.1192 \\ (0.0031)} & \makecell{0.1411 \\ (0.0032)} & \makecell{0.1666 \\ (0.0032)} & \makecell{0.1935 \\ (0.0030)} & \makecell{0.2221 \\ (0.0027)} & \makecell{0.2522 \\ (0.0030)} & \makecell{0.2822 \\ (0.0038)} & \makecell{0.3118 \\ (0.0044)} & \makecell{0.3410 \\ (0.0045)} & \makecell{0.3690 \\ (0.0045)} & \makecell{0.4882 \\ (0.0040)} & \makecell{0.5667 \\ (0.0043)} \\
& $10^4$ & \makecell{0.1205 \\ (0.0024)} & \makecell{0.1426 \\ (0.0029)} & \makecell{0.1673 \\ (0.0025)} & \makecell{0.1942 \\ (0.0029)} & \makecell{0.2226 \\ (0.0030)} & \makecell{0.2525 \\ (0.0031)} & \makecell{0.2832 \\ (0.0041)} & \makecell{0.3122 \\ (0.0044)} & \makecell{0.3415 \\ (0.0043)} & \makecell{0.3692 \\ (0.0048)} & \makecell{0.4877 \\ (0.0063)} & \makecell{0.5656 \\ (0.0058)} \\
& $10^5$ & \makecell{0.1198 \\ (0.0022)} & \makecell{0.1414 \\ (0.0026)} & \makecell{0.1662 \\ (0.0032)} & \makecell{0.1932 \\ (0.0027)} & \makecell{0.2216 \\ (0.0027)} & \makecell{0.2514 \\ (0.0035)} & \makecell{0.2819 \\ (0.0039)} & \makecell{0.3115 \\ (0.0042)} & \makecell{0.3413 \\ (0.0042)} & \makecell{0.3692 \\ (0.0043)} & \makecell{0.4866 \\ (0.0032)} & \makecell{0.5644 \\ (0.0036)} \\
\bottomrule
\end{tabular}
}
\end{table}

\clearpage



\end{document}